\DeclareSymbolFontAlphabet{\mathbb}{AMSb}
\newcommand{\PartialDerivative}[2]{\frac{\partial F}{\partial x_{#2}}(\bx_{#1})}
\newcommand{\PartialDerivativeGeneral}[2]{\frac{\partial F}{\partial x_{#2}}(#1)}
\newcommand{\StocGradient}[2]{g_{#1, #2}}
\newcommand{\PNorm}[1]{\|#1\|_2}
\newcommand{\PNormDimension}{\sqrt{d}}
\newtheorem{thm}{Theorem}[chapter]
\newtheorem{lemma}[thm]{Lemma}
\newtheorem{assumption}{Assumption}[chapter]
\newtheorem{corollary}[thm]{Corollary}
\newlength\myboxwidth
\renewcommand{\Pr}{\field{P}}
\newcommand{\ba}{\boldsymbol{a}}
\newcommand{\bb}{\boldsymbol{b}}
\newcommand{\bd}{\boldsymbol{d}}
\newcommand{\bm}{\boldsymbol{m}}
\newcommand{\bp}{\boldsymbol{p}}
\newcommand{\bg}{\boldsymbol{g}}
\newcommand{\bx}{\boldsymbol{x}}
\newcommand{\bu}{\boldsymbol{u}}
\newcommand{\by}{\boldsymbol{y}}
\newcommand{\bv}{\boldsymbol{v}}
\newcommand{\boldeta}{\boldsymbol{\eta}}
\newcommand{\boldepsilon}{\boldsymbol{\epsilon}}
\newcommand{\argmin}{\mathop{\mathrm{argmin}}}
\newcommand{\field}[1]{\mathbb{#1}}
\newcommand{\R}{\field{R}}
\newcommand{\E}{\field{E}}
\begin{document}


\title{Adaptive Strategies in Non-convex Optimization}

\author{Zhenxun Zhuang}

\degree=2

\prevdegrees{B.Eng., University of Science and Technology of China, 2016}

\department{Department of Computer Science}

\defenseyear{2022}
\degreeyear{2022}

\reader{First Reader}{Francesco Orabona, Ph.D.}{Associate Professor of Electrical and Computer Engineering\\Associate Professor of Computer Science\\Associate Professor of Systems Engineering\\Associate Professor of Computing and Data Sciences}
\reader{Second Reader}{Bryan A. Plummer, Ph.D.}{Assistant Professor of Computer Science}
\reader{Third Reader}{Ioannis Ch. Paschalidis, Ph.D.}{Distinguished Professor of Engineering\\Professor of Electrical and Computer Engineering\\Professor of Systems Engineering\\Professor of Biomedical Engineering\\Professor of Computing and Data Sciences}

\numadvisors=1
\majorprof{Francesco Orabona}{\\Associate Professor of Electrical and Computer Engineering\\Associate Professor of Computer Science\\Associate Professor of Systems Engineering\\Associate Professor of Computing and Data Sciences}




\maketitle
\cleardoublepage

\copyrightpage
\cleardoublepage

\approvalpagewithcomment
\cleardoublepage

\newpage
\phantom{.}

\begin{singlespace}
\begin{quote}
    \textit{God, give me grace to accept with serenity\\
    the things that cannot be changed,\\
    Courage to change the things\\
    which should be changed,\\
    and the wisdom to distinguish\\
    the one from the other.\\
    \\
    Living one day at a time,\\
    Enjoying one moment at a time,\\
    Accepting hardship as a pathway to peace,\\
    Taking, as Jesus did,\\
    This sinful world as it is,\\
    Not as I would have it.}\\
    \phantom{.}\hfill{Reinhold Niebuhr}
\end{quote}
\vspace{3em}
\begin{quote}
    \textit{She was still too young to know that life never gives anything for nothing, and that a price is always exacted for what fate bestows.}\\
    \phantom{.}\hfill{Stefan Zweig}
\end{quote}
\end{singlespace}

%

\cleardoublepage

\newpage
\section*{\centerline{Acknowledgments}}
The journey toward my Ph.D. has eventually come to an end. It is full of obstacles and setbacks, yet is also filled with joy and achievements. So many people have helped me along the way and I am forever indebted to them.

First and foremost, I would like to express my deepest gratitude to my adviser Francesco Orabona without whom this adventure would be impossible. I knew literally nothing about research in machine learning upon entering my Ph.D. and it is him who tirelessly and patiently taught me right from wrong and trained me to build a full skill-set on being a researcher. He will always be my role model for his passion for life and his rigor towards work.

I also thank all my collaborators: Ashok Cutkosky, Xiaoyu Li, Mingrui Liu,  Songtao Lu, Yunlong Wang, Kezi Yu, and a lot more. I will always remember those inspiring discussions on new problems and those sleepless nights catching deadlines.

Special thanks go to my committee members Alina Ene, Yannis Paschalidis, and Bryan Plummer for their service and their attention to my work. I also thank all the people in the department and the university, some of whom I became friends with. Altogether you created a really enjoyable atmosphere to be working in.

Last but not least, I thank my parents Shaobing and Xiuyue for their unconditional support and my love Xiaoqing for fighting with me till the end.
\cleardoublepage


\begin{abstractpage}
Modern applications in machine learning have seen more and more usage of non-convex formulations in that they can often better capture the problem structure. One prominent example is the Deep Neural Networks which have achieved innumerable successes in various fields including computer vision and natural language processing. However, optimizing a non-convex problem presents much greater difficulties compared with convex ones. A vastly popular optimizer used for such scenarios is Stochastic Gradient Descent (SGD), but its performance depends crucially on the choice of its step sizes. Tuning of step sizes is notoriously laborious and the optimal choice can vary drastically across different problems. To save the labor of tuning, adaptive algorithms come to the rescue: An algorithm is said to be adaptive to a certain parameter (of the problem) if it does not need a priori knowledge of such parameter but performs competitively to those that know it.

This dissertation presents our work on adaptive algorithms in following scenarios:
\begin{enumerate}
\item In the stochastic optimization setting, we only receive stochastic gradients and the level of noise in evaluating them greatly affects the convergence rate. Tuning is typically required when without prior knowledge of the noise scale in order to achieve the optimal rate. Considering this, we designed and analyzed noise-adaptive algorithms that can automatically ensure (near)-optimal rates under different noise scales without knowing it.
\item In training deep neural networks, the scales of gradient magnitudes in each coordinate can scatter across a very wide range unless normalization techniques, like BatchNorm, are employed. In such situations, algorithms not addressing this problem of gradient scales can behave very poorly. To mitigate this, we formally established the advantage of scale-free algorithms that adapt to the gradient scales and presented its real benefits in empirical experiments.
\item Traditional analyses in non-convex optimization typically rely on the smoothness assumption. Yet, this condition does not capture the properties of some deep learning objective functions, including the ones involving Long Short-Term Memory (LSTM) networks and Transformers. Instead, they satisfy a much more relaxed condition, with potentially unbounded smoothness. Under this condition, we show that a generalized SignSGD (update using only the sign of each coordinate of the stochastic gradient vector when running SGD) algorithm can theoretically match the best-known convergence rates obtained by SGD with gradient clipping but does not need explicit clipping at all, and it can empirically match the performance of Adam and beat others. Moreover, it can also be made to automatically adapt to the unknown relaxed smoothness.
\end{enumerate}

\end{abstractpage}
\cleardoublepage


\tableofcontents
\cleardoublepage

\newpage
\listoftables
\cleardoublepage
%
\newpage
\listoffigures
\cleardoublepage
%
\chapter*{List of Abbreviations}
%

\begin{center}
 \begin{tabular}{lll}
    \hspace*{2em} & \hspace*{0.38in} & \hspace*{4.5in} \\
    ACM & \dotfill & Association for Computing Machinery\\
    BN & \dotfill & Batch Normalization\\
    CNN & \dotfill & Convolutional Neural Network\\
    CV & \dotfill & Computer Vision\\
    DNN & \dotfill & Deep Neural Network\\
    $\E[\cdot]$ & \dotfill &  Expectation\\
    FTRL & \dotfill & Follow The Regularized Leader\\
    GD & \dotfill & Gradient Descent\\
    ICLR & \dotfill &
    International Conference on Learning Representations\\
    ICML & \dotfill & International Conference on Machine Learning\\
    IEEE & \dotfill & Institute of Electrical and Electronics Engineers\\
    JMLR & \dotfill & Journal of Machine Learning Research\\
    LSTM & \dotfill & Long Short-Term Memory\\
    ML & \dotfill & Machine Learning\\
    NeurIPS & \dotfill & Neural Information Processing Systems\\
    NLP & \dotfill & Natural Language Processing\\
    PL & \dotfill & Polyak-\L{}ojasiewicz\\
    PMLR & \dotfill & Proceedings of Machine Learning Research\\
    $\Pr[\cdot]$ & \dotfill &  Probability\\
    $\mathbb{R}^{d}$  & \dotfill &  Real coordinate space of dimension $d$ \\
    r.h.s. & \dotfill & right hand side\\
    SGD  & \dotfill & Stochastic Gradient Descent \\
    SOTA & \dotfill & State-Of-The-Art\\
    w.r.t. & \dotfill & with respect to\\
 \end{tabular}
\end{center}
\cleardoublepage


\newpage
\endofprelim
        
\cleardoublepage

\doublespacing

\chapter{Introduction}
\label{chapter:Introduction}
\thispagestyle{myheadings}

Recent decades have witnessed a surge of interest in the field of \emph{machine learning}~\citep{jordan2015machine}. As the name suggests, this discipline strives to build machines that can improve automatically through experience obtained by learning from data. Apart from the efforts of researchers in developing novel theories and algorithms, its rapid advancement is also partly attributed to the accumulation of vast datasets and the explosive growth of the semiconductor industry which resulted in efficient and low-cost computations.

Most machine learning tasks can be modeled as the mathematical optimization problem
\begin{equation}
\label{eq:opt}
\min_{\bx\in\mathcal{X}} F(\bx),
\end{equation}
where we assume the domain $\mathcal{X}\subseteq\R^d$ and call $F: \R^d\rightarrow \R$ the objective function. We also make the following assumption on $F$.
\begin{assumption}
\label{asp:objective_func}
$F$ is differentiable and bounded from below by $F^*$.
\end{assumption}

\section{Convex and Non-convex Optimization}
\label{sec:intro_convex_nonconvex}
By assuming different conditions on $F$ and $\mathcal{X}$, we restrict our focus to different families or classes of optimization problems.

A classic and important class of optimization problems is convex optimization problems~\citep{Rockafellar76, BoydV04} in which the domain $\mathcal{X}$ is a convex set and the objective function $F$ is a convex function.

We say a set $\mathcal{X}$ is \emph{convex} if the line segment between any two points in $\mathcal{X}$ lies entirely in $\mathcal{X}$, i.e., for any $\bx_1, \bx_2\in \mathcal{X}$ and any $\theta$ with $0\le\theta\le1$, we have
\begin{equation}
\label{eq:convex_set}
    \theta\bx_1 + (1-\theta)\bx_2 \in \mathcal{X}~.
\end{equation}

We say a function $F:\mathcal{X}\rightarrow\R$ is \emph{convex} if its domain $\mathcal{X}$ is a convex set and for any $\bx_1, \bx_2\in\mathcal{X}$ and any $\theta$ with $0\le\theta\le1$, we have
\begin{equation}
\label{eq:convex}
    F(\theta\bx_1 + (1-\theta)\bx_2) \le \theta F(\bx_1) + (1-\theta)F(\bx_2)~.
\end{equation}
Moreover, when $F$ is differentiable,~\eqref{eq:convex} is equivalent to that for any $\bx_1, \bx_2\in\mathcal{X}$ it holds that
\begin{equation}
\label{eq:convex_grad}
    F(\bx_1) + \langle\nabla F(\bx_1), \bx_2 - \bx_1\rangle \le F(\bx_2),
\end{equation}
where $\nabla F(\bx)$ denotes the gradient of $F$ at $\bx$ and $\langle\cdot,\cdot\rangle$ denotes the inner product of two vectors.

Additionally, a function $F:\mathcal{X}\rightarrow\R$ is said to be $\mu$-\emph{strongly convex}~\citep[Theorem 2.1.9]{Nesterov04} if for any $\bx_1, \bx_2\in\mathcal{X}$ we have
\begin{equation}
\label{eq:strongly_convex}
    F(\theta\bx_1 + (1-\theta)\bx_2) \le \theta F(\bx_1) + (1-\theta)F(\bx_2) - \frac{\mu}{2}\theta(1-\theta)\|\bx_2 - \bx_1\|_2^2~.
\end{equation}
where $\|\cdot\|_2$ denotes the $\ell_2$ norm.

Many classical machine learning methods are inspired by or can be reduced to a convex optimization problem. Notable examples include least squeares~\citep{gauss1995theory}, logistic regression~\citep{verhulst1845resherches,cramer2002origin}, and support vector machines~\citep{cortes1995support}.

One nice and important property of convexity is that any stationary point with zero gradient is bound to be a global minimum point which is evident from the definition~\eqref{eq:convex_grad}. Here we say $\bx$ is a \emph{stationary point} if $\nabla F(\bx)=\boldsymbol{0}$ and say $\bx$ is a \emph{global minimum point} if $F(\bx)\le F(\by)$ for any $\by\in\mathcal{X}$. We will also call $\bx$ a \emph{local minimum point} if for some $\epsilon>0$ and some distance measure $dist(\cdot,\cdot)$ that $F(\bx) \le F(\by)$ for any $\by\in\mathcal{X}$ with $dist(\bx, \by)\le\epsilon$, and similarly for a \emph{local maximum point}. We can also show that (proof in Appendix~\ref{sec:omit_proofs}):
\begin{lemma}
\label{lem:no_local_minimum_convex}
Let $F:\mathcal{X}\rightarrow\R$ be a convex function, then any local minimum point of $F$ in $\mathcal{X}$ is also a global minimum point.
\end{lemma}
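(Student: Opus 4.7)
The plan is to argue by contradiction: assume $\bx^*$ is a local minimum of $F$ on $\mathcal{X}$ but fails to be a global minimum, then exhibit a point arbitrarily close to $\bx^*$ with strictly smaller objective value, contradicting local minimality.

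First I would fix some $\by\in\mathcal{X}$ with $F(\by)<F(\bx^*)$, which exists by the negation of the conclusion. Then, for $\theta\in[0,1]$, I would consider the convex combination $\bz_\theta = \theta\by+(1-\theta)\bx^*$. Convexity of $\mathcal{X}$ (equation~\eqref{eq:convex_set}) guarantees $\bz_\theta\in\mathcal{X}$, so $F(\bz_\theta)$ is well-defined. Applying the convexity inequality~\eqref{eq:convex} to $F$ at $\bz_\theta$ gives
\begin{equation*}
F(\bz_\theta) \le \theta F(\by) + (1-\theta) F(\bx^*) < \theta F(\bx^*) + (1-\theta) F(\bx^*) = F(\bx^*),
\end{equation*}
where the strict inequality uses $\theta>0$ together with $F(\by)<F(\bx^*)$.

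Next I would use the distance function $\mathrm{dist}$ from the definition of a local minimum. Since $\bz_\theta - \bx^* = \theta(\by-\bx^*)$, any reasonable distance (in particular the $\ell_2$ one implicit in the paper) satisfies $\mathrm{dist}(\bz_\theta,\bx^*)\to 0$ as $\theta\to 0^+$. Picking $\theta>0$ small enough that $\mathrm{dist}(\bz_\theta,\bx^*)\le\epsilon$, where $\epsilon$ is the radius witnessing $\bx^*$'s local minimality, we obtain a point in the $\epsilon$-neighborhood of $\bx^*$ inside $\mathcal{X}$ with $F(\bz_\theta)<F(\bx^*)$, contradicting the assumption that $\bx^*$ is a local minimum.

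No step looks genuinely hard; the only place where care is needed is to handle the distance $\mathrm{dist}$ abstractly, since the lemma is stated for an unspecified distance measure. I would simply note that the argument goes through for any distance that is continuous in its first argument at $\bx^*$ (so that $\mathrm{dist}(\bz_\theta,\bx^*)\to 0$ as $\theta\to 0^+$), which covers the standard choice of $\|\cdot\|_2$ used throughout the paper.
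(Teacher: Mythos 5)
Your proof is correct and follows essentially the same route as the paper's: contradiction, a convex combination of the local minimizer with a strictly better point, and a choice of $\theta$ placing that combination inside the neighborhood witnessing local minimality. If anything, your negation (``there exists $\by$ with $F(\by)<F(\bx^*)$'') is slightly cleaner than the paper's, which assumes the existence of a global minimum point rather than merely a better point.
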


\begin{figure}[t]
    \centering
    \begin{subfigure}[b]{0.48\textwidth}
        \centering
        \includegraphics[width=\textwidth]{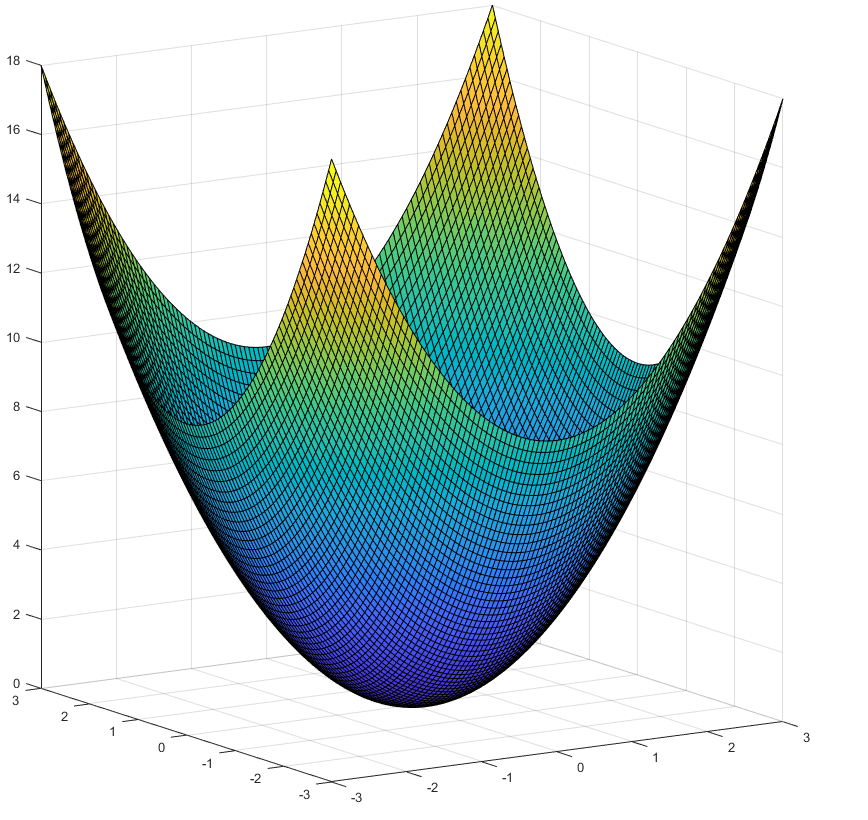}
        \caption{A convex function.}
        \label{fig:convex_function}
    \end{subfigure}
    \hfill
    \begin{subfigure}[b]{0.48\textwidth}
        \centering
        \includegraphics[width=\textwidth]{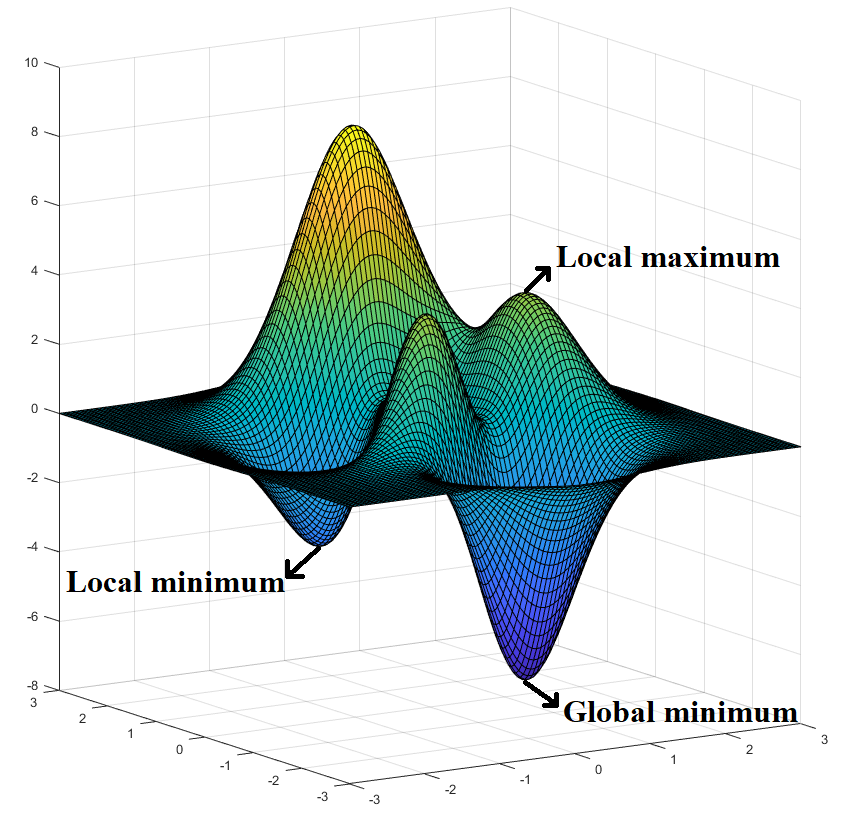}
        \caption{A non-convex function.}
        \label{fig:non_convex_function}
    \end{subfigure}
    \caption{A convex function vs.~a non-convex function.}
    \label{fig:convex_nonconvex_function}
\end{figure}

However, when the convexity condition no longer holds, the optimization problem becomes fundamentally harder and we enter into the wild world of \emph{non-convex optimization}. The above nice property of convex functions is not true anymore, and the global minimum points could be hidden among an infinite number of local minimum points. As~\citet{Rockafellar93Lagrange} puts it: ``the great watershed in optimization isn’t between linearity and nonlinearity, but convexity and nonconvexity." As an example, we plot in Figure~\ref{fig:convex_nonconvex_function} a convex function and a non-convex function, from which it can be immediately seen that the non-convex function is much more complex than the convex one.

Indeed,~\citet{NemirovskyY83} showed that the information-based complexity of convex optimization problems is far lower than that of general nonconvex optimization problems. In fact, they showed in Section 1.6 that finding a global optimum for a general non-convex problem is NP-hard. The situation is made worse that even approximately solving a range of non-convex problems is NP-hard~\citep{MekaJCD08}. Thus, in this dissertation, assuming $F$ to be differentiable (Assumption~\ref{asp:objective_func}), following an established line of research, we settle for finding (first-order) $\epsilon$-stationary points, a.k.a.~critical points, where the gradient goes to zero, namely some $\bx\in\mathcal{X}$ with $\|\nabla F(\bx)\| \le \epsilon$.

Despite the added difficulty in optimizing, modern applications in machine learning have seen more and more usage of non-convex formulations due to their better capability to capture the problem structure. One of the most prominent examples is the Deep Neural Networks, which have scored enormous successes in a vast range of tasks including but not limited to computer vision~\citep{KrizhevskySH12}, language translation~\citep{VaswaniMLSGLJ19}, speech recognition~\citep{ZhangCJ17Speech}, and recommendation systems~\citep{ZhengNY17Recommend}. Consequently, progresses in this field is much awaited.

It is worth stressing that non-convex functions are not characterized by a particular property, but rather by the lack of a specific property: convexity. In this sense, trying to carry out any meaningful analyses on the entire class of non-convex functions is hopeless. Therefore, we typically focus on specific classes of non-convex problems by adding additional assumptions. Ideally, the assumptions we use shall balance the trade-off of \emph{approximately} model many interesting machine learning problems while allowing us to restrict the class of non-convex functions to particular subsets where we can unearth interesting behaviors. One common assumption people use is the smoothness one presented below.

\begin{assumption}
\label{asp:smooth}
A differentiable function $F:\mathcal{X}\rightarrow\R$ is called $L$-smooth, if for all $\bx_1,\bx_2\in\mathcal{X}$ we have $\|\nabla F(\bx_1)-\nabla F(\bx_2)\|_2 \leq L \|\bx_1-\bx_2\|_2$ w.r.t.~the $\ell_2$ norm.
Note that this directly implies that~\citep[Lemma 1.2.3]{Nesterov04}, for all $\bx,\by \in \R^d$ we have
\begin{equation}
\label{eq:smooth}
\left|F(\bx_2)-F(\bx_1)-\langle \nabla F(\bx_1), \bx_2-\bx_1\rangle\right|
\leq \frac{L}{2}\|\bx_2-\bx_1\|_2^2~.
\end{equation}
\end{assumption}

More in detail, the above smoothness assumption is considered ``weak'' and is ubiquitous in analyses of optimization algorithms in the non-convex setting. Admittedly, in many neural networks, it is only approximately true because ReLUs activation functions are non-smooth. However, if the number of training points is large enough, it is a good approximation of the loss landscape.

\section{Black-box Oracles and Convergence Rates}
To solve an optimization problem, we employ an \emph{algorithm}, a finite sequence of rigorous instructions which, given information about the problem, will eventually output a solution to the problem. We are mainly interested in \emph{iterative algorithms} which proceeds in discrete steps $t=1,2,\ldots$ and in each step operates on the results of previous steps resulting a sequence of updates $\bx_1, \bx_2,\ldots$. Of course, the more information we are given, the more efficient we can solve the problem, with one extreme of revealing everything and the other extreme of giving nothing. Thus, it makes sense to restrict our attention to certain classes of algorithms by specifying which information is accessible, as only then can we compare one with another and pick the best candidate.

The algorithms we focus on in this dissertation are those with access to a deterministic first-order black-box oracle (FO)~\citep{Nesterov04}:
\begin{equation*}
Given\ \bx,\ an\ FO\ returns\ [F(\bx), \nabla F(\bx)],
\end{equation*}
which we call the \emph{deterministic} setting, or a stochastic first-order black-box oracle (SFO):
\begin{align*}
&Given\ \bx,\ an\ SFO\ returns\ [f(\bx, \xi), \nabla f(\bx, \xi)]\\
&with\ \E_{\xi} f(\bx, \xi) = F(\bx),\ \E_{\xi} \nabla f(\bx, \xi) = \nabla F(\bx),
\end{align*}
where $\xi$ is a vector drawn by the oracle from an arbitrary set and we call it the \emph{stochastic} setting.

In the stochastic setting, we will focus on the optimization problem
\begin{equation}
\label{eq:stoc_opt_obj}
    \min_{\bx\in\R^d} F(x) := \E_{\xi\sim\mathcal{D}}[f(\bx, \xi)],
\end{equation}
where $\xi$ is a random variable representing a randomly selected data sample or random noise following an unknown distribution $\mathcal{D}$.

When comparing one algorithm to another, we would first check if they will \emph{converge}, namely if they will approach the desired solution, and if so, how fast they converge, namely the \emph{convergence rate}. There are two interchangeable forms that are widely used to describe the convergence rate:
\begin{itemize}
    \item fix the precision $\epsilon > 0$ quantifying how close we want the output of the algorithm to be w.r.t. the desired solution, e.g., $F(\bx_{output}) - F^* \le \epsilon$ and compute the number of oracle calls $N$ to achieve such precision in terms of $\epsilon$ like $N=\mathcal{O}(\epsilon^{-2})$.
    \item  fix the number of oracle calls an algorithm can make say $N$ and computes the precision of the output in terms of $N$ like $F(\bx_{output}) - F^* = \mathcal{O}\left(\frac{1}{N}\right)$.
\end{itemize}
We will mainly use the second form in this dissertation.

\section{Adaptive Optimization Algorithms}
For the scenario of a deterministic first-order black-box oracle, a vastly popular optimizer is the \textbf{Gradient Descent} (GD) which can be traced back to~\citet{cauchy1847methode}. GD proceeds along the negative direction of the gradient based on the intuition that the gradient represents the direction of the fastest increase. Mathematically, given an initial point $\bx_1\in\mathcal{X}$, GD iteratively updates through
\begin{equation}
    \bx_{t+1} = \bx_{t} - \eta_t\nabla F(\bx_t),
\end{equation}
where $\eta_t$ is called the \emph{step size} at time $t$ and controls how far the algorithm moves. Examples of step size sequences including a constant schedule $\eta_t = \eta$, a polynomial schedule $\eta_t = \frac{1}{t}$, and an exponential schedule $\eta_t = \exp(-t)$.

Meanwhile, in the stochastic setting, when the true gradient is unavailable, we only receive a stochastic gradient $\nabla f(\bx,\xi)$ where $\xi$ is a random variable denoting the stochasticity. Note that, when convenient, we will refer to $\nabla f(\bx_t,\xi_t)$ as $\bg_t$. Then, a counterpart of GD that is widely used is \textbf{Stochastic Gradient Descent}~\citep{robbins1951stochastic} which updates through
\begin{equation}
\label{eq:sgd}
\bx_{t+1} = \bx_t - \eta_t \nabla f(\bx_t,\xi_t),
\end{equation}

Despite their wide usage, the performance of GD/SGD depends crucially on the choice of its step size. The tuning of the step size is notoriously laborious and the optimal choice of it can vary drastically across different problems.

To save the labor of tuning, adaptive algorithms come to the rescue. \emph{An algorithm is said to be adaptive to a certain parameter (of the optimization problem) if it does not need a priori knowledge of such parameter but performs competitively to those that know it (up to some additional cost).}

Adaptation is a general concept and an algorithm can be adaptive to any characteristic of the optimization problem. The idea is formalized in \citep{Nesterov15b} with the equivalent name of \emph{universality}, but it goes back at least to the ``self-confident'' strategies in online convex optimization~\citep{AuerCG02}. Indeed, the famous AdaGrad algorithm~\citep{McMahanS10,DuchiHS10} uses exactly this concept of adaptation to design an algorithm \emph{adaptive to the gradients}. Nowadays, ``adaptive step size'' tend to denote coordinate-wise ones, with no guarantee of adaptation to any particular property. There is an abundance of adaptive optimization algorithm in the convex setting~\citep[e.g.,][]{McMahanS10,DuchiHS10,KingmaB15,ReddiKK18}, while only a few in the more challenging non-convex setting~\citep[e.g.,][]{ChenZTYG18}. The first analysis to show adaptivity to the noise of non-convex SGD with appropriate step sizes is in \citet{LiO19} and later in \citet{WardWB19} under stronger assumptions. Then, \citet{LiO20} studied the adaptivity to the noise of AdaGrad plus momentum, with a high probability analysis.

\begin{algorithm}[t]
\begin{algorithmic}[1]
\STATE{\textbf{Input} $\bx_1\in\mathcal{X}$, $D$ (diameter of $\mathcal{X}$)}
\STATE{\textbf{Set} $Q_0 = 0$}
\FOR{$t = 1,2,\ldots,T$}
\STATE{\textbf{Update} $Q_{t}=Q_{t-1}+\|\nabla F(\bx_t)\|^2$}
\STATE{\textbf{Set} $\eta_t = \frac{D}{\sqrt{2Q_t}}$}
\STATE{\textbf{Update} $\bx_{t+1}=\prod_{\mathcal{X}}(\bx_{t}-\eta_t \nabla F(\bx_t))$}
\ENDFOR
\STATE{\textbf{Output}: $\bar{\bx}\triangleq\frac1T\sum^T_{t=1}\bx_t$}
\end{algorithmic}
\caption{AdaGrad with a global step size~\citep{LevyYC18}}
\label{algo:adagrad}
\end{algorithm}

As an example describing what adaptivity is, we show a variant of AdaGrad in Algorithm~\ref{algo:adagrad} where $\prod_{\mathcal{X}}(\cdot)$ denotes the projection onto $\mathcal{X}$ namely $\prod_{\mathcal{X}}(\bx) = \arg\min_{\by\in\mathcal{X}}\|\by - \bx\|_2^2$. It has the following guarantee~\citep{LevyYC18} (proof in Appendix~\ref{sec:omit_proofs}):
\begin{thm}
Assume $F$ to be differentiable, convex, and has $D$-bounded-domain namely $D :=\max_{\bx, \by\in\mathcal{X}}\|\bx - \by\|_2$ which we call the diameter of $\mathcal{X}$, Algorithm~\ref{algo:adagrad} guarantees
\begin{equation}
\label{eq:conv_bound_adagrad}
F(\bar{\bx}) - F(\bx^*) \le \frac{\sqrt{2}D}{T}\sqrt{\sum^T_{t=1}\|\nabla F(\bx_t)\|_2^2}~.
\end{equation}
\end{thm}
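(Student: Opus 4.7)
The plan is to follow the standard template for proving data-dependent regret/convergence bounds of projected gradient methods with adaptive step sizes. First I would apply the first-order convexity inequality~\eqref{eq:convex_grad} at each iterate to get $F(\bx_t)-F(\bx^*)\le \langle \nabla F(\bx_t),\bx_t-\bx^*\rangle$, and then use Jensen's inequality on the average iterate $\bar{\bx}$ so that it suffices to bound $\tfrac{1}{T}\sum_{t=1}^T \langle \nabla F(\bx_t),\bx_t-\bx^*\rangle$.

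Next I would upper-bound each inner product with the usual ``one-step'' analysis driven by the update rule and the non-expansiveness of the Euclidean projection $\prod_{\mathcal{X}}$. Expanding $\|\bx_{t+1}-\bx^*\|_2^2 \le \|\bx_t - \eta_t \nabla F(\bx_t)-\bx^*\|_2^2$ and rearranging gives
\begin{equation*}
\langle \nabla F(\bx_t),\bx_t-\bx^*\rangle \le \frac{1}{2\eta_t}\bigl(\|\bx_t-\bx^*\|_2^2-\|\bx_{t+1}-\bx^*\|_2^2\bigr) + \frac{\eta_t}{2}\|\nabla F(\bx_t)\|_2^2~.
\end{equation*}
Summing over $t$ and using Abel's summation on the first term, the fact that $1/\eta_t$ is nondecreasing, and the bounded diameter hypothesis $\|\bx_t-\bx^*\|_2\le D$, the telescoping collapses to $\tfrac{D^2}{2\eta_T}=\tfrac{D\sqrt{2Q_T}}{2}$.

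For the remaining term $\sum_t \tfrac{\eta_t}{2}\|\nabla F(\bx_t)\|_2^2 = \tfrac{D}{2\sqrt{2}}\sum_t \tfrac{\|\nabla F(\bx_t)\|_2^2}{\sqrt{Q_t}}$, I would invoke the standard AdaGrad-style lemma $\sum_{t=1}^T \tfrac{a_t}{\sqrt{\sum_{s\le t} a_s}} \le 2\sqrt{\sum_{t=1}^T a_t}$ applied with $a_t=\|\nabla F(\bx_t)\|_2^2$, which yields the same quantity $\tfrac{D\sqrt{2Q_T}}{2}$. Adding the two contributions gives $D\sqrt{2Q_T}=\sqrt{2}D\sqrt{\sum_{t=1}^T\|\nabla F(\bx_t)\|_2^2}$, and dividing by $T$ recovers~\eqref{eq:conv_bound_adagrad}.

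The main obstacle I anticipate is the delicate telescoping with the data-dependent step sizes: the textbook constant-$\eta$ argument no longer telescopes cleanly, and one must carefully exploit the monotonicity of $1/\eta_t$ together with the bounded domain to avoid losing a factor of $T$. The matching of the two ``halves'' of the bound (the Bregman-distance part and the gradient-norm part) to exactly $\tfrac{D\sqrt{2Q_T}}{2}$ each is what makes the constant $\sqrt{2}$ in the statement tight; this is a standard but somewhat tricky calculation that hinges on the specific choice $\eta_t=D/\sqrt{2Q_t}$, and I would double-check the constants there before concluding.
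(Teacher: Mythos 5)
Your proof is correct and the constants work out exactly as you describe: the telescoping term and the gradient-norm term each contribute $\tfrac{D\sqrt{2Q_T}}{2}$, giving $\sqrt{2}D\sqrt{Q_T}$ before dividing by $T$. This is essentially the same argument the paper uses for its analogous coordinate-wise result (Theorem~\ref{thm:AdaGrad}), which likewise combines convexity, the projection lemma, the monotone $1/\eta_t$ telescoping against the diameter, and the standard $\sum_t a_t/\sqrt{\sum_{s\le t}a_s}\le 2\sqrt{\sum_t a_t}$ lemma; note that the appendix the theorem points to does not actually contain a proof of this particular statement, so your write-up fills that gap faithfully.
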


It can \emph{adapt} to the sum of the gradients norm squared! To see why this is good, consider the projected gradient descent algorithm with a constant step size which updates in the form of Line 6 of Algorithm~\ref{algo:adagrad} but with a fixed step size $\eta_t = \eta$. We can show the following guarantee for this algorithm.
\begin{thm}
\label{thm:pgd_convex}
Assume $F$ to be differentiable, convex, and has $D$-bounded-domain, the projected GD algorithm with a fixed step size $\eta$ guarantees for $\bar{\bx} = \frac1T\sum^T_{t=1}\bx_t$ that
\begin{equation}
\label{eq:conv_bound_pgd}
F(\bar{\bx}) - F(\bx^*) \le \frac{D^2}{2\eta T} + \frac{\eta}{2T}\sum^T_{t=1}\|\nabla F(\bx_t)\|_2^2~.
\end{equation}
\end{thm}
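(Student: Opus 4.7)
The plan is to run the standard projected-gradient-descent analysis for convex objectives, keeping the $\|\nabla F(\bx_t)\|_2^2$ term explicit rather than bounding it by a global Lipschitz constant. I would begin from the update rule $\bx_{t+1}=\prod_{\mathcal{X}}(\bx_t-\eta\nabla F(\bx_t))$ and use the fact that the Euclidean projection onto a convex set is non-expansive, so that for any $\bx^*\in\mathcal{X}$,
\begin{equation*}
\|\bx_{t+1}-\bx^*\|_2^2 \le \|\bx_t-\eta\nabla F(\bx_t)-\bx^*\|_2^2.
\end{equation*}

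Next I would expand the right-hand side to get
\begin{equation*}
\|\bx_{t+1}-\bx^*\|_2^2 \le \|\bx_t-\bx^*\|_2^2 - 2\eta\langle\nabla F(\bx_t), \bx_t-\bx^*\rangle + \eta^2\|\nabla F(\bx_t)\|_2^2,
\end{equation*}
and rearrange to isolate the inner product. Using the first-order convexity inequality~\eqref{eq:convex_grad}, i.e.\ $F(\bx_t)-F(\bx^*)\le\langle\nabla F(\bx_t),\bx_t-\bx^*\rangle$, this yields a per-step bound
\begin{equation*}
F(\bx_t)-F(\bx^*) \le \frac{\|\bx_t-\bx^*\|_2^2-\|\bx_{t+1}-\bx^*\|_2^2}{2\eta} + \frac{\eta}{2}\|\nabla F(\bx_t)\|_2^2.
\end{equation*}

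Summing from $t=1$ to $T$ telescopes the first term to $\frac{\|\bx_1-\bx^*\|_2^2-\|\bx_{T+1}-\bx^*\|_2^2}{2\eta} \le \frac{D^2}{2\eta}$, where the last inequality uses that $\mathcal{X}$ has diameter $D$ (both $\bx_1$ and $\bx^*$ lie in $\mathcal{X}$). Dividing by $T$ and applying Jensen's inequality to the convex function $F$ on the left, so that $F(\bar{\bx})\le\frac{1}{T}\sum_{t=1}^T F(\bx_t)$, produces exactly the claimed bound~\eqref{eq:conv_bound_pgd}.

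There is really no substantive obstacle here: the only pieces needed are (i) non-expansiveness of the projection, (ii) convexity of $F$, and (iii) the diameter assumption to control $\|\bx_1-\bx^*\|_2$. The mild subtlety is remembering to keep the $\eta^2\|\nabla F(\bx_t)\|_2^2$ term rather than absorbing it into an $L$-smoothness bound, since the whole point of stating the theorem in this form is to enable a later comparison with the adaptive AdaGrad bound~\eqref{eq:conv_bound_adagrad}, where the analogous sum $\sum_t\|\nabla F(\bx_t)\|_2^2$ is automatically balanced by the algorithm's own step size.
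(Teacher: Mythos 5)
Your proposal is correct and is essentially identical to the paper's own proof: both rest on the non-expansiveness of the projection (the projection lemma), the first-order convexity inequality, the telescoping of $\|\bx_t-\bx^*\|_2^2-\|\bx_{t+1}-\bx^*\|_2^2$, the diameter bound, and Jensen's inequality for $F(\bar{\bx})$. The only difference is cosmetic — the paper chains the inequalities forward starting from $F(\bar{\bx})-F(\bx^*)$, while you derive the per-step bound first and then sum — so there is nothing substantive to add.
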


Obviously, to get a guarantee like~\eqref{eq:conv_bound_adagrad}, we would need to set $\eta = \frac{D}{\sqrt{\sum^T_{t=1}\|\nabla F(\bx_t)\|_2^2}}$. Yet, this step size requires knowledge of all updates which is clearly impossible amid running the algorithm. This immediately shows the advantage of adaptive algorithms.

\section{Structure of the Dissertation}


Chapter~\ref{chapter:adapt_noise} is devoted to the adaptation to the level of noise in evaluating stochastic gradients under the stochastic optimization setting. There, we will present our work on designing/analyzing noise adaptive algorithms in both the general smooth non-convex setting and the setting with the additional PL condition.

Chapter~\ref{chapter:adapt_scale} discusses the problem that the scales of gradient magnitudes can vary significantly across layers in training deep neural networks. We will identify scenarios where the renowned Adam optimizer~\citep{KingmaB15} is inferior to its variant AdamW~\citep{LoshchilovH18}, and then correlate this observation to the scale-freeness property AdamW enjoys while Adam does not. A connection between AdamW and proximal updates will then be revealed providing a potential explanation for where the scale-freeness comes from.

Chapter~\ref{chapter:adapt_smoothness} focuses on the setting of relaxed smoothness where the gradients can change drastically. We will start by reporting empirical evidence showing that this condition captures the training of Transformers and propose to further refine it to a coordinate-wise level. A generalized SignSGD algorithm is then proposed which on one end recovers the SignSGD algorithm with matching theoretical convergence guarantees as SGD with gradient clipping, while on the other end closely mimics Adam with matching empirical performance. This algorithm can be made to adapt to the unknown parameter characterizing the relaxed smoothness condition.

Finally, Chapter~\ref{chapter:Conclusions} concludes this dissertation with major contributions.

\section{Notations}
We use bold lower-case letters to denote vectors and upper-case letters for matrices, e.g., $\bu\in\R^d, \boldsymbol{A}\in\R^{m\times n}$. The i$^{th}$ coordinate of a vector $\bu$ is $u_i$.
Unless otherwise noted, we study the Euclidean space $\R^d$ with the inner product $\langle\cdot, \cdot\rangle$, and all the norms are the Euclidean norms.
The dual norm $\|\cdot\|_*$ is the norm defined by $\|\bv\|_*=\sup_{\bu}\{\langle \bu, \bv\rangle:\|\bu\|\le1\}$.
$\E[\bu]$ means the expectation with respect to the underlying probability distribution of a random variable $\bu$, and $\E_t[\bu]$ is the conditional expectation of $\bu$ with respect to the past.
The gradient of $F$ at $\bx$ is denoted by $\nabla F(\bx)$.
$\partial F$ denotes the set of subgradients. $[d]$ denotes the sequence $\{1,\ldots,d\}$.
\cleardoublepage


\chapter{Adaptation to Noise}
\label{chapter:adapt_noise}
\thispagestyle{myheadings}

[The results in Section~\ref{sec:noise_adapt_smooth} appeared in~\citet{ZhuangCO19} and the results in Section~\ref{sec:noise_adapt_pl} appeared in~\citet{LiZO21}.]

Gradient Descent is an intuitive yet effective algorithm that enjoys vast popularity in the machine learning community and beyond. In practice, however, the true gradient is not always available, either because it is impossible to obtain at all, or because evaluating it would be too expensive. Under such scenarios, we do stochastic optimization in which we only access a stochastic gradient and run Stochastic Gradient Descent instead. Yet, the noise in evaluating the stochastic gradients slows down the convergence or even leads to divergence if we do not tune the step size carefully.

Formally, in this chapter, we focus on optimizing problem~\eqref{eq:opt} using GD or~\eqref{eq:stoc_opt_obj} using SGD. Further, we will use following assumptions:

\begin{assumption}
\label{asp:noise_unbiased}
The stochastic gradient at each step $t$ is unbiased given the past, that is,
\begin{align*}
\E_{t}\left[\nabla f(\bx_t,\xi_t)\right]
=
\nabla F(\bx_t)~.
\end{align*}
\end{assumption}

\begin{assumption}
\label{asp:noise_var_bounded}
The stochastic gradient at each step $t$ has finite variance with respect to the $\ell_2$ norm given the past, that is,
\begin{align}
\E_{t}\left[\left\|\nabla f(\bx_t,\xi_t)-\nabla F(\bx_t)\right\|^2\right]
&=\sigma^2~.
\end{align}
\end{assumption}

The structure of this chapter is the following: we will first discuss how these two settings (deterministic vs.~stochastic) are different and why adapting to noise is desirable in Section~\ref{sec:noise_harm}. Next, we will introduce our work~\citep{ZhuangCO19} on designing an algorithm that uses no-regret online algorithms to compute optimal step sizes on the fly and guarantees convergence rates that are automatically adaptive to the level of noise in Section~\ref{sec:noise_adapt_smooth}. Then, in Section~\ref{sec:noise_adapt_pl}, we will present our work~\citep{LiZO21} showing that, under the added PL condition, SGD employing two vastly popular empirical step size schedules enjoys a faster convergence rate while still being adaptive to the level of noise.

\section{Why Adaptation to Noise is Desirable}
\label{sec:noise_harm}
Intuitively, when using SGD, the noisy gradients pointing to a random direction would slow down the convergence speed. Indeed, there are already established lower bounds showing that stochastic optimization is fundamentally more difficult than the deterministic one. For example, in the general $L$-smooth non-convex scenario, when we can access the true gradient (the deterministic setting), the best possible worst-case rate of convergence to a stationary point for any first-order optimization algorithm is $O(\frac1T)$~\citep{CarmonDHS21}, which can be obtained by Gradient Descent with a constant step size $\eta=\frac{1}{L}$. In contrast, when there is noise (assuming zero mean and bounded variance), namely the stochastic setting, no first-order algorithm can do better than the $O(\frac{\sigma}{\sqrt{T}})$ rate~\citep{ArjevaniCDFSW19} and this rate can be achieved by Stochastic Gradient Descent with a constant step size of $\eta=\frac{1}{\sigma\sqrt{T}}$ or a time-varying one $\eta_t = \frac{1}{\sigma\sqrt{t}}$.

Clearly, the deterministic and the stochastic settings are intrinsically different. Without knowing the noise level $\sigma$, especially if $\sigma = 0$ or not, there is no single step size schedule that will make GD/SGD converge at the optimal rate in both settings. We would have to spend a lot of time tuning the algorithm very carefully. Indeed, \citet{GhadimiL13} proved the following result for SGD with a constant step size in the smooth setting:
\begin{thm}
For a $L$-smooth function $F$ (Assumption~\ref{asp:smooth}), under Assumption~\ref{asp:noise_unbiased} and~\ref{asp:noise_var_bounded}, SGD with a constant step size $\eta\leq\frac{1}{L}$ guarantees
\begin{equation}
\E [\|\nabla F(\bx_i)\|^2]
\leq O\left( \frac{F(\bx_1)-F^\star}{\eta T} + \eta \sigma^2 \right),
\end{equation}
where $\bx_i$ is uniformly randomly picked in $\bx_1,\ldots,\bx_T$.
\end{thm}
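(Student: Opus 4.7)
The plan is to execute the standard one-step descent analysis enabled by $L$-smoothness, handle the stochastic gradient via the variance bound, then telescope and average. Concretely, I would start from the smoothness inequality \eqref{eq:smooth} applied at $\bx_t$ and $\bx_{t+1}$, and plug in the SGD update $\bx_{t+1} = \bx_t - \eta \bg_t$ with $\bg_t = \nabla f(\bx_t,\xi_t)$ to obtain
\begin{equation*}
F(\bx_{t+1}) \leq F(\bx_t) - \eta \langle \nabla F(\bx_t), \bg_t\rangle + \tfrac{L\eta^2}{2}\|\bg_t\|^2~.
\end{equation*}

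Next, I would take the conditional expectation $\E_t[\cdot]$. Assumption~\ref{asp:noise_unbiased} collapses the inner product to $\|\nabla F(\bx_t)\|^2$, and for the squared norm I would use the standard bias/variance split $\E_t[\|\bg_t\|^2] = \|\nabla F(\bx_t)\|^2 + \E_t[\|\bg_t - \nabla F(\bx_t)\|^2]$ followed by Assumption~\ref{asp:noise_var_bounded}. This yields
\begin{equation*}
\E_t[F(\bx_{t+1})] \leq F(\bx_t) - \eta\bigl(1 - \tfrac{L\eta}{2}\bigr)\|\nabla F(\bx_t)\|^2 + \tfrac{L\eta^2}{2}\sigma^2~.
\end{equation*}
The step-size constraint $\eta \leq 1/L$ guarantees $1 - L\eta/2 \geq 1/2$, which turns this into a genuine descent inequality (up to the $\sigma^2$ noise floor): the coefficient of $\|\nabla F(\bx_t)\|^2$ is at least $\eta/2$.

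Finally, I would take total expectation, rearrange to isolate $\|\nabla F(\bx_t)\|^2$, telescope over $t=1,\dots,T$ so the right-hand side collapses to $F(\bx_1) - \E[F(\bx_{T+1})] + TL\eta^2\sigma^2/2$, and upper bound $\E[F(\bx_{T+1})] \geq F^\star$ using Assumption~\ref{asp:objective_func}. Dividing by $\eta T/2$ gives
\begin{equation*}
\frac{1}{T}\sum_{t=1}^T \E[\|\nabla F(\bx_t)\|^2] \leq \frac{2(F(\bx_1) - F^\star)}{\eta T} + L\eta\sigma^2,
\end{equation*}
and since $\bx_i$ is sampled uniformly from $\bx_1,\dots,\bx_T$, the left-hand side is exactly $\E[\|\nabla F(\bx_i)\|^2]$, yielding the claimed bound.

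There is no genuinely hard step here; the proof is entirely mechanical once one commits to the smoothness-descent template. The only place where care is needed is the tower/law-of-total-expectation argument used when telescoping (so that $\E[F(\bx_t)] - \E[F(\bx_{t+1})]$ is what actually appears after taking total expectation), and the placement of the factor $1/2$ arising from the restriction $\eta \leq 1/L$, which is what makes the constants match the stated $O(\cdot)$ bound.
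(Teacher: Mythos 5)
Your proof is correct and is exactly the standard descent-lemma argument: the paper itself does not reprove this cited result of Ghadimi--Lan, but your one-step inequality is precisely the paper's Lemma~\ref{lemma:start} specialized to a constant step size with $a=0$ and $b=\sigma^2$, followed by the same telescoping and uniform-sampling step. The constants and the use of $\eta\le 1/L$ to get the $\eta/2$ coefficient all check out.
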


From the above, it is immediate to see that we need a step size of the form $O(\min(\tfrac{\sqrt{F(\bx_1)-F^\star}}{\sigma \sqrt{T}},\tfrac{1}{L}))$ to have the best worst case convergence of $O(\frac{1}{T}+\frac{\sigma}{\sqrt{T}})$. In words, this means that we get a faster rate, $O(\tfrac{1}{T})$, when there is no noise, and a slower rate, $O(\tfrac{\sigma}{\sqrt{T}})$, in the presence of noise.

In practice, however, we usually do not know the variance of the noise $\sigma$, which makes the above optimal tuning of the step size difficult to achieve. Even worse, the variance can change over time. For example, it may decrease over time if $F(\bx) = \E_j[F_j(\bx)]$ and each $F_j$ has zero gradient at the local optimum we are converging to. Moreover, even assuming the knowledge of the variance of the noise, the step sizes proposed in \citet{GhadimiL13} assume the knowledge of the unknown quantity $F(\bx_1)-F^\star$.

\begin{figure}
    \centering
    \includegraphics[width=\textwidth]{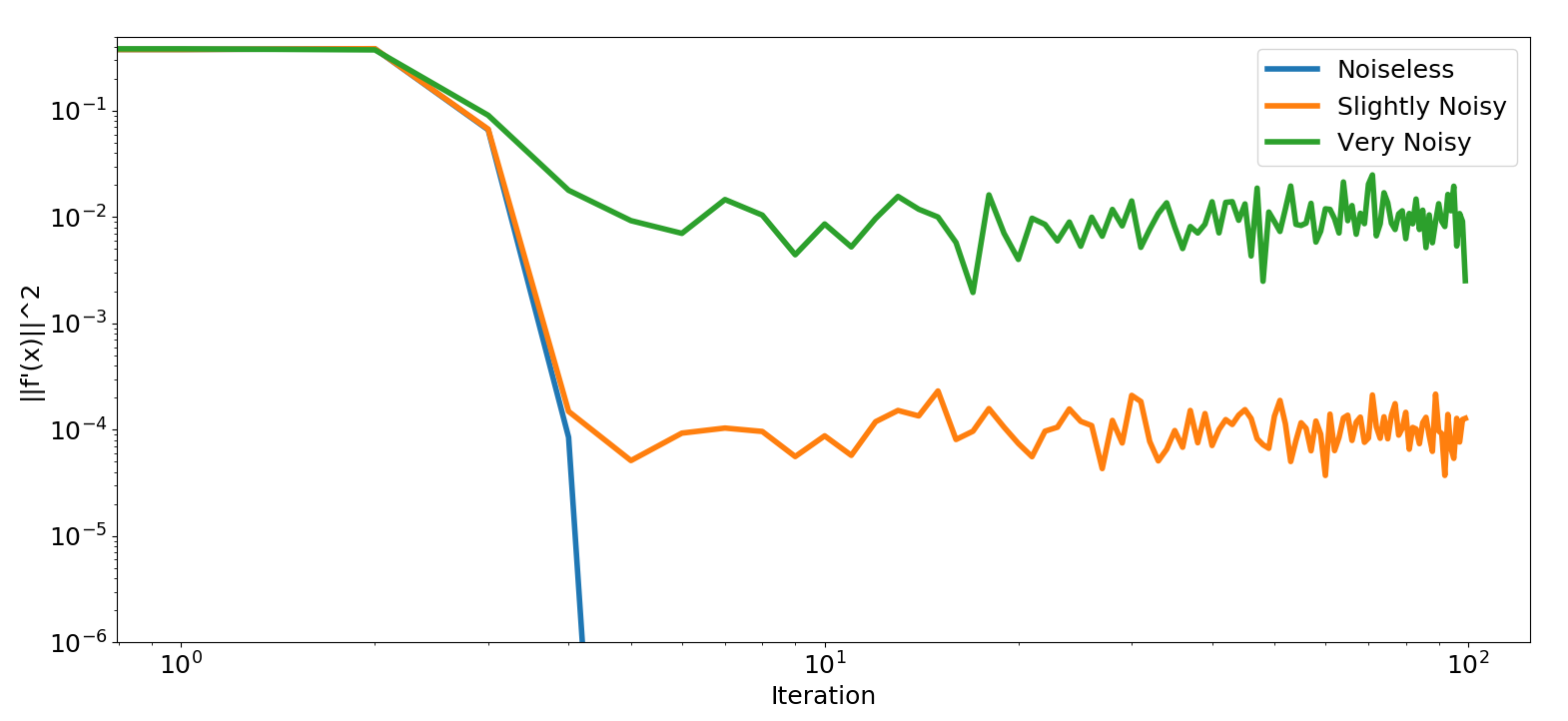}
    \caption{Running SGD with a fixed step size on a smooth non-convex function for different noise levels.}
    \label{fig:sgd_constant_lr_smooth}
\end{figure}

We use Figure~\ref{fig:sgd_constant_lr_smooth} to depict the above result, in which we run SGD with a fixed constant step size $\eta=0.5$ on a non-convex function $F(x) = \frac{x^2}{1+x^2}$ which is $2$-smooth. When accessing the gradient, we add additive white Gaussian noise with different variances $\sigma$ ($0, 0.01, 0.1$, respectively). The figure clearly shows that one constant step size does not work for different noise settings and that tuning is necessary.

Another solution would be to obtain an explicit estimate of the variances of the noise, for example by applying some concentration inequality to the sample variance, and using it to set the step sizes. This approach is suboptimal because it does not directly optimize the convergence rates, relying instead on a loose worst-case analysis.

In light of this problem, in the next section, we will present an algorithm we designed that guarantees the optimal rates in both deterministic and stochastic settings automatically without knowing $\sigma$, namely adapting to noise.

\section{Adaptation to Noise in the Smooth Non-convex Setting}
\label{sec:noise_adapt_smooth}
\subsection{Surrogate Losses}
\label{sec:surrogate}
Before presenting the algorithm, we first introduce the notion of surrogate losses which motivates the key idea behind the design of our algorithm: we use the smoothness of the objective function to transform the problem of optimizing a non-convex objective function into the problem of optimizing a series of convex loss functions, which we solve by an online learning algorithm.

Specifically, consider using SGD with non-convex $L$-smooth losses starting from an initial point $\bx_1$. At each time $t$, we define the \emph{surrogate loss} $\ell_t:\R^d \rightarrow\R$ as
\begin{equation}
\ell_t(\eta) 
=-\eta \langle \nabla f(\bx_t,\xi_t), \nabla f(\bx_t,\xi'_t) \rangle 
+
\frac{L \eta^2}{2} \|\nabla f(\bx_t,\xi_t)\|^2, \label{eq:surrogate}
\end{equation}
where $\nabla f(\bx_t,\xi_t)$ and $\nabla f(\bx_t,\xi'_t)$ are the noisy stochastic gradients received from the black-box oracle at time $t$. Note that, when convenient, we will refer to $\nabla f(\bx_t,\xi_t)$ and $\nabla f(\bx_t,\xi'_t)$ as $\bg_t$ and $\bg_t'$ respectively. We also assume that:

\begin{assumption}
\label{asp:two_grad_independent}
The two stochastic gradients at step $t$ are independent given the past, i.e.,
\begin{equation}
\E_t\left[\langle \nabla f(\bx_t,\xi_t), \nabla f(\bx_t,\xi'_t)\rangle\right]
=\|\nabla F(\bx_t)\|^2~.
\end{equation}
\end{assumption} 
It is clear that $\ell_t$ is convex. Moreover, the following key result shows that these surrogate losses upper bound the expected decrease of the function value $F$.
\begin{thm}
Assume Assumption~\ref{asp:noise_unbiased} and~\ref{asp:two_grad_independent} hold and $\eta_t$ is independent from $\xi_j$ and $\xi'_j$ for $j\geq t$. Then, for an $L$-smooth function, the SGD update in \eqref{eq:sgd} gives us
\label{thm:surrogate}
\[
\E\left[F(\bx_{t+1})-F(\bx_t)\right]
\leq \E\left[\ell_t(\eta_t)\right]~.
\]
\end{thm}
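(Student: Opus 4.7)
The plan is to start with the standard smoothness upper bound (the descent lemma), apply the SGD update to expand the right-hand side in terms of $\bg_t$, and then use Assumption~\ref{asp:two_grad_independent} to rewrite the linear term $\langle \nabla F(\bx_t), \bg_t\rangle$ as $\langle \bg_t, \bg_t'\rangle$ in expectation. The measurability conditions on $\eta_t$ and $\bx_t$ will let us pull them through the conditional expectation safely.

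Concretely, first I would invoke Assumption~\ref{asp:smooth} applied to the pair $(\bx_t,\bx_{t+1})$ using the pointwise form~\eqref{eq:smooth}:
\[
F(\bx_{t+1})-F(\bx_t)\;\le\;\langle \nabla F(\bx_t),\bx_{t+1}-\bx_t\rangle + \frac{L}{2}\|\bx_{t+1}-\bx_t\|^2~.
\]
Plugging in the SGD update $\bx_{t+1}-\bx_t=-\eta_t \bg_t$ yields the deterministic bound
\[
F(\bx_{t+1})-F(\bx_t)\;\le\;-\eta_t\langle \nabla F(\bx_t),\bg_t\rangle + \frac{L\eta_t^2}{2}\|\bg_t\|^2~.
\]

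Next I would take the conditional expectation $\E_t[\cdot]$ with respect to the past. Since $\bx_t$ is determined by the past and $\eta_t$ is independent of $\xi_t,\xi_t'$ by hypothesis, both $\nabla F(\bx_t)$ and $\eta_t$ factor out of $\E_t$. Applying Assumption~\ref{asp:noise_unbiased} gives $\E_t[\bg_t]=\nabla F(\bx_t)$, hence
\[
\E_t\!\left[-\eta_t\langle \nabla F(\bx_t),\bg_t\rangle\right] = -\eta_t\|\nabla F(\bx_t)\|^2~.
\]
On the other side, by Assumption~\ref{asp:two_grad_independent} we have $\E_t[\langle \bg_t,\bg_t'\rangle]=\|\nabla F(\bx_t)\|^2$, so
\[
\E_t[\ell_t(\eta_t)] \;=\; -\eta_t\E_t[\langle \bg_t,\bg_t'\rangle] + \frac{L\eta_t^2}{2}\E_t[\|\bg_t\|^2] \;=\; -\eta_t\|\nabla F(\bx_t)\|^2 + \frac{L\eta_t^2}{2}\E_t[\|\bg_t\|^2]~.
\]
Comparing the two right-hand sides shows they agree on the linear term and match on the quadratic term, so $\E_t[F(\bx_{t+1})-F(\bx_t)]\le\E_t[\ell_t(\eta_t)]$. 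Taking the outer expectation and using the tower property finishes the argument.

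The only subtle point is the careful bookkeeping of what is measurable with respect to the past: specifically, that $\eta_t$ has been assumed independent of $\xi_j,\xi_j'$ for $j\ge t$ so it behaves as a constant under $\E_t$, and that the ``auxiliary'' stochastic gradient $\bg_t'$ is only used as a convenient unbiased rewriting of $\|\nabla F(\bx_t)\|^2$ — it never actually appears in the SGD update itself. Once that is clarified, the rest is just matching the two conditional expectations term by term.
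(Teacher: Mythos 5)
Your proposal is correct and follows essentially the same route as the paper: descent lemma from smoothness, substitution of the SGD update, and use of the unbiasedness and conditional independence of the two stochastic gradients (together with the measurability of $\eta_t$ and $\bx_t$ with respect to the past) to identify the linear term with $-\eta_t\E_t[\langle\bg_t,\bg_t'\rangle]$. The only cosmetic difference is that you compute the conditional expectations of both sides and match them term by term, whereas the paper rewrites the descent bound directly into $\E[\ell_t(\eta_t)]$; the underlying argument is identical.
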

\begin{proof}[Proof of Theorem~\ref{thm:surrogate}]
The $L$-smoothness of $F$ gives us:
\begin{align}
\E\left[F(\bx_{t+1})-F(\bx_t)\right]
&\le \E\left[\langle\nabla F(\bx_t),\bx_{t+1}-\bx_t \rangle + \frac L2\|\bx_{t+1}-\bx_t\|^2\right]\\
&= \E\left[\langle\nabla F(\bx_t),-\eta_t \nabla f(\bx_t,\xi_t) \rangle+\frac L2\eta_t^2\|\nabla f(\bx_t,\xi_t)\|^2\right]\\
&= \E\left[\langle\nabla F(\bx_t),\E_t\left[-\eta_t \nabla f(\bx_t,\xi_t)\right] \rangle+\frac L2\eta_t^2\|\nabla f(\bx_t,\xi_t)\|^2\right].
\end{align}
Now observe that $\nabla F(\bx_t)=\E_t\left[\nabla f(\bx_t,\xi'_t)\right]$, so that
\begin{align*}
\E\left[\langle \E_t\left[\nabla f(\bx_t,\xi'_t)\right],\E_t\left[-\eta_t \nabla f(\bx_t,\xi_t)\right]\rangle\right]
&= \E\left[ \E_t\left[\langle \nabla f(\bx_t,\xi'_t),-\eta_t \nabla f(\bx_t,\xi_t) \rangle\right]\right]\\
&= \E\left[\langle \nabla f(\bx_t,\xi'_t),-\eta_t \nabla f(\bx_t,\xi_t) \rangle\right]~.
\end{align*}
Putting it all together, we have the stated inequality.
\end{proof}

The above theorem tells us that if we want to decrease the function $f$, we might instead try to minimize the convex surrogate losses $\ell_t$. In the following, we build upon this intuition to design an online learning procedure that adapts the step sizes of SGD to achieve the optimal convergence rate.

\subsection{SGD with Online Learning}
\label{ssec:sgdol}
The surrogate losses allow us to design an online convex optimization procedure to learn the optimal step sizes. In each round, the step sizes are chosen by an online learning algorithm $\mathcal{A}$ fed with the surrogate losses $\ell_t$.
The online learning algorithm will minimize the regret: the difference between the cumulative sum of the losses of the algorithm, $\ell(\eta_t)$, and the cumulative losses of any fixed point $\eta$. In formulas, for a 1-dimensional online convex optimization problem, the regret is defined as
\[
\mathrm{Regret}_T(\eta) =\sum_{t=1}^T (\ell_t(\eta_t)- \ell_t(\eta))~.
\]
If the regret is small, we will have that the losses of the algorithm are not too big compared to the best losses, which implies that the step sizes chosen by the online algorithm are not too far from the optimal (unknown) step size.

\begin{algorithm}[tb]
\caption{Stochastic Gradient Descent with Online Learning (SGDOL)}
\label{algo:sgdol}
\begin{algorithmic}[1]
\STATE {\bfseries Input:} $\bx_1\in\mathcal{X},\ L$, an online learning algorithm $\mathcal{A}$
\FOR{$t = 1,2,\ldots,T$}
\STATE{\textbf{Compute} $\eta_t$ by running $\mathcal{A}$ on $\ell_{i}, i=1,\ldots,t-1$, as defined in \eqref{eq:surrogate}} \STATE{\textbf{Receive} two independent unbiased estimates of $\nabla F(\bx_{t})$: $\bg_t$, $\bg'_t$}
\STATE{\textbf{Update} $\bx_{t+1}=\bx_{t}-\eta_t \bg_t$}
\ENDFOR
\STATE{\textbf{Output}: uniformly randomly choose a $\bx_k$ from $\bx_1,\ldots,\bx_T$.}
\end{algorithmic}
\end{algorithm}

We call this procedure Stochastic Gradient Descent with Online Learning (SGDOL) and the pseudocode is in Algorithm~\ref{algo:sgdol}.

To prove its convergence rate, we need the following assumption:
\begin{assumption}
\label{asp:noise_norm_bounded}
The stochastic gradients at each step $t$ have bounded norms:
\[
\|\nabla f(\bx_t,\xi_t)\|\le G,\quad
\|\nabla f(\bx_t,\xi'_t)\|\le G~.
\]
\end{assumption}

Then, we can prove the following Theorem.
\begin{thm}
\label{thm:olsmooth}
For an $L$-smooth function $F$, under Assumption~\ref{asp:noise_unbiased} and~\ref{asp:noise_var_bounded}, for any $\eta>0$, SGDOL in Algorithm~\ref{algo:sgdol} satisfies
\begin{equation*}
\E\left[\left(\eta-\frac{L}{2}\eta^2\right)\sum_{t=1}^T \|\nabla F(\bx_t)\|^2\right]
\leq F(\bx_1) - F^\star + \E\left[\mathrm{Regret}_T(\eta)\right] + \frac{L \eta^2\sigma^2 T}{2}~.
\end{equation*}
\end{thm}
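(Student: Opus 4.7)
The plan is to convert the theorem into a statement about the surrogate losses $\ell_t$ and then exploit both the telescoping structure and the regret definition of the online learner. First, I would apply Theorem~\ref{thm:surrogate} at every step $t=1,\dots,T$ and sum, using that the left-hand side telescopes:
\[
\E[F(\bx_{T+1})-F(\bx_1)]\le \E\!\left[\sum_{t=1}^T\ell_t(\eta_t)\right].
\]
Since $F(\bx_{T+1})\ge F^\star$ by Assumption~\ref{asp:objective_func}, rearranging gives a lower bound on the cumulative surrogate loss of the algorithm, namely $\E[\sum_t\ell_t(\eta_t)]\ge F^\star-F(\bx_1)$.

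Next, I would invoke the regret identity $\sum_t\ell_t(\eta_t)=\sum_t\ell_t(\eta)+\mathrm{Regret}_T(\eta)$, valid for any fixed comparator $\eta$. This converts the bound on the algorithm's losses into an upper bound on $-\E[\sum_t\ell_t(\eta)]$ in terms of $F(\bx_1)-F^\star$ and $\E[\mathrm{Regret}_T(\eta)]$.

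Then comes the key computation: evaluate $\E[\ell_t(\eta)]$ in closed form. For the inner-product term, Assumption~\ref{asp:two_grad_independent} yields $\E_t[\langle \bg_t,\bg_t'\rangle]=\|\nabla F(\bx_t)\|^2$. For the squared-norm term, Assumptions~\ref{asp:noise_unbiased} and~\ref{asp:noise_var_bounded} give $\E_t[\|\bg_t\|^2]=\|\nabla F(\bx_t)\|^2+\sigma^2$. Substituting into the definition \eqref{eq:surrogate} produces
\[
\E[\ell_t(\eta)]=-\Bigl(\eta-\tfrac{L}{2}\eta^2\Bigr)\E[\|\nabla F(\bx_t)\|^2]+\tfrac{L\eta^2\sigma^2}{2}.
\]
Summing over $t$ and plugging into the regret inequality produces exactly the claimed bound after rearranging.

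I expect no genuine obstacle in this proof: it is essentially a bookkeeping argument. The only point that requires care is ensuring that $\eta$ is treated as a deterministic comparator (so it commutes with the expectation) while the $\eta_t$'s inside the regret are random but measurable with respect to the past, which is the very assumption invoked in Theorem~\ref{thm:surrogate}. Keeping track of the sign flip when converting the telescoping lower bound on $\E[\sum_t\ell_t(\eta_t)]$ into an upper bound on the positive quantity $(\eta-\tfrac{L}{2}\eta^2)\E[\sum_t\|\nabla F(\bx_t)\|^2]$ is the only subtlety worth double-checking.
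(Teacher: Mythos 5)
Your proposal is correct and follows essentially the same route as the paper's proof: sum Theorem~\ref{thm:surrogate}, bound $F(\bx_{T+1})$ below by $F^\star$, insert the regret decomposition around a fixed comparator $\eta$, and compute $\E_t[\ell_t(\eta)] = -(\eta-\tfrac{L}{2}\eta^2)\|\nabla F(\bx_t)\|^2 + \tfrac{L}{2}\eta^2\sigma^2$ using unbiasedness, the variance bound, and Assumption~\ref{asp:two_grad_independent}. The only remark worth making is that, like the paper's own argument, your computation of the inner-product term genuinely requires Assumption~\ref{asp:two_grad_independent} even though the theorem statement lists only Assumptions~\ref{asp:noise_unbiased} and~\ref{asp:noise_var_bounded}.
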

\begin{proof}[Proof of Theorem~\ref{thm:olsmooth}]
Summing the inequality in Theorem~\ref{thm:surrogate} from 1 to $T$:
\begin{align}
F^\star-F(\bx_1)
\le&\ \E[F(\bx_{T+1})]-F(\bx_1)\\
=&\sum^T_{t=1}\E\left[F(\bx_{t+1})-F(\bx_t)\right]\\
\le&
\sum^T_{t=1}\E\left[\ell_t(\eta_t)\right]\\
=&
\sum^T_{t=1}\E\left[\ell_t(\eta_t)-\ell_t(\eta)\right]
+
\sum^T_{t=1}\E\left[\ell_t(\eta)\right]\\
\le&\ 
\E\left[\mathrm{Regret}_T(\eta)\right]
+
\sum^T_{t=1}\E\left[\ell_t(\eta)\right]~.
\end{align}

Using the fact that
\begin{align*}
\E_t[\ell_t(\eta)] 
&= \left(-\eta+\frac{L}{2}\eta^2\right)\|\nabla F(\bx_t)\|^2 + \frac{L}{2}\eta^2\sigma^2,
\end{align*}
we have the stated bound.
\end{proof}

The only remaining ingredient for SGDOL is to decide on an online learning procedure. Given that the surrogate losses are strongly convex, we can use a Follow The Regularized Leader (FTRL) algorithm~\citep{Shalev-Shwartz07,AbernethyHR08,AbernethyHR12,McMahan17}. Note that this is not the only possibility, for example, we could use an optimistic FTRL algorithm which achieves even smaller regret~\citep{mohri2016accelerating}. Yet, FTRL is enough to show the potential of our surrogate losses. As shown in Algorithm~\ref{algo:aftrl}, in an online learning game in which we receive the convex losses $\ell_t$, FTRL constructs the predictions $\bv_t$ by solving the optimization problem 
\[
\bv_{t+1} = \argmin_{\bv\in\R^d} \ r(\bv) + \sum^t_{s=1} \ell_s(\bv),
\]
where $r:\R^d \rightarrow \R$ is a regularization function.
We can upper bound the regret of FTRL with the following theorem.

\begin{algorithm}[tb]
\caption{Follow the Regularized Leader (FTRL)}
\label{algo:aftrl}
\begin{algorithmic}[1]
\STATE{\textbf{Parameters}: $r(\bv)\geq 0$}
\STATE{$\bv_1\leftarrow\argmin_{\bv\in\R^d} \ r(\bv)$}
\FOR{$t = 1,2,\ldots$}
\STATE{Observe convex loss function $\ell_t:\R^d\rightarrow \R\cup\{\infty\}$}
\STATE{Incur loss $\ell_t(\bv_t)$}
\STATE{Update $\bv_{t+1}\leftarrow \argmin_{\bv\in\R^d} \ r(\bv) + \sum^t_{s=1} \ell_s(\bv)$}
\ENDFOR
\end{algorithmic}
\end{algorithm}

\begin{thm}{\citep{McMahan17}}
For the FTRL Algorithm~\ref{algo:aftrl}, suppose each $\ell_t$ is convex and differentiable and $r$ is chosen such that $h_{t} = r + \sum_{i=1}^t \ell_{i}$ is 1-strongly-convex w.r.t. some norm $\|\cdot\|_{(t)}$.
Then, for any $\bx^\star\in\R^d$ and for any $T>0$,
\begin{equation}
\label{thm:prox}
\mathrm{Regret}_T(\bx^\star)
\le r(\bx^\star) + \frac12\sum^T_{t=1} \|\nabla\ell_t(\bx_t)\|^2_{(t),\star},
\end{equation}
where $\|\cdot\|_{(t),\star}$ is the dual norm of $\|\cdot\|_{(t)}$ namely $\|\bx\|_{(t),\star}:=\sup_{\by:\|\by\|_{(t)}\le1}\langle\bx, \by\rangle$.
\end{thm}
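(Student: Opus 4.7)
The plan is to follow the standard strong-FTRL template. Set $h_0 = r$ and $h_t = r + \sum_{s=1}^{t}\ell_s$, so that $\bv_1 = \argmin h_0$ and $\bv_{t+1} = \argmin h_t$ by construction. My first step is to establish, via the classical Be-the-Leader lemma applied to the sequence $\phi_0 = r,\ \phi_1 = \ell_1,\ \ldots,\ \phi_T = \ell_T$, the inequality $r(\bv_1) + \sum_{t=1}^T \ell_t(\bv_{t+1}) \le r(\bx^\star) + \sum_{t=1}^T \ell_t(\bx^\star)$ (proved by a short induction on $T$: the base case is $r(\bv_1) \le r(\bx^\star)$ from $\bv_1 = \argmin r$, and the inductive step uses $h_t(\bv_{t+1}) \le h_t(\bx^\star)$ combined with the previous induction hypothesis evaluated at $\bx = \bv_{t+1}$). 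Adding and subtracting $\sum_t \ell_t(\bv_t)$ and dropping the non-positive term $-r(\bv_1)$ (since $r \ge 0$) gives
\[
\Regret_T(\bx^\star) \;\le\; r(\bx^\star) + \sum_{t=1}^{T} \bigl[\ell_t(\bv_t) - \ell_t(\bv_{t+1})\bigr].
\]

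Next I would rewrite each summand via $\ell_t = h_t - h_{t-1}$, giving $\ell_t(\bv_t) - \ell_t(\bv_{t+1}) = [h_t(\bv_t) - h_t(\bv_{t+1})] - [h_{t-1}(\bv_t) - h_{t-1}(\bv_{t+1})]$. The second bracket is non-positive because $\bv_t$ minimizes $h_{t-1}$, so each summand is bounded by $h_t(\bv_t) - h_t(\bv_{t+1})$. The heart of the proof is then the following strong-convexity lemma: if $h$ is $1$-strongly convex w.r.t.\ a norm $\|\cdot\|$ with unconstrained minimizer $\bv^\star$, then $h(\bv) - h(\bv^\star) \le \tfrac12 \|\nabla h(\bv)\|_\star^{\,2}$ for every $\bv$. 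To prove this I would write the strong-convexity inequality at $\bv$ evaluated at $\bv^\star$, namely $h(\bv^\star) \ge h(\bv) + \langle \nabla h(\bv), \bv^\star - \bv\rangle + \tfrac12\|\bv-\bv^\star\|^2$, rearrange to $h(\bv) - h(\bv^\star) \le \langle \nabla h(\bv), \bv - \bv^\star\rangle - \tfrac12\|\bv - \bv^\star\|^2$, upper-bound the inner product by $\|\nabla h(\bv)\|_\star \, \|\bv-\bv^\star\|$ via the dual-norm inequality, and finally maximize the resulting quadratic $\|\nabla h(\bv)\|_\star\, y - \tfrac12 y^2$ over $y = \|\bv-\bv^\star\| \ge 0$ at $y = \|\nabla h(\bv)\|_\star$.

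Applying this lemma with $h = h_t$ and $\bv^\star = \bv_{t+1}$, and observing that $\nabla h_t(\bv_t) = \nabla h_{t-1}(\bv_t) + \nabla \ell_t(\bv_t) = \nabla \ell_t(\bv_t)$ (because the unconstrained minimizer $\bv_t$ of $h_{t-1}$ satisfies $\nabla h_{t-1}(\bv_t) = 0$), yields $h_t(\bv_t) - h_t(\bv_{t+1}) \le \tfrac12 \|\nabla \ell_t(\bv_t)\|_{(t),\star}^{\,2}$. Substituting this back into the Be-the-Leader bound gives the claim. The main subtlety is in the strong-convexity lemma: since the norm $\|\cdot\|_{(t)}$ is allowed to depend on $t$ (and on the previous losses through $h_t$), one has to be careful to thread the strong-convexity hypothesis and the dual-norm bound consistently with the same per-round norm; once that is handled, the rest of the argument is essentially bookkeeping.
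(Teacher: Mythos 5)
The paper itself does not prove this theorem (it is quoted from \citet{McMahan17}), so I am judging your argument on its own merits. Your overall architecture --- reduce the regret to a per-round stability term, then control that term with the lemma ``$h$ $1$-strongly convex with minimizer $\bv^\star$ implies $h(\bv)-h(\bv^\star)\le\tfrac12\|\nabla h(\bv)\|_\star^2$'' together with the identity $\nabla h_t(\bv_t)=\nabla\ell_t(\bv_t)$ --- is the right one, and both your Be-the-Leader induction and your strong-convexity lemma are proved correctly. But there is a genuine sign error at the junction between the two halves. You write $\ell_t(\bv_t)-\ell_t(\bv_{t+1})=[h_t(\bv_t)-h_t(\bv_{t+1})]-[h_{t-1}(\bv_t)-h_{t-1}(\bv_{t+1})]$ and correctly note that the second bracket is non-positive because $\bv_t$ minimizes $h_{t-1}$. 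But you are \emph{subtracting} a non-positive quantity, i.e., adding a non-negative one, so what follows is $\ell_t(\bv_t)-\ell_t(\bv_{t+1})\ge h_t(\bv_t)-h_t(\bv_{t+1})$ --- a lower bound, not the upper bound you need in order to chain into the strong-convexity lemma. As written, the middle inequality of your chain is false in general, and the proof does not close.

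The repair is to bypass Be-the-Leader entirely. Write $\ell_t(\bv_t)=h_t(\bv_t)-h_{t-1}(\bv_t)=[h_t(\bv_t)-h_t(\bv_{t+1})]+[h_t(\bv_{t+1})-h_{t-1}(\bv_t)]$; summing over $t$, the second brackets telescope to $h_T(\bv_{T+1})-r(\bv_1)\le h_T(\bx^\star)=r(\bx^\star)+\sum_{t=1}^T\ell_t(\bx^\star)$, which yields directly $\mathrm{Regret}_T(\bx^\star)\le r(\bx^\star)+\sum_{t=1}^T[h_t(\bv_t)-h_t(\bv_{t+1})]$ (this is precisely McMahan's ``Strong FTRL Lemma''). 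From there your strong-convexity lemma and the gradient identity finish the proof with the correct constant $\tfrac12$. (Keeping BTL and instead bounding $\ell_t(\bv_t)-\ell_t(\bv_{t+1})$ directly by convexity plus a stability bound on $\|\bv_t-\bv_{t+1}\|_{(t)}$ is also viable, but it loses the factor $\tfrac12$.) One further nit: in the paper's application $r$ contains an indicator, so $h_{t-1}$ is not differentiable at $\bv_t$ in general; the identity $\nabla h_{t-1}(\bv_t)=0$ should be read as $0\in\partial h_{t-1}(\bv_t)$, whence $\nabla\ell_t(\bv_t)\in\partial h_t(\bv_t)$ is the subgradient to feed into the lemma.
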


We can now put it all together and get a convergence rate guarantee for SGDOL.
\begin{thm}
\label{thm:ftrlp}
For an $L$-smooth function $F$, under Assumption~\ref{asp:noise_unbiased},~\ref{asp:noise_var_bounded},~\ref{asp:two_grad_independent}, and~\ref{asp:noise_norm_bounded} and using FTRL (Algorithm~\ref{algo:aftrl}) by choosing $r(\eta)=\frac{L\alpha}2\left(\eta-\frac1L\right)^2+\mathcal{I}\left(\eta\in\left[0,\frac2L\right]\right)$ with $\alpha>0$ in Algorithm~\ref{algo:sgdol}, for an uniformly randomly picked $\bx_k$ from $\bx_1,\ldots,\bx_t$, we have:
\begin{align}
\E_k\left[\|\nabla F(\bx_k)\|^2\right]
&\le \frac{2L}{T}\left(F(\bx_1)-F^\star + \frac{5G^2}{L}\ln\left(1+\frac{G^2T}{\alpha}\right)\right)\\
&\quad + \frac1{\sqrt{T}}\sqrt{2L\sigma^2\left(f(\bx_1)-f^\star + \frac{\alpha}{2L}\right)}\\
&\quad + \frac1{\sqrt{T}}\sqrt{10G^2\sigma^2\ln\left(1+\frac{G^2T}{\alpha}\right)}~.
\end{align}
\end{thm}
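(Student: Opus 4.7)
The plan is to chain Theorem~\ref{thm:olsmooth} together with the FTRL regret bound instantiated on the surrogate losses, and then pick $\eta$ to balance the deterministic and stochastic parts of the resulting inequality. The domain constraint baked into $r$ forces $\eta_t \in [0, 2/L]$, which will matter for bounding $|\nabla\ell_t(\eta_t)|$ later.

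First I would compute the regret. Since each $\ell_t$ is a one-dimensional quadratic in $\eta$ with second derivative $L\|\bg_t\|^2$ and the quadratic part of $r(\eta)$ has second derivative $L\alpha$, the function $h_t = r + \sum_{s\le t}\ell_s$ is $\bigl(L\alpha + L\sum_{s\le t}\|\bg_s\|^2\bigr)$-strongly convex on $[0, 2/L]$. Combining the FTRL guarantee with Cauchy--Schwarz and $\|\bg_t\|,\|\bg'_t\|\le G$, $\eta_t\le 2/L$, one gets $|\nabla\ell_t(\eta_t)|^2 \le 2|\langle\bg_t,\bg'_t\rangle|^2 + 2L^2\eta_t^2\|\bg_t\|^4 \le 10G^2\|\bg_t\|^2$. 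Plugging this into the FTRL bound and applying the standard telescoping inequality $\sum_t a_t/(\alpha + \sum_{s\le t}a_s) \le \ln(1 + \sum_t a_t/\alpha)$ with $a_t = \|\bg_t\|^2$, and using $\sum_t\|\bg_t\|^2 \le G^2 T$ (or Jensen in expectation for the log), yields
\[
\E[\mathrm{Regret}_T(\eta)] \le r(\eta) + \frac{5G^2}{L}\ln\!\Bigl(1+\frac{G^2T}{\alpha}\Bigr).
\]

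Next I would substitute this into Theorem~\ref{thm:olsmooth}. For any $\eta \in (0, 1/L]$ one has $\eta - L\eta^2/2 \ge \eta/2$ and $r(\eta) \le \alpha/(2L)$, producing
\[
\E_k[\|\nabla F(\bx_k)\|^2] \le \frac{2P}{\eta T} + L\eta\sigma^2, \qquad P := F(\bx_1) - F^\star + \tfrac{\alpha}{2L} + \tfrac{5G^2}{L}\ln\bigl(1 + G^2T/\alpha\bigr).
\]
Setting $\eta = \min\bigl\{1/L,\,\sqrt{2P/(L\sigma^2 T)}\bigr\}$ and splitting into two cases, AM--GM yields $2\sqrt{2PL\sigma^2/T}$ when the second argument is active, and in the other regime the case condition $\sigma^2 T \le 2PL$ makes the stray $\sigma^2$ from the $\eta=1/L$ branch dominated by $\sqrt{2PL\sigma^2/T}$. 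Finally, $\sqrt{a+b}\le\sqrt{a}+\sqrt{b}$ splits $\sqrt{P}$ into $\sqrt{F(\bx_1)-F^\star+\alpha/(2L)}$ and $\sqrt{(5G^2/L)\ln(1+G^2T/\alpha)}$, producing the two stochastic square-root terms, while the $\eta = 1/L$ branch contributes the deterministic $\frac{2L}{T}(F(\bx_1)-F^\star + \frac{5G^2}{L}\ln(1+G^2T/\alpha))$ term.

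The main obstacle is really the final algebraic tuning: one has to pick the break point of $\eta$ so that both the deterministic $\mathcal{O}(1/T)$ and stochastic $\mathcal{O}(\sigma/\sqrt{T})$ rates emerge simultaneously (which is exactly the noise-adaptivity the chapter is aiming for), and the $\sqrt{a+b}\le\sqrt{a}+\sqrt{b}$ splitting is what turns a single combined square root into the two clean terms appearing in the statement. The regret computation and the bound on $|\nabla\ell_t(\eta_t)|$ are routine, as is the invocation of the FTRL theorem quoted earlier.
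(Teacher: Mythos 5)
Your regret analysis is exactly the paper's: the same strong-convexity observation, the same bound $(\ell_t'(\eta_t))^2 \le 10G^2\|\bg_t\|^2$ via Cauchy--Schwarz and $\eta_t \le 2/L$, and the same telescoping-log bound (the paper's Lemma~\ref{lm:integ}). The divergence is in the final tuning of the comparator $\eta$, and there it matters quantitatively. The paper does \emph{not} lower-bound $\eta - \tfrac{L}{2}\eta^2$ by $\eta/2$; it keeps the inequality as an exact quadratic in $\eta$, substitutes the minimizer $\eta^\star = \frac{\alpha+A}{L(\alpha+A+\sigma^2T)}$ with $A = \sum_t \E[\|\nabla F(\bx_t)\|^2]$, and then solves the resulting quadratic inequality in $A$ via the quadratic formula. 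That route yields $A \le 2LQ + \sqrt{2LQ\sigma^2 T}$, i.e.\ coefficient $1$ on the square-root term.

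Your route loses a factor of $2$ on the stochastic terms, and as written does not establish the stated inequality. In your regime $\sqrt{2P/(L\sigma^2T)} \le 1/L$, balancing gives $\frac{2P}{\eta T} + L\eta\sigma^2 = 2\sqrt{2PL\sigma^2/T}$, which after $\sqrt{a+b}\le\sqrt a+\sqrt b$ becomes twice each of the two $1/\sqrt T$ terms in the statement. One cannot absorb the surplus into the $\frac{2L}{T}(\cdots)$ term either: that would require $\sigma^2 T \le 2LP$, which is precisely the \emph{opposite} of the case condition you are in. So your argument proves the theorem only up to a factor of $2$ on the $\sqrt{2L\sigma^2(\cdots)/T}$ and $\sqrt{10G^2\sigma^2\ln(\cdots)/T}$ terms (the rates and the noise-adaptivity are of course unaffected). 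If you want the exact constants, replace the $\min$-based balancing with the paper's step: treat the bound from Theorem~\ref{thm:olsmooth} as $0 \le \frac{L}{2}(A+\alpha+\sigma^2T)\eta^2 - (A+\alpha)\eta + Q$, plug in the minimizing $\eta^\star$ (checking $\eta^\star \le 2/L$ so it is an admissible comparator), and solve the quadratic in $A$.
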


Clearly, when $\sigma = 0$ we obtain the convergence rate of $\mathcal{O}\left(\frac{1}{T}\right)$, and when $\sigma > 0$ we can still converge in the rate of $\mathcal{O}\left(\frac{\sigma}{\sqrt{T}}\right)$. Note that we do not need to know $\sigma$ to achieve this result, thus our algorithm is adaptive to $\sigma$.

Before proving this theorem, we make some observations.

The FTRL update gives us a very simple strategy to calculate the step sizes $\eta_t$. In particular, the FTRL update has a closed-form:
\begin{equation}
\label{eq:sgdol_ftrl_update}
\eta_{t} = \max\left\{0,\min\left\{\frac{\alpha+\sum^{t-1}_{j=1}\langle \bg_j, \bg'_j\rangle}{L\left(\alpha+\sum^{t-1}_{j=1}\|\bg_j\|^2\right)},\frac2L\right\}\right\}~.
\end{equation}
Note that this can be efficiently computed by keeping track of the quantities\\ $\sum^{t-1}_{j=1}\langle \bg_j, \bg'_j\rangle$ and $\sum^{t-1}_{j=1}\|\bg_j\|^2$.

While the computational complexity of calculating $\eta_t$ by FTRL is negligible, SGDOL requires two unbiased gradients per step. This increases the computational complexity with respect to a plain SGD procedure by a factor of two.

The theorem also shows that the parameter $\alpha$ has a minor influence on the convergence rate: although it should optimally be set to any constant value on the order of $G^2$, it is safe to set it reasonably small without blowing up the log factor.

We can now prove the convergence rate in Theorem~\ref{thm:ftrlp}.
For the proof, we need the following lemma.
\begin{lemma}
\label{lm:integ}
Let $a_i\geq0$ for $i=0, \cdots, T$ and $h:[0,+\infty)\rightarrow [0, +\infty)$ be a nonincreasing function.
Then
\begin{align*}
\sum_{t=1}^T a_t h\left(a_0+\sum_{i=1}^{t} a_i\right) 
&\leq \int_{a_0}^{\sum_{t=0}^T a_t} h(x) dx~.
\end{align*}
\end{lemma}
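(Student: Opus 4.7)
The plan is to bound each summand by an integral over a short interval and then telescope. Define the partial sums $S_t \defeq a_0 + \sum_{i=1}^t a_i$ for $t=0,1,\ldots,T$, so $S_0 = a_0$ and $S_T = \sum_{t=0}^T a_t$. Since each $a_t \geq 0$, the sequence $(S_t)$ is nondecreasing, so $[S_{t-1}, S_t]$ is a valid interval of length $a_t$.

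The key observation is that on the interval $[S_{t-1}, S_t]$, every $x$ satisfies $x \leq S_t$, and because $h$ is nonincreasing this gives $h(x) \geq h(S_t)$ throughout the interval. Integrating this pointwise inequality yields
\begin{equation*}
a_t \, h(S_t) \;=\; \int_{S_{t-1}}^{S_t} h(S_t)\, dx \;\leq\; \int_{S_{t-1}}^{S_t} h(x)\, dx~.
\end{equation*}
(When $a_t = 0$ both sides are zero, so the inequality holds trivially; the assumption $h \geq 0$ ensures the right-hand side is nonnegative in all cases.)

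Summing this bound from $t=1$ to $T$, the right-hand sides telescope across the consecutive intervals $[S_0, S_1], [S_1, S_2], \ldots, [S_{T-1}, S_T]$, which together cover $[S_0, S_T]$ without overlap. Therefore
\begin{equation*}
\sum_{t=1}^T a_t \, h(S_t) \;\leq\; \sum_{t=1}^T \int_{S_{t-1}}^{S_t} h(x)\, dx \;=\; \int_{S_0}^{S_T} h(x)\, dx \;=\; \int_{a_0}^{\sum_{t=0}^T a_t} h(x)\, dx,
\end{equation*}
which is the claimed inequality. There is no real obstacle here; the only subtlety is ensuring the integral comparison is valid at the endpoints of the partition, which follows directly from the monotonicity of $h$ and the fact that $S_t$ is the right endpoint of each subinterval.
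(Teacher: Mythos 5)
Your proof is correct and follows exactly the same route as the paper's: define the partial sums $S_t$, write $a_t h(S_t)$ as the integral of the constant $h(S_t)$ over $[S_{t-1},S_t]$, use monotonicity of $h$ to compare with $\int_{S_{t-1}}^{S_t} h(x)\,dx$, and telescope. You simply spell out the justification in slightly more detail than the paper does.
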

\begin{proof}[Proof of Lemma~\ref{lm:integ}]
Denote by $s_t=\sum_{i=0}^{t} a_i$.
\begin{align*}
a_i h(s_i) 
=  \int_{s_{i-1}}^{s_i} h(s_i) d x 
\leq \int_{s_{i-1}}^{s_i} h(x) d x~.
\end{align*}
Summing over $i=1, \cdots, T$, we obtain the stated bound.
\end{proof}

\begin{proof}[Proof of Theorem~\ref{thm:ftrlp}]
As $\ell_t^{\prime\prime}(\eta)=L\|\bg_t\|^2$, we have that $h_{t} = r + \sum_{i=1}^t \ell_{i}$ is 1-strongly-convex with respect to the norm $\sqrt{L\left(\alpha + \sum^t_{s=1}\|\bg_s\|^2\right)}\|\cdot\|$.

Applying Theorem~\ref{thm:prox}, we get that, for any $\eta\in\left[0,\frac2L\right]$,
\begin{equation}
\mathrm{Regret}_T(\eta)
\le \frac{L\alpha}2\left(\eta-\frac1L\right)^2 + \frac{1}{2L}\sum^T_{t=1}\frac{\left(\ell_t^{\prime}(\eta_t)\right)^2}{\alpha + \sum^t_{s=1}\|\bg_s\|^2}~. \label{eq:regret}
\end{equation}

Now observe that 
\begin{align*}
\left(\ell^{\prime}_t(\eta_t)\right)^2
&= \left(-\langle\bg_{t},\bg^{\prime}_{t} \rangle + L\eta_t\|\bg_{t}\|^2\right)^2\\
&\le 2\langle\bg_{t},\bg^{\prime}_{t} \rangle^2 + 2L^2\eta_t^2\|\bg_{t}\|^4\\
&\le 2\|\bg_{t}\|^2\|\bg_{t}^{\prime}\|^2 + 8\|\bg_{t}\|^4
\le 10G^2\|\bg_{t}\|^2,
\end{align*}
where in the third line we used the Cauchy-Schwarz inequality and $\eta_t\le\frac2L$.
Hence, the last term in \eqref{eq:regret} can be upper bounded as
\begin{align*}
\frac{1}{2}\sum^T_{t=1}\frac{\left(\ell_t^{\prime}(\eta_t)\right)^2}{L\left(\alpha + \sum^t_{s=1}\|\bg_s\|^2\right)}
&\leq \frac{5G^2}{L}\sum^T_{t=1}\frac{\|\bg_t\|^2}{\alpha + \sum^t_{s=1}\|\bg_s\|^2}\\
&\le \frac{5G^2}{L}\ln\left(\frac{\alpha + \sum^T_{t=1}\|\bg_t\|^2}{\alpha}\right)\\
&\le \frac{5G^2}{L}\ln\left(1+\frac{G^2T}{\alpha}\right),
\end{align*}
where in the first inequality we used Lemma~\ref{lm:integ}.

Now put the last inequality above back into Theorem~\ref{thm:olsmooth}, to obtain
\begin{align}
\E\left[\left(\eta-\frac{L}{2}\eta^2\right)\sum_{t=1}^T \|\nabla F(\bx_t)\|^2\right]
&\leq F(\bx_1) - F^\star + \frac{L\alpha}2\left(\eta-\frac1L\right)^2 \\
&\quad + \frac{5G^2}{L}\ln\left(1+\frac{G^2T}{\alpha}\right) + \frac{L \eta^2\sigma^2 T}{2}~.
\end{align}

Denote $A\triangleq \sum^T_{t=1}\E\left[\|\nabla F(\bx_t)\|^2\right]$, we can transform the above into a quadratic inequality of $\eta$:
\begin{align*}
0\ \le&\phantom{+}\frac L2\left(A+\alpha+\sigma^2 T\right)\eta^2 - (A+\alpha)\eta + F(\bx_1) - F^\star + \frac{5G^2}{L}\ln\left(1+\frac{G^2T}{\alpha}\right) + \frac{\alpha}{2L}~.
\end{align*}
Choosing $\eta$ as the minimizer of the right hand side: $\eta^*=\frac{\alpha+A}{L(\alpha+A+\sigma^2 T)}$ (which satisfies $\eta^*\le\frac2L$) gives us
\begin{align}
\frac{(\alpha+A)^2}{2L\left(\alpha+A + \sigma^2 T\right)}
\le\ F(\bx_1)-F^\star + \frac{\alpha}{2L} + \frac{5G^2}{L}\ln\left(1+\frac{G^2T}{\alpha}\right)~.
\end{align}

Solving this quadratic inequality of $A$ yields
\begin{align}
A 
&\le 2L\left(F(\bx_1)-F^\star + \frac{5G^2}{L}\ln\left(1+\frac{G^2T}{\alpha}\right)\right)\\
&\quad + \sqrt{2L \sigma^2 T \left(F(\bx_1)-F^\star + \frac{\alpha}{2L}\right)}\\
&\quad + \sqrt{10G^2 \sigma^2 T \ln\left(1+\frac{G^2T}{\alpha}\right)}~.
\end{align}

By taking an $\bx_k$ from $\bx_1,\ldots,\bx_t$ randomly, we get:
\begin{align*}
\E_k\left[\|\nabla F(\bx_k)\|^2\right] 
= \E_k\left[\E\left[\|\nabla F(\bx_k)\|^2\middle\vert k\right]\right]
= \frac1T\sum^T_{t=1}\E\left[\|\nabla F(\bx_t)\|^2\right],
\end{align*}
which completes the proof.
\end{proof}

Our step size schedule~\eqref{eq:sgdol_ftrl_update} is very unique; thus, to illustrate its behavior in practice, we experiment on fitting a classification model on the adult (a9a) dataset from the LibSVM website~\citep{ChangL01}. The objective function is
\begin{equation*}
F(\bx):=\frac1m \sum^m_{i=1}\phi(\boldsymbol{a}_i^\top\bx-y_i),
\end{equation*}
where $\phi(\theta)=\frac{\theta^2}{1+\theta^2}$, and $(\boldsymbol{a}_i,y_i)$ are the couples feature vector/label. The loss function $\phi$ is non-convex, 1-Lipschitz and 2-smooth w.r.t. the $\ell_2$ norm.

We consider the minimization problem with respect to all training samples. Also, as the dataset is imbalanced towards the group with an annual income of less than 50K, we subsample that group to balance the dataset, which results in 15682 samples with 123 features each. In addition, we append a constant element to each sample feature vector to introduce a constant bias. $\bx_1$ is initialized to be all zeros. For each setting, we repeat the experiment with different random seeds but with the same initialization 5 times and plot the average of the relevant quantities.

In this experiment, the noise on the gradient is generated by the use of mini-batches. Specifically, we compare SGDOL with SGD on three different mini-batch sizes, namely different noise scales: using all samples (Noiseless), 50 i.i.d.~samples (Moderate Noise), or 1 random sample (Heavy Noise) for evaluating the gradient at a point. The step size of SGD is selected as the one giving the best convergence rate when the full batch scheme, namely zero noise, is employed which turns out to be $0.1$. We take the reciprocal of SGD's best step size as the parameter $L$ for SGDOL, and we set $\alpha=10$ without any tuning based on our discussion on the influence of $\alpha$ above. These parameters are then employed in the other two noisy settings.

\begin{figure}[t]
    \centering
    \includegraphics[width=\textwidth]{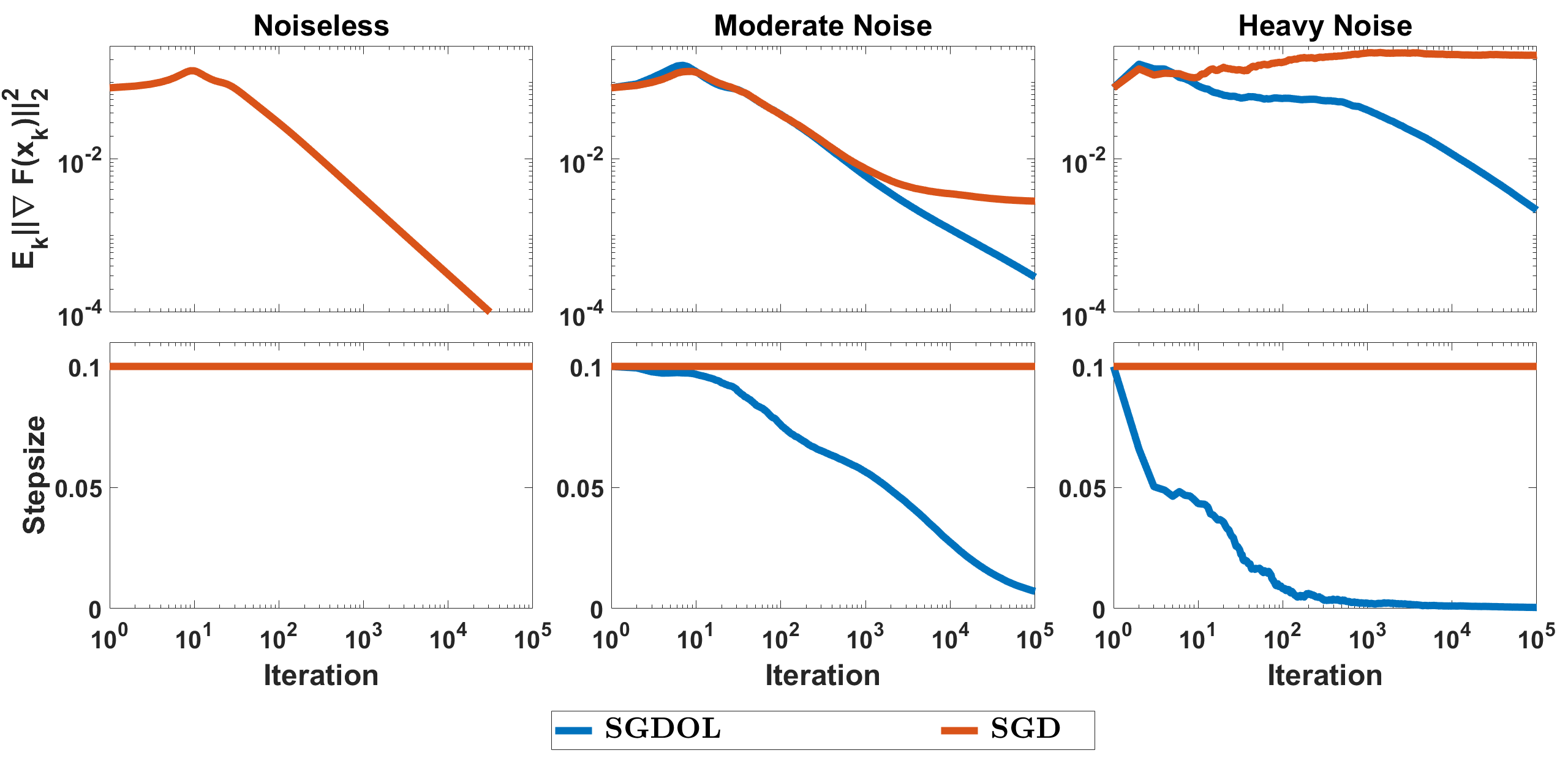}
    \caption{Comparison of SGDOL vs.~SGD on optimizing a smooth non-convex function with various noise scales.}
    \label{fig:sgdol_stepsizes}
\end{figure}

We report the results in Figure~\ref{fig:sgdol_stepsizes}. Figures in the top row show $\E[\|\nabla F(\bx_k)\|^2]$ vs.~number of iterations, whereas those in the bottom row are per-round step sizes. The x-axis in all figures and the y-axis in the top three are logarithmic. 

As can be seen, the step size of SGDOL is the same as SGD at first, but gradually decreases automatically. Also, the larger the noise, the sooner the decreasing phase starts. The decrease of the step size makes the convergence of SGDOL possible. In particular, SGDOL recovers the performance of SGD in the noiseless case, while it allows convergence in the noisy cases through an automatic decrease of the step sizes. In contrast, when noise exists, after reaching the proximity of a stationary point, SGD oscillates thereafter without converging, and the value it oscillates around depends on the variance of the noise. This underlines the superiority of the surrogate losses, rather than choosing a step size based on a worst-case convergence rate bound.

\subsection{Adapting Per-coordinate Step Sizes}
\label{ssec:coordinate}

In the previous subsection, we have shown how to use the surrogate loss functions to adapt a step size. Another common strategy in practice is to use a \emph{per-coordinate} step size. This kind of scheme is easily incorporated into our framework and we show that it can provide improved adaptivity to per-coordinate variances.

Specifically, we consider $\boldeta_t$ now to be a vector in $\R^d$, $\boldeta_t=(\eta_{t,1},\dots,\eta_{t,d})$, and use the update $\bx_{t+1} = \bx_t - \boldeta_t\bg_t$
where $\boldeta_t\bg_t$ now indicates coordinate-wise product $(\eta_{t,1}g_{t,1},\dots,\eta_{t,d}g_{t,d})$. Then we define the surrogate losses to be
\begin{align*}
\ell_t(\boldeta) &= -\langle \boldeta \nabla f(\bx_t,\xi_t), \nabla f(\bx_t,\xi_t')\rangle + \frac{L}{2}\|\boldeta \nabla f(\bx_t,\xi_t)\|^2
= \sum_{i=1}^d -\eta_{i}g_{t,i}g^{\prime}_{t,i} + \frac{L}{2}\eta_{i}^2g_{t,i}^2~.
\end{align*}

To take advantage of this scenario, we need more detail about the variance, which we encapsulate in the following assumption:

\begin{assumption}
\label{asp:noise_var_bounded_coor}
The noisy gradients have finite variance in each coordinate:
\begin{align}
\E_{t}\left[\left(g_{t,i}-\PartialDerivativeGeneral{\bx_t}{i}\right)^2\right]
&=\sigma_{i}^2~.
\end{align}
\end{assumption}
Note that this assumption is not actually stronger than Assumption~\ref{asp:noise_var_bounded} because we can define $\sigma^2 = \sum_{i=1}^d \sigma_{i}^2$. This merely provides finer-grained variable names.

Also, we make the following assumption:

\begin{assumption}
\label{asp:noise_norm_bounded_coor}
The noisy gradients have bounded coordinate values: 
\begin{align}
|g_{t,i}|\le G_i,\ |g^{\prime}_{t,i}|\le G_i~.
\end{align}
\end{assumption}

Now the exact same argument as for Theorem \ref{thm:olsmooth} yields:
\begin{thm}
\label{thm:olsmoothdiag}
For an $L$-smooth function $F$, assume the two noisy gradients in each round $t$ to satisfy Assumption~\ref{asp:noise_unbiased},~\ref{asp:two_grad_independent}, and~\ref{asp:noise_var_bounded_coor}. Then, for any $\boldeta\in \R^d$ with $\eta_i>0$ for all $i$, the per-coordinate variant of Algorithm \ref{algo:sgdol} gives
\begin{equation*}
\E\left[\sum_{t=1}^T \sum_{i=1}^d\left(\eta_i-\frac{L}{2}\eta_i^2\right)\left(\PartialDerivativeGeneral{\bx_t}{i}\right)^2\right]
\leq
F(\bx_1) - F^\star + \E\left[\mathrm{Regret}_T(\boldeta)\right] + \frac{LT}{2}\sum_{i=1}^d \eta_i \sigma_{i}^2~.
\end{equation*}
\end{thm}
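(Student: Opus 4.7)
The plan is to mirror the two-step proof of Theorem \ref{thm:olsmooth}, namely first the surrogate-loss upper bound of Theorem \ref{thm:surrogate} and then the regret-telescoping argument, with the scalar step size replaced throughout by the vector $\boldeta_t$ and with all quantities decomposed coordinate-wise. The per-coordinate update $\bx_{t+1} - \bx_t = -\boldeta_t \bg_t$ is an entrywise product, so both $\langle \nabla F(\bx_t), -\boldeta_t \bg_t\rangle$ and $\|\boldeta_t \bg_t\|^2$ split into sums of per-coordinate contributions. Applying the smoothness inequality \eqref{eq:smooth} to $F(\bx_{t+1}) - F(\bx_t)$ therefore produces a per-coordinate analog of the upper bound used in the proof of Theorem \ref{thm:surrogate}.

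Next I would take the conditional expectation $\E_t[\cdot]$. Assumption \ref{asp:noise_unbiased} together with the independence Assumption \ref{asp:two_grad_independent} lets me replace each $\PartialDerivativeGeneral{\bx_t}{i}$ appearing in a cross term with $\E_t[g'_{t,i}]$, exactly as in the scalar proof, giving
\begin{equation*}
\E_t[F(\bx_{t+1}) - F(\bx_t)] \le \E_t[\ell_t(\boldeta_t)],
\end{equation*}
the per-coordinate analog of Theorem \ref{thm:surrogate}. I then telescope from $t=1$ to $T$, bound the left-hand side below by $F^\star - F(\bx_1)$, and split the right-hand side for the fixed comparator $\boldeta$ appearing in the theorem as
\begin{equation*}
\sum_{t=1}^T \E[\ell_t(\boldeta_t)] = \E[\mathrm{Regret}_T(\boldeta)] + \sum_{t=1}^T \E[\ell_t(\boldeta)].
\end{equation*}

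The final step is to evaluate $\E_t[\ell_t(\boldeta)]$ at a fixed comparator. The per-coordinate variance Assumption \ref{asp:noise_var_bounded_coor} gives $\E_t[g_{t,i}^2] = (\PartialDerivativeGeneral{\bx_t}{i})^2 + \sigma_i^2$, while combining independence with unbiasedness yields $\E_t[g_{t,i} g'_{t,i}] = (\PartialDerivativeGeneral{\bx_t}{i})^2$. Hence each coordinate contribution to $\E_t[\ell_t(\boldeta)]$ evaluates to $(-\eta_i + \tfrac{L}{2}\eta_i^2)(\PartialDerivativeGeneral{\bx_t}{i})^2 + \tfrac{L}{2}\eta_i^2\sigma_i^2$. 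Moving the gradient-squared terms to the left-hand side of the telescoped inequality delivers the claim.

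There is no substantive obstacle: the argument is a literal coordinate-wise decomposition of the scalar proof, enabled by the entrywise structure of $\boldeta_t \bg_t$ and by $\|\cdot\|$ denoting the Euclidean norm. The only care needed is in pulling $\boldeta_t$ outside of $\E_t$, which is legitimate because, as in the hypothesis of Theorem \ref{thm:surrogate}, $\boldeta_t$ is produced by the online learner from information strictly prior to round $t$ and is therefore independent of $\bg_t$ and $\bg'_t$ given the past.
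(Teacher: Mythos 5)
Your proposal is correct and is exactly the paper's argument: the paper proves this theorem by remarking that ``the exact same argument as for Theorem~\ref{thm:olsmooth}'' applies, which is precisely the coordinate-wise transcription you carry out (smoothness plus the entrywise split of $\langle\nabla F(\bx_t),-\boldeta_t\bg_t\rangle$ and $\|\boldeta_t\bg_t\|^2$, telescoping, inserting the comparator to form the regret, and evaluating $\E_t[\ell_t(\boldeta)]$). One small point: your computation correctly yields a per-round noise term of $\frac{L}{2}\sum_{i=1}^d\eta_i^2\sigma_i^2$, which matches the quantity actually used later in the proof of Theorem~\ref{thm:pcftrlp}, so the $\eta_i\sigma_i^2$ in the printed statement should be read as $\eta_i^2\sigma_i^2$.
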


With this Theorem in hand, once again all that remains is to choose the online learning algorithm. To this end, observe that we can write $\ell_t(\eta) = \sum_{i=1}^d \ell_{t,i}(\eta_i)$ where
\begin{align*}
\ell_{t,i}(\eta_i) = -\eta_{i}g_{t,i}g^{\prime}_{t,j} + \frac{L}{2}\eta_{i}^2 g_{t,i}^2~.
\end{align*}
Thus, we can take our online learning algorithm to be a per-coordinate instantiation of Algorithm \ref{algo:aftrl}, and the total regret is simply the sum of the per-coordinate regrets. Each per-coordinate regret can be analyzed in exactly the same way as Algorithm \ref{algo:aftrl}, leading to
\begin{align*}
\mathrm{Regret}_T(\eta)&=\sum_{i=1}^d \mathrm{Regret}_{T,i}(\eta_i),\\
\mathrm{Regret}_{T,i}(\eta_i)&\le \frac{L\alpha}{2}\left(\eta_i-\frac{1}{L}\right)^2 + \frac{5G_i^2}{L}\ln\left(1+\frac{G_i^2T}{\alpha}\right)~.
\end{align*}
From these inequalities, we can make a per-coordinate bound on the gradient magnitudes. In words, the coordinates which have smaller variances $\sigma^2_{t,i}$ achieve smaller gradient values faster than coordinates with larger variances. Further, we preserve adaptivity to the full variance $\sigma^2$ in the rate of decrease of $\|\nabla F(x)\|$.
\begin{thm}\label{thm:pcftrlp}
For an $L$-smooth function $F$, assume Assumption~\ref{asp:noise_unbiased},~\ref{asp:noise_var_bounded_coor}, and~\ref{asp:noise_norm_bounded_coor}, suppose we run a per-coordinate variant of Algorithm~\ref{algo:sgdol}, with regularizer $r(\eta)=\frac{L\alpha}2\left(\eta_i-\frac1L\right)^2+\mathcal{I}\left(\eta_i\in\left[0,\frac2L\right]\right)$ in each coordinate $i$ with $\alpha>0$. Then, for each $i\in\{1,\dots,d\}$, we have
\begin{align*}
\E\left[\sum_{t=1}^T \left(\PartialDerivativeGeneral{\bx_t}{i}\right)^2\right]
&\le 2L\left(F(\bx_1)-F^\star + \sum_{i=1}^d \frac{5G_i^2}{L}\ln\left(1+\frac{G_i^2T}{\alpha}\right)\right)\\
&\quad + \sqrt{2L\sigma_{i}^2T\left(F(\bx_1)-F^\star + \frac{d\alpha}{2L}\right)}\\
&\quad + \sqrt{10\sigma_{i}^2T\sum_{i=1}^d G_i^2\ln\left(1+\frac{G_i^2T}{\alpha}\right)}\\
&\quad+(d-1)\alpha~.
\end{align*}
Further, with $\sigma^2=\sum_{t=1}^T \sigma_{i}^2$ it also holds
\begin{align*}
\E\left[\sum_{t=1}^T\|\nabla F(\bx_t)\|^2\right]
&\le 2L\left(F(\bx_1)-F^\star + \frac{5}{L}\sum_{i=1}^d G_i^2\ln\left(1+\frac{G_i^2T}{\alpha}\right)\right)\\
&\quad + \sqrt{2L\sigma^2T\left(F(\bx_1)-F^\star + \frac{d\alpha}{2L}\right)}\\
&\quad + \sqrt{10\sigma^2T\sum_{i=1}^d G_i^2\ln\left(1+\frac{G_i^2T}{\alpha}\right)}~.
\end{align*}
\end{thm}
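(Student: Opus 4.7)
My plan is to mirror the proof of Theorem~\ref{thm:ftrlp} but applied coordinate-wise, starting from the per-coordinate diagonal inequality in Theorem~\ref{thm:olsmoothdiag} and combining it with the additive decomposition of the FTRL regret across coordinates that is already established in the discussion preceding the theorem statement. Denote $A_i \defeq \E[\sum_{t=1}^T (\partial F/\partial x_i)(\bx_t)^2]$ so that $\sum_i A_i = \E[\sum_t \|\nabla F(\bx_t)\|^2]$. Both claims will follow from picking the comparator $\boldeta\in\R^d$ in Theorem~\ref{thm:olsmoothdiag} in two different ways and then running the same one-dimensional quadratic-optimization argument used in Theorem~\ref{thm:ftrlp}.

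For the first (per-coordinate) bound, I would fix an index $i$, treat $\eta_i$ as a tunable scalar, and set $\eta_j=0$ for all $j\neq i$. This kills the contributions of the other coordinates on the LHS of Theorem~\ref{thm:olsmoothdiag} and also collapses their noise terms $\frac{LT}{2}\eta_j\sigma_j^2$ to zero, leaving only $\sigma_i$ on the RHS. The off-coordinate regret contributions are absorbed by the per-coordinate FTRL bound evaluated at zero, namely $\mathrm{Regret}_{T,j}(0)\le \tfrac{L\alpha}{2}\cdot\tfrac{1}{L^2} + \tfrac{5G_j^2}{L}\ln(1+G_j^2T/\alpha)$, contributing the $(d{-}1)\alpha$-type constant and the logarithmic penalties summed over all coordinates. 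The resulting inequality is a scalar quadratic inequality in $\eta_i$ of exactly the form appearing in the proof of Theorem~\ref{thm:ftrlp}; choosing $\eta_i^\star=(\alpha+A_i)/(L(\alpha+A_i+\sigma_i^2 T))$ (which automatically lies in $[0,2/L]$) and substituting back yields a quadratic inequality in $A_i$, which I solve using $\sqrt{a+b}\le\sqrt{a}+\sqrt{b}$ to separate the ``deterministic'' $1/T$ piece from the ``stochastic'' $\sigma_i/\sqrt{T}$ piece.

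For the second (aggregate) bound, I would instead pick a uniform comparator $\eta_i=\eta$ for all $i$. The LHS then collapses to $(\eta-L\eta^2/2)\sum_i A_i$, the total regret is $\tfrac{dL\alpha}{2}(\eta-1/L)^2+\sum_i\tfrac{5G_i^2}{L}\ln(1+G_i^2T/\alpha)$, and the noise term becomes $\tfrac{LT\eta\sigma^2}{2}$ by definition of $\sigma^2=\sum_i\sigma_i^2$. This gives a scalar inequality in $\eta$ that is \emph{formally identical} to the one solved inside the proof of Theorem~\ref{thm:ftrlp}, only with $\alpha$ replaced by $d\alpha$ and $G^2$ replaced by $\sum_i G_i^2$. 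Running that proof verbatim yields the second claim.

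The only real obstacle is bookkeeping: keeping the $\alpha$-, $G_i$-, and $\sigma_i$-dependent constants cleanly separated, and in particular verifying that in the first bound the per-coordinate variance $\sigma_i$ (and not the aggregate $\sigma$) is indeed the quantity multiplying $\sqrt{T}$, while the ``cross-coordinate'' penalties are confined to additive logarithmic and $(d{-}1)\alpha$ terms. All of the substantive machinery --- strong convexity of the diagonal surrogate losses, Lemma~\ref{lm:integ} applied coordinate-wise to control $\sum_t g_{t,i}^2/(\alpha+\sum_s g_{s,i}^2)$, and the explicit minimization of the resulting quadratic in $\eta$ --- is already in place from the proof of Theorem~\ref{thm:ftrlp}, so no new ideas are required.
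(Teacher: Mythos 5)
Your proposal is correct and rests on the same machinery as the paper's proof (Theorem~\ref{thm:olsmoothdiag}, the additive per-coordinate FTRL regret bounds, and the quadratic-in-$\eta$ then quadratic-in-$A$ argument), but your comparator choices differ from the paper's in both halves. The paper picks the per-coordinate optimal comparators $\eta_i=\frac{\alpha+A_i}{L(\alpha+A_i+\sigma_i^2T)}$ for \emph{all} coordinates simultaneously, obtaining $\sum_i Q_i\le Q$ with $Q_i=\frac{(\alpha+A_i)^2}{2L(\alpha+A_i+\sigma_i^2T)}$; the first statement then follows by dropping all but one non-negative term, and the second by summing $A_i\le 2LQ_i+\sqrt{2LQ_i\sigma_i^2T}-\alpha$ and applying Cauchy--Schwarz to $\sum_i\sqrt{2LQ_i\sigma_i^2T}\le\sqrt{2LQ\sigma^2T}$. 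You instead (a) zero out the off-coordinates for the per-coordinate bound, which lands on exactly the same inequality $Q_i\le Q$ (the $(d-1)\tfrac{\alpha}{2L}$ from $\mathrm{Regret}_{T,j}(0)$ plus the $\tfrac{\alpha}{2L}$ from coordinate $i$'s own quadratic reproduce the $\tfrac{d\alpha}{2L}$ in $Q$), and (b) use a uniform comparator $\eta_i=\eta$ for the aggregate bound, which replaces the Cauchy--Schwarz step with a verbatim rerun of Theorem~\ref{thm:ftrlp} with $\alpha\mapsto d\alpha$ and $G^2\mapsto\sum_iG_i^2$; a quick check confirms this yields the identical final expression. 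The only point to tidy up is that Theorem~\ref{thm:olsmoothdiag} is stated for $\eta_i>0$, so setting $\eta_j=0$ formally needs either a limit $\eta_j\to0^+$ or the observation that both the descent inequality and the FTRL bound (whose regularizer's domain $[0,\tfrac2L]$ contains $0$) hold at $\eta_j=0$; this is cosmetic, not a gap.
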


\begin{proof}[Proof of Theorem~\ref{thm:pcftrlp}]
The proof is nearly identical to that of Theorem \ref{thm:ftrlp}. We have
\begin{align*}
\E\left[\sum_{i=1}^d \left(\eta_i - \frac{L}{2}\eta_i^2\right)\sum_{t=1}^T \left(\PartialDerivativeGeneral{\bx_t}{i}\right)^2\right]
\le
&\ F(\bx_1)-F^\star + \frac{L\alpha }{2} \sum_{i=1}^d \left(\eta_i - \frac{1}{L}\right)^2\\
&\ +\sum_{i=1}^d \frac{5G_i^2}{L}\ln\left(1+\frac{G_i^2T}{\alpha}\right) +\sum_{i=1}^d \frac{L\eta_i^2\sigma_{i}^2T}{2}~.
\end{align*}
Define $A_i=\E\left[\sum_{t=1}^T \left(\PartialDerivativeGeneral{\bx_t}{i}\right)^2\right]$ and set $\eta_i=\frac{\alpha+A_i}{L(\alpha+A_i+\sigma_{i}^2T)}$ to obtain
\begin{equation*}
\sum_{i=1}^d \frac{(\alpha+A_i)^2}{2L\left(\alpha+A_i + \sigma_{i}^2T\right)}
\le F(\bx_1)-F^\star + \frac{d\alpha}{2L}+\sum_{i=1}^d \frac{5G_i^2}{L}\ln\left(1+\frac{G_i^2T}{\alpha}\right)~.
\end{equation*}
Now, the first statement of the Theorem follows by observing that each term on the LHS is non-negative so that the sum can be lower-bounded by any individual term. For the second statement, define
\begin{align*}
Q_i&=\frac{(\alpha+A_i)^2}{2L\left(\alpha+A_i + \sigma_{i}^2T\right)},\\
Q&=F(\bx_1)-F^\star + \frac{d\alpha}{2L}+\sum_{i=1}^d \frac{5G_i^2}{L}\ln\left(1+\frac{G_i^2T}{\alpha}\right),
\end{align*}
so that $\sum_{i=1}^d Q_i \le Q$. By the quadratic formula and definition of $Q_i$, we have
\begin{align*}
A_i &\le 2LQ_i + \sqrt{2LQ_i\sigma_{i}^2T} - \alpha~.
\end{align*}
Thus,
\begin{align*}
\sum_{i=1}^d A_i&\le 2LQ -d\alpha + \sum_{i=1}^d \sqrt{2LQ_i\sigma_{i}^2T}\\
&\le 2LQ -d\alpha + \sqrt{2L}\sqrt{\sum_{i=1}^d  Q_i}\sqrt{\sum_{i=1}^d\sigma_{i}^2T}\\
&=2LQ -d\alpha +  \sqrt{2LQ\sigma^2T}~.
\end{align*}
From which the second statement follows.
\end{proof}

\subsection{Summary}
In summary, we have presented a novel way to reduce the adaptation of step sizes for the stochastic optimization of smooth non-convex functions to an online convex optimization problem. The reduction goes through the use of novel surrogate convex losses. This framework allows us to use no-regret online algorithms to learn step sizes on the fly. The resulting algorithm has an optimal convergence guarantee for any level of noise, without the need to estimate the noise or tune the step sizes. The overall price to pay is a factor of 2 in the computation of the gradients. We also have presented a per-coordinate version of our algorithm that achieves faster convergence on the coordinates with less noise.

As a side note, the optimal convergence rate was also obtained by \citet{WardWB19} using AdaGrad global step sizes, without the need to tune parameters. \citet{LiO19} improves over the results of \citet{WardWB19} by removing the assumption of bounded gradients. However, both analyses focus on the adaptivity of non-per-coordinate updates and are somewhat complicated in order to deal with unbounded gradients or non-independence of the current step size from the current step gradient. In comparison, our technique is relatively simple, allowing us to easily show a nontrivial guarantee for per-coordinate updates.

The idea of tuning step sizes with online learning has been explored in the online convex optimization literature~\citep{KoolenvEG14,vanErvenK16}. There, the possible step sizes are discretized and an expert algorithm is used to select the step size to use online. Instead, in our work the use of convex surrogate loss functions allows us to directly learn the optimal step size, without needing to discretize the range of step sizes.

\section{Adaptation to Noise under the PL condition}
\label{sec:noise_adapt_pl}
\begin{figure}[t]
    \centering
    \includegraphics[width=\textwidth]{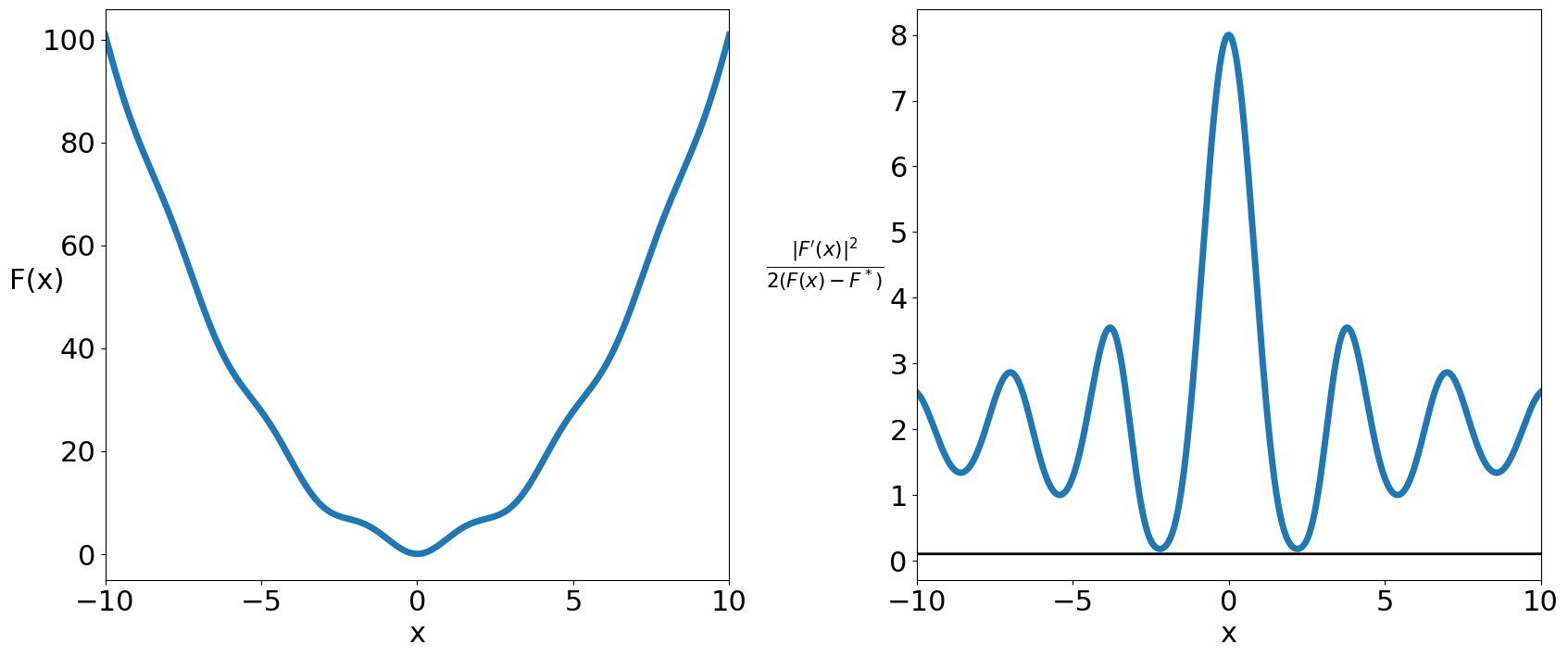}
    \caption[A function satisfying the PL condition]{The left figure plots the function $F(x) = x^2 + 3\sin^2(x)$ and the right one shows that it satisfies the PL condition where the black line is $y=\frac{1}{10}$.}
    \label{fig:pl_example}
\end{figure}

In the previous section, we showed how to adapt to the noise in the general smooth non-convex scenario. In practice, we might learn additional properties of the problem which we can use to choose/set algorithms to make it converge faster under these conditions. For example, if $F$ is a convex function, then GD can obtain a convergence rate of $O(\frac{1}{\sqrt{T}})$ with a step size proportional to $\frac1{\sqrt{T}}$; while if $F$ is in addition $L$-smooth and $\mu$-strongly-convex, GD with a constant step size $\frac1L$ guarantees a linear convergence rate of $O(\exp(-\frac{\mu}{L}T))$. Similarly, for the non-convex setting, there are conditions under which we can also get a linear rate~\citep{KarimiNS16}. One popular option is the Polyak-\L{}ojasiewicz (PL) condition~\citep{Polyak63,Lojasiewicz63}:
\begin{assumption}
A differentiable function $F:\R^d\rightarrow\R$ is said to satisfy the PL condition if for some $\mu > 0$ we have for any $\bx\in\R^d$ that
\[
\frac{1}{2} \| \nabla F(\bx) \|^2 \geq \mu \left(F(\bx) - F^{\star} \right)~.
\]
\end{assumption} 
In words, the gradient grows as at least as a quadratic function of the sub-optimality. As an example, we show in Figure~\ref{fig:pl_example} a function $F(x) = x^2 + 3\sin^2(x)$ which satisfies the PL condition with $\mu = \frac{1}{10}$~\citep{KarimiNS16}.

We want to stress that a function satisfying the PL condition is not necessarily convex as defined in~\eqref{eq:convex},
which can be clearly seen from Figure~\ref{fig:pl_example}. Yet, the PL condition does imply the weaker condition of \emph{invexity}~\citep{hanson1981sufficiency}. Recall that a function $F$ is invex if it is differentiable and there exists a vector valued function $\bv$ such that for any $\bx, \by \in \R^d$
, it holds that:
\begin{equation}
    \label{eq:invex}
    F(\by) \ge F(\bx) + \langle\nabla F(\bx), \bv(\by, \bx)\rangle~.
\end{equation}
Obviously, invexity admits convexity as a special case of $\bv(\by, \bx)=\by-\bx$. As a smooth function $F$ is \emph{invex} if and only if every
stationary point (namely a point where the gradient is zero) of $F$ is a global minimum~\citep{craven1985invex}, any smooth function that satisfies the PL condition must be invex~\citep{KarimiNS16}.

Though the PL condition is often considered a ``strong'' condition, it was formally proved to hold locally in a sufficiently large neighborhood of the random initialization when training deep neural networks in \citet{Allen-ZhuLS19}. Furthermore, \citet{KleinbergLY18} empirically observed that the loss surface of neural networks has good one-point convexity properties, and thus locally satisfies the PL condition, at least for the whole neighborhood along the SGD trajectory. For us, we only need it to hold along the optimization path and not over the entire space, as also pointed out in \citet{KarimiNS16}. So, while being strong, it actually models the cases we are interested in.
Moreover, dictionary learning~\citep{AroraGMM15}, phase retrieval~\citep{ChenC15}, and matrix completion~\citep{SunL16}, all satisfy the one-point convexity locally~\citep{Zhu18b}, and in turn they all satisfy the PL condition locally.

Apart from the linear rate GD obtains in the deterministic setting under the PL condition by using a constant step size, for the stochastic setting, the best rate we know of is $O(\frac{\sigma^2}{\mu^2T})$ obtainable using SGD with the decaying step size $O(\frac{1}{\mu t})$~\citep{KarimiNS16, LiZO21}. Clearly, to achieve the best performance in each setting, we need two completely different step size decay schedules.

Below, we prove that using two popular empirical step size schedules, the exponential and the cosine step sizes, SGD's convergence rate adapts to the noise.

\subsection{Exponential and Cosine Step Sizes}
\label{ssec:exp_cos_step_sizes}
Specifically, we use the following definition for the exponential step size
\begin{equation}
\label{eq: step_size}
\eta_t = \eta_0 \cdot \alpha^t,
\end{equation}
and for the cosine step size~\citep{LoshchilovH17}
\begin{equation}
\label{eq:cosine_step}
\eta_t = \frac{\eta_0 }{2}\left(1+ \cos \frac{t\pi}{T}\right)~.
\end{equation}

The exponential step size is simply an exponential decaying step size. It is less discussed in the optimization literature and it is also unclear who proposed it first, even if it has been known to practitioners for a long time and already included in many deep learning software libraries including TensorFlow~\citep{Tensorflow15} and PyTorch~\citep{Pytorch19}. Yet, no convergence guarantee has ever been proved for it. The closest strategy is the \emph{stagewise step decay}, which corresponds to the discrete version of the exponential step size we analyze.
The stagewise step decay uses a piece-wise constant step size strategy, where the step size is cut by a factor in each ``stage''.
This strategy is known with many different names: ``stagewise step size''~\citep{YuanYJY19}, ``step decay schedule''~\citep{GeKKN19}, ``geometrically decaying schedule''~\citep{DavisDXZ19}, and ``geometric step decay''~\citep{DavisDC19}. In this paper, we will call it stagewise step decay. The stagewise step decay approach was first introduced in \citep{Goffin77} and used in many \emph{convex} optimization problem~\citep[e.g.,][]{HazanK11,AybatFGO19,KulunchakovM19,GeKKN19}. Interestingly, \citet{GeKKN19} also shows promising empirical results on non-convex functions, but instead of using their proposed decay strategy, they use an exponentially decaying schedule, like the one we analyze here.
The only use of the stagewise step decay for non-convex functions we know are for sharp functions~\citep{DavisDC19} and weakly-quasi-convex functions~\citep{YuanYJY19}. However, they do not show any adaptation property and they still do not consider the exponential step size but rather its discrete version.

The cosine step size, which anneals the step size following a cosine function, has exhibited great power in practice but it does not have any theoretical justification. The cosine step size was originally presented in~\citet{LoshchilovH17} with two tunable parameters. Later, \citet{HeZZZXL19} proposed a simplified version of it with one parameter. However, there is no theory for this strategy though it is popularly used in the practical world \citep{LiuSY18,ZhangWZZ19, CubukZMVL19,ZhaoJK20,YouLHX20,ChenKNH20,GrillSATRBDABGGPKMV20}.

\subsection{Theoretical Analyses of Exponential and Cosine Step Sizes}
\label{ssec:exp_cos_convergence}
We now prove the convergence guarantees for these two step sizes showing that they can adapt to noise. We would use the following assumption on noises:

\begin{assumption}
\label{asp:noise_var_bounded_relax}
For $t = 1, 2, \dots, T$, we assume $\E_t [\| \nabla f(\bx_t, \xi_t) - \nabla F(\bx_t) \|^2 ] \leq a \| \nabla F(\bx_t) \|^2 + b $, where $a, b \geq 0$. 
\end{assumption}

This assumption on the noise is strictly weaker than the common assumption of assuming a bounded variance (Assumption~\ref{asp:noise_var_bounded}). Indeed, this assumption recovers the bounded variance case with $a=0$ while also allowing for the variance to grow unboundedly far from the optimum when $a>0$. This is indeed the case when the optimal solution has low training error and the stochastic gradients are generated by mini-batches. This relaxed assumption on the noise was first used by \citet{BertsekasT96} in the analysis of the asymptotic convergence of SGD.

\begin{thm}[SGD with exponential step size]
\label{thm: pl_smooth_cst_noise}
Assume $F$ to be $L$-smooth and $\mu$-PL and Assumption~\ref{asp:noise_unbiased} and~\ref{asp:noise_var_bounded_relax} to hold. For a given $T \geq \max\{3, \beta \}$ and $\eta_0 = (L(1+a))^{-1}$, with step size \eqref{eq: step_size}, SGD guarantees 
\begin{align*}
& \E F(\bx_{T+1}) - F^{\star} \leq \frac{5LC(\beta)}{e^2 \mu^2 } \frac{\ln^2 \frac{T}{\beta}}{T} b
+ C(\beta) \exp\left(-\frac{0.69\mu }{L+a} \left(\frac{T}{\ln \frac{ T}{\beta}}\right)\right)\cdot (F(\bx_1) - F^{\star}), 
\end{align*}
where $C(\beta)\triangleq \exp \left((2\mu\beta)/(L (1+a)\ln T/\beta)\right)$.
\end{thm}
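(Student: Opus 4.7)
The plan is to derive a per-step contraction on the sub-optimality gap and then unroll it. I would first apply the $L$-smoothness descent inequality to the SGD update, take the conditional expectation $\E_t$, and use Assumptions~\ref{asp:noise_unbiased} and~\ref{asp:noise_var_bounded_relax} in the form $\E_t\|\bg_t\|^2 \le (1+a)\|\nabla F(\bx_t)\|^2 + b$. Since the choice $\eta_0 = (L(1+a))^{-1}$ and the monotonicity of the exponential step size imply $\eta_t L(1+a) \le 1$, the coefficient in front of $\|\nabla F(\bx_t)\|^2$ simplifies to at least $\eta_t/2$. Combining this with the PL inequality $\|\nabla F(\bx_t)\|^2 \ge 2\mu(F(\bx_t)-F^\star)$ yields the clean one-step recursion
\begin{equation*}
\E_t[F(\bx_{t+1})-F^\star] \le (1-\eta_t\mu)(F(\bx_t)-F^\star) + \tfrac{Lb}{2}\eta_t^2.
\end{equation*}

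Unrolling this recursion over $t=1,\dots,T$ and taking full expectation decomposes the final error into
\begin{equation*}
\E[F(\bx_{T+1})-F^\star] \le \underbrace{\Bigl(\prod_{t=1}^{T}(1-\eta_t\mu)\Bigr)}_{(I)}(F(\bx_1)-F^\star) + \tfrac{Lb}{2}\underbrace{\sum_{t=1}^T \eta_t^2 \prod_{s=t+1}^T(1-\eta_s\mu)}_{(II)},
\end{equation*}
so the problem reduces to estimating $(I)$ and $(II)$. For term $(I)$, I would use $1-x\le e^{-x}$ to upper bound it by $\exp(-\mu\sum_t \eta_t)$ and evaluate the geometric sum in closed form, $\sum_t\eta_t = \eta_0\alpha(1-\alpha^T)/(1-\alpha)$. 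Identifying $\beta := T\alpha^T$ (which is what forces the hypothesis $T\ge\beta$) gives $1-\alpha = 1-e^{-\ln(T/\beta)/T} \approx \ln(T/\beta)/T$; the leading term of a Taylor expansion produces the asymptotic decay rate $\sim\mu T/((L(1+a))\ln(T/\beta))$ in the exponent, while the residual between $1-\alpha$ and $\ln(T/\beta)/T$ is exactly what gets swept into the multiplicative correction $C(\beta)$.

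The technically delicate part is term $(II)$, which must ultimately scale as $b\ln^2(T/\beta)/(\mu^2 T)$. The step sizes decay exponentially and so do the product weights for small $t$, so naive bounds such as $\max_t\eta_t^2\cdot \sum_t P_t$ or the telescoping inequality $\eta_t P_t \le (P_t-P_{t-1})/\mu$ (which only yields $\sum_t\eta_t^2 P_t\le\eta_0/\mu$) are too loose to produce the $1/T$ factor. My plan is a two-regime split of $\{1,\dots,T\}$ at a cutoff $t^\star$ chosen so that $\mu\sum_{s>t^\star}\eta_s$ is of order one: for $t\le t^\star$ the product factor is exponentially small, while for $t>t^\star$ the surviving step sizes are all comparable to $\eta_T = \eta_0\beta/T$ up to a logarithmic factor and there are only $T-t^\star = \Theta(T/\ln(T/\beta))$ terms left. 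Balancing the two contributions and extracting the $1/\mu$ coming from summing $P_t$ over the late regime produces the $\ln^2(T/\beta)/(\mu^2 T)$ rate.

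Collecting $(I)$ and $(II)$, substituting $\eta_0 = (L(1+a))^{-1}$, and invoking $T\ge\max\{3,\beta\}$ to control the Taylor remainders feeding $C(\beta)$ assembles the stated bound. The main obstacle is really the sharp estimation of $(II)$: the argument has to be tight enough that the exponential damping of $P_t$ does not erase the $\ln^2$ factor, yet simple enough to yield a closed-form constant. Carefully tracking when the identity $T\alpha^T=\beta$ is invoked inside the partial sums, as opposed to only in the final bound, looks like the main technical lever.
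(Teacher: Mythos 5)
Your plan follows the paper's proof almost exactly: the same one-step inequality (the paper's Lemma~\ref{lemma:start} combined with the PL condition gives precisely your recursion $\Delta_{t+1}\le(1-\mu\eta_t)\Delta_t+\tfrac{Lb}{2}\eta_t^2$), the same unrolling into the product term $(I)$ and the weighted sum $(II)$ (Lemma~\ref{lemma: ratio_bound}), and the same treatment of $(I)$ via $1-x\le e^{-x}$ plus the closed-form geometric sum, with $1-\alpha\le\ln(1/\alpha)=\ln(T/\beta)/T$ supplying the $0.69\,T/\ln(T/\beta)$ exponent. One small correction on provenance: $C(\beta)$ does not come from the Taylor remainder of $1-\alpha$; it is the factor $\exp(\mu\eta_0\alpha^{T+1}/(1-\alpha))$, i.e.\ the tail of the geometric series, controlled by the bound $\alpha^{T+1}/(1-\alpha)\le 2\beta/\ln(T/\beta)$ (Lemma~\ref{lemma: ineq_constant}); it enters both $(I)$ and, after factoring it out of the product weights, $(II)$.

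The only genuine divergence is the estimate of $(II)$. You correctly diagnose that the naive bounds are too loose and propose a two-regime split at the time $t^\star$ where $\mu\sum_{s>t^\star}\eta_s\sim 1$. That works: the summand $\eta_t^2\prod_{s>t}(1-\mu\eta_s)$ peaks near $t^\star$ at height $\Theta(\ln^2(T/\beta)/(\mu^2T^2))$, and both regimes contribute $O(\ln^2(T/\beta)/(\mu^2T))$ in total. The paper gets the same conclusion with less bookkeeping: after extracting $C(\beta)$, it applies the pointwise inequality $\exp(-x)\le(\gamma/(ex))^\gamma$ with $\gamma=2$ to each summand, which cancels the $\alpha^{2t}$ factor exactly and bounds every term uniformly by $\tfrac{4L^2(1+a)^2}{e^2\mu^2}(1-\alpha)^2/\alpha^2$; multiplying by $T$ and using $(1-\alpha)\le\ln(T/\beta)/T$ gives the $5L\ln^2(T/\beta)/(e^2\mu^2T)$ coefficient directly (the spare factor $\alpha^{-2}\le e^{2/e}$ absorbs into the constant, using $T\ge 3$). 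Your split is a discretized version of the same observation — both arguments hinge on the maximum of $u^2e^{-cu}$ — so either route closes the proof; the paper's is just a one-line substitute for your cutoff analysis and avoids having to locate $t^\star$ explicitly.
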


\textbf{Choice of $\beta$} 
Note that if $\beta = L(1+a)/\mu$, we get 
\begin{align*}
\E F(\bx_{T+1}) - F^{\star}
\leq O\left(\exp\left(-\frac{\mu }{L+a} \left(\frac{T}{\ln \frac{\mu T}{L}}\right)\right)+ \frac{b \ln^2 \frac{\mu T}{L}}{\mu^2  T} \right)~.
\end{align*}
In words, this means that we are basically free to choose $\beta$, but will pay an exponential factor in the mismatch between $\beta$ and $\frac{L}{\mu}$, which is basically the condition number for PL functions. This has to be expected because it also happens in the easier case of stochastic optimization of strongly convex functions~\citep{MoulinesB11}.

\begin{thm}[SGD with cosine step size]
\label{thm:PL_cosine}
Assume $F$ to be $L$-smooth and $\mu$-PL and Assumption~\ref{asp:noise_unbiased} and~\ref{asp:noise_var_bounded_relax} to hold. For a given $T$ and $\eta_0 = (L(1+a))^{-1}$, with step size \eqref{eq:cosine_step}, SGD guarantees 
\begin{align*}
\E F(\bx_{t+1}) - F^{\star} 
\leq
&\exp \left(- \frac{\mu (T-1)}{2L(1+a)}\right) (F(x_1) - F^{\star})\\
& \quad + \frac{ \pi^4 b}{32 (1+a)T^4} \left( \left(\frac{8T^2}{\mu}\right)^{4/3} + \left(\frac{6T^2}{\mu}\right)^{\frac{5}{3}}\right) ~.
\end{align*}
\end{thm}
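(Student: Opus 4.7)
The plan is to derive a one-step descent bound from $L$-smoothness, apply the PL condition to obtain a contractive recursion in $F(\bx_t) - F^\star$, unroll it, and then do the work specific to the cosine schedule to extract the two terms in the stated bound. This is a standard template for SGD under PL plus smoothness; the novelty lies entirely in the last step, i.e.~handling the cosine-shaped step sizes.

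First, apply smoothness to $\bx_{t+1} = \bx_t - \eta_t \bg_t$, take the conditional expectation, and use Assumption~\ref{asp:noise_unbiased} together with Assumption~\ref{asp:noise_var_bounded_relax} (which yields $\E_t\|\bg_t\|^2 \le (1+a)\|\nabla F(\bx_t)\|^2 + b$) to obtain
\begin{align*}
\E_t[F(\bx_{t+1})]
\le F(\bx_t) - \eta_t\left(1 - \frac{L\eta_t(1+a)}{2}\right)\|\nabla F(\bx_t)\|^2 + \frac{L\eta_t^2 b}{2}.
\end{align*}
Since $\eta_t = \frac{\eta_0}{2}(1+\cos(t\pi/T)) \le \eta_0 = \frac{1}{L(1+a)}$ for every $t$, the parenthesized factor is at least $\tfrac{1}{2}$, so the PL inequality $\|\nabla F(\bx_t)\|^2 \ge 2\mu(F(\bx_t) - F^\star)$ converts the bound into the linear recursion $\E_t[F(\bx_{t+1}) - F^\star] \le (1 - \mu\eta_t)(F(\bx_t) - F^\star) + L\eta_t^2 b/2$. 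Unrolling from $t=1$ to $T$, taking total expectation, and applying $1-x \le e^{-x}$ (valid since $\mu\le L$ for smooth PL functions) produces
\begin{align*}
\E[F(\bx_{T+1}) - F^\star]
\le e^{-\mu \sum_{t=1}^T \eta_t}(F(\bx_1) - F^\star) + \frac{Lb}{2}\sum_{t=1}^T \eta_t^2 \, e^{-\mu \sum_{s=t+1}^T \eta_s}.
\end{align*}

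For the first (initial-condition) term, I would compute $\sum_{t=1}^T \eta_t$ exactly. The identity $\sum_{t=1}^T \cos(t\pi/T) = -1$, obtained from the closed form of the geometric series $\sum_{t=1}^T e^{it\pi/T}$, gives $\sum_{t=1}^T \eta_t = \frac{(T-1)\eta_0}{2} = \frac{T-1}{2L(1+a)}$, which matches the exponent in the first summand of the stated bound.

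The main obstacle is controlling the noise term $\sum_{t=1}^T \eta_t^2 \exp(-\mu\sum_{s=t+1}^T \eta_s)$ under the cosine schedule. Using $\eta_t = \eta_0 \cos^2(t\pi/(2T))$, I plan to split the sum at a threshold $t^\star = T - \tau$. For \emph{late} iterations $t > t^\star$ (where $T-t \le \tau$), I would use $\cos(t\pi/(2T)) = \sin((T-t)\pi/(2T)) \le (T-t)\pi/(2T)$ to get $\eta_t^2 \le \eta_0^2\pi^4(T-t)^4/(16T^4)$, bound the exponential factor trivially by $1$, and sum to obtain a contribution of order $\eta_0^2\pi^4\tau^5/T^4$. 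For \emph{early} iterations $t \le t^\star$, I would lower bound the tail sum via an integral/Taylor argument: $\int_t^T \cos^2(x\pi/(2T))dx$ and the expansion $\sin(\tau\pi/T) = \tau\pi/T - (\tau\pi/T)^3/6 + \cdots$ yield $\sum_{s=t+1}^T \eta_s \gtrsim c\,\eta_0 (T-t)^3/T^2$ for $t$ near $T$, making the exponential factor geometrically small so the early contribution is dominated by a geometric series in $\exp(-c\mu\eta_0\tau^3/T^2)$. Choosing $\tau$ on the order of $(T^2/\mu)^{1/3}$ to balance the two contributions gives exactly the two-term structure claimed, with the $\tau^5 \sim (T^2/\mu)^{5/3}$ piece being the dominant noise term and a cross-term from the split producing the subleading $(T^2/\mu)^{4/3}$ correction. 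Tracking the constants through this balancing (in particular the factors $8$ and $6$ inside the base) is the technically delicate but essentially mechanical part of the argument.
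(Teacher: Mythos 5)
Your proposal is correct and follows essentially the same route as the paper: the same one-step descent bound with $\eta_t \le \eta_0 = (L(1+a))^{-1}$, the same PL recursion unrolled via $1-x\le e^{-x}$, the exact identity $\sum_{t=1}^T\cos(t\pi/T)=-1$ for the linear term, and the same pair of bounds $\eta_t = \eta_0\sin^2\bigl(\tfrac{(T-t)\pi}{2T}\bigr) \le \eta_0\pi^2(T-t)^2/(4T^2)$ and $\sum_{i=t+1}^T\eta_i \ge \eta_0(T-t-1)^3/(6T^2)$ for the noise term. The only (cosmetic) difference is that you handle the resulting sum $\sum_t t^4 e^{-ct^3}$ by splitting at a threshold and balancing, whereas the paper bounds it directly by its peak value plus an integral (its Lemma~\ref{lemma: integral_bound}); both yield the same $(T^2/\mu)^{4/3}$ and $(T^2/\mu)^{5/3}$ terms.
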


\textbf{Adaptivity to Noise} From the above theorems, we can see that both the exponential step size and the cosine step size have a provable advantage over polynomial ones: \emph{adaptivity to the noise}. Indeed, when $b=0$, namely there is only noise relative to the distance from the optimum, they both guarantee a linear rate. Meanwhile, if there is noise, using the \emph{same step size without any tuning}, the exponential step size recovers the rate of $O\left(1/(\mu^2 T)\right)$ while the cosine step size achieves the rate of $O(1/(\mu^{\frac{5}{3}}T^{\frac{2}{3}}))$ (up to poly-logarithmic terms). In contrast, polynomial step sizes would require two different settings---decaying vs constant---in the noisy vs no-noise situation~\citep{KarimiNS16}.
It is worth stressing that the rate in Theorem~\ref{thm: pl_smooth_cst_noise} is one of the first results in the literature on the stochastic optimization of smooth PL functions \citep{khaled2020better}.

It is worth reminding the reader that \emph{any} polynomial decay of the step size does not give us this adaptation. So, let's gain some intuition on why this should happen with these two step sizes. In the early stage of the optimization process, we can expect that the disturbance due to the noise is relatively small compared to how far we are from the optimal solution. Accordingly, at this phase, a near-constant step size should be used. This is exactly what happens with \eqref{eq: step_size} and \eqref{eq:cosine_step}. On the other hand, when the iterate is close to the optimal solution, we have to decrease the step size to fight the effects of the noise. In this stage, the exponential step size goes to 0 as $O \left(1/T \right)$, which is the optimal step size used in the noisy case. Meanwhile, the last $i$th cosine step size is $\eta_{T-i} = \frac{\eta_0}{2}(1- \cos\frac{i \pi }{T})= \eta_0 \sin^2 \frac{i\pi}{2T}$, which amounts $O (1/T^2)$ when $i$ is much smaller than $T$.

\textbf{Optimality of the bounds} As far as we know, it is unknown if the rate we obtain for the optimization of non-convex smooth functions under the PL condition is optimal or not. However, up to poly-logarithmic terms, Theorem~\ref{thm: pl_smooth_cst_noise} matches at the same time the best-known rates for the noisy and deterministic cases~\citep{KarimiNS16}. We would remind the reader that this rate is not comparable with the one for strongly convex functions which is $O(1/(\mu T))$.
Meanwhile, cosine step size achieves a rate slightly worse in $T$ (but better in $\mu$) under the same assumptions.

Before proving the above theorems, we first introduce some technical lemmas.

\begin{lemma}
\label{lemma:start}
Assume $F$ to be $L$-smooth, Assumption~\ref{asp:noise_unbiased} and~\ref{asp:noise_var_bounded_relax} to hold, and $\eta_t \leq \frac{1}{L(1+a)}$, then SGD guarantees 
\begin{equation}
\label{eq:thm2_eq1}
\begin{split}
\E F(\bx_{t+1})- \E  F(\bx_t)
\leq -  \frac{\eta_t}{2} \E \| \nabla F(\bx_t) \|^2 + \frac{L \eta_t^2 b}{2}~. 
\end{split}
\end{equation}
\end{lemma}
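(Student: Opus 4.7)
The plan is to apply the standard descent lemma from $L$-smoothness of $F$ and then absorb the noise term using Assumption~\ref{asp:noise_var_bounded_relax}, followed by a sign-check on the quadratic coefficient of $\|\nabla F(\bx_t)\|^2$ that uses exactly the step-size restriction $\eta_t \le \frac{1}{L(1+a)}$.

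Concretely, first I would invoke the quadratic upper bound implied by $L$-smoothness (Assumption~\ref{asp:smooth}), namely $F(\bx_{t+1}) \le F(\bx_t) + \langle \nabla F(\bx_t), \bx_{t+1}-\bx_t\rangle + \frac{L}{2}\|\bx_{t+1}-\bx_t\|^2$, and substitute the SGD update $\bx_{t+1} = \bx_t - \eta_t \nabla f(\bx_t,\xi_t)$. Then I would take the conditional expectation $\E_t[\cdot]$: unbiasedness (Assumption~\ref{asp:noise_unbiased}) turns the inner product into $-\eta_t\|\nabla F(\bx_t)\|^2$, while the quadratic term becomes $\frac{L\eta_t^2}{2}\E_t\|\nabla f(\bx_t,\xi_t)\|^2$.

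Next I would decompose the second moment using the bias-variance identity $\E_t\|\nabla f(\bx_t,\xi_t)\|^2 = \|\nabla F(\bx_t)\|^2 + \E_t\|\nabla f(\bx_t,\xi_t)-\nabla F(\bx_t)\|^2$ and apply Assumption~\ref{asp:noise_var_bounded_relax} to bound the variance by $a\|\nabla F(\bx_t)\|^2 + b$. Combining terms gives a coefficient of $-\eta_t\bigl(1 - \tfrac{L\eta_t(1+a)}{2}\bigr)$ on $\|\nabla F(\bx_t)\|^2$ plus an additive $\frac{L\eta_t^2 b}{2}$. The step-size restriction $\eta_t \le \frac{1}{L(1+a)}$ forces $\tfrac{L\eta_t(1+a)}{2} \le \tfrac12$, so this coefficient is at most $-\tfrac{\eta_t}{2}$, yielding the claimed one-step inequality conditionally. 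Taking a final outer expectation over the past randomness produces the stated bound.

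There is really no obstacle here: the argument is the classical descent lemma for SGD, and the only subtlety is correctly exploiting the relaxed noise assumption (which introduces the factor $1+a$ rather than just $1$) and tracking that the step-size ceiling is exactly tight enough to halve the gradient-norm coefficient. No auxiliary lemma beyond Assumptions~\ref{asp:smooth},~\ref{asp:noise_unbiased}, and~\ref{asp:noise_var_bounded_relax} is needed.
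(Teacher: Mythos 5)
Your proposal is correct and follows exactly the same route as the paper's proof: the $L$-smooth descent inequality, conditional expectation with unbiasedness, the bias--variance decomposition $\E_t\|\bg_t\|^2 = \|\nabla F(\bx_t)\|^2 + \E_t\|\bg_t - \nabla F(\bx_t)\|^2$ bounded via Assumption~\ref{asp:noise_var_bounded_relax}, and the step-size condition $\eta_t \le \frac{1}{L(1+a)}$ to reduce the coefficient $-\eta_t\bigl(1 - \tfrac{L(1+a)\eta_t}{2}\bigr)$ to $-\tfrac{\eta_t}{2}$. No gaps.
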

\begin{proof}[Proof of Lemma~\ref{lemma:start}]
By \eqref{eq:smooth}, we have
\begin{equation}
\label{eq:smooth_one_step}
F(\bx_{t+1}) \leq F(\bx_t) - \langle \nabla F(\bx_t), \eta_t\bg_t \rangle + \frac{L}{2} \eta_t^2 \| \bg_t \|^2~.
\end{equation}
Taking expectation on both sides, we get
\begin{align*}
\E F(\bx_{t+1})- \E F(\bx_t)
&\leq
- \left(\eta_t - \frac{L(a+1)}{2} \eta_t^2 \right)\E \| \nabla F(\bx_t) \|^2 + \frac{L}{2}\eta_t^2 b\\
&\leq
- \frac{1}{2}\eta_t \E \| \nabla F(\bx_t) \|^2 + \frac{L}{2}\eta_t^2 b,
\end{align*}
where in the last inequality we used the fact that $\eta_t \leq \frac{1}{L(1+a)}$.
\end{proof}

\begin{lemma}
\label{lemma: ratio_bound}
Assume $X_k, A_k, B_k \geq 0, k = 1 ,...$, and $X_{k+1} \leq A_k X_k + B_k$, we have 
\[
X_{k+1} \leq \prod_{i=1}^k A_i X_1 + \sum_{i=1}^{k} \prod_{j=i+1}^k A_j B_i~. 
\]
\end{lemma}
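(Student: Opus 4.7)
The plan is to prove this lemma by straightforward induction on $k$, since the recursion $X_{k+1} \le A_k X_k + B_k$ has a form whose closed-form expansion is standard and easily verified.

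For the base case $k=1$, the hypothesis gives $X_2 \le A_1 X_1 + B_1$, which matches the claimed bound after adopting the usual convention that the empty product $\prod_{j=2}^{1} A_j$ equals $1$ (so the sum reduces to $B_1$) and $\prod_{i=1}^{1} A_i = A_1$. I would state this empty-product convention explicitly at the start to avoid confusion in the indexing.

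For the inductive step, assume the bound holds for some $k \ge 1$. Applying the recursion and using $A_{k+1} \ge 0$ so that the inequality is preserved under multiplication, I would write
\begin{align*}
X_{k+2} &\le A_{k+1} X_{k+1} + B_{k+1} \\
&\le A_{k+1}\left(\prod_{i=1}^{k} A_i X_1 + \sum_{i=1}^{k} \prod_{j=i+1}^{k} A_j \, B_i\right) + B_{k+1} \\
&= \prod_{i=1}^{k+1} A_i X_1 + \sum_{i=1}^{k} \prod_{j=i+1}^{k+1} A_j \, B_i + B_{k+1} \\
&= \prod_{i=1}^{k+1} A_i X_1 + \sum_{i=1}^{k+1} \prod_{j=i+1}^{k+1} A_j \, B_i,
\end{align*}
where the last line again uses the empty-product convention for the $i=k+1$ term. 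This closes the induction.

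There is no real obstacle here; the only care needed is bookkeeping of the product indices and the empty-product convention, plus the implicit use of nonnegativity of $A_{k+1}$ (guaranteed by the hypothesis) to preserve the inequality when multiplying. The lemma will later be applied with $X_t = \E F(\bx_t) - F^\star$, $A_t = 1 - \mu \eta_t$, and $B_t = \tfrac{L \eta_t^2 b}{2}$ (coming from Lemma~\ref{lemma:start} combined with the PL condition) to drive the convergence proofs of both the exponential and cosine step-size theorems.
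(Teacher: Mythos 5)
Your proof is correct and is essentially identical to the paper's own argument: both proceed by induction on $k$, multiplying the inductive hypothesis by the nonnegative $A$ coefficient and absorbing the new $B$ term into the sum via the empty-product convention. The only (cosmetic) difference is that you index the step as $k \to k+1$ while the paper writes it as $k-1 \to k$.
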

\begin{proof}[Proof of Lemma~\ref{lemma: ratio_bound}]
When $k=1$, $X_2 \leq A_1 X_1 + B_1$ satisfies. By induction, assume $X_{k} \leq \prod_{i=1}^{k-1} A_i X_1 + \sum_{i=1}^{k-1}\prod_{j=i+1}^{k-1} A_j B_i$, and we have
\begin{align}
X_{k+1}
&\leq A_k \left( \prod_{i=1}^{k-1} A_i X_1 + \sum_{i=1}^{k-1} \prod_{j=i+1}^{k-1} A_j B_i \right) + B_k\\
&= \prod_{i=1}^k A_i X_1 + \sum_{i=1}^{k-1} \prod_{j=i+1}^{k} A_j B_i + A_k B_k \\
&= \prod_{i=1}^k A_i X_1 + \sum_{i=1}^k \prod_{j=i+1}^k A_j B_i~. \qedhere
\end{align}
\end{proof}

\begin{lemma}
\label{lemma:sum_cosine}
For any $T \geq 1$, we have $\sum_{t=1}^{T} \cos\frac{t \pi}{T} = -1$.
\end{lemma}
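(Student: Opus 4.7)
The plan is to prove this by a simple pairing/symmetry argument, isolating the $t=T$ term and exploiting the identity $\cos\bigl((T-t)\pi/T\bigr) = \cos\bigl(\pi - t\pi/T\bigr) = -\cos(t\pi/T)$.

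First I would write
\begin{equation*}
\sum_{t=1}^T \cos\frac{t\pi}{T} = \cos\frac{T\pi}{T} + \sum_{t=1}^{T-1}\cos\frac{t\pi}{T} = -1 + \sum_{t=1}^{T-1}\cos\frac{t\pi}{T},
\end{equation*}
so it suffices to show $\sum_{t=1}^{T-1}\cos(t\pi/T) = 0$. By substituting $s = T-t$, the sum $\sum_{t=1}^{T-1}\cos(t\pi/T)$ equals $\sum_{s=1}^{T-1}\cos((T-s)\pi/T) = -\sum_{s=1}^{T-1}\cos(s\pi/T)$, hence the sum equals its own negative and must vanish. This handles the case $T\ge 2$ in one line; for $T=1$ the sum is simply $\cos\pi = -1$, which also matches the claim.

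Alternatively (and equivalently) I could split the range $t=1,\dots,T-1$ into pairs $\{t,T-t\}$, each contributing zero, with a possible unpaired middle term $t=T/2$ when $T$ is even, which gives $\cos(\pi/2)=0$. Either framing yields the same conclusion, and I would probably present the symmetry/negation argument since it avoids a case split on the parity of $T$.

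I do not expect any real obstacle here: the identity is a one-step trigonometric manipulation. The only minor care needed is to ensure the boundary term $t=T$ is separated cleanly before invoking the symmetry, since it is exactly this term that is not matched by any partner in the range $\{1,\dots,T-1\}$ and produces the $-1$ on the right-hand side.
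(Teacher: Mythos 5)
Your proposal is correct and rests on exactly the same idea as the paper's proof: isolate the $t=T$ term (contributing $\cos\pi=-1$) and use the symmetry $\cos\bigl(\pi - x\bigr) = -\cos(x)$ on the remaining terms. The only difference is cosmetic — you phrase the symmetry as a reindexing $s = T-t$ showing the partial sum equals its own negative, which avoids the paper's explicit case split on the parity of $T$, while the alternative pairing argument you mention is precisely what the paper does.
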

\begin{proof}[Proof of Lemma~\ref{lemma:sum_cosine}]
If $T$ is odd, we have
\begin{align*}
\sum_{t=1}^{T} \cos\frac{t \pi}{T}
= \cos \frac{T\pi }{T} + \sum_{t=1}^{(T-1)/2} \cos \frac{t \pi}{T} + \cos \frac{(T-t)\pi}{T} 
 = \cos \pi = -1,
\end{align*}
where in the second inequality we used the fact that $\cos (\pi - x) = - \cos (x)$ for any $x$.
If $T$ is even, we have
\begin{equation}
\sum_{t=1}^{T} \cos\frac{t \pi}{T}
= \cos \frac{T\pi }{T} + \cos \frac{T\pi }{2T} + \sum_{t=1}^{T/2 - 1} \cos \frac{t \pi}{T} + \cos \frac{(T-t)\pi}{T} 
= \cos \pi = -1~.
\end{equation}
\end{proof}

\begin{lemma}
\label{lemma: ineq_constant}
For $T \geq 3$, $\alpha \geq 0.69$ and $\frac{ \alpha^{T+1}}{(1-\alpha)} \leq \frac{2\beta}{\ln \frac{T}{\beta}}$. 
\end{lemma}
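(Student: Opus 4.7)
\textbf{Proof proposal for Lemma~\ref{lemma: ineq_constant}:}
From the surrounding context (the step size in Theorem~\ref{thm: pl_smooth_cst_noise} is $\eta_t = \eta_0 \alpha^t$, and the bound features $T/\ln(T/\beta)$ together with the suggestive constant $0.69$), the intended choice is $\alpha = (\beta/T)^{1/T}$, equivalently $\alpha^T = \beta/T$ and $\ln(1/\alpha) = \ln(T/\beta)/T$. The plan is to establish both claims of the lemma under this identification; I will read the lemma as asserting that, whenever $T \geq \max\{3,\beta\}$, this $\alpha$ satisfies both $\alpha \geq 0.69$ and the displayed ratio bound.

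For the first claim, set $u = \ln(T/\beta)/T$, so $\alpha = e^{-u}$. The hypothesis $T \geq \beta$ gives $u \leq \ln T / T$. The elementary function $g(T) = \ln T / T$ is decreasing for $T \geq e$, so for $T \geq 3$ we get $u \leq \ln 3 / 3 < 0.367$. Hence $\alpha \geq e^{-\ln 3 / 3} = 3^{-1/3} \approx 0.6934 \geq 0.69$. The near-tightness of the constant reveals that $(T,\beta) = (3,1)$ is essentially the worst case.

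For the ratio bound, I would handle numerator and denominator separately. The numerator satisfies $\alpha^{T+1} = \alpha \cdot \alpha^T \leq \alpha^T = \beta/T$ since $\alpha \leq 1$. For the denominator, I will invoke the elementary inequality $1 - e^{-u} \geq u/2$ valid for $u \in [0,1]$, which is immediate from the Taylor-type bound $e^{-u} \leq 1 - u + u^2/2$ after rearranging. Since $u \leq 0.367 < 1$ from Step~1, this yields $1 - \alpha \geq u/2 = \ln(T/\beta)/(2T)$. Combining,
\begin{equation*}
    \frac{\alpha^{T+1}}{1-\alpha} \leq \frac{\beta/T}{\ln(T/\beta)/(2T)} = \frac{2\beta}{\ln(T/\beta)},
\end{equation*}
which is exactly the stated inequality.

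The main (minor) obstacle is pinning down the intended definition of $\alpha$, since the lemma does not restate it; the calibration of constants $0.69 \approx 3^{-1/3}$ and the factor $2$ in the ratio bound both strongly point to $\alpha = (\beta/T)^{1/T}$ and to the two elementary inequalities above. Once that identification is made, both parts reduce to one-line calculus arguments about $\ln T / T$ and $1 - e^{-u}$, with no delicate analysis needed.
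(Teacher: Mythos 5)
Your proposal is correct, and your reverse-engineered identification $\alpha=(\beta/T)^{1/T}$ is exactly what the paper intends; the ratio bound is proved the same way as in the paper, namely writing $\alpha^{T+1}/(1-\alpha)\le \tfrac{\beta}{T(1-\alpha)}$ and lower-bounding $1-\alpha=1-e^{-u}\ge u/2$ with $u=\tfrac{1}{T}\ln\tfrac{T}{\beta}\le \tfrac{\ln T}{T}$ small. The only difference is cosmetic: you justify $1-e^{-u}\ge u/2$ on $[0,1]$ via the Taylor bound while the paper invokes it only for $0<u<1/e$, and you additionally supply the short argument for $\alpha\ge 0.69$ (via $\alpha\ge 3^{-1/3}$), which the paper's proof silently omits.
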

\begin{proof}[Proof of Lemma~\ref{lemma: ineq_constant}]
We have
\begin{align*}
\frac{\alpha^{T+1}}{(1-\alpha)}
= \frac{\alpha \beta }{T (1-\alpha)}
= \frac{\beta }{T\left(1 - \exp\left(-\frac{1}{T} \ln \frac{T}{\beta}\right)\right)}
\leq \frac{2\beta }{\ln \frac{T}{\beta}},
\end{align*}
where in the last inequality we used $\exp(-x) \leq 1- \frac{x}{2}$ for $0 < x < \frac{1}{e}$ and the fact that $\frac{1}{T} \ln\left(\frac{T}{\beta}\right) \leq \frac{\ln T}{T} \leq \frac{1}{e}$.
\end{proof}

\begin{lemma}
\label{lemma: ineq_alpha}
$
1-x \leq \ln \left(\frac{1}{x}\right), \forall x > 0. 
$
\end{lemma}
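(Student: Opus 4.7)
The inequality $1-x \leq \ln(1/x)$ is equivalent, after multiplying by $-1$ and using $\ln(1/x) = -\ln(x)$, to the classical bound $\ln(x) \leq x-1$ for all $x > 0$. The plan is to prove this latter form directly via elementary calculus, since the equivalence between the two forms is immediate.

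First I would define the auxiliary function $g:(0,\infty)\to\R$ by $g(x) = x - 1 - \ln(x)$. Then I would compute $g'(x) = 1 - 1/x$, which is negative on $(0,1)$, zero at $x=1$, and positive on $(1,\infty)$. This shows that $g$ is strictly decreasing on $(0,1)$ and strictly increasing on $(1,\infty)$, so $x=1$ is the unique global minimum on $(0,\infty)$. Since $g(1) = 1 - 1 - \ln(1) = 0$, it follows that $g(x) \geq 0$ for all $x > 0$, that is, $\ln(x) \leq x - 1$. Rearranging and using $\ln(1/x) = -\ln(x)$ gives $1 - x \leq \ln(1/x)$, as desired.

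There is really no hard step here; the entire argument is a standard exercise. An alternative I would mention in one line is the geometric proof via concavity of $\ln$: the graph of $\ln$ lies below its tangent line at $x=1$, which is precisely $y = x - 1$. Either route yields a two-line proof, so I expect no obstacle whatsoever, and the main decision is simply which presentation is cleaner given the context in which the lemma is later applied (likely inside the proofs of Theorem~\ref{thm: pl_smooth_cst_noise} or related bounds, where $\ln(1/\alpha) \geq 1 - \alpha$ is used to control exponential step size products).
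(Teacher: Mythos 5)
Your proposal is correct and uses essentially the same argument as the paper: both reduce the claim to showing $f(x) = x - 1 - \ln x \geq 0$ and establish this by analyzing $f'$ to locate the minimum at $x = 1$, where $f(1) = 0$.
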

\begin{proof}[Proof of Lemma~\ref{lemma: ineq_alpha}]
It is enough to prove that $f(x) := x - 1- \ln x \geq 0$. Observe that $f'(x)$ is increasing and $f'(1) = 0$, hence, we have $f(x) \geq f(1) = 0$.
\end{proof}

\begin{lemma}
\label{lemma: integral_bound}
Let $a,b\geq0$. Then 
\[
\sum_{t=0}^T \exp(-b t) t^a \leq 2\exp(-a)\left(\frac{a}{b}\right)^a+ \frac{\Gamma(a+1)}{b^{a+1}}~.
\]
\end{lemma}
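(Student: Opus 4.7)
The plan is to exploit the unimodality of the summand $f(t) := \exp(-bt)t^a$ and dominate the sum by a single peak contribution plus a Riemann-style integral comparison. First I would compute the critical point of $f$ on $(0,\infty)$: differentiating gives $f'(t) = \exp(-bt)t^{a-1}(a - bt)$, so $f$ is increasing on $[0, t^\star]$ and decreasing on $[t^\star,\infty)$ with $t^\star = a/b$, and the maximum value is $f(t^\star) = \exp(-a)(a/b)^a$.

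Next I would split the sum at $t^\star$ and use monotone domination by an integral on each side. On the increasing part, for every integer $t$ with $t+1 \leq t^\star$ one has $f(t) \leq f(s)$ for all $s \in [t,t+1]$, so
\begin{equation*}
\sum_{t=0}^{\lfloor t^\star\rfloor - 1} f(t) \leq \int_0^{\lfloor t^\star\rfloor} f(s)\,ds~.
\end{equation*}
Symmetrically, on the decreasing side, for integer $t$ with $t-1 \geq t^\star$ one has $f(t) \leq f(s)$ for all $s \in [t-1,t]$, so
\begin{equation*}
\sum_{t=\lceil t^\star\rceil + 1}^{T} f(t) \leq \int_{\lceil t^\star\rceil}^{T} f(s)\,ds~.
\end{equation*}
The two boundary terms $f(\lfloor t^\star\rfloor)$ and $f(\lceil t^\star\rceil)$ (which coincide if $t^\star$ is an integer) are each at most $f(t^\star) = \exp(-a)(a/b)^a$, giving a contribution of at most $2\exp(-a)(a/b)^a$.

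Finally I would bound the remaining integral by extending the range to $[0,\infty)$ and invoking the Gamma integral: by the substitution $u = bt$,
\begin{equation*}
\int_0^{\infty} \exp(-bs) s^a\,ds = \frac{1}{b^{a+1}}\int_0^{\infty} \exp(-u) u^a\,du = \frac{\Gamma(a+1)}{b^{a+1}}~.
\end{equation*}
Combining the three pieces yields the stated inequality. The only mildly delicate point I anticipate is handling the edge cases $t^\star \notin [0,T]$ (either $a=0$, or $a/b > T$): in the former case $f$ is monotone decreasing and the integral bound alone suffices, while in the latter case $f$ is monotone increasing on $[0,T]$ and one still dominates the sum by $f(T) + \int_0^T f \leq f(t^\star) + \Gamma(a+1)/b^{a+1}$, which is strictly smaller than the claimed bound. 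So the inequality holds unconditionally in $a,b \geq 0$ (with $b>0$).
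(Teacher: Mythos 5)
Your proposal is correct and follows essentially the same route as the paper's proof: identify the peak of $f(t)=\exp(-bt)t^a$ at $t^\star=a/b$, bound the two terms adjacent to the peak by $2\exp(-a)(a/b)^a$, dominate the increasing and decreasing tails of the sum by integrals, and extend the integral to $[0,\infty)$ to obtain $\Gamma(a+1)/b^{a+1}$. Your explicit treatment of the edge cases where $t^\star\notin[0,T]$ is a small addition the paper leaves implicit, but the argument is the same.
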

\begin{proof}[Proof of Lemma~\ref{lemma: integral_bound}]
Note that $f(t)=\exp(-b t) t^a$ is increasing for $t\in [0,a/b]$ and decreasing for $t\geq a/b$. Hence, we have
\begin{align}
\sum_{t=0}^T \exp(-b t) t^a
&\leq
\sum_{t=0}^{\lfloor a/b \rfloor-1} \exp(-b t) t^a + \exp(-b \lfloor a/b \rfloor) \lfloor a/b \rfloor^a + \exp(-b \lceil a/b \rceil) \lceil a/b \rceil^a\\
&\quad+\sum_{\lceil a/b \rceil+1}^T \exp(-b t) t^a \\
&\leq 2\exp(-a)(a/b)^a+\int_{0}^{\lfloor a/b \rfloor} \exp(-b t) t^a dt + \int_{\lceil a/b \rceil}^T \exp(-b t) t^a dt\\
&\leq 2\exp(-a)(a/b)^a+\int_{0}^{T} \exp(-b t) t^a dt \\
&\leq 2\exp(-a)(a/b)^a+\int_{0}^{\infty} \exp(-b t) t^a dt \\
&=2\exp(-a)(a/b)^a+ \frac{1}{b^{a+1}} \Gamma(a+1)~. \qedhere
\end{align}
\end{proof}

We can now prove both Theorem~\ref{thm: pl_smooth_cst_noise} and Theorem~\ref{thm:PL_cosine}.
\begin{proof}[Proof of Theorem~\ref{thm: pl_smooth_cst_noise} and Theorem~\ref{thm:PL_cosine}.]
Denote $\E[F(\bx_t)] - F^{\star}$ by $\Delta_t$. From\\ Lemma~\ref{lemma:start} and the PL condition, we get
\begin{equation}
\Delta_{t+1} 
\leq  (1 - \mu \eta_t ) \Delta_t + \frac{L}{2} \eta_t^2 b^2~. 
\end{equation} 
By Lemma~\ref{lemma: ratio_bound} and $1- x \leq \exp (-x)$, we have 
\begin{align}
\Delta_{T+1} 
&  \leq \prod_{t=1}^{T} (1- \mu \eta_t) \Delta_1 +  \frac{L}{2} \sum_{t=1}^{T} \prod_{i=t+1}^{T} (1- \mu \eta_i) \eta_t^2 b\\
&  \leq \exp \left(- \mu \sum_{t=1}^{T} \eta_t \right) \Delta_1  +  \frac{Lb}{2} \sum_{t=1}^{T} \exp \left( - \mu \sum_{i=t+1}^{T} \eta_i \right) \eta_t^2~. 
\end{align}
We then show that both the exponential step size and the cosine step size satisfy $\sum_{t=1}^{T} \eta_t = \Omega (T)$, which guarantees a linear rate in the noiseless case.

For the cosine step size \eqref{eq:cosine_step}, we observe that 
\begin{align*}
\sum_{t=1}^{T} \eta_t
 = \frac{\eta_0 T}{2} + \frac{\eta_0}{2} \sum_{t=1}^{T} \cos \frac{t\pi}{T} = \frac{\eta_0 (T-1)}{2},
\end{align*}
where in the last equality we used Lemma~\ref{lemma:sum_cosine}. 

Also, for the exponential step size \eqref{eq: step_size}, we can show that
\begin{align*}
 \sum_{t=1}^{T} \eta_t = \eta_0 \frac{\alpha - \alpha^{T+1}}{1- \alpha}
 \geq  \frac{\eta_0 \alpha}{1- \alpha} - \frac{2\eta_0\beta }{\ln \frac{T}{\beta}}
 \geq  T \cdot  \frac{0.69\eta_0}{\ln \frac{T}{\beta}} -  \frac{2\eta_0\beta }{\ln \frac{T}{\beta}},
\end{align*}
where we used Lemma~\ref{lemma: ineq_constant} in the first inequality and Lemma~\ref{lemma: ineq_alpha} in the second. 

Next, we upper bound $\sum_{t=1}^{T} \exp \left( - \mu \sum_{i=t+1}^{T} \eta_i \right) \eta_t^2$ for the two step sizes.

For the exponential step size, by Lemma~\ref{lemma: ineq_constant}, we obtain
\begin{align*}
\sum_{t=1}^{T} \exp \left( - \mu \sum_{i=t+1}^{T} \eta_i \right) \eta_t^2
& = \eta_0^2 \sum_{t=1}^{T} \exp \left( - \mu \eta_0 \frac{\alpha^{t+1} - \alpha^{T+1}}{1- \alpha}\right) \alpha^{2t}\\
& \leq \eta_0^2 C(\beta) \sum_{t=1}^{T} \exp \left( - \frac{ \mu \eta_0\alpha^{t+1}}{1- \alpha}\right) \alpha^{2t}\\
& \leq \eta_0^2 C(\beta)  \sum_{t=1}^{T} \left(\frac{e}{2} \frac{\mu \alpha^{t+1}}{L(1+a)(1-\alpha)}\right)^{-2} \alpha^{2t}\\
& \leq \frac{4L^2(1+a)^2}{e^2\mu^2} \sum_{t=1}^T\frac{1}{\alpha^2} \ln^2 \left(\frac{1}{\alpha}\right)\\
& \leq  \frac{10 L^2(1+a)^2\ln^2 \frac{T}{\beta}}{e^2 \mu^2 T}, 
\end{align*}
where in the second inequality we used $\exp(-x) \leq \left(\frac{\gamma}{e x}\right)^\gamma, \forall x >0, \gamma>0$. 

For the cosine step size, using the fact that $\sin x \geq \frac{2}{\pi}x$ for $ 0 \leq x \leq \frac{\pi}{2}$, we can lower bound $\sum_{i=t+1}^{T} \eta_i $ by
\begin{align*}
\sum_{i=t+1}^{T} \eta_i 
= \frac{\eta_0 }{2}\sum_{i=t+1}^{T}  \left(1+ \cos \frac{i \pi}{T}\right)
= \frac{\eta_0 }{2}\sum_{i=0}^{T-t-1}  \sin^2 \frac{i \pi}{2T} 
\geq \frac{\eta_0}{2T^2}\sum_{i=0}^{T-t-1}  i^2
\geq \frac{\eta_0(T-t-1)^3}{6T^2}~. 
\end{align*}
Then, we proceed to get
\begin{align*}
\sum_{t=1}^{T} \exp \left(- \mu \sum_{i=t+1}^{T} \eta_i\right) \eta_t^2
& \leq \frac{\eta_0^2}{4} \sum_{t=1}^{T} \left(1+ \cos \frac{t\pi}{T}\right)^2 \exp \left(- \frac{\mu \eta_0(T-t-1)^3}{6T^2}\right)\\
& = \frac{ \eta_0^2}{4} \sum_{t=1}^{T-1} \left(1-  \cos \frac{t\pi}{T}\right)^2 \exp \left(- \frac{\eta_0 \mu (t-1)^3}{6T^2}\right) \\
& =\eta_0^2\sum_{t=1}^{T-1} \sin^4 \frac{t\pi}{2T} \exp \left(- \frac{\eta_0 \mu (t-1)^3}{6T^2}\right) \\
& \leq \frac{\eta_0^2 \pi^4}{16T^4}\sum_{t=0}^{T-1}t^4  \exp \left(- \frac{\eta_0 \mu t^3}{6T^2}\right)\\
& \leq \frac{\eta_0 \pi^4}{16 T^4} \left(2 \exp\left(-\frac{4}{3}\right) \left(\frac{8T^2}{\mu}\right)^{4/3} + \left(\frac{6T^2}{\mu}\right)^{\frac{5}{3}}\right), 
\end{align*}
where in the third line we used $\cos(\pi-x) = - \cos(x)$, in the forth line we used $1- \cos(2x) = 2\sin^2(x)$, and in the last inequality we applied Lemma~\ref{lemma: integral_bound}. 

Putting things together, we get the stated bounds. 
\end{proof}

\subsection{Experiments Comparing Exponential and Cosine Step Sizes with Other Optimizers}
\label{ssec:exp_cos_exps}

The empirical performance of the exponential and the cosine step sizes is already well-known in the applied world and does not require additional validation. However, both step sizes are often missing as baselines in recent empirical evaluations. Hence, the main aim of this section is to provide a comparison of the exponential and the cosine step sizes to other popular state-of-the-art step size schedules. All experiments are done in PyTorch~\citep{Pytorch19} and the codes can be found at \url{https://github.com/zhenxun-zhuang/SGD-Exponential-Cosine-Stepsize}.

We conducted experiments using deep neural networks to do image classification tasks on various datasets with different network architectures. 

\textbf{Datasets}
We consider the image classification task on CIFAR-10/100 and FashionMNIST. For all datasets, we randomly select 10\% training images for validation.

\textbf{Data Normalization and Augmentation}
Images are normalized per channel using the means and standard deviations computed from all training images. For CIFAR-10/100, we adopt the data augmentation technique following~\citet{LeeXGZT15} (for training only): 4 pixels are padded on each side of an image and a $32\times32$ crop is randomly sampled from the padded image or its horizontal flip.

\textbf{Models}
For FashionMNIST, we use a CNN model consisting of two alternating stages of $5\times5$ convolutional filters and $2\times2$ max-pooling followed by one fully connected layer of 1024 units. To reduce overfitting, 50\% dropout noise is used during training. For CIFAR-10, we employ the 20-layer Residual Network model~\citep{HeZRS16}. For CIFAR-100, we utilize the DenseNet-BC model~\citep{HuangLVW17} with 100 layers and a growth rate of 12. The loss is cross-entropy. 

\textbf{Training}
During the validation stage, we tune each method using grid-search to select the hyperparameters that work best according to their respective performance on the validation set. At the testing stage, the best performing hyperparameters from the validation stage are employed to train the model over all training images. The testing stage is repeated with random seeds 5 times to eliminate the influence of stochasticity.

We use Nesterov momentum~\citep{Nesterov83} of 0.9 without dampening (if having this option), weight-decay of 0.0001 (FashionMNIST and CIFAR-10), and 0.0005 (CIFAR100), and use a batch size of 128. Regarding the employment of Nesterov momentum, we follow the setting of~\citet{GeKKN19}. The use of momentum is essential to have a fair and realistic comparison in that the majority of practitioners would use it when using SGD. 

\textbf{Optimization methods} We consider SGD with the following step size decay schedules:
\begin{equation}\label{eq:decays}
\begin{split}
&\eta_t = \eta_0\cdot\alpha^t; \quad
\eta_t = \eta_0(1+\alpha\sqrt{t})^{-1}; \\
&\eta_t = \eta_0(1+\alpha t)^{-1}; \quad
\eta_t = \eta_0/2 \left(1+\cos\left(t\pi/T\right)\right),
\end{split}
\end{equation}
where $t$ is the iteration number (instead of the number of epochs). We also compare with Adam~\citep{KingmaB15}, SGD+Armijo~\citep{VaswaniMLSGLJ19}, PyTorch's ReduceLROnPlateau scheduler\footnote{\url{https://pytorch.org/docs/stable/optim.html}} and stagewise step decay. We will call the place of decreasing the step size in stagewise step decay a \textbf{milestone}. (As a side note, since we use Nesterov momentum in all SGD variants, the stagewise step decay basically covers the performance of multistage accelerated algorithms \citep[e.g.,][]{AybatFGO19}.)

\paragraph{Hyperparameter tuning} We tune the hyperparameters on the validation set using the following two-stage grid searching strategy. First, search over a coarse grid and select the one yielding the best validation results. Next, continue searching in a fine grid centering at the best-performing hyperparameters found in the coarse stage, and in turn, take the best one as the final choice.

For the starting step size $\eta_0$, the coarse searching grid is \{0.00001, 0.0001, 0.001, 0.01, 0.1, 1\}, and the fine grid is like \{0.006, 0.008, 0.01, 0.02, 0.04\} if the best one in the coarse stage is 0.01.

For the $\alpha$ value, we set its searching grid so that the ratio $\eta_T/\eta_0$, where $\eta_T$ is the step size in the last iteration, is first searched over the coarse grid of \{0.00001, 0.0001, 0.001, 0.01, 0.1, 1\}, and then over a fine grid centered at the best one of the coarse stage. Note that we try all pairs of $(\eta_0, \alpha)$ from their respective searching grids.

For the stagewise step decay, to make the tuning process more thorough, we modify as follows the one employed in Section 6.1 (specifically on tuning SGD V1) of \citet{YuanYJY19}, where they first set two milestones and then tune the starting step size. Put it explicitly and take the experiment on CIFAR-10 as an example, we first run vanilla SGD with a constant step size to search for a good range of starting step size on the grid \{0.00001, 0.0001, 0.001, 0.01, 0.1, 1\}, and find 0.01 and 0.1 work well. Based on this, we set the fine searching grid of starting step sizes as \{0.007, 0.01, 0.04, 0.07, 0.1, 0.4\}. For each of them, we run three settings with an increasing number of milestones: vanilla SGD (with no milestone), SGD with 1 milestone, and SGD with 2 milestones. The searching grid for milestones is \{16k, 24k, 32k, 40k, 48k, 56k\} (number of iterations). For the 1 milestone setting, the milestone can be any of them. For the 2 milestones, they can be any combination of two different elements from the searching grid, like (16k, 32k) or (32k, 48k). The grid search strategy for FashionMNIST and CIFAR-100 is similar but with the searching grid for milestones over \{3k, 6k, 9k, 12k, 15k, 18k\}.

\begin{figure*}[t]
\centering
\includegraphics[width=\textwidth]{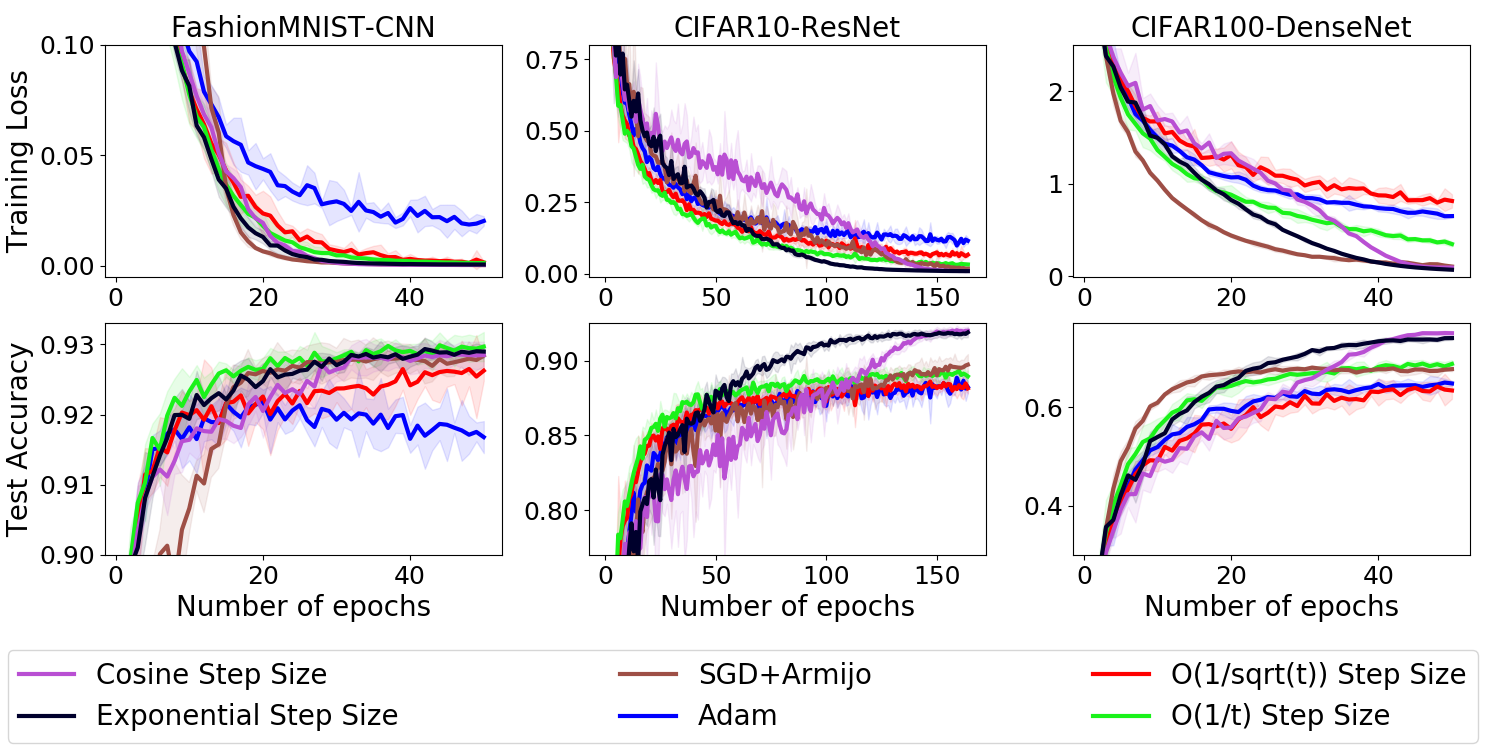}
\vspace{-2em}
\caption[Performance of Exponential and Cosine Step sizes vs.~others (1)]{Training loss (top plots) and test accuracy (bottom plots) curves on employing different step size schedules to do image classification using a simple CNN for FashionMNIST (left), a 20-layer ResNet for CIFAR-10 (middle), and a 100-layer DenseNet on CIFAR-100 (right). \emph{(The shading of each curve represents the 95\% confidence interval computed
across five independent runs from random initial starting points.)}}
\label{fig:stepsize}
\end{figure*}

\begin{figure*}[t]
\centering
\includegraphics[width=\textwidth]{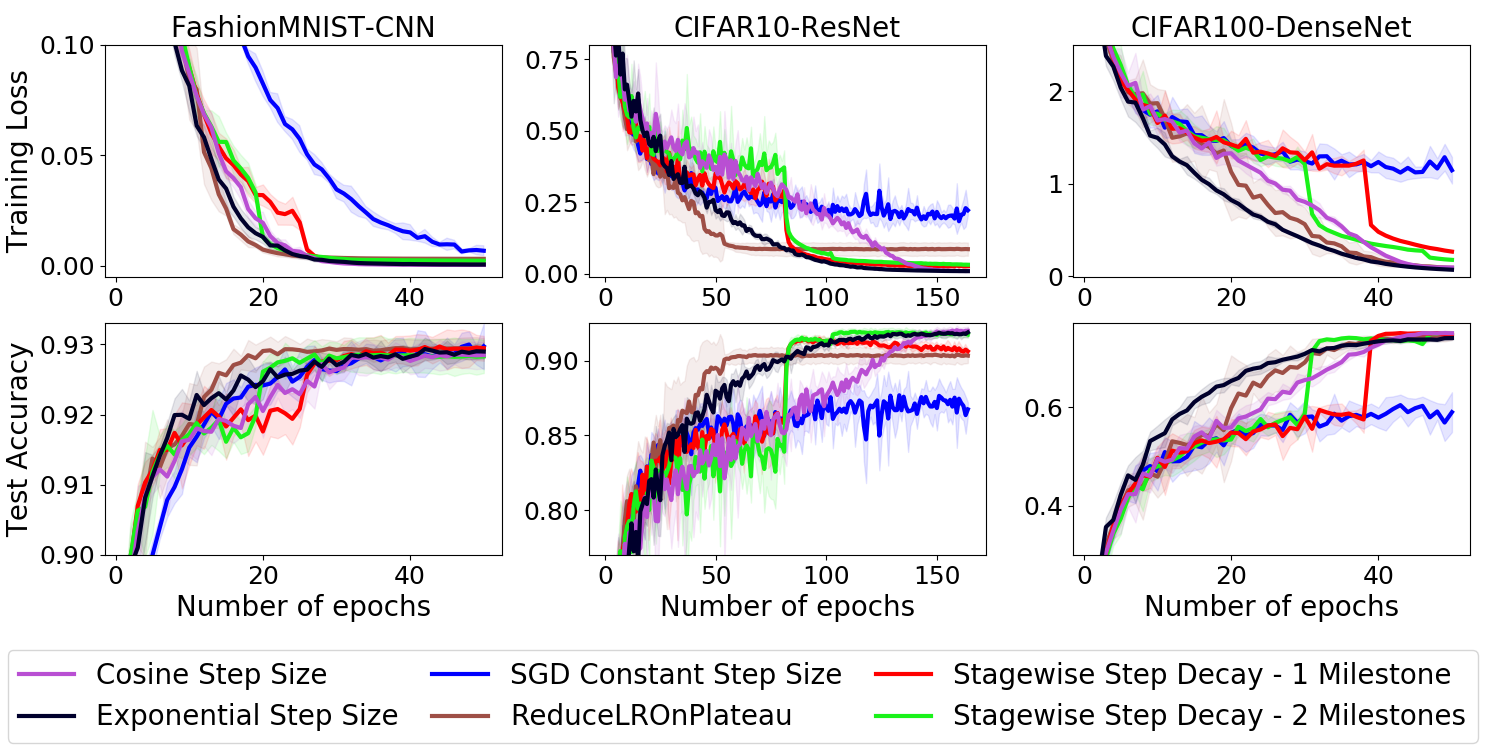}
\vspace{-2em}
\caption[Performance of Exponential and Cosine Step sizes vs.~others (2)]{Training loss (top plots) and test accuracy (bottom plots) curves comparing the exponential and cosine step sizes with stagewise step decay for image classification using a simple CNN for FashionMNIST (left), a 20-layer ResNet for CIFAR-10 (middle), and a 100-layer DenseNet on CIFAR-100 (right). \emph{(The shading of each curve represents the 95\% confidence interval computed
across five independent runs from random initial starting points.)}}
\label{fig:stagewise}
\end{figure*}

The PyTorch ReduceLROnPlateau scheduler takes multiple arguments, among which we tune the starting step size, the factor argument which decides by which the step size will be reduced, the patience argument which controls the number of epochs with no improvement after which the step size will be reduced, and the threshold argument which measures the new optimum to only focus on significant changes. We choose the searching grid for the starting step size using the same strategy for stagewise step decay above: run SGD with a constant step size to search for a good starting step size, then search over a grid centering on the found value, which results in the grid \{0.004, 0.007, 0.01, 0.04, 0.07\} (FashionMNIST) and \{0.01, 0.04, 0.07, 0.1, 0.4\} (CIFAR10/100). We also explore the searching grid of the factor argument over \{0.1, 0.5\}, the patience argument over \{5, 10\} (CIFAR10) or \{3, 6\} (FashionMNIST/CIFAR100), and the threshold argument over \{0.0001, 0.001, 0.01, 0.1\}.

For each setting, we choose the combination of hyperparameters that gives the best final validation loss to be used in testing. Also, whenever the best-performing hyperparameters lie in the boundary of the searching grid, we always extend the grid to make the final best-performing hyperparameters fall into the interior of the grid.

\begin{table*}[!ht]
\caption[Final results achieved by exponential and cosine step sizes vs.~other optimizers on various deep learning tasks]{Average final training loss and test accuracy achieved by each method when optimizing respective models on each dataset. The $\pm$ shows $95\%$ confidence intervals of the mean loss/accuracy value over 5 runs starting from different random seeds.}
\label{tab:exp_cos_results}
\begin{subtable}[t]{\textwidth}
\caption{FashionMNIST -- $3$ layer CNN}
\label{tab:exp_cos_results_FMnist}
\centering
{\small{\begin{tabular}{|c|c|c|}
\hline
Methods & Training loss & Test accuracy \\
\hline
SGD Constant Step Size & $0.0068 \pm 0.0023$ & $0.9297 \pm 0.0033$ \\
\hline
$O(1/t)$ Step Size & $0.0013 \pm 0.0004$ & $\mathbf{0.9297 \pm 0.0021}$ \\
\hline
$O(1/\sqrt{t})$ Step Size & $0.0016 \pm 0.0005$ & $0.9262 \pm 0.0014$\\
\hline
Adam & $0.0203 \pm 0.0021$ & $0.9168 \pm 0.0023$\\
\hline
SGD+Armijo & $\mathbf{0.0003 \pm 0.0000}$ & $0.9284 \pm 0.0016$\\
\hline
ReduceLROnPlateau & $0.0031 \pm 0.0009$ & $0.9294 \pm 0.0015$\\
\hline
Stagewise - 1 Milestone & $0.0007 \pm 0.0002$ & $0.9294 \pm 0.0018$\\
\hline
Stagewise - 2 Milestones & $0.0023 \pm 0.0005$ & $0.9283 \pm 0.0024$\\
\hline
Exponential Step Size & $0.0006 \pm 0.0001$ & $0.9290 \pm 0.0009$\\
\hline
Cosine Step Size & $0.0004 \pm 0.0000$ & $0.9285 \pm 0.0019$\\
\hline
\end{tabular}}}
\end{subtable}

\vspace{1em}
\begin{subtable}[t]{\textwidth}
\caption{CIFAR10 -- $20$ layer Resnet}
\label{tab:exp_cos_results_c10}
\centering
{\small{\begin{tabular}{|c|c|c|}
\hline
Methods & Training loss & Test accuracy \\
\hline
SGD Constant Step Size & $0.2226 \pm 0.0169$ & $0.8674 \pm 0.0048$\\
\hline
$O(1/t)$ Step Size & $0.0331 \pm 0.0028$ & $0.8894 \pm 0.0040$\\
\hline
$O(1/\sqrt{t})$ Step Size & $0.0672 \pm 0.0086$ & $0.8814 \pm 0.0034$\\
\hline
Adam & $0.1161 \pm 0.0111$ & $0.8823 \pm 0.0041$\\
\hline
SGD+Armijo & $0.0185 \pm 0.0043$ & $0.8973 \pm 0.0071$\\
\hline
ReduceLROnPlateau & $0.0867 \pm 0.0230$ & $0.9033 \pm 0.0049$\\
\hline
Stagewise - 1 Milestone & $0.0269 \pm 0.0017$ & $0.9062 \pm 0.0020$\\
\hline
Stagewise - 2 Milestones & $0.0322 \pm 0.0008$ & $0.9174 \pm 0.0020$\\
\hline
Exponential Step Size & $\mathbf{0.0098 \pm 0.0010}$ & $0.9188 \pm 0.0033$\\
\hline
Cosine Step Size & $0.0106 \pm 0.0008$ & $\mathbf{0.9199 \pm 0.0029}$\\
\hline
\end{tabular}}}
\end{subtable}

\vspace{1em}
\begin{subtable}[t]{\textwidth}
\caption{CIFAR100 -- $100$ layer DenseNet-BC}
\label{tab:exp_cos_results_c100}
\centering
{\small{\begin{tabular}{|c|c|c|}
\hline
Methods & Training loss & Test accuracy \\
\hline
SGD Constant Step Size & $1.1467 \pm 0.1437$ & $0.5896 \pm 0.0404$ \\
\hline
$O(1/t)$ Step Size & $0.3489 \pm 0.0263$ & $0.6874 \pm 0.0076$ \\
\hline
$O(1/\sqrt{t})$ Step Size & $0.8147 \pm 0.0717$ & $0.6336 \pm 0.0169$ \\
\hline
Adam & $0.6513 \pm 0.0154$ & $0.6478 \pm 0.0054$ \\
\hline
SGD+Armijo & $0.1063 \pm 0.0153$ & $0.6768 \pm 0.0044$ \\
\hline
ReduceLROnPlateau & $0.0927 \pm 0.0085$ & $0.7435 \pm 0.0076$ \\
\hline
Stagewise - 1 Milestone & $0.2673 \pm 0.0084$ & $0.7459 \pm 0.0030$ \\
\hline
Stagewise - 2 Milestones & $0.1783 \pm 0.0030$ & $0.7487 \pm 0.0025$ \\
\hline
Exponential Step Size & $\mathbf{0.0714 \pm 0.0041}$ & $0.7398 \pm 0.0037$ \\
\hline
Cosine Step Size & $0.0949 \pm 0.0053$ & $\mathbf{0.7497 \pm 0.0044}$ \\
\hline
\end{tabular}}}
\end{subtable}
\end{table*}

\textbf{Results and discussions}
The exact loss and accuracy values are reported in Table~\ref{tab:exp_cos_results}. To avoid overcrowding the figures, we compare the algorithms in groups of baselines. The comparison of performance between step size schemes listed in~\eqref{eq:decays}, Adam, and SGD+Armijo are shown in Figure~\ref{fig:stepsize}. As can be seen, the \emph{only} two methods that perform well on \emph{all} 3 datasets are cosine and exponential step sizes. In particular, the cosine step size performs the best across datasets both in training loss and test accuracy, with the exponential step size following closely.

On the other hand, as we noted above, stagewise step decay is a very popular decay schedule in deep learning. Thus, our second group of baselines in Figure~\ref{fig:stagewise} is composed of the stagewise step decay, ReduceLROnPlateau, and SGD with constant step size. The results show that exponential and cosine step sizes can still match or exceed the best of them with a fraction of their needed time to find the best hyperparameters. Indeed, we need 4 hyperparameters for two milestones, 3 for one milestone, and at least 4 for ReduceLROnPlateau. In contrast, the cosine step size requires only 1 hyperparameter and the exponential one needs 2.

Note that we do not pretend that our benchmark of the stagewise step decay is exhaustive. Indeed, there are many unexplored (potentially infinite!) possible hyperparameter settings. For example, it is reasonable to expect that adding even more milestones at the appropriate times could lead to better performance.
However, this would result in a linear growth of the number of hyperparameters leading to an exponential increase in the number of possible location combinations.
\begin{wrapfigure}{c}{0.6\linewidth}
\begin{center}
\includegraphics[width=\linewidth]{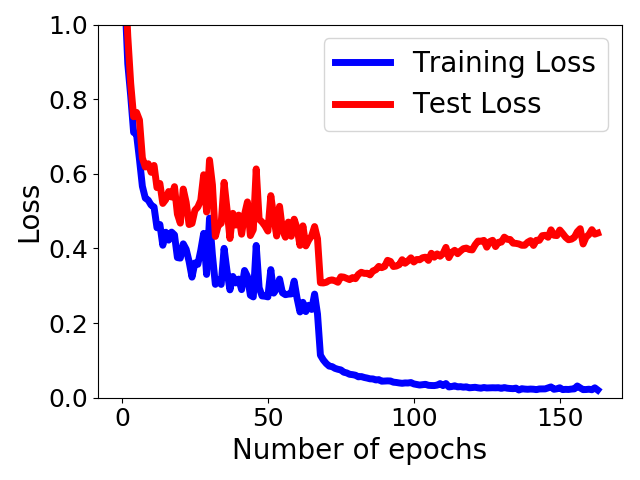}
\end{center}
\caption[Decreasing the step size too soon leads to overfitting]{Plot showing that decreasing the step size too soon would lead to overfitting (ResNet20 on CIFAR10).}
\label{fig:overfitting}
\end{wrapfigure}
This in turn causes the rapid growth of tuning time in selecting a good set of milestones in practice. Worse still, even the intuition that one should decrease the step size once the test loss curve stops decreasing is not always correct. Indeed, we observed in experiments (see Figure~\ref{fig:overfitting}) that doing this will, after the initial drop of the curve in response to the step size decrease, make the test loss curve gradually go up again.

\subsection{Summary}
We have analyzed theoretically and empirically the exponential and the cosine step sizes, two successful step size decay schedules for the stochastic optimization of non-convex functions. We have shown that, in the case of functions satisfying the PL condition, they are both adaptive to the level of noise. Furthermore, we have validated our theoretical findings on real-world tasks, showing that these two step sizes consistently match or outperform other strategies, while at the same time requiring only 1 (cosine) / 2 (exponential) hyperparameters to tune.

\section{Conclusion}
In this chapter, we introduced the notion of adaptation to noise as automatically guaranteeing (near) optimal convergence rates for different levels of noise without knowing it nor needing to tune any parameter. We then presented our works on designing/analyzing algorithms that can adapt to the noise in both the general smooth non-convex setting and the setting with the additional PL condition.
\cleardoublepage

\chapter{Adaptation to Gradient Scales}
\label{chapter:adapt_scale}
\thispagestyle{myheadings}

[The results in this chapter appeared in~\citet{ZhuangLCO21}.]

This chapter studies the phenomenon that the scales of gradient magnitudes in each layer can scatter across a very wide range in training deep neural networks. We will first discuss when this variation becomes too severe and why it will be a problem in Section~\ref{sec:motivation}. In Section~\ref{sec:adam_fail}, we will report our observation that the popular Adam optimizer performs worse than its variant AdamW. Then, in Section~\ref{sec:scale_free}, we will show evidence suggesting understanding AdamW's advantage through its scale-freeness property which ensures that its updates are invariant to component-wise rescaling of the gradients thus adapting to gradient scales. Section~\ref{sec:adamw_proximal} presents the surprising connection between AdamW and proximal updates, providing a potential explanation of where its scale-freeness property comes from. Finally, we will show another merit of scale-free algorithms in Section~\ref{sec:scale_free_cond_num}: they can ``adapt'' to the condition number in certain scenarios. 

\section{When Varying Gradient Scales Become a Problem}
\label{sec:motivation}
\begin{figure}
     \centering
     \begin{subfigure}[b]{0.48\textwidth}
         \centering
         \includegraphics[width=\textwidth]{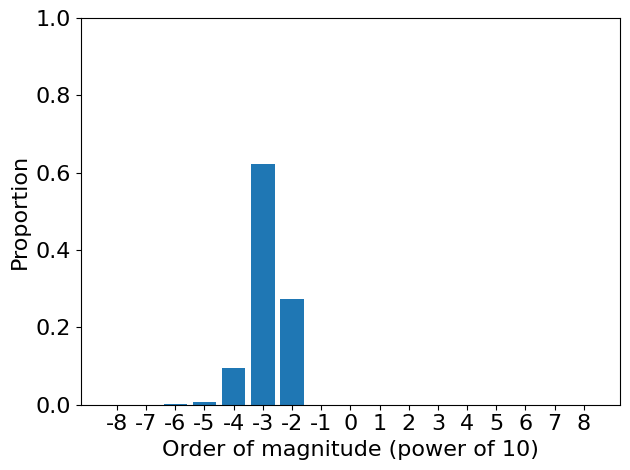}
         \caption{First Convolution Layer}
         \label{fig:grad_scale_layer_first}
     \end{subfigure}
     \hfill
     \begin{subfigure}[b]{0.48\textwidth}
         \centering
         \includegraphics[width=\textwidth]{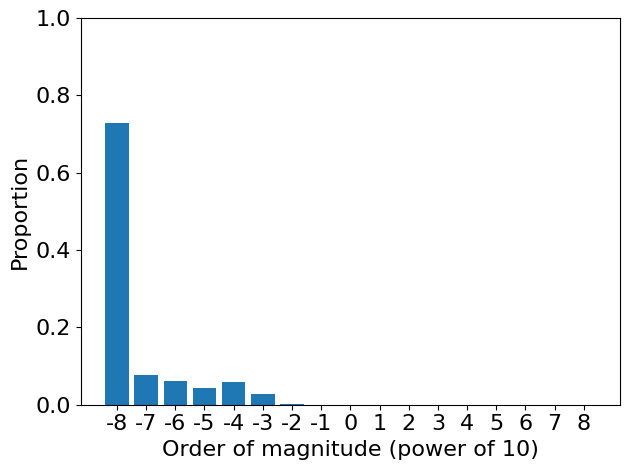}
         \caption{Last Convolution Layer}
         \label{fig:grad_scale_layer_last}
     \end{subfigure}
     \hfill
    \caption[Gradient scales vary significantly across layers on training deep resnets with Batch Normalization disabled]{The histograms of the magnitudes of gradients of the first convolution layer vs.~the last convolution layer in a time step during training a 110-layer Resnet with batch normalization disabled.}
    \label{fig:grad_scale_resnet_nobn}
\end{figure}

Neural networks are known to suffer from vanishing/exploding gradients~\citep{BengioSF94, GlorotB10, PascanuMB13}. This leads to the scales of gradient magnitudes being very different across layers, especially between the first and the last layers. This problem is particularly severe when the model is not equipped with normalization mechanisms like Batch Normalization (BN)~\citep{IoffeS15}. As an example, Figure~\ref{fig:grad_scale_resnet_nobn} shows the huge difference in gradient magnitude scales between the first and the last convolution layers in training a 110-layer Resnet~\citep{HeZRS16} with BN disabled.

BN works by normalizing the input to each layer across the mini-batch to make each coordinate have zero-mean and unit-variance. While BN can greatly help reduce the variation in gradient scales between different layers, it comes with a price. For example, it introduces added memory overhead~\citep{BuloPK18} and training time~\citep{GitmanG17} as well as a discrepancy between training and inferencing~\citep{SinghS19}. BN has also been found to be not suitable for many cases including distributed computing with a small minibatch per GPU~\citep{WuH18, GoyalDGNWKTJH17}, sequential modeling tasks~\citep{BaKH16}, and contrastive learning algorithms~\citep{ChenKNH20}. Actually, there is already active research in the setting of removing BN~\citep{DeS20, ZhangDM19}. Also, there are already SOTA architectures that do not use BN including the BERT model~\citep{DevlinCLT19} and the Vision Transformer~\citep{DosovitskiyB0WZ21}.

In the scenario when the gradient scales vary significantly from layer to layer, the widely used SGD algorithm is not able to handle such a situation nicely. The reason is that SGD adopts a single step size value for all layers, and this value could be too large for some layers with large gradients leading to divergence while at the same time being too small for other layers with small gradients resulting in slow progress.

A natural idea is to use individual ``step sizes'' for each layer or even each coordinate and to make these layer-wise/coordinate-wise step sizes to take into account the gradient scales of corresponding layers or coordinates. A popular optimizer operating in such fashion is Adam~\citep{KingmaB15}, a method that operates coordinate-wisely and utilizes first- and second-order moments of gradients to compute the step size. It has been empirically shown to achieve remarkable results across a variety of problems even by simply adopting the default hyperparameter setting.
However, we observed that it does not entirely solve the problem; in the next section, we show that it performs worse than AdamW~\citep{LoshchilovH18}, a variant of Adam.

\section{When Adam Performs Worse than AdamW}
\label{sec:adam_fail}
Since its debut, Adam has gained tremendous popularity due to less hyperparameter tuning and great performance. In practice, to improve the generalization ability, Adam is typically combined with a $\ell_2$ regularization which adds the squared $\ell_2$ norm of the model weights on top of the loss function (which we will call \emph{Adam-$\ell_2$} hereafter). This technique is usually referred to as weight decay because when using SGD, the $\ell_2$ regularization works by first shrinking the model weights by a constant factor in addition to moving along the negative gradient direction in each step. By biasing the optimization towards solutions with small norms, weight decay has long been a standard technique to improve the generalization ability in machine learning~\citep{KroghH91, BosC96} and is still widely employed in training modern deep neural networks~\citep{DevlinCLT19, TanL19}.

\begin{algorithm}[tb]
\caption{\colorbox{red}{Adam with $\ell_2$ regularization (Adam-$\ell_2$)} and \colorbox{green}{Adam with\phantom{p}}\\\colorbox{green}{decoupled weight decay (AdamW)} \emph{(All operations on vectors are element-wise.)}}
\label{algo:adamw}
\begin{algorithmic}[1]
\STATE{\textbf{Given} $\alpha$, $\beta_1$, $\beta_2$, $\epsilon$, $\lambda \in \R$, $\{\eta_t\}_{t\ge0}$. }
\STATE{\textbf{Initialize:} $\bx_0\in\R^d$, first moment vector $\bm_0 = 0$, second moment vector $\bv_0 = 0$}
\FOR{$t = 1,2,\ldots,T$}
\STATE{Compute a  stochastic evaluation of the true gradient $\nabla f(\bx_{t-1})$ and denote it as $\nabla f_t(\bx_{t-1})$}
\STATE{$\bg_t \leftarrow \nabla f_t(\bx_{t-1})$\colorbox{red}{$+\lambda\bx_{t-1}$}}
\STATE{$\bm_t \leftarrow \beta_1 \bm_{t-1} + (1-\beta_1) \bg_t$, \quad$\bv_t \leftarrow \beta_2 \bv_{t-1} + (1-\beta_2) \bg_t^2$}
\STATE{$\hat{\bm}_t \leftarrow \bm_t/(1-\beta_1^t)$, \quad$\hat{\bv}_t \leftarrow \bv_t/(1-\beta_2^t)$}
\STATE{$\bx_{t} \leftarrow \bx_{t-1}$ \colorbox{green}{$-\eta_t\lambda\bx_{t-1}$} $- \eta_t\alpha\hat{\bm}_t/(\sqrt{\hat{\bv}_t} + \epsilon)$}
\ENDFOR
\end{algorithmic}
\end{algorithm}

However, as pointed out in~\citet{LoshchilovH18}, for Adam, there is no fixed regularization that achieves the same effect $\ell_2$ regulation has on SGD. To address this, they provide a method called AdamW that decouples the gradient of the $\ell_2$ regularization from the update of Adam and directly decays the weights. These two algorithms are presented in Algorithm~\ref{algo:adamw}. Although AdamW is very popular~\citep{KuenPLZT19, LifchitzAPB19, CarionMSUKZ20} and it frequently outperforms Adam-$\ell_2$, it is currently unclear why it works so well. Worse still, recently, \citet{BjorckWG20} applied AdamW in Natural Language Processing and Reinforcement Learning problems and found no significant improvement over sufficiently tuned Adam-$\ell_2$.

Consequently, we conducted deep learning experiments to identify a scenario when AdamW exhibits concrete advantages over Adam-$\ell_2$ for isolating AdamW's unique merits. The search leads to the setting of training very deep neural networks with Batch Normalization disabled on image classification tasks as we reported below.

\textbf{Data Normalization and Augmentation:}
We consider the image classification task on CIFAR-10/100 datasets. Images are normalized per channel using the means and standard deviations computed from all training images. We adopt the data augmentation technique following~\citet{LeeXGZT15} (for training only): 4 pixels are padded on each side of an image and a $32\times32$ crop is randomly sampled from the padded image or its horizontal flip.

\textbf{Models:}
For both the CIFAR-10 and CIFAR-100 datasets, we employ the Residual Network model~\citep{HeZRS16} of 20/44/56/110/218 layers; and for CIFAR-100, we additionally utilize the DenseNet-BC model~\citep{HuangLVW17} with 100 layers and a growth-rate of 12. The loss is the cross-entropy loss.

\textbf{Hyperparameter tuning:} For both Adam-$\ell_2$ and AdamW, we set $\beta_1=0.9$, $\beta_2=0.999$, $\epsilon=10^{-8}$ as suggested in the Adam paper~\citep{KingmaB15}. To set the initial step size $\alpha$ and the weight decay parameter $\lambda$, we grid search over $\{0.00005, 0.0001, 0.0005, 0.001, 0.005\}$ for $\alpha$ and $\{0, 0.00001, 0.00005, 0.0001, 0.0005,$ $0.001\}$ for $\lambda$. Whenever the best performing hyperparameters lie in the boundary of the searching grid, we always extend the grid to ensure that the final best-performing hyperparameters fall into the interior of the grid.

\textbf{Training:} For each experiment configuration (e.g., 110-layer Resnet without BN on CIFAR-10), we randomly select an initialization of the model to use as a fixed starting point for all optimizers and hyperparameter settings. We use a mini-batch of 128, and train 300 epochs unless otherwise specified.

\begin{figure}[t]
    \centering
    \begin{subfigure}[b]{0.48\textwidth}
       \includegraphics[width=\textwidth]{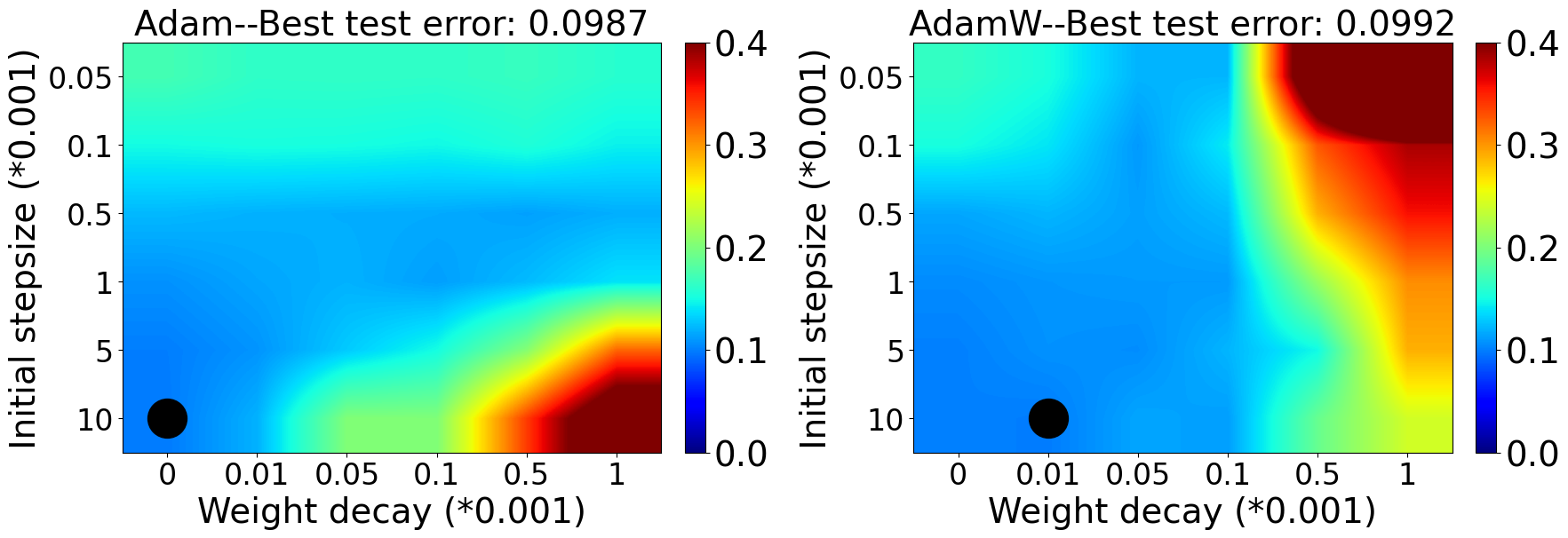}
       \caption{20 Layer Resnet on CIFAR10}
       \label{fig:resnet20} 
    \end{subfigure}
    \hspace{0.02\textwidth}
    \begin{subfigure}[b]{0.48\textwidth}
       \includegraphics[width=\textwidth]{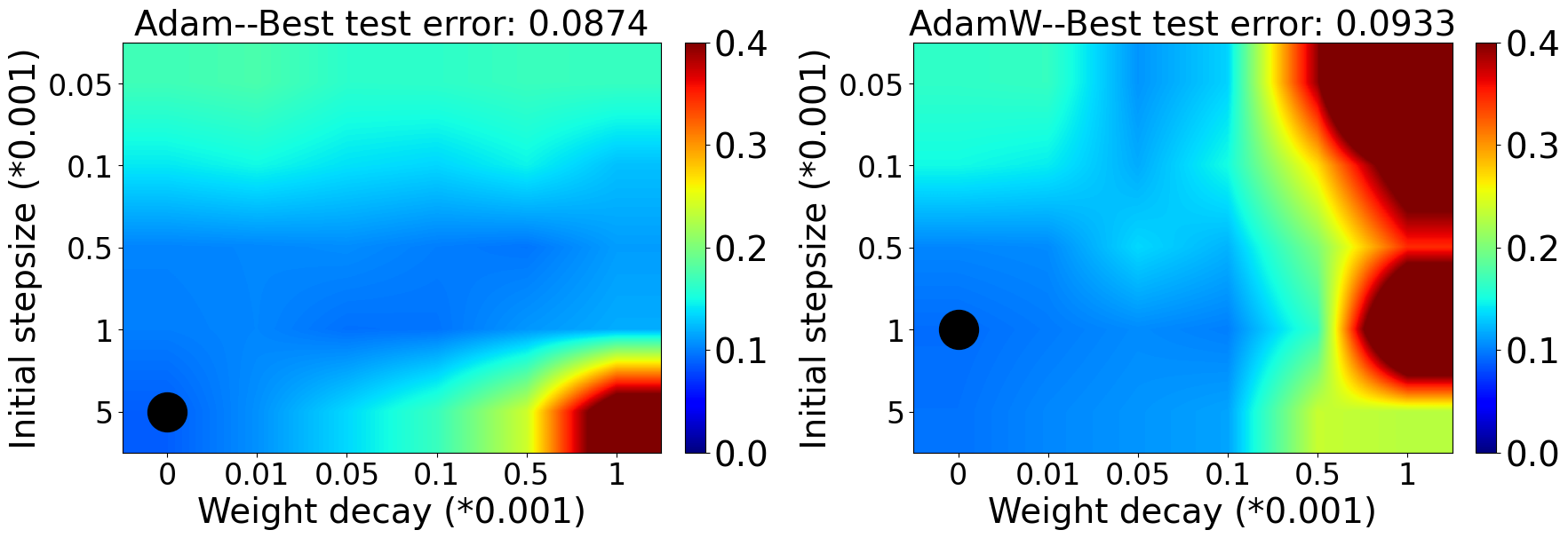}
        \caption{44 Layer Resnet on CIFAR10}
       \label{fig:resnet44} 
    \end{subfigure}
    
    \begin{subfigure}[b]{0.48\textwidth}
       \includegraphics[width=\textwidth]{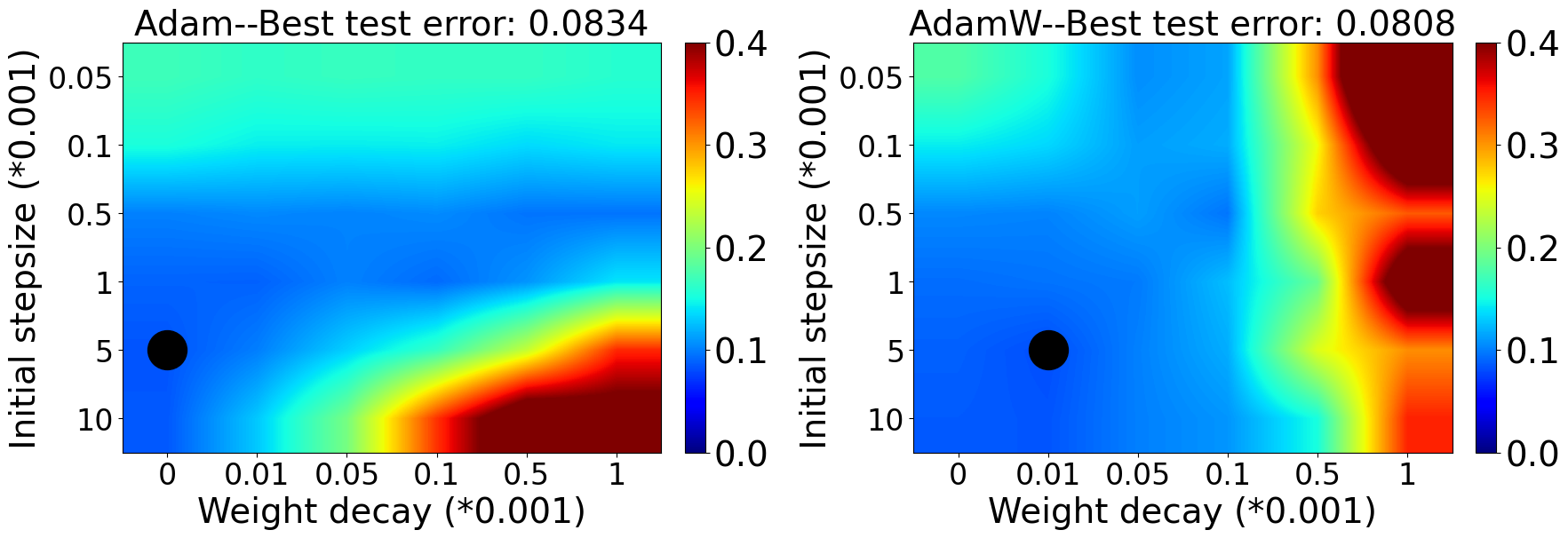}
       \caption{56 Layer Resnet on CIFAR10}
       \label{fig:resnet56} 
    \end{subfigure}
    \hspace{0.02\textwidth}
    \begin{subfigure}[b]{0.48\textwidth}
       \includegraphics[width=\textwidth]{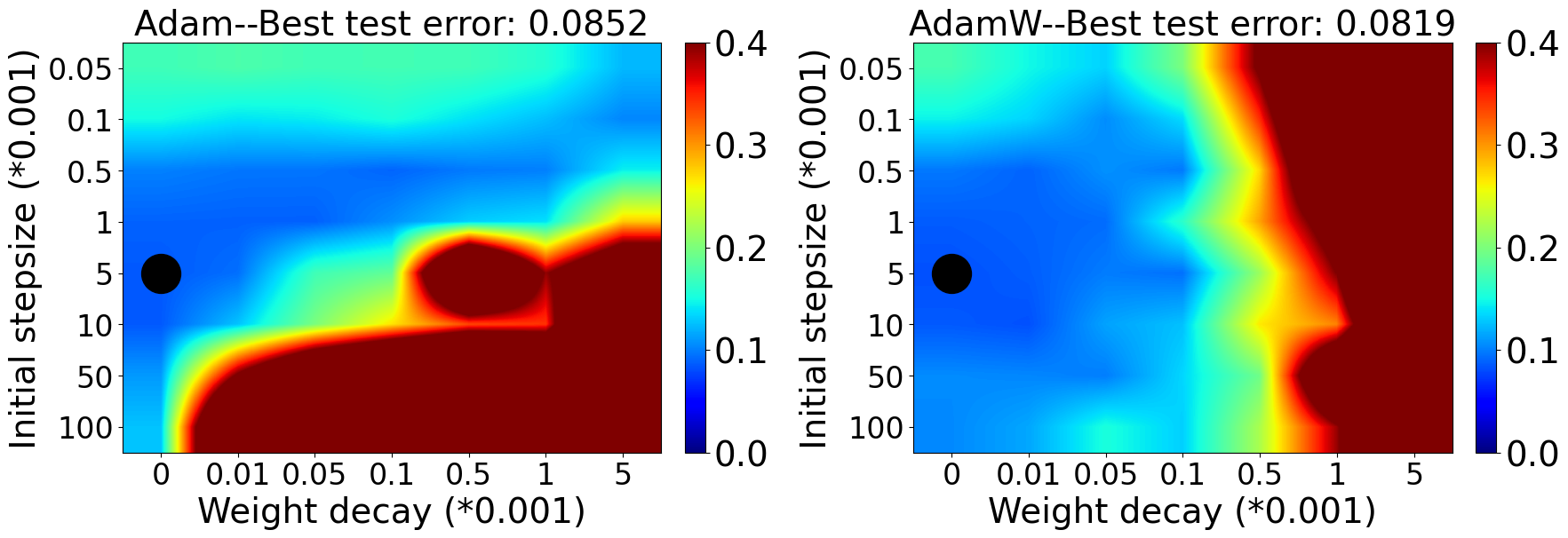}
        \caption{110 Layer Resnet on CIFAR10}
       \label{fig:resnet110} 
    \end{subfigure}
    
    \begin{subfigure}[b]{0.48\textwidth}
       \includegraphics[width=\textwidth]{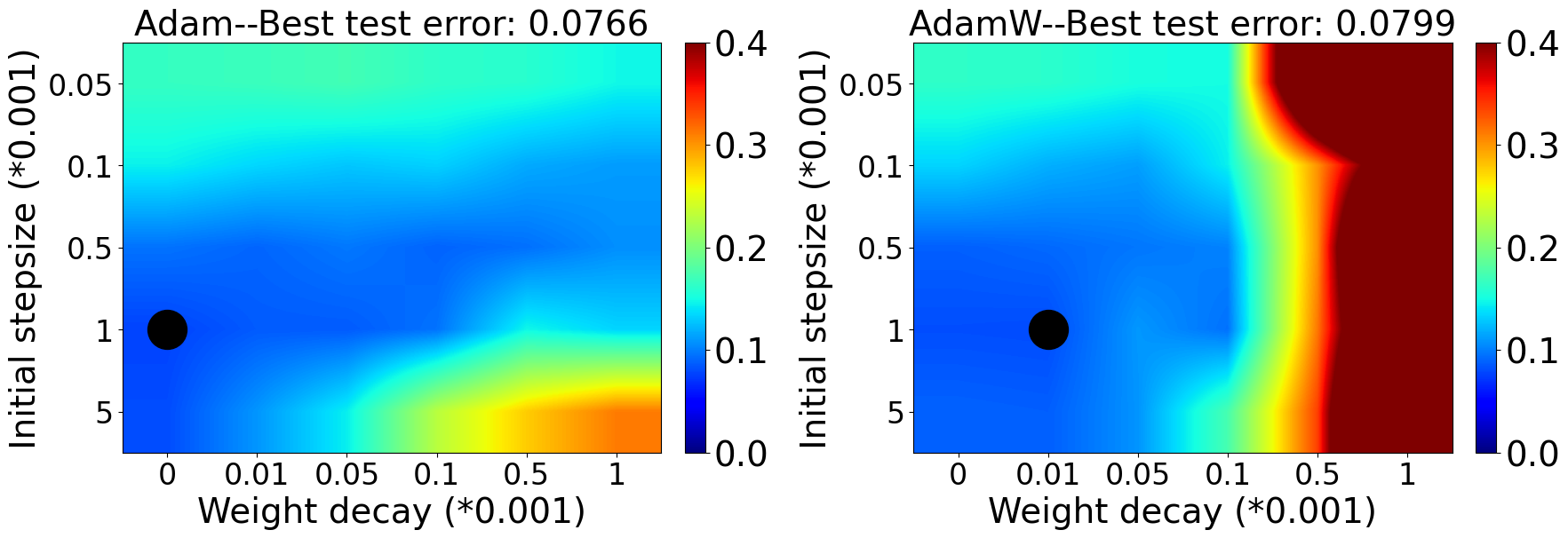}
       \caption{218 Layer Resnet on CIFAR10}
       \label{fig:resnet218} 
    \end{subfigure}
    \hspace{0.02\textwidth}
    \begin{subfigure}[b]{0.48\textwidth}
       \includegraphics[width=\textwidth]{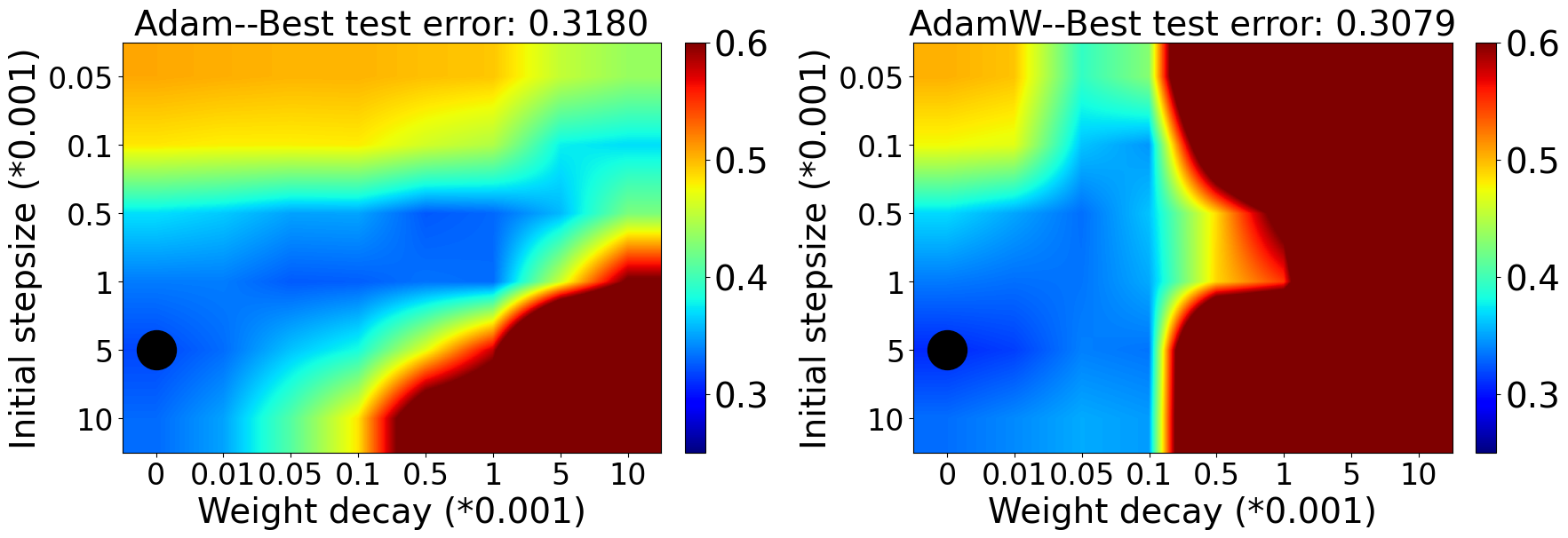}
       \caption{100 layer DenseNet-BC on CIFAR100}
       \label{fig:densenet} 
    \end{subfigure}
    
    \caption[On using AdamW vs.~Adam-$\ell_2$ to train a Resnet/DenseNet with Batch Normalization on CIFAR10/100]{The final Top-1 test error on using AdamW vs.~Adam-$\ell_2$ on training a Resnet/DenseNet with Batch Normalization on CIFAR10/100 (\emph{the black circle denotes the best setting}). Note how close are the best performing hyperparameter combinations and the smallest testing error each optimizer obtains between Adam-$\ell_2$ and AdamW for each setting, suggesting they perform similarly when BN is enabled.}
    \label{fig:bn}
\end{figure}
    
\begin{figure}[p]
    \centering
    \begin{subfigure}[b]{\textwidth}
       \includegraphics[width=0.30\textwidth]{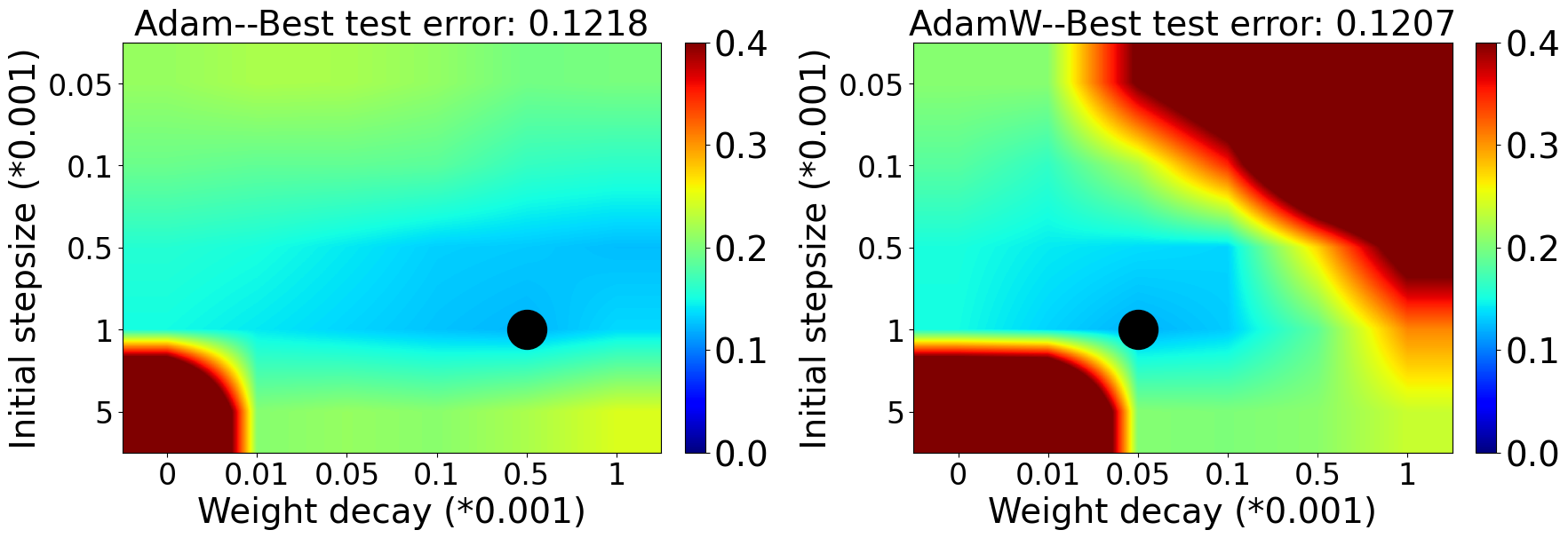}
       \hspace{0.02\textwidth}
       \includegraphics[width=0.30\textwidth]{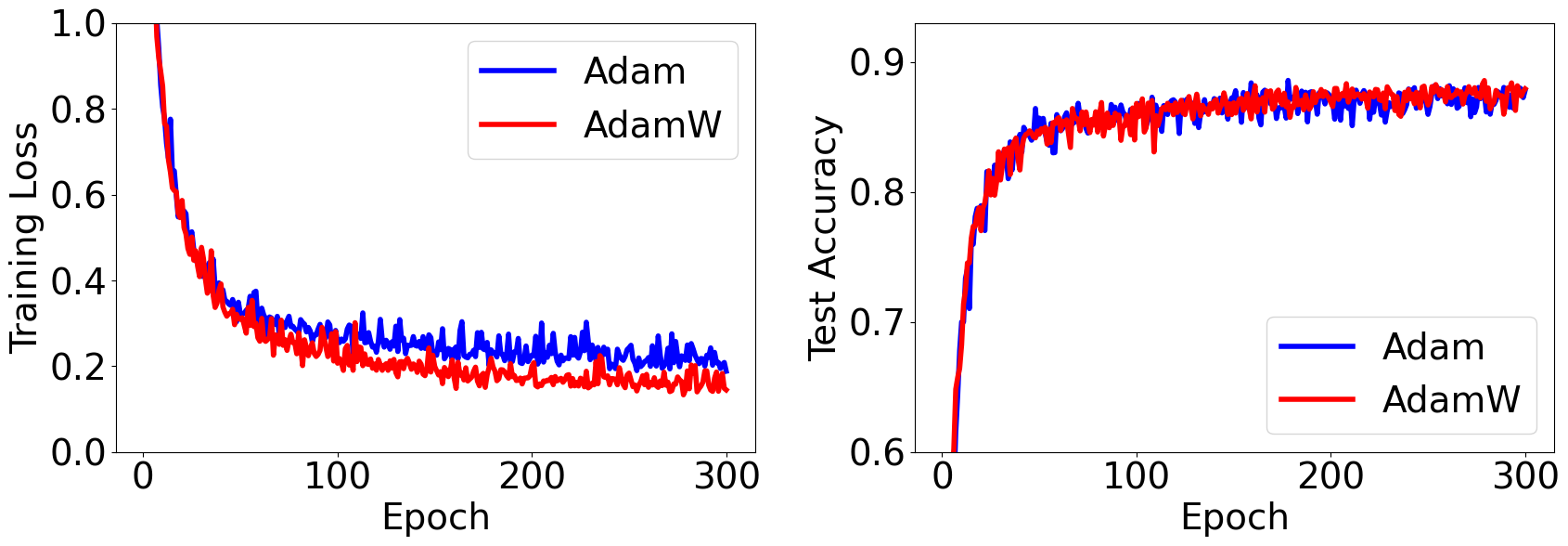}
       \hspace{0.02\textwidth}
       \includegraphics[width=0.15\linewidth]{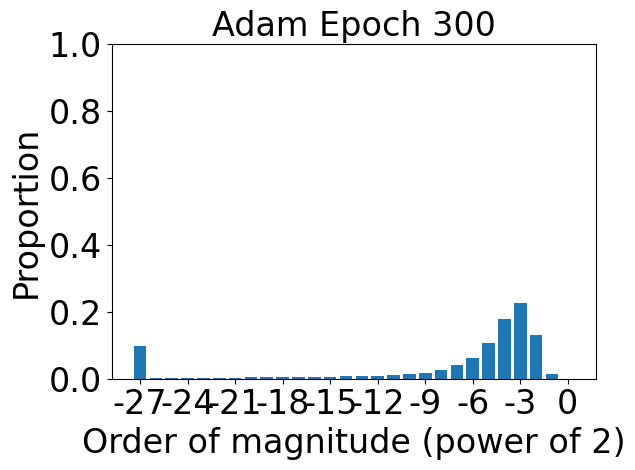}
            \hspace{0pt}
            \includegraphics[width=0.15\linewidth]{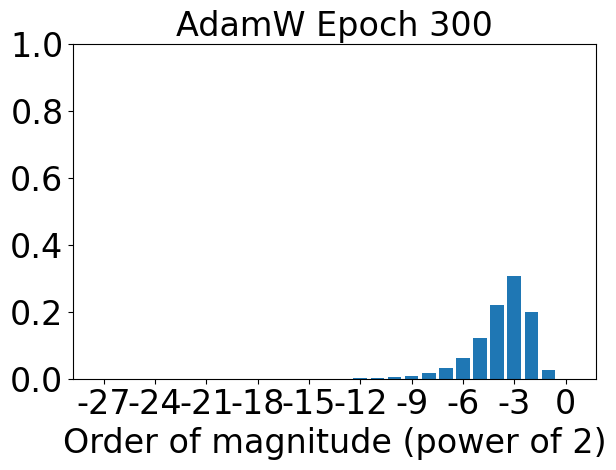}
       \caption{20 Layer Resnet}
       \label{fig:resnet20_nobn} 
    \end{subfigure}
    
    \begin{subfigure}[b]{\textwidth}
       \includegraphics[width=0.30\textwidth]{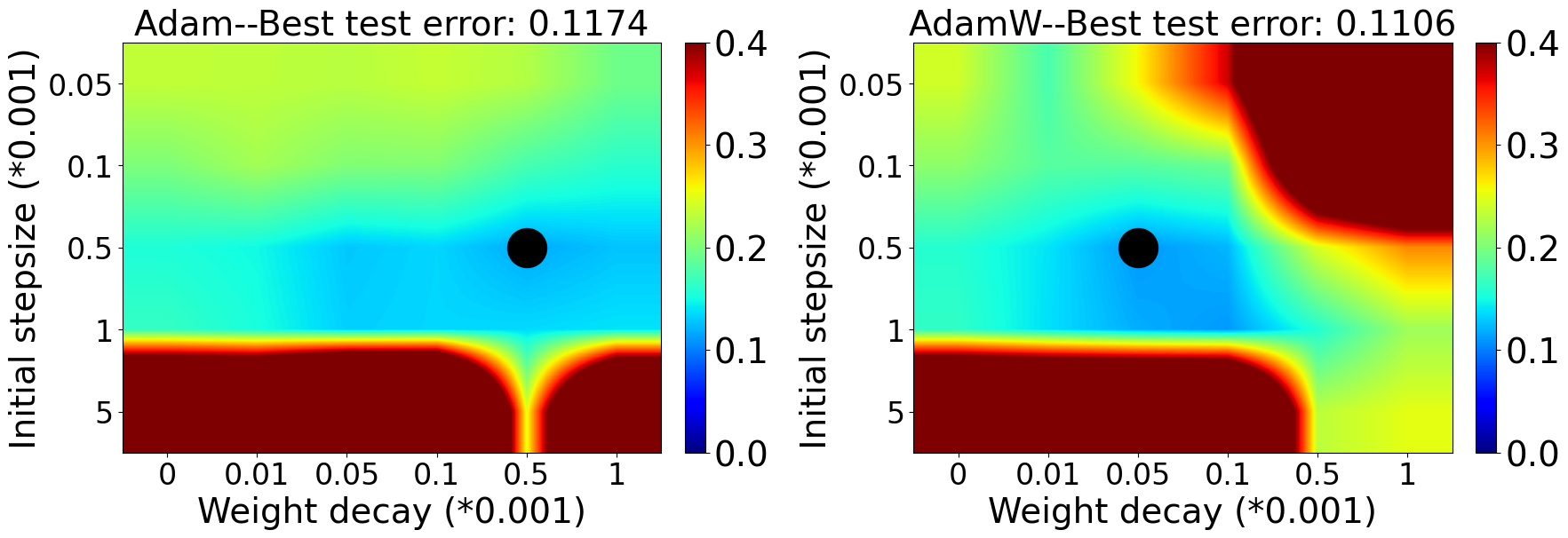}
       \hspace{0.02\textwidth}
       \includegraphics[width=0.30\textwidth]{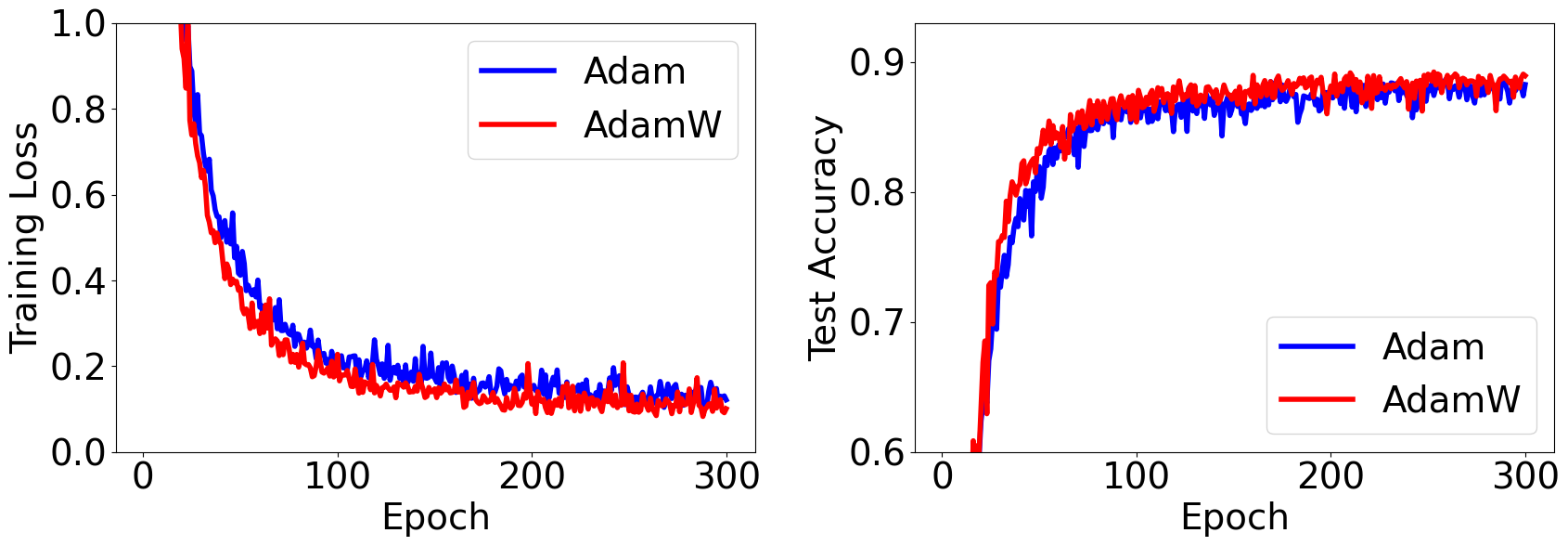}
       \hspace{0.02\textwidth}
       \includegraphics[width=0.15\linewidth]{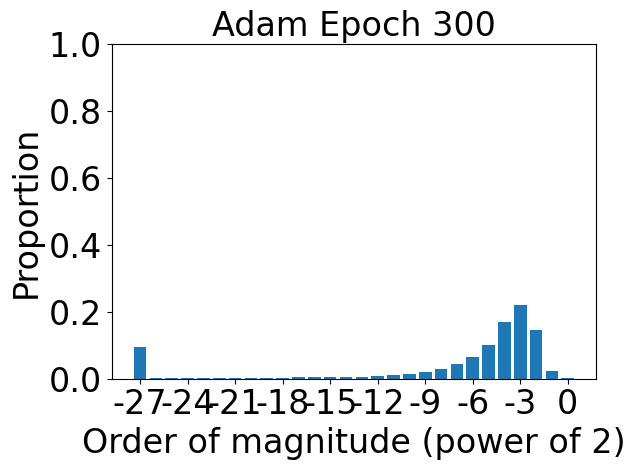}
            \hspace{0pt}
            \includegraphics[width=0.15\linewidth]{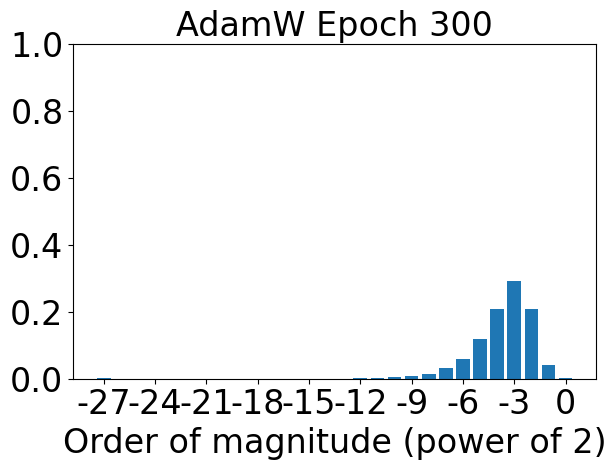}
       \caption{44 Layer Resnet}
       \label{fig:resnet44_nobn} 
    \end{subfigure}
    
    \begin{subfigure}[b]{\textwidth}
       \includegraphics[width=0.30\textwidth]{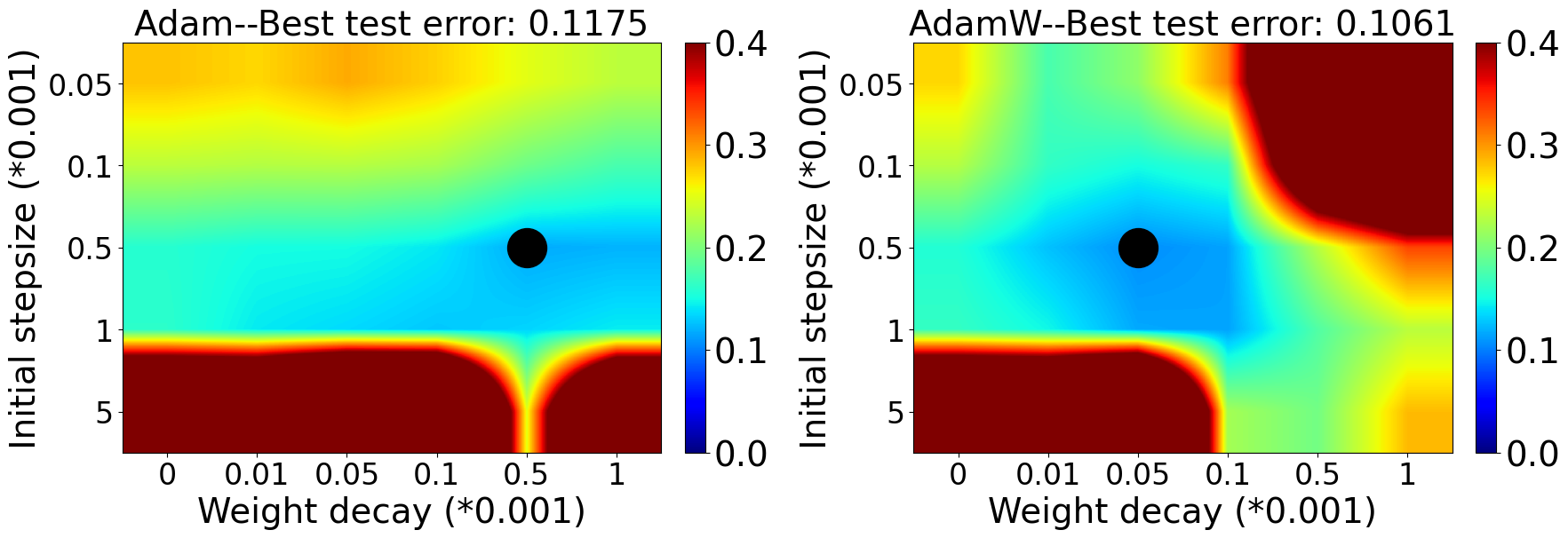}
       \hspace{0.02\textwidth}
       \includegraphics[width=0.30\textwidth]{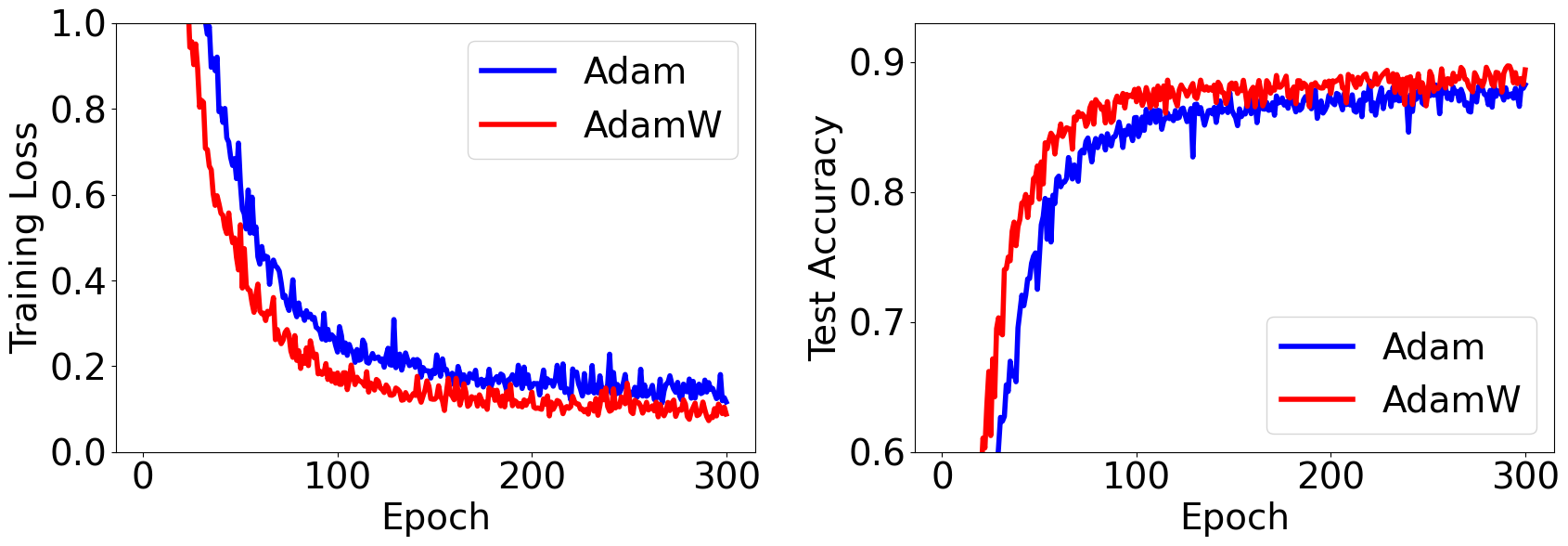}
       \hspace{0.02\textwidth}
       \includegraphics[width=0.15\linewidth]{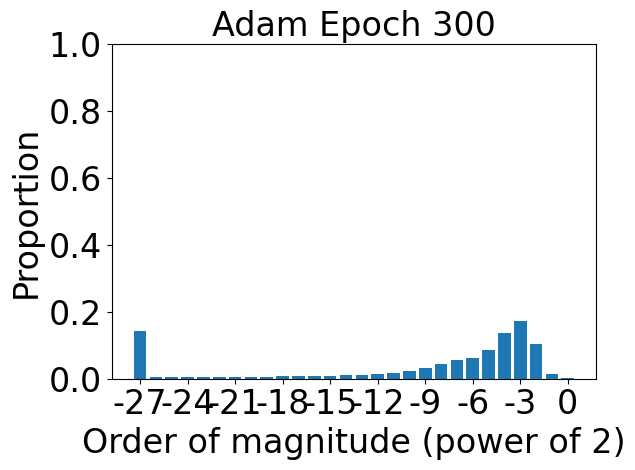}
            \hspace{0pt}
            \includegraphics[width=0.15\linewidth]{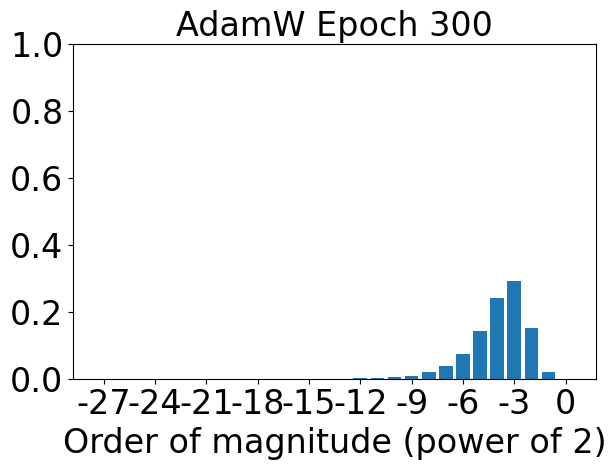}
       \caption{56 Layer Resnet}
       \label{fig:resnet56_nobn} 
    \end{subfigure}
    
    \begin{subfigure}[b]{\textwidth}
       \includegraphics[width=0.30\textwidth]{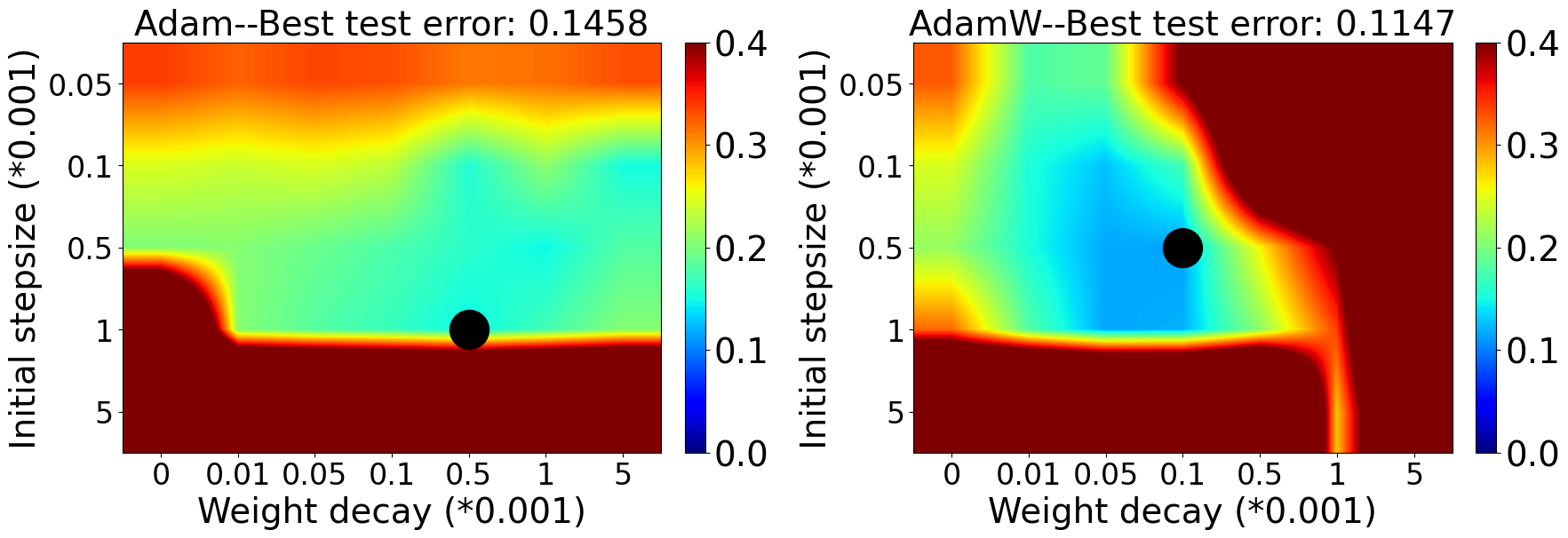}
       \hspace{0.02\textwidth}
       \includegraphics[width=0.30\textwidth]{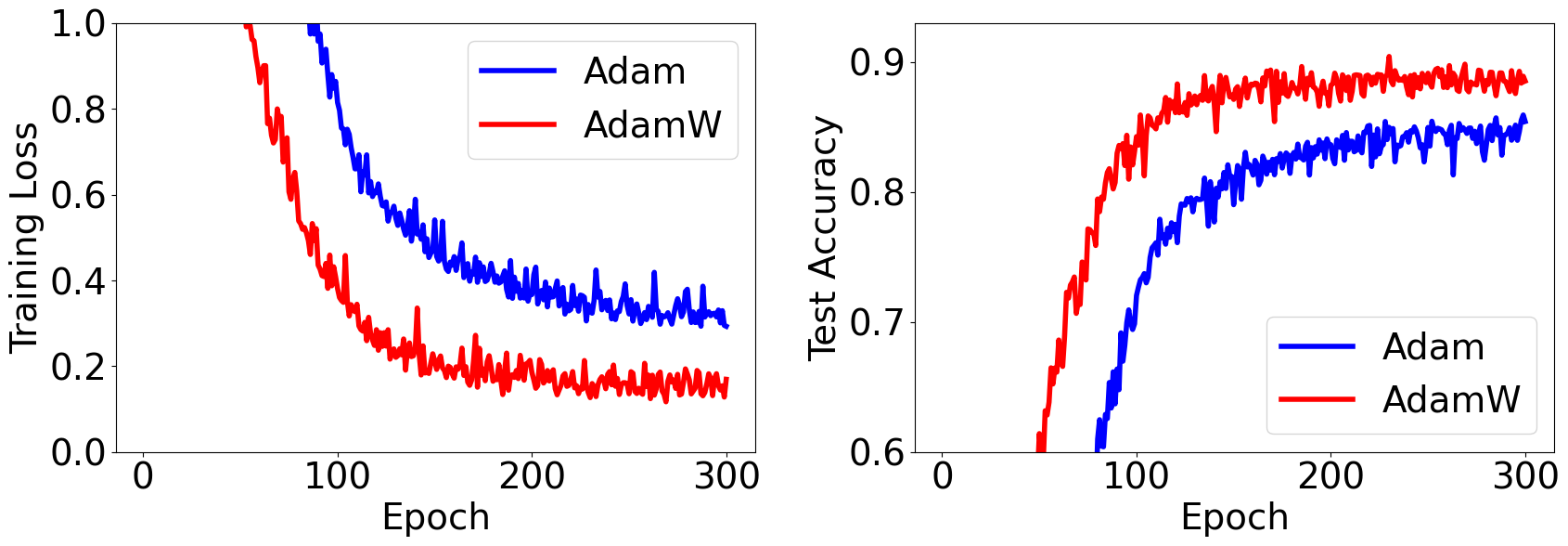}
       \hspace{0.02\textwidth}
       \includegraphics[width=0.15\linewidth]{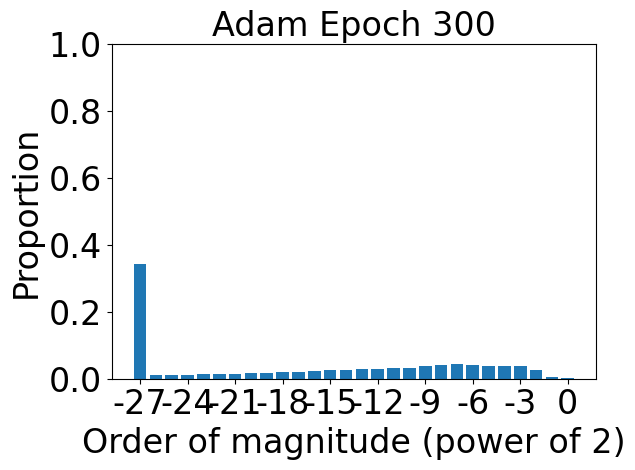}
            \hspace{0pt}
            \includegraphics[width=0.15\linewidth]{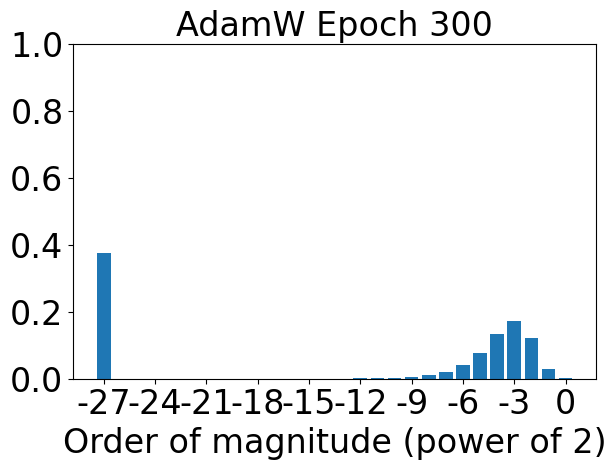}
       \caption{110 Layer Resnet}
       \label{fig:resnet110_nobn} 
    \end{subfigure}
    
    \begin{subfigure}[b]{\textwidth}
       \includegraphics[width=0.30\textwidth]{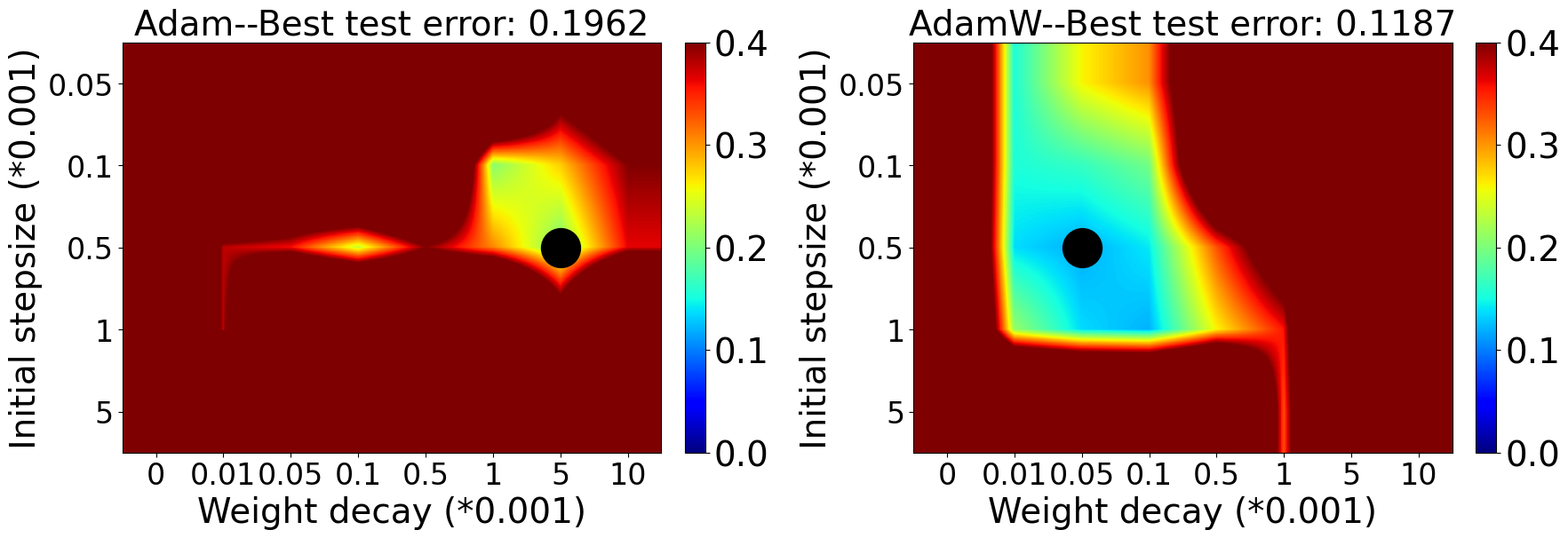}
       \hspace{0.02\textwidth}
       \includegraphics[width=0.30\textwidth]{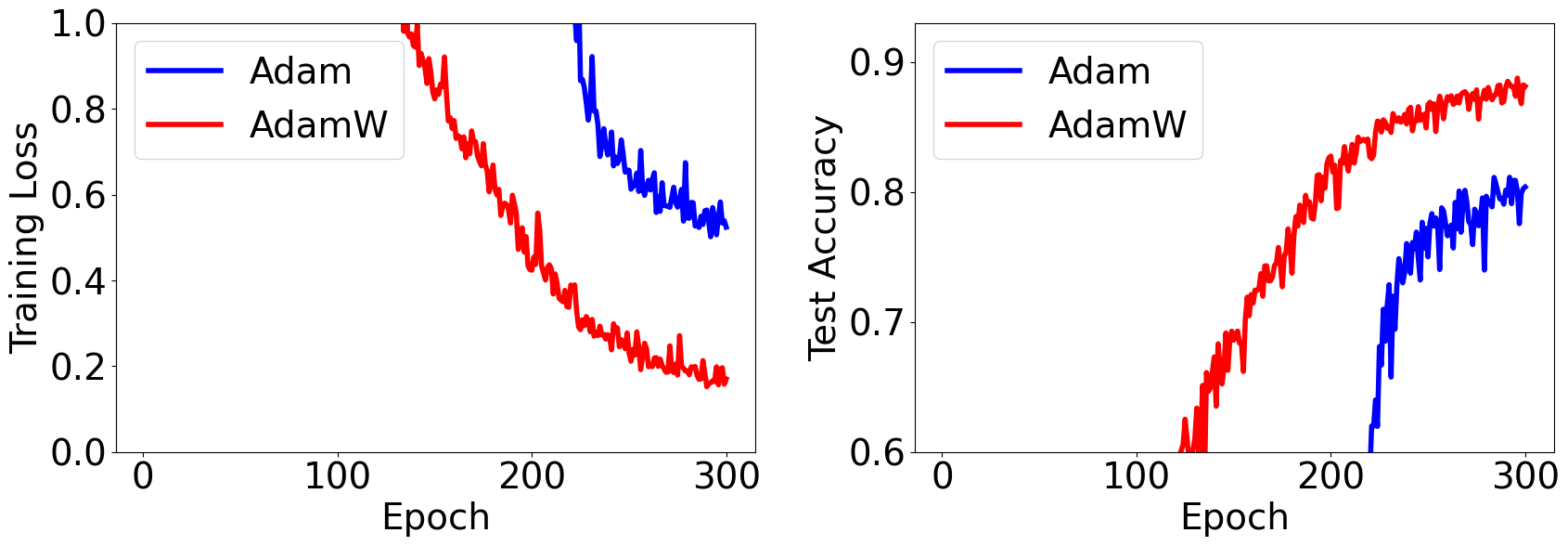}
       \hspace{0.02\textwidth}
       \includegraphics[width=0.15\linewidth]{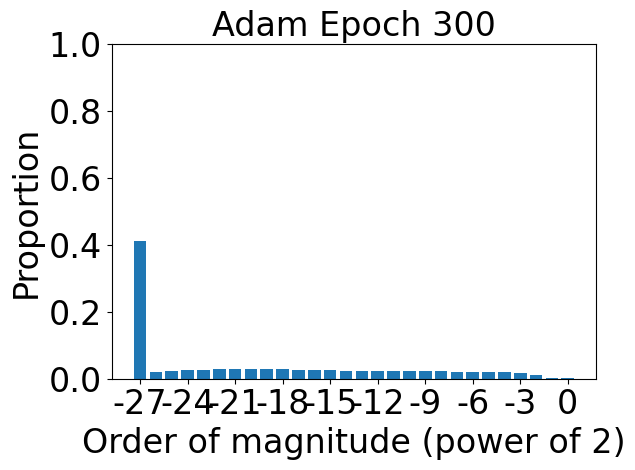}
            \hspace{0pt}
            \includegraphics[width=0.15\linewidth]{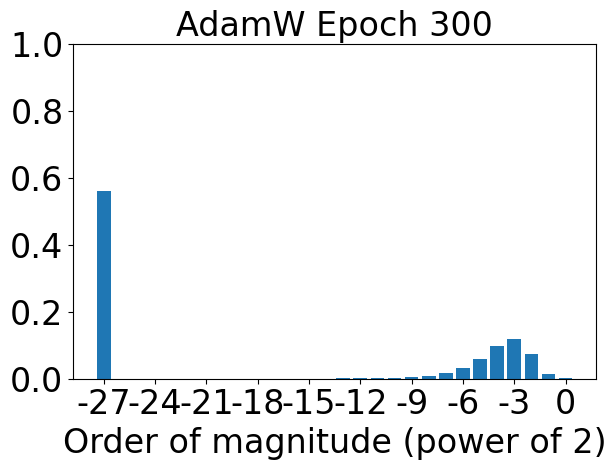}
       \caption{218 Layer Resnet}
       \label{fig:resnet218_nobn} 
    \end{subfigure}
    \caption[On using AdamW vs.~Adam-$\ell_2$ to train a Resnet with Batch Normalizatin disabled on CIFAR10]{On using AdamW vs.~Adam-$\ell_2$ to train a Resnet with Batch Normalizatin disabled on CIFAR10. (Left two) The final Top-1 test error (\emph{the black circle denotes the best setting}). (Middle two) The training loss and test accuracy curves when using the initial step size and the weight decay parameter that gives the smallest test error. (Right two) The histogram of update magnitudes of all coordinates near the end of the training when using the initial step size and the weight decay parameter that gives the smallest test error. Note that as the depth of the neural network increases, Adam-$\ell_2$'s updates scatter more evenly over the entire spectrum while AdamW's updates are still concentrated in a small range, and AdamW's advantage in both training and testing over Adam-$\ell_2$ becomes more significant.}
    \label{fig:nobn_cifar10}
\end{figure}
    
\begin{figure}[p]
    \centering
    \begin{subfigure}[b]{\textwidth}
       \includegraphics[width=0.30\textwidth]{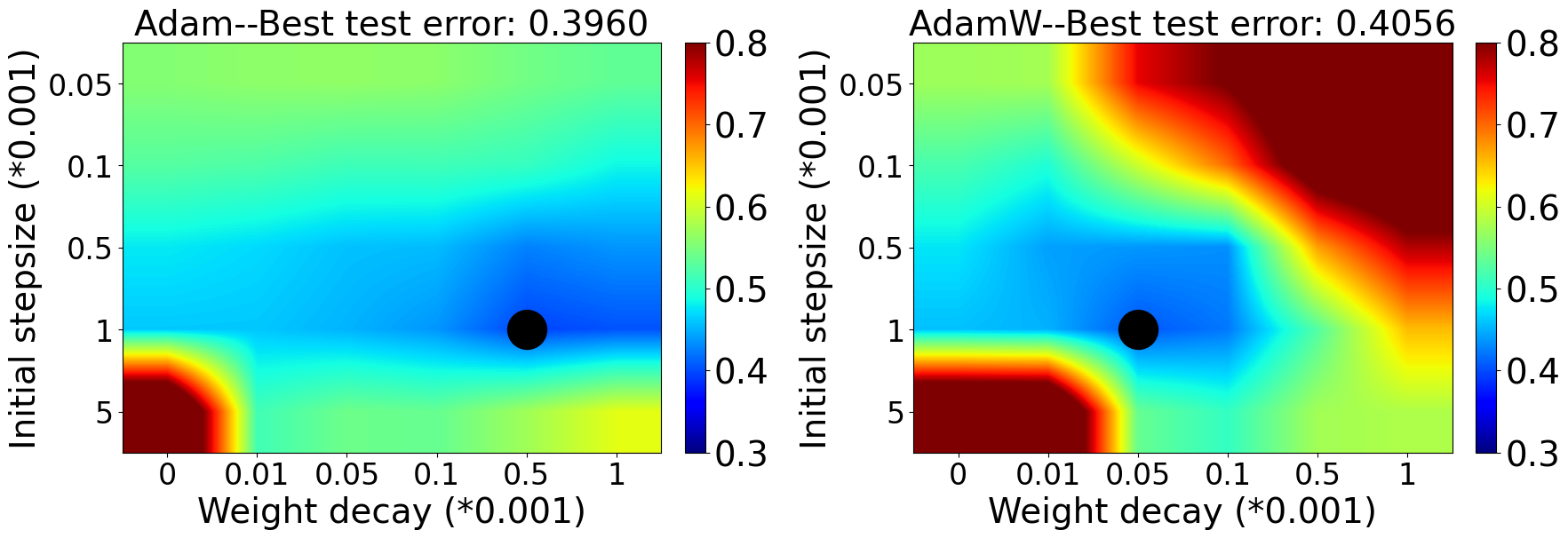}
       \hspace{0.02\textwidth}
       \includegraphics[width=0.30\textwidth]{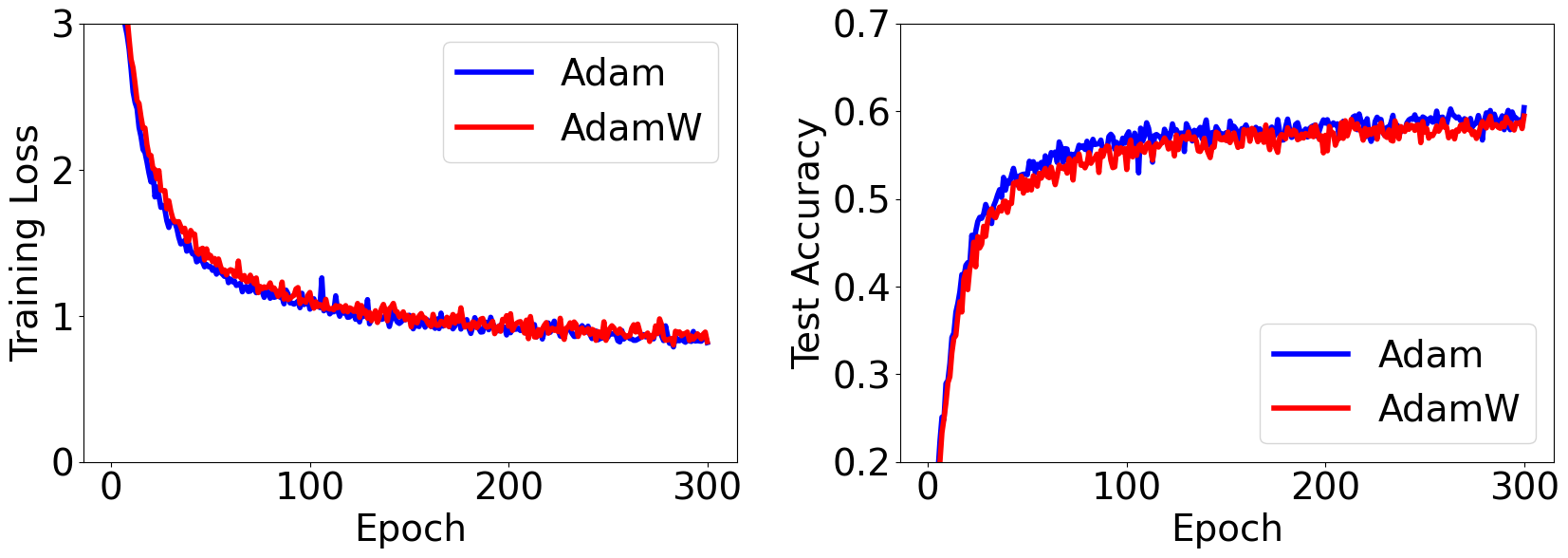}
       \hspace{0.02\textwidth}
       \includegraphics[width=0.15\linewidth]{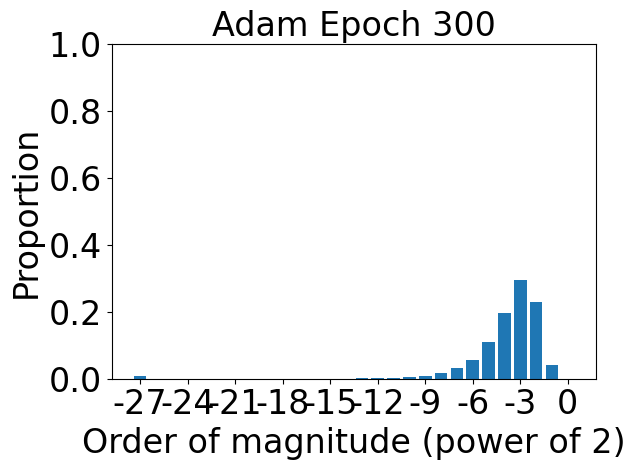}
            \hspace{0pt}
            \includegraphics[width=0.15\linewidth]{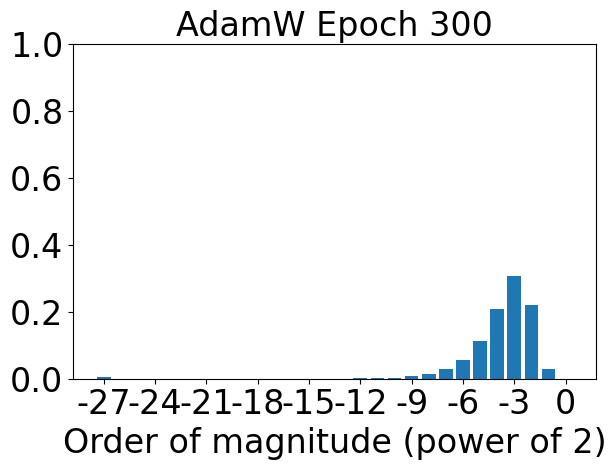}
       \caption{20 Layer Resnet}
       \label{fig:resnet20_c100_nobn} 
    \end{subfigure}
    
    \begin{subfigure}[b]{\textwidth}
       \includegraphics[width=0.30\textwidth]{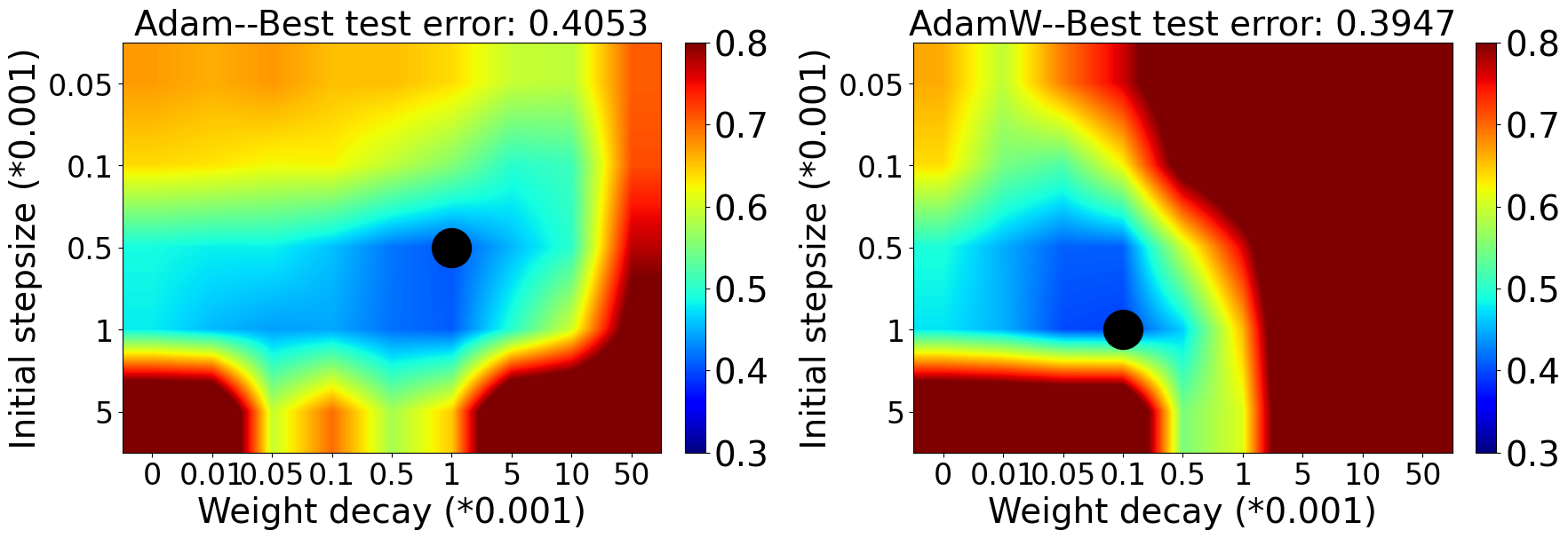}
       \hspace{0.02\textwidth}
       \includegraphics[width=0.30\textwidth]{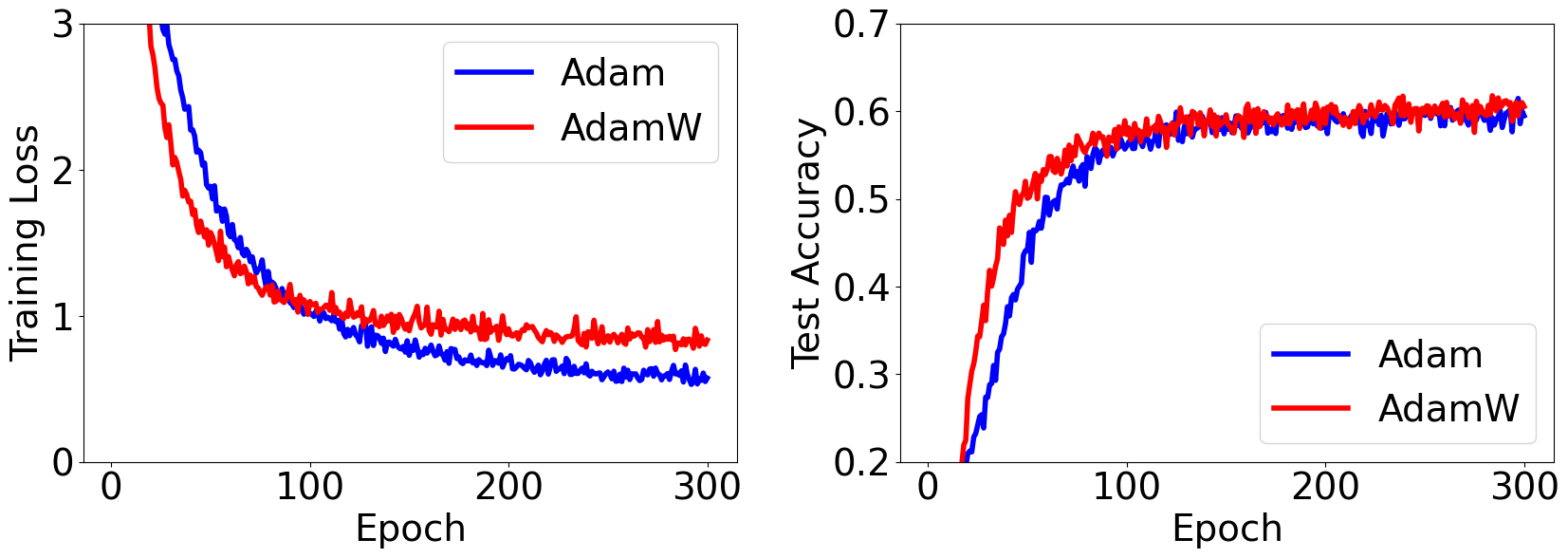}
       \hspace{0.02\textwidth}
       \includegraphics[width=0.15\linewidth]{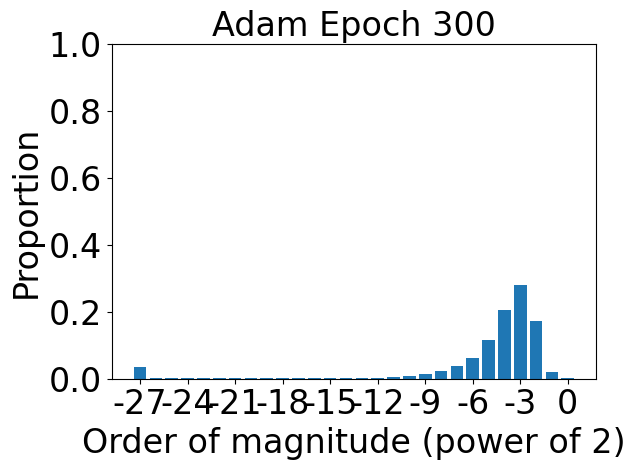}
            \hspace{0pt}
            \includegraphics[width=0.15\linewidth]{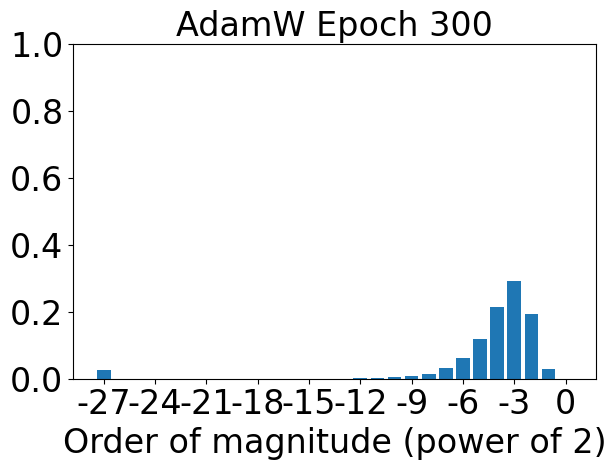}
       \caption{44 Layer Resnet}
       \label{fig:resnet44_c100_nobn} 
    \end{subfigure}
    
    \begin{subfigure}[b]{\textwidth}
       \includegraphics[width=0.30\textwidth]{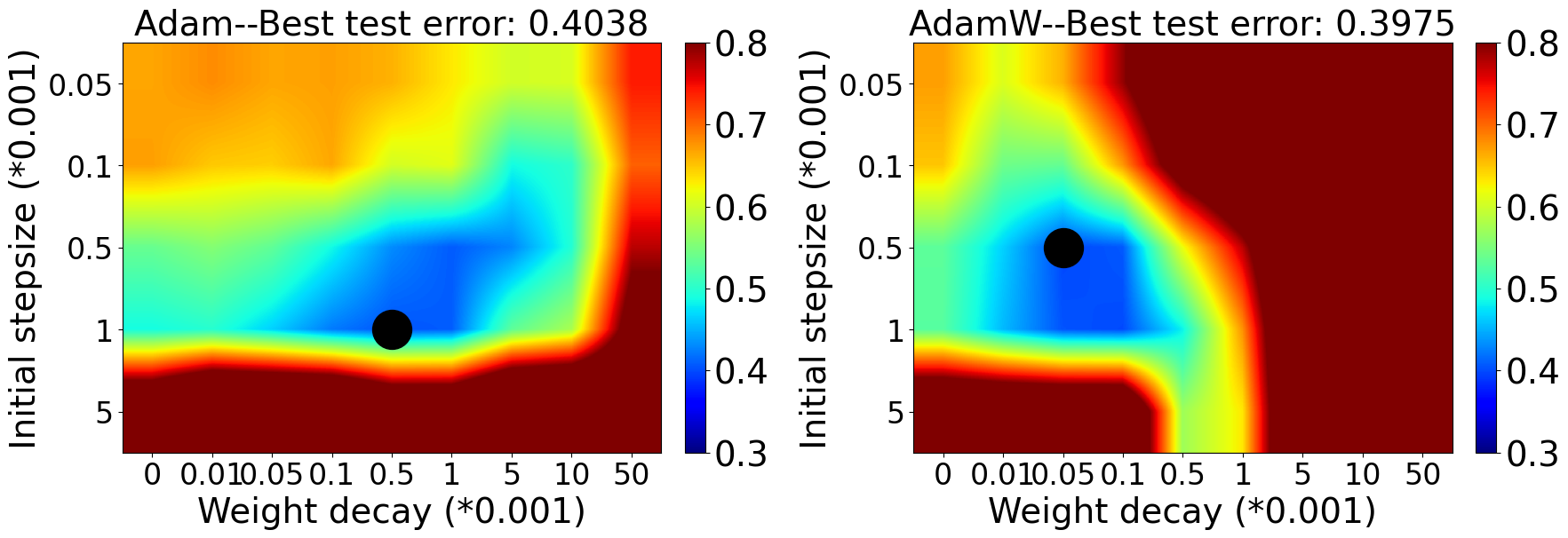}
       \hspace{0.02\textwidth}
       \includegraphics[width=0.30\textwidth]{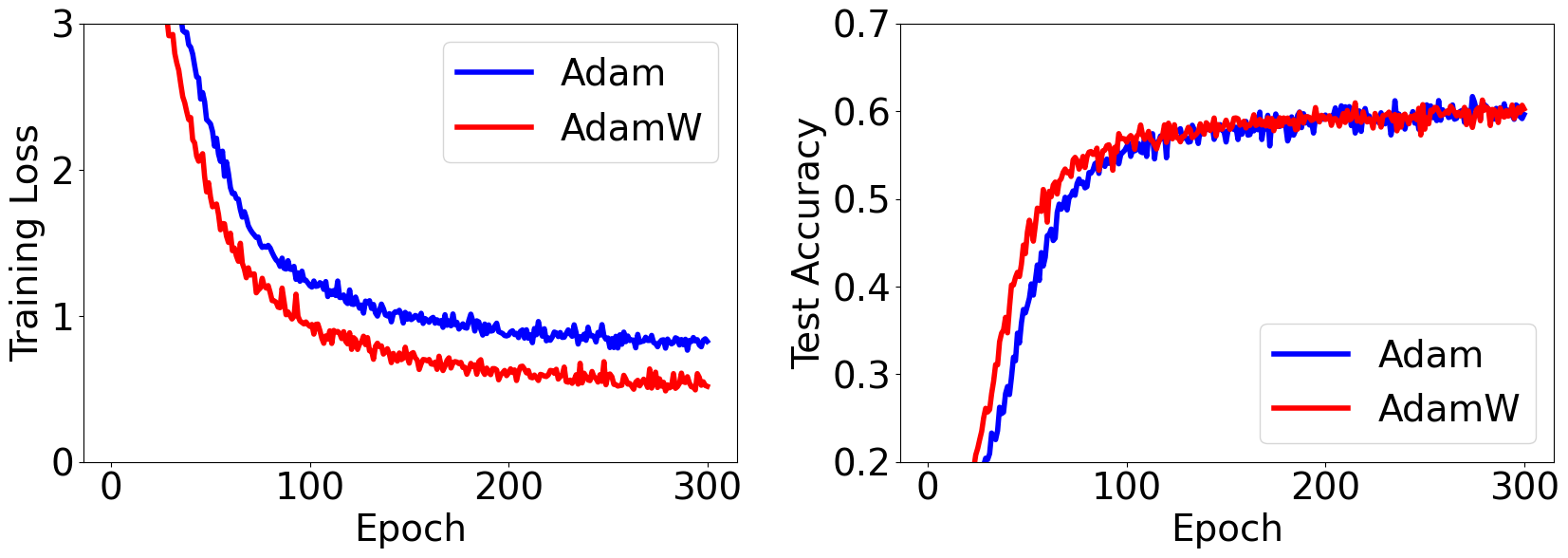}
       \hspace{0.02\textwidth}
       \includegraphics[width=0.15\linewidth]{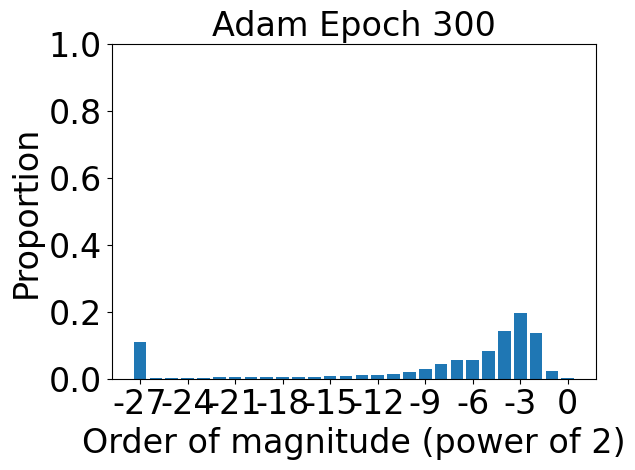}
            \hspace{0pt}
            \includegraphics[width=0.15\linewidth]{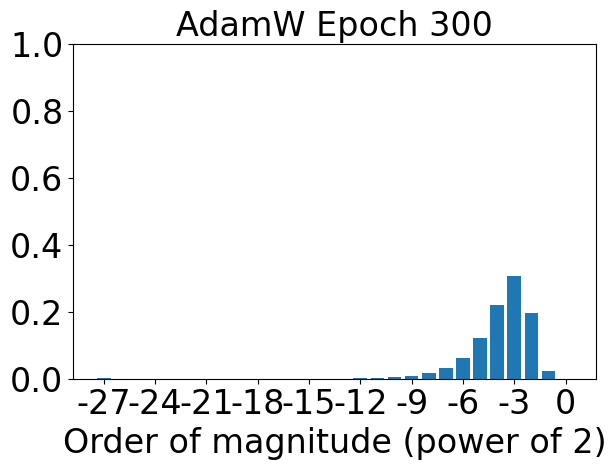}
       \caption{56 Layer Resnet}
       \label{fig:resnet56_c100_nobn} 
    \end{subfigure}
    
    \begin{subfigure}[b]{\textwidth}
       \includegraphics[width=0.30\textwidth]{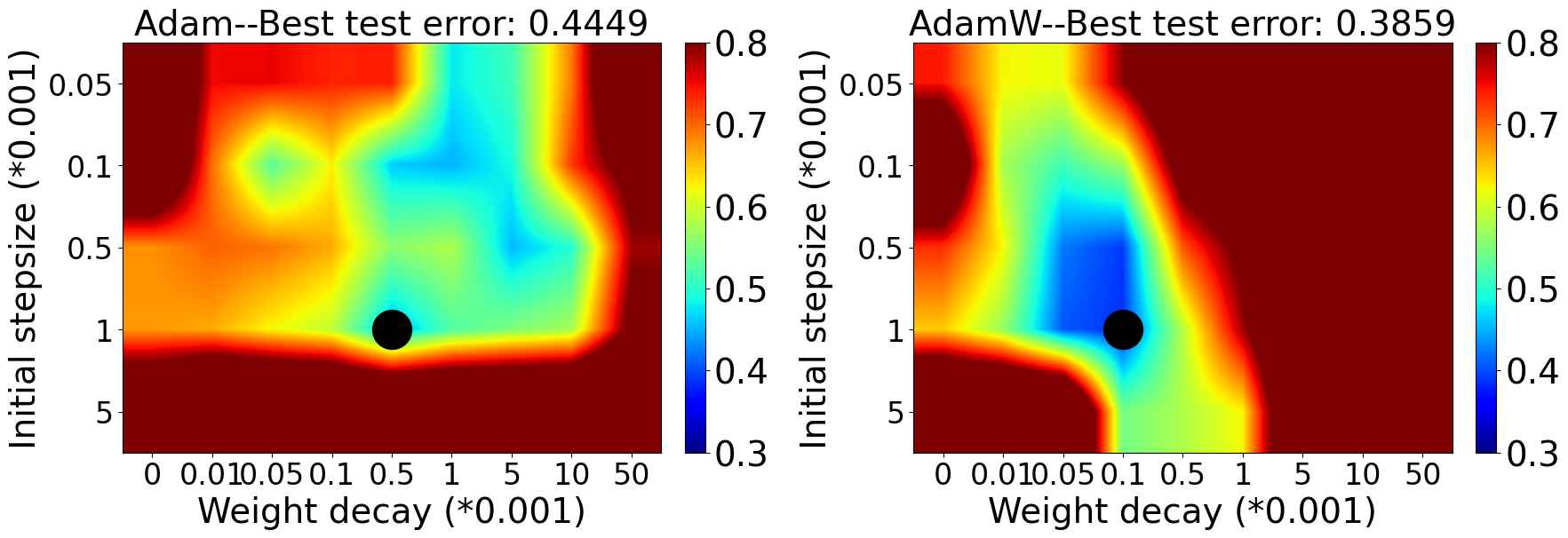}
       \hspace{0.02\textwidth}
       \includegraphics[width=0.30\textwidth]{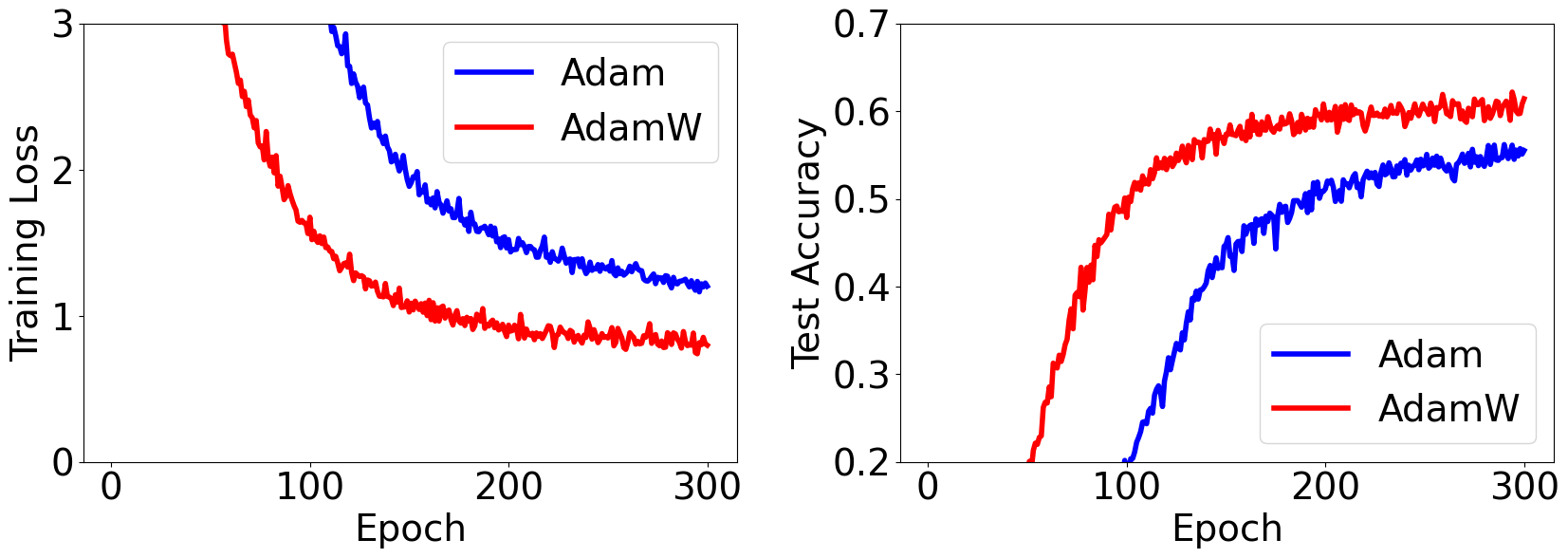}
       \hspace{0.02\textwidth}
       \includegraphics[width=0.15\linewidth]{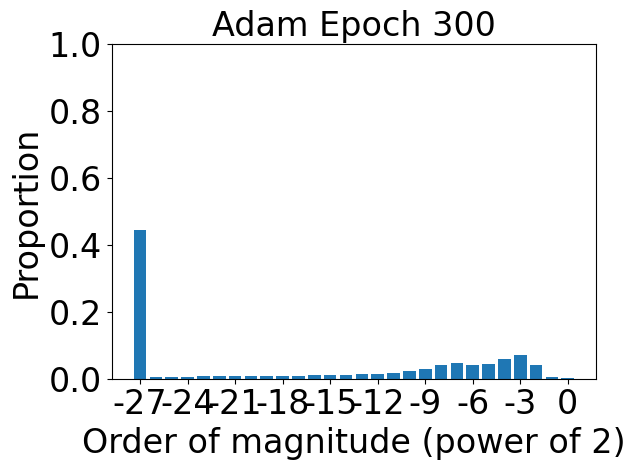}
            \hspace{0pt}
            \includegraphics[width=0.15\linewidth]{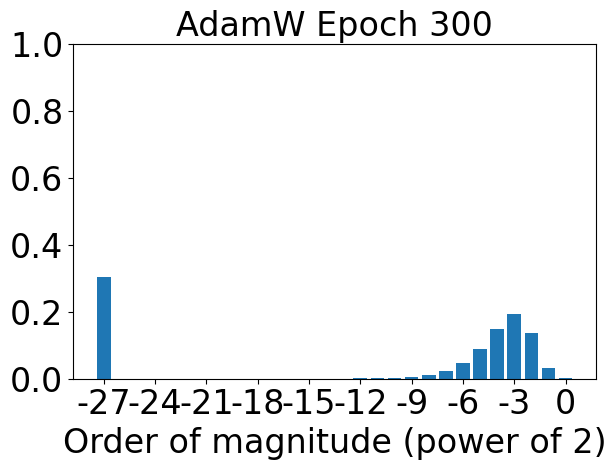}
       \caption{110 Layer Resnet}
       \label{fig:resnet110_c100_nobn} 
    \end{subfigure}
    
    \begin{subfigure}[b]{\textwidth}
       \includegraphics[width=0.30\textwidth]{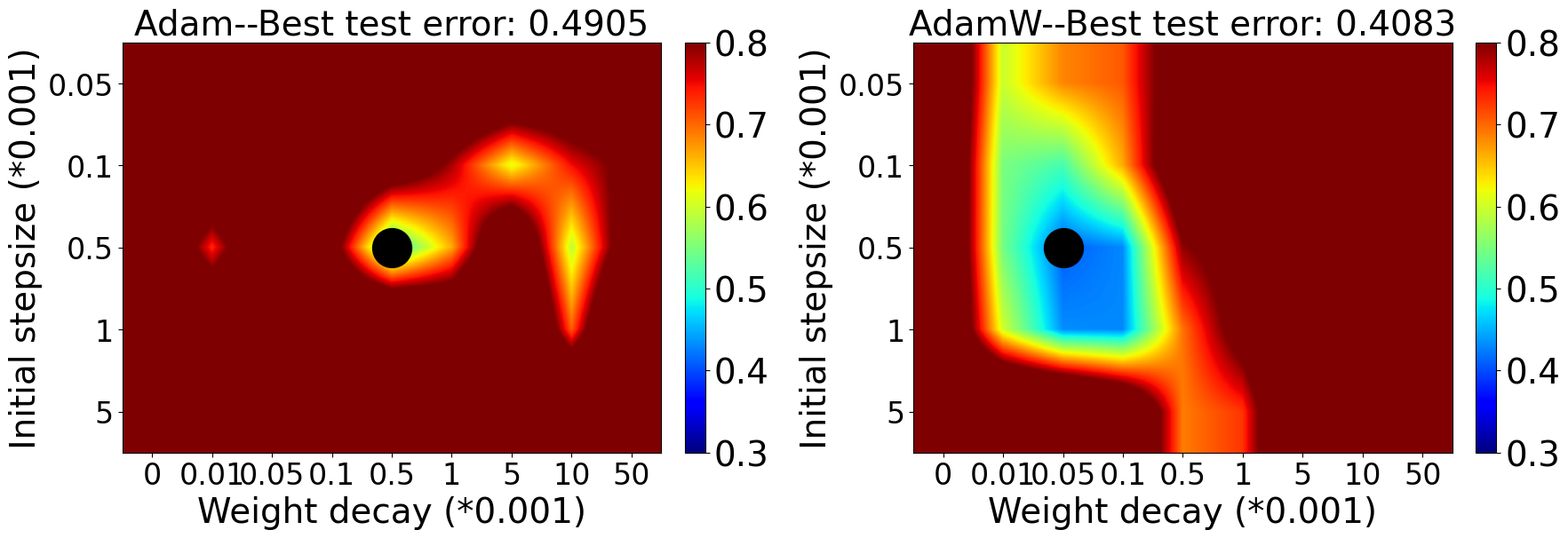}
       \hspace{0.02\textwidth}
       \includegraphics[width=0.30\textwidth]{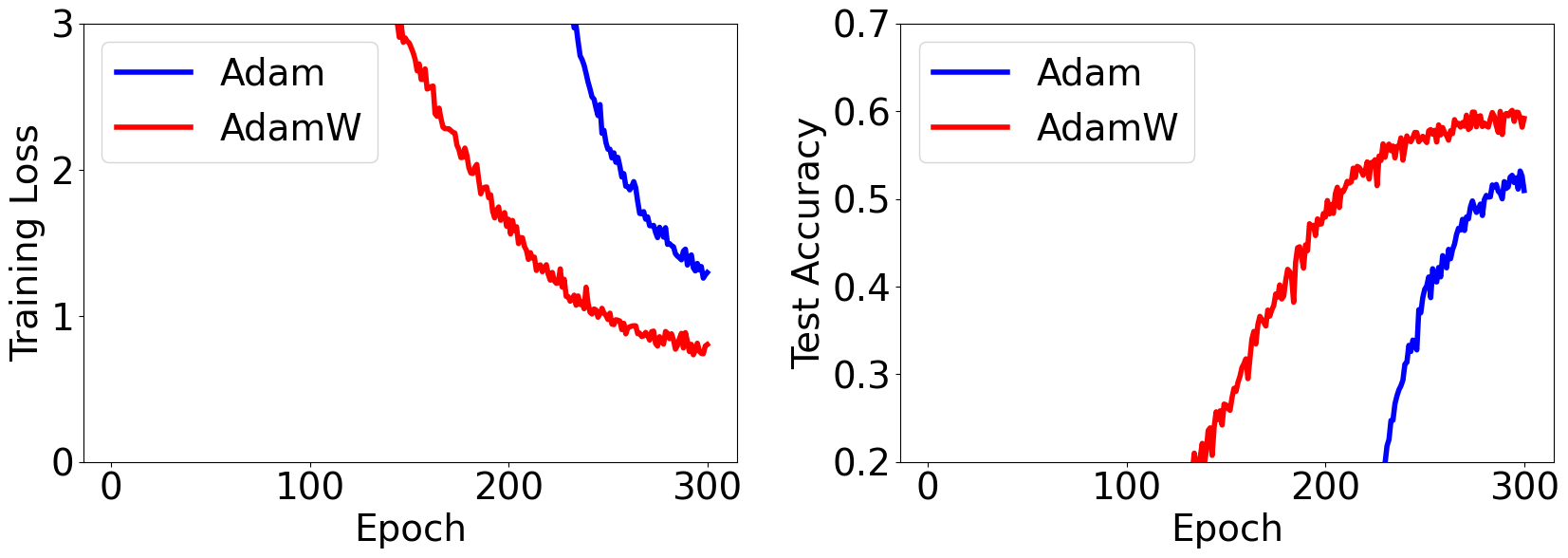}
       \hspace{0.02\textwidth}
       \includegraphics[width=0.15\linewidth]{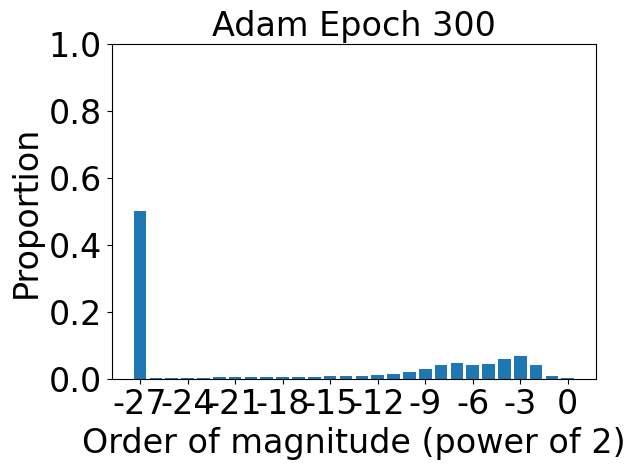}
            \hspace{0pt}
            \includegraphics[width=0.15\linewidth]{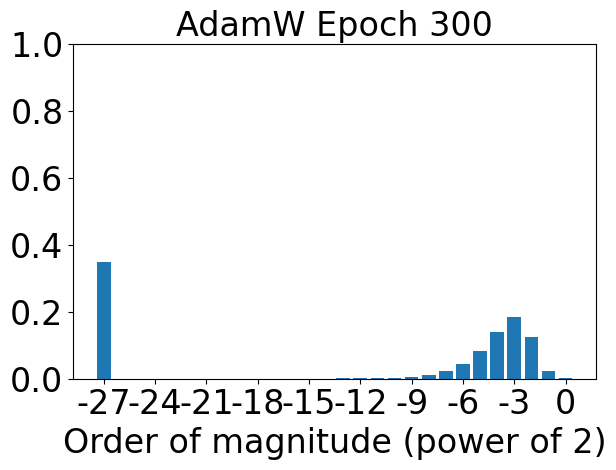}
       \caption{218 Layer Resnet}
       \label{fig:resnet218_c100_nobn} 
    \end{subfigure}
    
    \begin{subfigure}[b]{\textwidth}
       \includegraphics[width=0.30\textwidth]{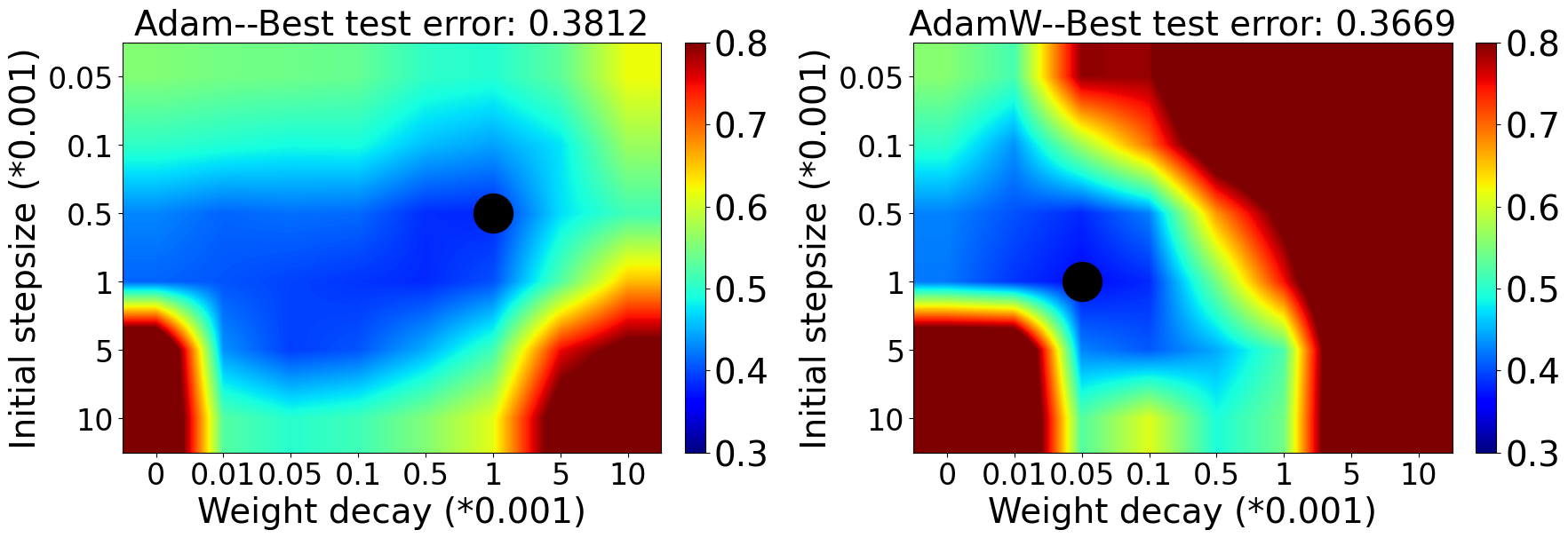}
       \hspace{0.02\textwidth}
       \includegraphics[width=0.30\textwidth]{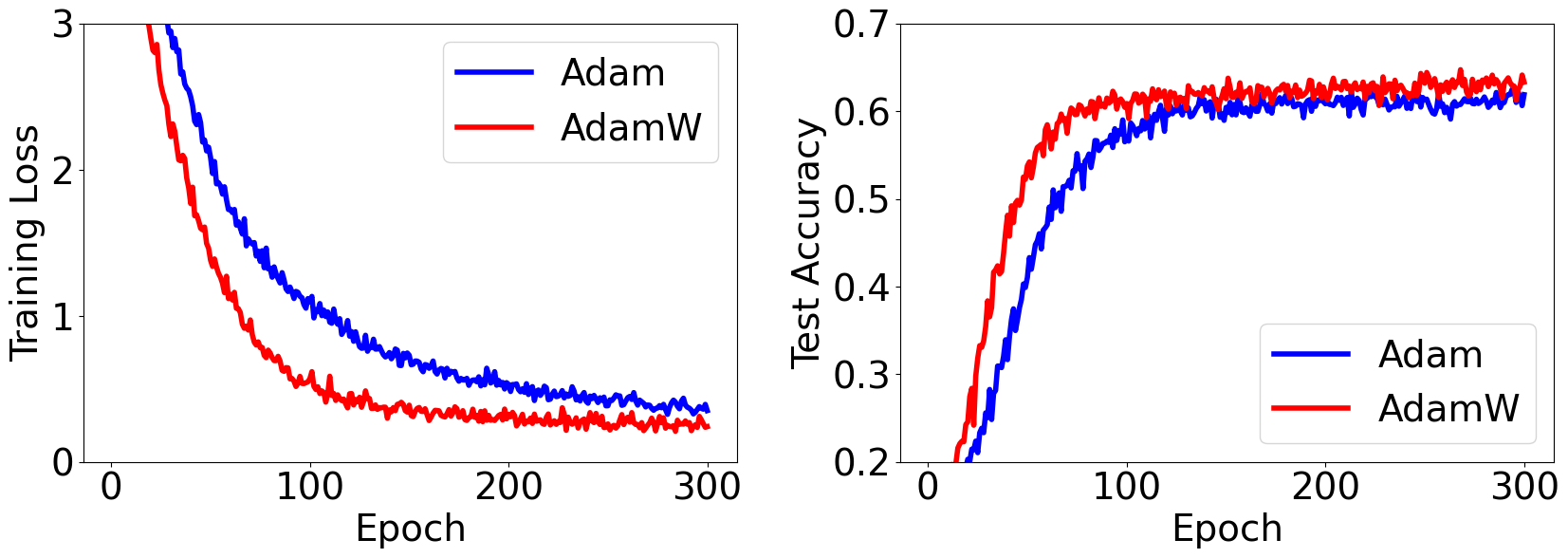}
       \hspace{0.02\textwidth}
       \includegraphics[width=0.15\linewidth]{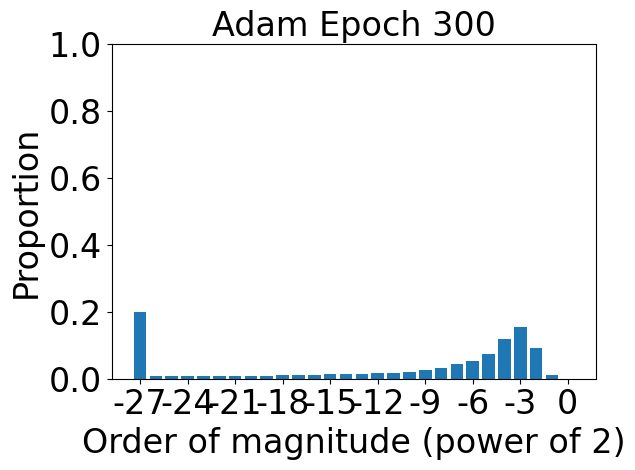}
            \hspace{0pt}
            \includegraphics[width=0.15\linewidth]{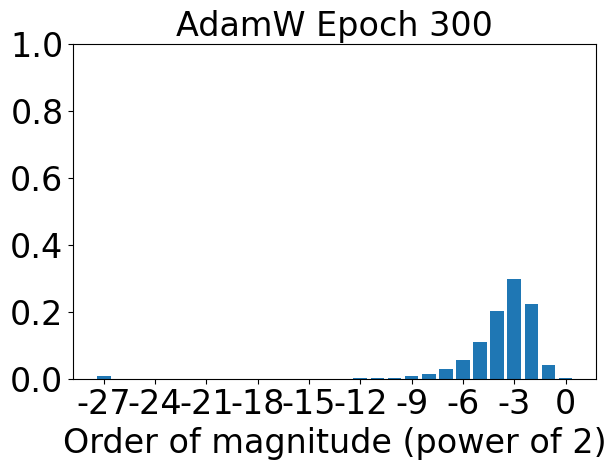}
       \caption{100 layer DenseNet-BC}
       \label{fig:densenet_c100_nobn} 
    \end{subfigure}
    
    \caption[On using AdamW vs.~Adam-$\ell_2$ to train a Resnet/DenseNet with Batch Normalization disabled on CIFAR100]{On using AdamW vs.~Adam-$\ell_2$ to train Resnet/DenseNet with Batch Normalization disabled on CIFAR100. (Left two) The final Top-1 test error (\emph{the black circle denotes the best setting}). (Middle two) The training loss and test accuracy curves when using the initial step size and the weight decay parameter that gives the smallest test error. (Right two) The histogram of update magnitudes of all coordinates near the end of the training when using the initial step size and the weight decay parameter that gives the smallest test error. Note that as the depth of the neural network increases, Adam-$\ell_2$'s updates scatter more evenly over the entire spectrum while AdamW's updates are still concentrated in a small range, and AdamW's advantage in both training and testing over Adam-$\ell_2$ becomes more significant.}
    \label{fig:nobn_cifar100}
\end{figure}

\textbf{With BN, Adam-$\ell_2$ is on par with AdamW} Recently, \citet{BjorckWG20} found that AdamW has no improvement in absolute performance over sufficiently tuned Adam-$\ell_2$ in some reinforcement learning experiments. We also discover the same phenomenon in several image classification tasks, see Figure~\ref{fig:bn}. Indeed, the best weight decay parameter is $0$ for all cases and AdamW coincides with Adam-$\ell_2$  in these cases. Nevertheless, AdamW does decouple the optimal choice of the weight decay parameter from the initial step size much better than Adam-$\ell_2$ in all cases.

\textbf{Removing BN} Notice that the models used in Figure~\ref{fig:bn} all employ BN. Without BN, deep neural networks are known to suffer from gradient explosion and vanishing~\citep{SchoenholzGGS17}. This means each coordinate of the gradient will have very different scales, especially between the first and the last layers. As we will detail in the next section, for Adam-$\ell_2$, the update to the network weights will be affected and each coordinate will proceed at a different pace, whereas AdamW is robust to such issues as the scaling of any single coordinate will not affect the update. Thus, we consider the case where BN is removed as that is where AdamW and Adam-$\ell_2$ will show very different patterns due to scale-freeness.

\textbf{Without BN, AdamW Outperforms Adam-$\ell_2$} In fact, without BN, AdamW outperforms Adam-$\ell_2$ even when both are finely tuned, especially on relatively deep neural networks (see Figure~\ref{fig:nobn_cifar10} and~\ref{fig:nobn_cifar100}). AdamW not only obtains a much better test accuracy but also trains much faster. For example, Figure~\ref{fig:resnet110_nobn} shows that, when training a $110$ layer ResNet~\citep{HeZRS16} with Batch Normalization disabled to do image classification on the CIFAR10 dataset, even when both are finely tuned, AdamW gains a $3\%$ improvement over Adam-$\ell_2$ in test errors as well as converging much faster during training.

In the next section, we propose to understand through the scale-freeness property why this different way of employing regularization leads to AdamW's advantage.

\section{Understanding AdamW through its Scale-freeness}
\label{sec:scale_free}
An optimization algorithm is said to be \emph{scale-free} if its iterates do not change when one multiplies any coordinate of all the gradients by a positive constant~\citep{orabona2015scale}. The scale-free property was first proposed in the online learning field~\citep{Cesa-BianchiMS07,orabona2015scale}. There, they do not need to know a priori the Lipschitz constant of the functions, while still being able to obtain optimal convergence rates. We stress that the scale-freeness is an important but largely overlooked property of an optimization algorithm. It has already been utilized to explain the success of AdaGrad~\citep{orabona2015scale}. Recently, \citet{AgarwalAHKZ20} also provides theoretical and empirical support for setting the $\epsilon$ in the denominator of AdaGrad to be 0, thus making the update exactly scale-free.

It turns out that the update of AdamW is scale-free when $\epsilon = 0$. This is evident as the scaling factor for any coordinate of the gradient is kept in both $\hat{\bm_t}$ and $\sqrt{\hat{\bv_t}}$ and will be canceled out when dividing them. In contrast, for Adam-$\ell_2$, the addition of the gradient of the $\ell_2$ regularization to the gradient (Line 5 of Algorithm~\ref{algo:adamw}) destroys this property.

We want to emphasize the comparison between Adam-$\ell_2$ and AdamW: once Adam-$\ell_2$ adopts a non-zero $\lambda$, it loses the scale-freeness property; in contrast, AdamW enjoys this property for arbitrary $\lambda$. The same applies to any AdaGrad-type and Adam-type algorithm that incorporates the squared $\ell_2$ regularizer by simply adding the gradient of the $\ell_2$ regularizer directly to the gradient of the loss function, as in Adam-$\ell_2$ which is implemented in Tensorflow and Pytorch. Such algorithms are scale-free only when they do not employ regularization.

Nevertheless, one may notice that in practice, the $\epsilon$ factor in the AdamW update is typically small but not 0, in our case $1e$-$8$, thus preventing it from being completely scale-free. Below, we verify that the effect of such an $\epsilon$ on the scale-freeness is negligible.

As a simple empirical verification of the scale-freeness, we consider the scenario where we multiply the loss function by a positive number.
Note that any other method to test scale-freeness would be equally good.
For a feed-forward neural network without BN, this means the gradient would also be scaled up by that factor. In this case, the updates of a scale-free optimization algorithm would remain exactly the same, whereas they would change for an optimization algorithm that is not scale-free. 

Figure~\ref{fig:lossmul} shows the results of the loss function being multiplied by 10 and 100 respectively on optimizing a 110-layer Resnet with BN \emph{disabled}. For results of the original loss see Figure~\ref{fig:resnet110_nobn}.
We can see that AdamW has almost the same performance across the range of initial step sizes and weight decay parameters, and most importantly, the best values of these two hyperparameters remain the same. This verifies that, even when employing a (small) non-zero $\epsilon$, AdamW is still approximately scale-free. In contrast, Adam-$\ell_2$ is not scale-free and we can see that its behavior varies drastically with the best initial step sizes and weight decay parameters in each setting totally different.

\begin{figure}
\centering
\begin{subfigure}{0.48\linewidth}
\centering
\includegraphics[width=\linewidth]{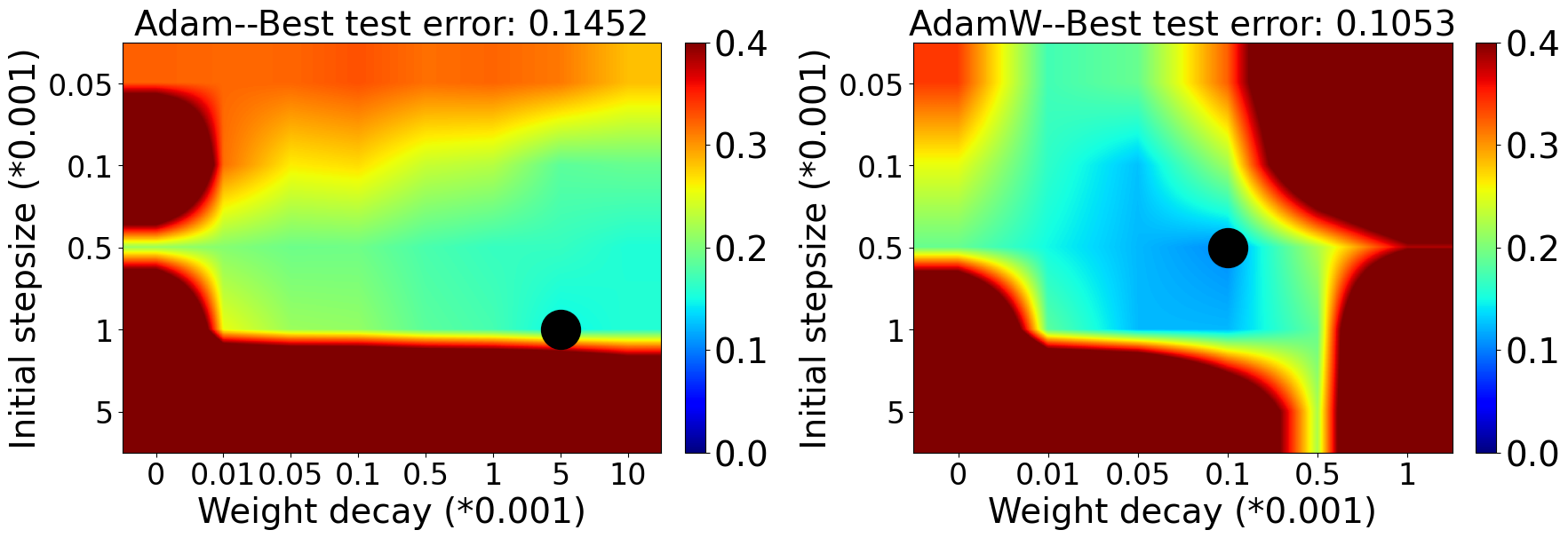}
\caption{Loss multiplied by 10}
\label{fig:lossmul10}
\end{subfigure}
\hspace{0.02\linewidth}
\begin{subfigure}{0.48\linewidth}
\centering
\includegraphics[width=\linewidth]{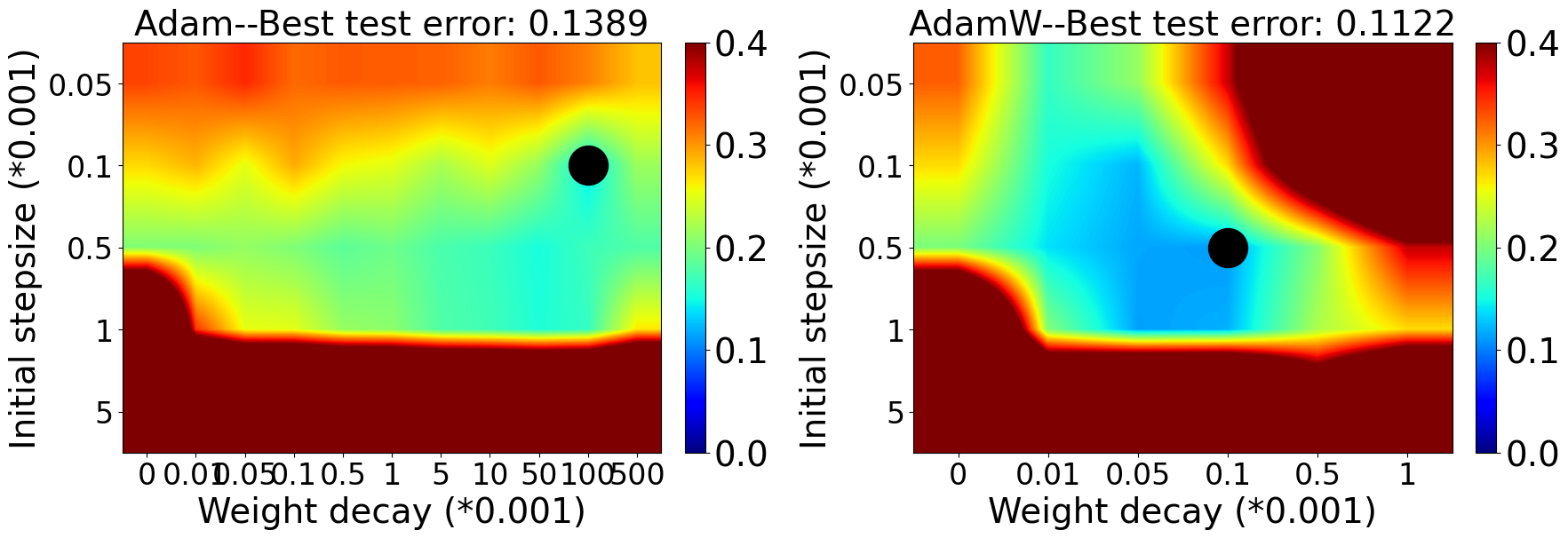}
\caption{Loss multiplied by 100}
\label{fig:lossmul100}
\end{subfigure}
\caption[Empirical verification of the scale-freeness of AdamW]{The final top-1 test error of AdamW vs.~Adam-$\ell_2$ on optimizing a 110-layer Resnet with BN \emph{removed} on CIFAR-10 with the loss function multiplied by 10 (left two figures) and 100 (right two figures). Note how the best performing hyperparameter combinations of AdamW remain the same for different loss multiplication factors as well as the shape of the heatmap being very similar. In contrast, Adam-$\ell_2$'s performance as well as the best performing hyperparameter combinations vary dramatically for different loss multiplication factors.}
\label{fig:lossmul}
\end{figure}

With that said, our main claim is \emph{the lack of scale-freeness seems to harm Adam-$\ell_2$'s performance in certain scenarios in deep learning, while AdamW preserves the scale-freeness even with non-zero regularization}.

This is exactly verified empirically as illustrated in the 5th \& 6th columns of figures in Figure~\ref{fig:nobn_cifar10} and~\ref{fig:nobn_cifar100}. There, we report the histograms of the absolute value of updates of Adam-$\ell_2$ vs.~AdamW of all coordinates near the end of training (for their comparison over the whole training process please refer to the Appendix~\ref{sec:hist_entire_train}).

Indeed, the optimization processes of these two optimizers show the effects of with or without the scale-freeness. As can be seen, the magnitudes of AdamW's updates are much more concentrated than that of Adam-$\ell_2$ throughout the training. This means that a scale-free algorithm like AdamW ensures that each layer is updated at a similar pace; in contrast, for a non-scale-free optimization algorithm like Adam-$\ell_2$, different layers will proceed at very different speeds.

We also observe that the advantage of AdamW becomes more evident as the network becomes deeper. Recall that as the depth grows, without BN, the gradient explosion and vanishing problem becomes more severe. This means that for the non-scale-free Adam-$\ell_2$, the updates of each coordinate will be dispersed on a wider range of scales even when the same weight decay parameter is employed. In contrast, the scales of the updates of AdamW will be much more concentrated in a smaller range.

This correlation between the advantage of AdamW over Adam-$\ell_2$ and the different spread of update scales which is induced by the scale-freeness property of AdamW provides empirical evidence on when AdamW excels over Adam-$\ell_2$.

As a side note, the reader might wonder why SGD is known to provide state-of-the-art performance on many deep learning architectures~\citep[e.g.,][]{HeZRS16, HuangLVW17} \emph{without} being scale-free. At first blush, this seems to contradict our claims that scale-freeness correlates with good performance. In reality, the good performance of SGD in very deep models is linked to the use of BN that normalizes the gradients. Indeed, we verified empirically that SGD fails spectacularly when BN is not used. For example, on training the 110 layer Resnet without BN using SGD with momentum and weight decay of $0.0001$, even a step size of $1e-10$ will lead to divergence.

\section{AdamW and Proximal Updates}
\label{sec:adamw_proximal}
The scale-freeness property of AdamW may seem a natural consequence of the way it constructs its update. Yet, in this section, we reveal the surprising connection between AdamW and proximal updates~\citep{ParikhB14}, suggesting another potential explanation of where AdamW's scale-freeness comes from.

A proximal algorithm is an algorithm for solving a convex optimization
problem that uses the proximal operators of the objective function. The \emph{proximal operator} $\text{prox}_h: \R^d \rightarrow \R^d$ of a convex function $h$ is defined for any $\by\in\R^d$ as $\text{prox}_h(\by) = \arg\min_{\bx\in\R^d}(h(\bx) + \frac12\|\bx-\by\|_2^2)$. The use of proximal updates in the batch optimization literature dates back at least to 1965~\citep{Moreau65,Martinet70,Rockafellar76,ParikhB14} and they are used more recently even in the stochastic setting~\citep{ToulisA17,AsiD19}.

Now consider that we want to minimize the objective function 
\begin{equation}
\label{eq:composite}
F(\bx) = \tfrac{\lambda}{2} \|\bx\|_2^2 + f(\bx),
\end{equation}
where $\lambda>0$ and $f(\bx):\R^d\rightarrow \R$ is a function bounded from below.
We could use a stochastic optimization algorithm that updates in the following fashion
\begin{equation}
\label{eq:stoc_opt}
\bx_{t} = \bx_{t-1} - \eta_t\bp_t,
\end{equation}
where $\eta_t$ is a learning rate schedule, e.g., the constant one or the cosine annealing~\citep{LoshchilovH17} and $\bp_t$ denotes any update direction. This update covers many cases, where $\alpha$ denotes the initial step size:
\begin{enumerate}[topsep=0pt, parsep=1pt]
\item $\bp_t = \alpha\bg_t$ gives us the vanilla SGD;
\item $\bp_t = \alpha\bg_t/(\sqrt{\sum^{t}_{i=1}\bg_i^2 + \epsilon})$ gives the AdaGrad algorithm~\citep{DuchiHS10};
\item $\bp_t = \alpha\hat{\bm}_t/(\sqrt{\hat{\bv}_t} + \epsilon)$ recovers Adam~\citep{KingmaB15}, where $\hat{\bm}_t$ denotes the bias corrected first moment of past gradients and $\hat{\bv}_t$ denotes the bias corrected second moment of past gradients as updated in Line 6-7 in Algorithm~\ref{algo:adamw}.
\end{enumerate}
Note that in the above we use $\bg_t$ to denote the stochastic gradient of the entire objective function: $\bg_t=\nabla f_t(\bx_{t-1})+\lambda\bx_{t-1}$ ($\lambda=0$  if the regularizer is not present), where $\nabla f_t(\bx_{t-1})$ is a stochastic evaluation of the true gradient $\nabla f(\bx_{t-1})$.

This update rule~\eqref{eq:stoc_opt} is given by the following online mirror descent update~\citep{NemirovskyY83, WarmuthJ97, BeckT03}:
\begin{equation}
\label{eq:mirror_descent_1}
\begin{aligned}
\bx_{t} = \argmin_{\bx \in \R^d}\ &\tfrac{\lambda}{2}\|\bx_{t-1}\|^2_2 + f(\bx_{t-1}) + \bp_t^\top (\bx-\bx_{t-1})
+ \tfrac{1}{2\eta_t} \|\bx -\bx_{t-1}\|_2^2~.
\end{aligned}
\end{equation}

This approximates minimizing a first-order Taylor approximation of $F$ centered in $\bx_{t-1}$ plus a term that measures the distance between the $\bx_t$ and $\bx_{t-1}$ according to the $\ell_2$ norm. The approximation becomes exact when $\bp_t = \nabla f(\bx_{t-1}) + \lambda\bx_{t-1}$.

Yet, this is not the only way to construct first-order updates for the objective~\eqref{eq:composite}. An alternative route is to linearize only $f$ and to keep the squared $\ell_2$ norm in its functional form:
\begin{equation}
\label{eq:mirror_descent_2}
\begin{aligned}
\bx_{t}
&= \argmin_{\bx \in \R^d}\ \tfrac{\lambda}{2}\|\bx\|^2_2 + f(\bx_{t-1}) + \bp_t^\top (\bx-\bx_{t-1})
+
\tfrac{1}{2\eta_t} \|\bx -\bx_{t-1}\|_2^2\\
&=
\text{prox}_{\frac{\lambda\eta_t}{2}\|\cdot\|_2^2}(\bx_{t-1} - \eta_t\bp_t),
\end{aligned}
\end{equation}
which uses the proximal operator of the convex function $\frac{\lambda\eta_t}{2}\|\cdot\|_2^2$.

It is intuitive why this would be a better update: \emph{We directly minimize the squared $\ell_2$ norm instead of approximating it.} We also would like to note that, similar to~\eqref{eq:mirror_descent_1}, the proximal updates of~\eqref{eq:mirror_descent_2} can be shown to minimize the objective $F$ under appropriate conditions. However, we do not include the convergence analysis of~\eqref{eq:mirror_descent_2} as this is already well-studied in the literature. For example, when $\bp_t = \nabla f(\bx_{t-1})$ in~\eqref{eq:mirror_descent_2} and $f$ is convex and smooth, the update becomes a version of the (non-accelerated) iterative shrinkage-thresholding algorithm. This algorithm guarantees $F(\bx_t) - F^* \le O(1/t)$, which is in the same order as obtained by gradient descent on minimizing $f$ alone~\citep{BeckT09}.

From the first-order optimality condition, the update is
\begin{equation}
\label{eq:prox_sgd}
\bx_{t} = (1 + \lambda \eta_t)^{-1}(\bx_{t-1} - \eta_t \bp_t)~.
\end{equation}
When $\lambda=0$, the update in \eqref{eq:stoc_opt} and this one coincide. Yet, when $\lambda\neq0$, they are no longer the same.

We now show how the update in~\eqref{eq:prox_sgd} generalizes the one in AdamW. The update of AdamW is
\begin{equation}
\label{eq:adamw}
\bx_{t} = (1-\lambda\eta_t)\bx_{t-1} - \eta_t\alpha\hat{\bm}_t/(\sqrt{\hat{\bv}_t} + \epsilon)~.
\end{equation}
On the other hand, using $\bp_t=\alpha\hat{\bm}_t/(\sqrt{\hat{\bv}_t} + \epsilon)$ in~\eqref{eq:prox_sgd} gives:
\begin{equation}
\label{eq:adamprox}
\bx_{t} = (1 + \lambda\eta_t)^{-1}(\bx_{t-1} - \eta_t\alpha\hat{\bm}_t/(\sqrt{\hat{\bv}_t} + \epsilon)),
\end{equation}
which we will call \emph{AdamProx} hereafter. Its first-order Taylor approximation around $\eta_t=0$ is
\begin{equation*}
\bx_{t} \approx (1 - \lambda\eta_t)\bx_{t-1} - \eta_t\alpha\hat{\bm}_t/(\sqrt{\hat{\bv}_t} + \epsilon),
\end{equation*}
exactly the AdamW update~\eqref{eq:adamw}. Hence, AdamW is a first-order approximation of a proximal update.

\begin{figure}[t]
\centering
\begin{subfigure}{0.48\linewidth}
\centering
\includegraphics[width=\linewidth]{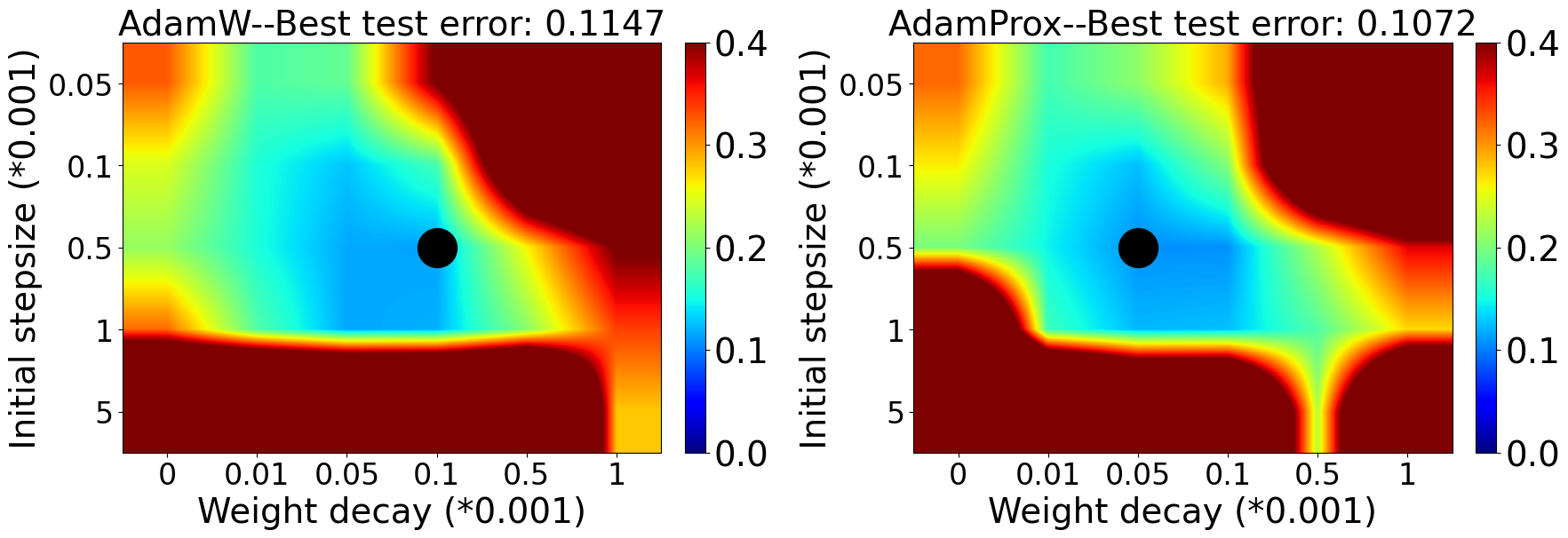}
\caption{ResNet on CIFAR-10}
\label{fig:adamprox1}
\end{subfigure}
\hspace{0.02\linewidth}
\begin{subfigure}{0.48\linewidth}
\centering
\includegraphics[width=\linewidth]{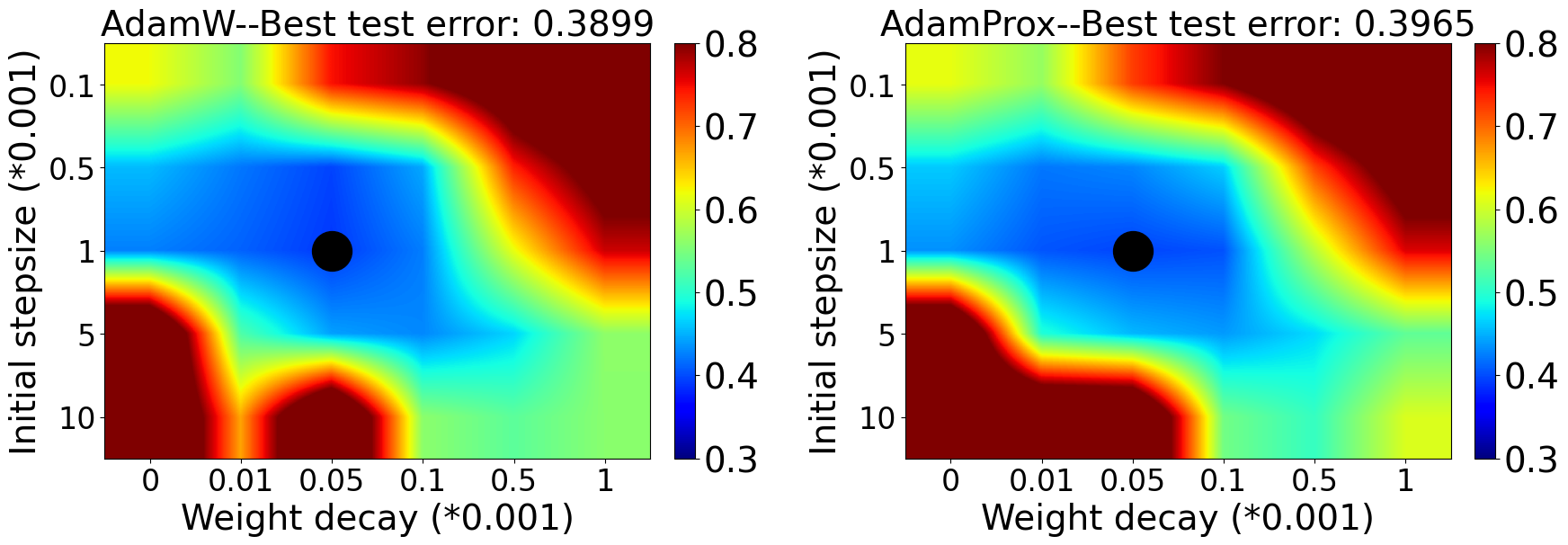}
\caption{DenseNet-BC on CIFAR-100}
\label{fig:adamprox2}
\end{subfigure}
\caption[AdamProx performs very similarly to AdamW.]{The final Top-1 test error of using AdamW vs.~AdamProx on training (\emph{the black circle denotes the best setting}). (Top row) a 110-layer ResNet with BN \emph{removed} on CIFAR-10 (trained for 300 epochs). (Bottom row) a 100-layer DenseNet-BC with BN \emph{removed} on CIFAR-100 (trained for 100 epochs). Note how similar are the shapes of the heatmaps, the best performing hyperparameter combinations, and the test errors between AdamW and AdamProx.}
\label{fig:adamadamprox}
\end{figure}

The careful reader might notice that the approximation from AdamW to the AdamProx update in \eqref{eq:adamprox} becomes less accurate when $\eta_t$ becomes too large, and so be concerned about whether this approximation is practical at all.
Fortunately, in practice, $\eta_t$ is never large enough for this to be an issue.
The remainder term of this approximation is $O(\lambda \eta_t^2)$
which we should always expect to be small as both $\lambda$ and $\eta_t$ are small.
So, we can expect AdamW and AdamProx to perform similarly for step size schedules $\eta_t$ commonly employed in practice.

Indeed, we verified empirically that the approximation is good as reported in Figure~\ref{fig:adamadamprox}, where we consider the case when $\eta_t$ = 1 for all $t$, a relatively large constant step size schedule. In such cases, AdamW and AdamProx still behave very similarly. This suggests that for most step size schedules, e.g., cosine, exponential, polynomial, and step decay, which all monotonously decrease from $\eta_0 = 1$, AdamProx will remain a very good approximation to AdamW. Thus, it is reasonable to use the more classically-linked AdamProx to try to understand AdamW.

Let's now derive the consequences of this connection with proximal updates.

First of all, at least in the convex case, the convergence rate of the proximal updates will depend on $\|\nabla f(\bx_t)\|^2_2$ rather than on $\|\nabla f(\bx_t) + \lambda \bx_t\|^2_2$~\citep{DuchiSST10}. This could be a significant improvement: the regularized loss function is never Lipschitz, so the regularized gradients $\nabla f(\bx_t) + \lambda \bx_t$ could be much larger than $\nabla f(\bx_t)$ when $f$ itself is Lipschitz.

Also, as we wrote above, AdamW can be seen as the first-order Taylor approximation on $\eta_t = 0$ of the AdamProx update in~\eqref{eq:adamprox}; in turn, the scale-freeness of AdamProx directly comes from the proximal updates. Of course, there may be other ways to design scale-free updates solving~\eqref{eq:composite}; yet, for AdamW, its scale-free property derives directly from the proximal update.

More importantly, proximal updates are fundamentally better at keeping the weights small. Let us consider a couple of simple examples to see how this could be. First, suppose the weights are \emph{already zero}. Then, when taking an update according to \eqref{eq:stoc_opt}, we increase the weights to $-\eta_t \bp_t$. In contrast, update \eqref{eq:prox_sgd} clearly leads to a smaller value. This is because it computes an update using the regularizer rather than its gradient. As an even more disturbing, yet actually more realistic example, consider the case that $\bx_{t-1}$ is non-zero, but $\bg_t=\boldsymbol{0}$. In this case, taking an update using \eqref{eq:stoc_opt} may actually \emph{increase} the weights by causing $\bx_{t}$ to \emph{overshoot} the origin. In contrast, the proximal update will never demonstrate such pathological behavior. Notice that this pathological behavior of \eqref{eq:stoc_opt} can be mitigated by properly tuning the step size. However, one of the main attractions of adaptive optimizers is that we should not need to tune the step size as much. Thus, \emph{the proximal update can be viewed as augmenting the adaptive methods with an even greater degree of learning-rate robustness.}

\section{Scale-free Algorithms can Adapt to the Condition Number}
\label{sec:scale_free_cond_num}
Interestingly, scale-freeness comes with another benefit: it can effectively reduce the effects of the condition number in certain scenarios, as detailed below.

For a twice continuously differentiable function $F$, its Hessian matrix is symmetric and its \emph{condition number} $\kappa$ is defined as the ratio of its largest absolute value eigenvalue to its smallest one. It is well-known that the best convergence rate when minimizing such $F$ using a first-order optimization algorithm (e.g., gradient descent) must depend on the condition number~\citep[Theorem 2.1.13,][]{Nesterov04}, formally,
\begin{equation}
\|\bx_t - \bx^*\|_2^2 \ge \left(\frac{\sqrt{\kappa} - 1}{\sqrt{\kappa} + 1}\right)^{2t}\|\bx_0 - \bx^*\|_2^2~.
\end{equation}
In particular, a problem with a small $\kappa$ can be solved more efficiently than one with a big $\kappa$.

One way to reduce the effect of the condition number is to use a \emph{preconditioner}~\citep{NocedalW06}. While originally designed for solving systems of linear equations, preconditioning can be extended to the optimization of non-linear functions and it should depend on the Hessian of the function~\citep{BoydV04,Li18}.
However, it is unclear how to set the preconditioner given that the Hessian might not be constant~\citep[Section 9.4.4][]{BoydV04} and 
in stochastic optimization the Hessian cannot be easily estimated~\citep{Li18}.

In the following theorem, we show that scale-freeness gives similar advantages to the use of an optimal diagonal preconditioner, \emph{for free}. Specifically, a scale-free algorithm can automatically transform solving the original problem into solving a problem with a potentially much smaller condition number and thus could provide substantial improvements over non-scale-free ones.

\begin{thm}
\label{thm:scalefree}
Let $F$ be a twice continuously differentiable function and $\bx^*$ such that $\nabla F(\bx^*)=\boldsymbol{0}$. Next, let $\tilde{F}_\Lambda$ be the family of functions such that $\nabla \tilde{F}_\Lambda(\bx^*) = \boldsymbol{0}$, and $\nabla^2 \tilde{F}_\Lambda(\bx) = \Lambda\nabla^2 F(\bx)$, where $\Lambda = diag(\lambda_1, \ldots, \lambda_d) \succeq 0$. Then, running any scale-free optimization algorithm on $F$ and $\tilde{F}_\Lambda$ will result exactly in the same iterates, assuming the same noise on the gradients. Moreover, any dependency on the condition number of the scale-free algorithm will be reduced to the smallest condition number among all the functions $\tilde{F}_\Lambda$.
\end{thm}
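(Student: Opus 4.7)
My plan is to split the theorem into two parts: first, establish that running a scale-free algorithm on $F$ and on $\tilde{F}_\Lambda$ produces identical iterates whenever the family is non-empty; second, leverage this identity to conclude that any condition-number dependency in the convergence analysis can be minimized over the family.

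The key preliminary step is to extract the relation between the gradients of $F$ and $\tilde{F}_\Lambda$ from the Hessian identity and the boundary condition. Since $\nabla \tilde{F}_\Lambda(\bx^*) = \boldsymbol{0}$ and $\nabla^2 \tilde{F}_\Lambda(\bx) = \Lambda \nabla^2 F(\bx)$, the fundamental theorem of calculus applied along the segment from $\bx^*$ to an arbitrary $\bx$ gives
\[
\nabla \tilde{F}_\Lambda(\bx) = \int_0^1 \Lambda\, \nabla^2 F(\bx^* + t(\bx-\bx^*))(\bx-\bx^*)\, dt = \Lambda(\nabla F(\bx) - \nabla F(\bx^*)) = \Lambda \nabla F(\bx),
\]
where the last step uses $\nabla F(\bx^*)=\boldsymbol{0}$. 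Thus at every point $\bx$, the gradient of $\tilde{F}_\Lambda$ is precisely the coordinate-wise rescaling of $\nabla F(\bx)$ by the diagonal entries of $\Lambda$.

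Once this gradient identity is in place, the first part is immediate from the very definition of a scale-free algorithm: multiplying every coordinate of every gradient fed to the algorithm by a positive constant does not alter the iterates. Under the assumption that the noise is the same (so that the stochastic gradient queried on $\tilde{F}_\Lambda$ at any point is the $\Lambda$-rescaling of the stochastic gradient queried on $F$), the two gradient streams differ only by coordinate-wise positive rescaling, so scale-freeness forces the iterates to coincide. For the second part, since the algorithm produces the same iterates on $F$ and on every $\tilde{F}_\Lambda$ in the family, any convergence bound that an analysis would prove in terms of $\kappa(\nabla^2 \tilde{F}_\Lambda)$ is simultaneously valid for the iterates regarded as solving $F$. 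Taking the infimum over admissible $\Lambda$ yields the claim that the effective condition number is reduced to $\min_\Lambda \kappa(\nabla^2 \tilde{F}_\Lambda)$, which can be much smaller than $\kappa(\nabla^2 F)$ itself.

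The main obstacle is the delicate point of when the family $\tilde{F}_\Lambda$ is non-empty: because Hessians must be symmetric, the identity $\nabla^2 \tilde{F}_\Lambda = \Lambda \nabla^2 F$ forces $\Lambda$ to commute with $\nabla^2 F(\bx)$ at every $\bx$, which is automatic in clean settings such as $F$ separable across coordinates or $\Lambda$ a scalar multiple of the identity, but otherwise restrictive. I would handle this by restricting the family to those $\Lambda$ for which the commutation holds, so that $\tilde{F}_\Lambda$ is a bona fide twice continuously differentiable function and the integration above yields a valid potential. The rest of the argument then goes through unchanged, and the final quantitative improvement follows by optimizing over this restricted family.
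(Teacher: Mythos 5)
Your proof follows essentially the same route as the paper's: apply the fundamental theorem of calculus along the segment from $\bx^*$ to obtain $\nabla\tilde{F}_\Lambda(\bx)=\Lambda\nabla F(\bx)$, then invoke the definition of scale-freeness to conclude the iterates coincide, and minimize the resulting condition-number dependence over $\Lambda$. Your additional observation that Hessian symmetry forces $\Lambda$ to commute with $\nabla^2 F(\bx)$ at every point (so the family may need to be restricted for $\tilde{F}_\Lambda$ to exist as a genuine $C^2$ function) is a legitimate point of care that the paper's proof silently omits.
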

\begin{proof}[Proof of Theorem~\ref{thm:scalefree}]
From the Fundamental Theorem of Calculus we have:
\begin{align}
\label{eq:base}
\nabla F(\bx)
&=
\nabla F(\bx^*) + \int^1_0\nabla^2 F(\bx^* + t (\bx - \bx^*))(\bx - \bx^*)dt\\
&=
\int^1_0\nabla^2 F(\bx^* + t (\bx - \bx^*))(\bx - \bx^*)dt ~.
\end{align}
Thus, for any function $\tilde{F}_\Lambda(\bx)$ whose Hessian is $\Lambda\nabla^2 F(\bx)$ and $\nabla\tilde{F}_\Lambda(\bx^*)=0$, we have $\nabla\tilde{F}_\Lambda(\bx) = \Lambda\nabla F(\bx)$.

Now, from the definition of a scale-free algorithm, the iterates of such an algorithm do not change when one multiplies each coordinate of all the gradients by a positive constant. Thus, a scale-free algorithm optimizing $F$ behaves the same as if it is optimizing $\tilde{F}_\Lambda$.
\end{proof}

To give an example of when this is advantageous, consider when $\nabla^2 F(\bx)$ is a diagonal matrix:
\[
\nabla^2 F(\bx) = diag(g_1(\bx), g_2(\bx), \ldots, g_d(\bx))~.
\]
Assume $0 < \mu \le \mu_i \le g_i(\bx) \le L_i \le L$ for $i \in \{1,\ldots,d\}$. Denote $j = \arg\max_i L_i/\mu_i$. Choose $\lambda_i$ s.t.~$\mu_j\le\lambda_i\mu_i\le\lambda_i g_i(\bx)\le\lambda_i L_i\le L_j$ then $\Lambda\nabla^2 F(\bx)$ has a condition number $\kappa^{\prime} = L_j/\mu_j$. This gives scale-free algorithms a big advantage when $\max_i L_i / \mu_i \ll L/\mu$.

Another example is one of the quadratic functions.

\begin{figure}[t]
\begin{subfigure}{0.48\linewidth}
  \centering
\includegraphics[width=\linewidth]{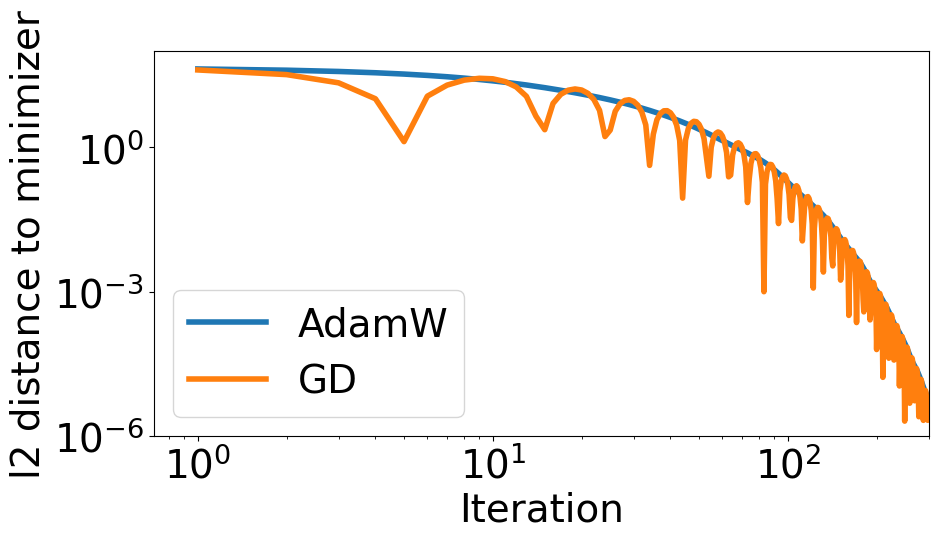}
\subcaption{Condition number $1$.}
\end{subfigure}
\hfill
\begin{subfigure}{0.48\linewidth}
  \centering
\includegraphics[width=\linewidth]{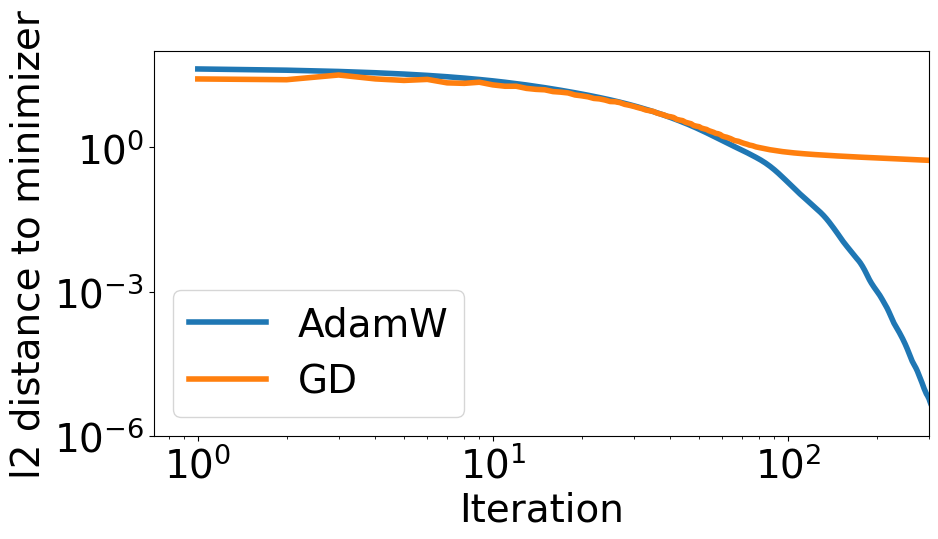}
\subcaption{Condition number $100000$.}
\end{subfigure}
\caption{Non-scale-free GD v.s.~scale-free AdamW on optimizing quadratic functions with different condition numbers.}
\label{fig:quadratic}
\end{figure}

\begin{corollary}
\label{coro:scale_free_quadratic}
For quadratic problems $F(\bx) = \tfrac12 \bx^\top H \bx + \bb^\top \bx +c $, with $H$ diagonal and positive definite, any scale-free algorithm will not differentiate between minimizing $f$ and $\tilde{F}(\bx) = \frac12 \bx^\top \bx + (H^{-1} \bb)^\top \bx +c$. As the condition number of $\tilde{F}$ is 1, the operation, and most importantly, the convergence, of a scale-free algorithm will not be affected by the condition number of $F$ at all.
\end{corollary}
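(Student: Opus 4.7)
The plan is to reduce the corollary to a direct application of Theorem~\ref{thm:scalefree} by exhibiting the correct diagonal matrix $\Lambda$. First, I would locate the unique minimizer $\bx^* = -H^{-1}\bb$ of $F$ (which is well-defined since $H$ is positive definite, hence invertible) and verify that $\nabla F(\bx^*) = H\bx^* + \bb = \boldsymbol{0}$, so the hypothesis of Theorem~\ref{thm:scalefree} on the existence of a stationary point is met.

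Next, I would take $\Lambda = H^{-1}$, which is diagonal and positive semidefinite because $H$ is diagonal and positive definite. Two quick checks remain: that $\tilde{F}$ fits the template $\nabla^2 \tilde{F}(\bx) = \Lambda \nabla^2 F(\bx)$ with $\nabla \tilde{F}(\bx^*) = \boldsymbol{0}$. Computing directly, $\nabla^2 \tilde{F}(\bx) = I$ and $\nabla^2 F(\bx) = H$, so indeed $\Lambda \nabla^2 F(\bx) = H^{-1} H = I$; and $\nabla \tilde{F}(\bx^*) = \bx^* + H^{-1}\bb = -H^{-1}\bb + H^{-1}\bb = \boldsymbol{0}$. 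Thus $\tilde{F}$ lies in the family $\tilde{F}_\Lambda$ of Theorem~\ref{thm:scalefree}.

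Invoking Theorem~\ref{thm:scalefree} then yields that any scale-free algorithm produces identical iterates when run on $F$ or on $\tilde{F}$ (assuming the same stochastic noise), and that any condition-number dependence collapses to the condition number of $\tilde{F}$. Since $\nabla^2 \tilde{F}(\bx) = I$, this condition number equals $1$, giving the stated conclusion.

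The only subtle point — and the one deserving emphasis rather than being a technical obstacle — is why the diagonality hypothesis on $H$ is essential: Theorem~\ref{thm:scalefree} only provides equivalence under diagonal rescalings, since scale-freeness is a per-coordinate invariance. Diagonality of $H$ is precisely what makes $\Lambda = H^{-1}$ diagonal as well, so that the coordinate-wise rescaling available for free to a scale-free algorithm exactly matches the optimal preconditioner. For general positive-definite $H$, no such reduction is available without rotation invariance, which is why the corollary is stated in the diagonal case.
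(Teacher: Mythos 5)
Your proof is correct and is exactly the argument the paper intends: the corollary is an immediate instance of Theorem~\ref{thm:scalefree} with $\Lambda = H^{-1}$, and your verification of the hypotheses ($\nabla F(\bx^*)=\boldsymbol{0}$ at $\bx^*=-H^{-1}\bb$, $\nabla^2\tilde{F}=\Lambda\nabla^2 F = I$, $\nabla\tilde{F}(\bx^*)=\boldsymbol{0}$) is precisely what is needed. Your closing remark on why diagonality of $H$ is essential is a correct and worthwhile observation, consistent with the paper's restriction to diagonal preconditioners.
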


Figure~\ref{fig:quadratic} illustrates Corollary~\ref{coro:scale_free_quadratic}: we compare GD (non-scale-free) with AdamW (scale-free) on optimizing two quadratic functions with the same minimizer, but one's Hessian matrix is a rescaled version of the other's, resulting in different condition numbers. The figure clearly shows that, even after tuning the step sizes, the updates of AdamW (starting from the same point) and thus its convergence to the minimizer, are completely unaffected by the condition number, while GD's updates change drastically and its performance deteriorates significantly when the condition number is large. It is not hard to imagine that such poor training performance would likely also lead to poor testing performance.

This can also explain AdaGrad's improvements over SGD in certain scenarios. Note that the folklore justification for such improvements is that the step size of AdaGrad approximates the inverse of the Hessian matrix, but this is incorrect: AdaGrad does not compute Hessians and there is no reason to believe it approximates them in general. As an additional example, we analyze below a variant of AdaGrad with restarts (Algorithm~\ref{algo:restart_AdaGrad}) and show its convergence rate guarantee on strongly convex functions (ones satisfying~\eqref{eq:strongly_convex}) which exhibits a dependency on the condition number $\kappa$ and thus can benefit from its scale-freeness.

\begin{algorithm}[t]
\caption{AdaGrad~\citep{DuchiHS10, McMahanS10} \emph{(All operations on vectors are element-wise.)}}
\label{algo:AdaGrad}
\begin{algorithmic}
\STATE \textbf{Input}: Number of iterations $T$, a set $\mathcal{K}$, $\bx_1\in \mathcal{K}$, step size $\eta$
\FOR{$t=1 \ldots T$ }
\STATE {Receive:} $\nabla F(\bx_t)$
\STATE {Set:} $\boldeta_t = \frac{\eta}{\sqrt{\sum^t_{i=1}(\nabla F(\bx_i))^2}}$
\STATE {Update:}
$\bx_{t+1}= \Pi_{\mathcal{K}}\left(\bx_{t}-\boldeta_t\nabla F(\bx_t)\right)$ where $\Pi_{\mathcal{K}}$ is the projection onto $\mathcal{K}$
\ENDFOR
\STATE {Output:} $\bar{\bx} = \frac1T\sum^T_{t=1}\bx_t$
\end{algorithmic}
\end{algorithm}

\begin{algorithm}[t]
\caption{AdaGrad with Restart}
\label{algo:restart_AdaGrad}
\begin{algorithmic}
\STATE \textbf{Input}: {{Number of rounds $N$, $\bx_0\in\R^d$, upper bound on $\|\bx_0 - \bx^*\|_{\infty}$ as $D_{\infty}$, strong convexity $\mu$, smoothness $L$}}
\STATE {Set}: $\bar{\bx}_0 = \bx_0$
\FOR{$i=1 \ldots N$ }
\STATE {{{Run Algorithm~\ref{algo:AdaGrad} to get $\bar{\bx}_{i}$ with $T=32d\frac{L}{\mu}$, $\bx_1 = \bar{\bx}_{i-1}$, $\mathcal{K} = \{\bx: \|\bx - \bar{\bx}_{i-1}\|_{\infty}^2\le \frac{D_{\infty}^2}{4^{i-1}}\}$, $\eta = \frac{D_{\infty}/\sqrt{2}}{2^{i-1}}$}}}
\ENDFOR
\STATE {Output:} $\bar{\bx}_{N}$
\end{algorithmic}
\end{algorithm}

\begin{thm}\label{thm:AdaGrad}
Let $\mathcal{K}$ be a hypercube with $\|\bx - \by\|_{\infty} \le D_{\infty}$ for any $\bx, \by\in\mathcal{K}$. For a convex function $F$, set $\eta=\frac{D_{\infty}}{\sqrt{2}}$, then Algorithm~\ref{algo:AdaGrad} guarantees for any $\bx\in\mathcal{K}$:
{{\begin{equation}
\sum_{t=1}^T F(\bx_t) - F(\bx) \le \sqrt{2dD_{\infty}^2\sum_{t=1}^T \|\nabla F(\bx_t)\|^2}~.
\label{eq:adagrad}
\end{equation}}}
\end{thm}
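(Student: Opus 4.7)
The plan is to follow a per-coordinate regret analysis of AdaGrad, exploiting the fact that the hypercube $\mathcal{K}$ makes the projection $\Pi_{\mathcal{K}}$ decompose coordinate-wise, which matches perfectly with the coordinate-wise step size $\boldeta_t$. The first step is to apply convexity to bound $F(\bx_t)-F(\bx)\le\langle\nabla F(\bx_t),\bx_t-\bx\rangle$, and then split the inner product into a sum over coordinates $j\in[d]$. Each coordinate can then be handled independently.

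For a fixed coordinate $j$, expanding the squared distance after one projected update and rearranging yields the familiar single-step inequality
\[
(\nabla F(\bx_t))_j((\bx_t)_j-x_j)
\le \tfrac{1}{2\eta_{t,j}}\bigl[((\bx_t)_j-x_j)^2-((\bx_{t+1})_j-x_j)^2\bigr]+\tfrac{\eta_{t,j}}{2}(\nabla F(\bx_t))_j^2.
\]
Summing over $t$, the bracketed term telescopes non-uniformly: I collect the coefficients to get $\sum_{t\ge2}\bigl(\tfrac{1}{2\eta_{t,j}}-\tfrac{1}{2\eta_{t-1,j}}\bigr)((\bx_t)_j-x_j)^2+\tfrac{1}{2\eta_{1,j}}((\bx_1)_j-x_j)^2$, and upper-bound each squared difference by $D_\infty^2$ (valid because $\bx_t,\bx\in\mathcal{K}$). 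Since $1/\eta_{t,j}$ is non-decreasing in $t$ by construction, the telescoping sum collapses to $D_\infty^2/(2\eta_{T,j})$. For the gradient-squared term I invoke the standard AdaGrad integration lemma $\sum_{t=1}^T a_t^2\big/\sqrt{\sum_{i\le t}a_i^2}\le 2\sqrt{\sum_{t=1}^T a_t^2}$.

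Substituting $\eta_{T,j}=\eta\big/\sqrt{\sum_{t}(\nabla F(\bx_t))_j^2}$ and the choice $\eta=D_\infty/\sqrt{2}$, the two contributions balance to give, for every coordinate $j$,
\[
\sum_{t=1}^T (\nabla F(\bx_t))_j\bigl((\bx_t)_j-x_j\bigr)\le \sqrt{2}\,D_\infty\sqrt{\textstyle\sum_{t=1}^T(\nabla F(\bx_t))_j^2}.
\]
Summing over $j$ and applying Cauchy--Schwarz in the form $\sum_{j=1}^d\sqrt{s_j}\le\sqrt{d\sum_{j=1}^d s_j}$ with $s_j=\sum_t(\nabla F(\bx_t))_j^2$ produces the factor $\sqrt{d}$ and merges the per-coordinate gradient sums into $\sum_t\|\nabla F(\bx_t)\|^2$, yielding exactly \eqref{eq:adagrad}.

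The main bookkeeping hurdle is the non-uniform telescoping in the first term: since $\eta_{t,j}$ changes with $t$, one cannot directly telescope $((\bx_t)_j-x_j)^2-((\bx_{t+1})_j-x_j)^2$, and the argument relies crucially on $\mathcal{K}$ being an $\ell_\infty$-ball so that each squared distance is uniformly bounded by $D_\infty^2$ coordinate-wise. Beyond that, everything is standard, and the balancing of the two terms is arranged exactly by the choice $\eta=D_\infty/\sqrt{2}$.
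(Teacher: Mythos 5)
Your proposal is correct and follows essentially the same route as the paper's proof: convexity plus a per-coordinate expansion, the coordinate-wise projection lemma for the hypercube, the non-uniform telescoping bounded by $D_{\infty}^2$ (the paper cites Lemma 5 of \citet{McMahanS10} for the gradient-squared term, which is your integration lemma), and concavity of $\sqrt{\cdot}$ to merge coordinates. The balancing via $\eta=D_{\infty}/\sqrt{2}$ is also identical, so there is nothing to add.
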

\begin{proof}[Proof of Theorem~\ref{thm:AdaGrad}]
{\allowdisplaybreaks
\begin{align}
&\sum^T_{t=1}F(\bx_t) - F(\bx)\\
&\le
\sum^T_{t=1}\langle \nabla F(\bx_t), \bx_t - \bx \rangle\\
&=
\sum^T_{t=1}\sum^d_{j=1}\frac{\partial F}{\partial x_{j}}(\bx_t)\times(x_{t,j} - x_j)\\
&=
\sum^T_{t=1}\sum^d_{j=1}\frac{(x_{t, j} - x_{j})^2 - \left(x_{t, j} - \eta_{t, j}\frac{\partial F}{\partial x_{j}}(\bx_t) - x_{j}\right)^2}{2\eta_{t,j}}
+
\sum^T_{t=1}\sum^d_{j=1}\frac{\eta_{t,j}}{2}\left(\frac{\partial F}{\partial x_{j}}(\bx_t)\right)^2\\
&\le
\sum^T_{t=1}\sum^d_{j=1}\frac{(x_{t, j} - x_{j})^2 - (x_{t+1, j} - x_{j})^2}{2\eta_{t,j}}
+
\sum^T_{t=1}\sum^d_{j=1}\frac{\eta_{t,j}}{2}\left(\frac{\partial F}{\partial x_{j}}(\bx_t)\right)^2\\
&\le
\sum^d_{j=1}\sum^T_{t=1}\frac{(x_{t, j} - x_j)^2}{2}\left(\frac{1}{\eta_{t,j}} - \frac{1}{\eta_{t-1, j}}\right)
+
\sum^d_{j=1}\sum^T_{t=1}\frac{\eta_{t,j}}{2}\left(\frac{\partial F}{\partial x_{j}}(\bx_t)\right)^2\\
&\le
\frac{D_{\infty}^2}{2\eta}\sum^d_{j=1}\sum^T_{t=1}\left(\sqrt{\sum^t_{i=1}\left(\frac{\partial F}{\partial x_{j}}(\bx_i)\right)^2} - \sqrt{\sum^{t-1}_{i=1}\left(\frac{\partial F}{\partial x_{j}}(\bx_i)\right)^2}\right)\\
&\quad+
\sum^d_{j=1}\sum^T_{t=1}\frac{\eta}{2\sqrt{\sum^t_{i=1}\left(\frac{\partial F}{\partial x_{j}}(\bx_i)\right)^2}}\left(\frac{\partial F}{\partial x_{j}}(\bx_t)\right)^2\\
&\le
\sum^d_{j=1}\left(\frac{D_{\infty}^2}{2\eta}\sqrt{\sum^T_{t=1}\left(\frac{\partial F}{\partial x_{j}}(\bx_t)\right)^2}
+
\eta\sqrt{\sum^T_{t=1}\left(\frac{\partial F}{\partial x_{j}}(\bx_t)\right)^2}\right)\\
&=
\sum^d_{j=1}\sqrt{2D_{\infty}^2\sum^T_{t=1}\left(\frac{\partial F}{\partial x_{j}}(\bx_t)\right)^2}\\
&\le
\sqrt{2dD_{\infty}^2\sum^T_{t=1}\sum^d_{j=1}\left(\frac{\partial F}{\partial x_{j}}(\bx_t)\right)^2}\\
&=
\sqrt{2dD_{\infty}^2\sum^T_{t=1}\|\nabla F(\bx_t))\|^2},
\end{align}}
where the first inequality is by convexity, the second one by the projection lemma as the projection onto a hypercube equals performing the projection independently for each coordinate, the fifth one by Lemma 5 in~\citep{McMahanS10}, and the last one by the concavity of $\sqrt{\cdot}$.
\end{proof}

\begin{thm}
\label{thm:restart_adagrad}
For a $\mu$ strongly convex and $L$ smooth function $F$, denote its unique minimizer as $\bx^*\in\R^d$. Given $\bx_0\in\R^d$, assume that $\|\bx_0 - \bx^*\|_{\infty} \le D_{\infty}$, then Algorithm~\ref{algo:restart_AdaGrad} guarantees
\begin{equation}
\|\bar{\bx}_{N} - \bx^*\|_{\infty}^2 \le \frac{D_{\infty}^2}{4^N}~.
\end{equation}
Thus, to get a $\bx$ s.t.~$\|\bx - \bx^*\|_{\infty}^2\le\epsilon$, we need at most $32d\frac{L}{\mu}\log_4\left({D_{\infty}^2}/{\epsilon}\right)$ gradient calls.
\end{thm}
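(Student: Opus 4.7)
The plan is to proceed by induction on $i$, showing $\|\bar{\bx}_i - \bx^*\|_\infty^2 \le D_\infty^2/4^i$ for $i = 0, 1, \ldots, N$. The base case $i=0$ is precisely the hypothesis $\|\bx_0 - \bx^*\|_\infty \le D_\infty$. For the inductive step, the crucial observation is that the inductive hypothesis places $\bx^*$ inside the hypercube $\mathcal{K}_i = \{\bx : \|\bx - \bar{\bx}_{i-1}\|_\infty^2 \le D_\infty^2/4^{i-1}\}$, so that $\bx^*$ is an admissible comparator for the AdaGrad regret guarantee of Theorem~\ref{thm:AdaGrad} applied on the $i$-th outer round.

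Next, I would apply Theorem~\ref{thm:AdaGrad} to $\mathcal{K}_i$ with the step size $\eta$ specified in Algorithm~\ref{algo:restart_AdaGrad}, yielding a bound of the form $\sum_{t=1}^T (F(\bx_t) - F(\bx^*)) \le c\sqrt{d D_\infty^2/4^{i-1} \cdot \sum_{t=1}^T \|\nabla F(\bx_t)\|^2}$ for an absolute constant $c$. To recycle the gradient-norm sum into something usable, I would invoke the standard consequence of $L$-smoothness together with $\nabla F(\bx^*) = \boldsymbol{0}$, namely $\|\nabla F(\bx_t)\|^2 \le 2L (F(\bx_t) - F(\bx^*))$. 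Substituting this in and abbreviating $S = \sum_{t=1}^T (F(\bx_t) - F(\bx^*))$ produces a self-bounding inequality of the form $S \le c'\sqrt{dLD_\infty^2 S/4^{i-1}}$, which solves to $S \le c'' dL D_\infty^2/4^{i-1}$.

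With $S$ in hand, Jensen's inequality applied to the convex $F$ delivers $F(\bar{\bx}_i) - F(\bx^*) \le S/T$. I would then use $\mu$-strong convexity along with $\|\cdot\|_2^2 \ge \|\cdot\|_\infty^2$ to convert this function-value gap into an $\ell_\infty$ distance via $\|\bar{\bx}_i - \bx^*\|_\infty^2 \le (2/\mu)(F(\bar{\bx}_i) - F(\bx^*))$. Stitching the estimates together gives a bound of the form $\|\bar{\bx}_i - \bx^*\|_\infty^2 \le (2c''/\mu)\cdot dL D_\infty^2/(4^{i-1}T)$, and the choice $T = 32dL/\mu$ is precisely tuned to make this at most $D_\infty^2/4^i$, closing the induction.

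For the complexity claim, reaching $\|\bar{\bx}_N - \bx^*\|_\infty^2 \le \epsilon$ requires $D_\infty^2/4^N \le \epsilon$, i.e., $N \ge \log_4(D_\infty^2/\epsilon)$; multiplying by the per-round cost $T$ yields the stated total of $32\,dL/\mu \cdot \log_4(D_\infty^2/\epsilon)$ gradient calls. I expect the main technical nuisance to be bookkeeping the constants through the quadratic self-bounding inequality, since the factor $32$ in $T$ depends on absorbing the constants introduced by AdaGrad's regret bound, the smoothness inequality, and the strong-convexity-to-$\ell_\infty$ conversion into a clean halving $D_\infty^2/4^{i-1} \mapsto D_\infty^2/4^i$.
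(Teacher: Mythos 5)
Your proposal is correct and follows essentially the same route as the paper's proof: invoke Theorem~\ref{thm:AdaGrad} with comparator $\bx^*$ (admissible because the inductive hypothesis keeps $\bx^*$ inside the shrinking hypercube), use $\|\nabla F(\bx_t)\|^2 \le 2L(F(\bx_t)-F(\bx^*))$ to obtain and solve the self-bounding inequality, then convert back to $\ell_\infty$ distance via Jensen, $\mu$-strong convexity, and $\|\cdot\|_\infty \le \|\cdot\|_2$, with $T = 32dL/\mu$ tuned to halve the radius each round. The constants work out exactly as in the paper ($S \le 4LdD_{\infty_i}^2$ and $\|\bar{\bx}_i-\bx^*\|_\infty^2 \le 8LdD_{\infty_i}^2/(\mu T)$), so no further bookkeeping difficulties arise.
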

\begin{proof}[Proof of Theorem~\ref{thm:restart_adagrad}]
Consider round $i$ and assume $\mathcal{K}$ passed to Algorithm~\ref{algo:AdaGrad} is bounded w.r.t.~$\ell_{\infty}$ norm by $D_{\infty_i}$. When $F$ is $\mu$ strongly convex and $L$ smooth, let $\bx = \bx^*$, then~\eqref{eq:adagrad} becomes
{{\begin{equation}
\sum_{t=1}^T F(\bx_t) - F(\bx^*)
\le
\sqrt{2dD_{\infty_{i}}^2\sum_{t=1}^T \|\nabla F(\bx_t)\|^2}
\le
\sqrt{4LdD_{\infty_{i}}^2\sum_{t=1}^T(F(\bx_t) - F(\bx^*))},
\end{equation}}}
where the second inequality is by the $L$ smoothness of $F$. This gives
{{\begin{equation}
\sum_{t=1}^T F(\bx_t) - F(\bx^*)
\le
4LdD_{\infty_{i}}^2~.
\end{equation}}}

Let $\bar{\bx}_i = \frac1T\sum^T_{t=1}\bx_t$ we have by the $\mu$-strong-convexity that
{{\begin{equation}
\|\bar{\bx}_i - \bx^*\|_{\infty}^2
\le
\|\bar{\bx}_i - \bx^*\|^2
\le
\frac{2}{\mu}(F(\bar{\bx}) - F(\bx^*))
\le
\frac{2}{\mu}\frac1T\sum^T_{t=1}(F(\bx_t) - F(\bx^*))
\le
\frac{8LdD_{\infty_{i}}^2}{\mu T}~.
\label{eq:shrink}
\end{equation}}}

Put $T=32d\frac{L}{\mu}$ in the above inequality, we have that $\|\bar{\bx}_i - \bx^*\|_{\infty}^2 \le \frac{D_{\infty_{i}}^2}{4}$. Thus, after each round, the $\ell_{\infty}$ distance between the update $\bar{\bx}_i$ and $\bx^*$ is shrinked by half, which in turn ensures that $\bx^*$ is still inside the $\mathcal{K}$ passed to Algorithm~\ref{algo:AdaGrad} in the next round with $D_{\infty_{i+1}} = \frac{D_{\infty_{i}}}{2}$. This concludes the proof.
\end{proof}

\section{Conclusion}
In this chapter, we discussed the problem of gradient scales varying too much across layers during training deep neural networks without effective normalization techniques. After reporting Adam-$\ell_2$'s inferior performance compared with AdamW on a scenario we identified, we correlated this with the scale-freeness property AdamW enjoys while Adam-$\ell_2$ does not with theoretical and empirical evidence. We then revealed the connection between AdamW and proximal updates for understanding where the scale-freeness of AdamW comes from. Finally, we introduced a theoretical merit of scale-free algorithms, namely that they can reduce the effects of the condition number of the objective function on convergence rates in certain cases.

\cleardoublepage

\chapter{Adaptation to Unbounded Smoothness}
\label{chapter:adapt_smoothness}
\thispagestyle{myheadings}

\newcommand{\VecLOneNorm}[1]{\|\boldsymbol{#1}\|_1}
\newcommand{\VecLInftyNorm}[1]{\|\boldsymbol{#1}\|_{\infty}}

\newcommand{\BoldLZeroLOne}{\ensuremath{(\boldsymbol{L_0} ,\boldsymbol{L_1})}\xspace}

\newtheorem{customcounttheorem}{Theorem}
\newenvironment{addcustomcounttheorem}[1]
  {\renewcommand\thecustomcounttheorem{#1}\customcounttheorem}
  {\endcustomcounttheorem}

\newtheorem{customcountlemma}{Lemma}
\newenvironment{addcustomcountlemma}[1]
  {\renewcommand\thecustomcountlemma{#1}\customcountlemma}
  {\endcustomcountlemma}

[The results in this chapter appeared in~\citet{crawshaw2022robustness}.]

Traditional analyses in non-convex optimization typically rely on the smoothness assumption, namely requiring the gradients to be Lipschitz. However, recent evidence shows that this smoothness condition does not capture the properties of some deep learning objective functions, including the ones involving Recurrent Neural Networks and LSTMs. Instead, they satisfy a much more relaxed condition, with potentially unbounded smoothness. Under this relaxed assumption, it has been theoretically and empirically shown that the gradient-clipped SGD has an advantage over the vanilla one. In this paper, we show that clipping is not indispensable for Adam-type algorithms in tackling such scenarios: we theoretically prove that a generalized SignSGD algorithm can obtain similar convergence rates as SGD with clipping but does not need explicit clipping at all. This family of algorithms on one end recovers SignSGD and on the other end closely resembles the popular Adam algorithm. Our analysis underlines the critical role that momentum plays in analyzing SignSGD-type and Adam-type algorithms: it not only reduces the effects of noise, thus removing the need for large mini-batch in previous analyses of SignSGD-type algorithms, but it also substantially reduces the effects of unbounded smoothness and gradient norms. To the best of our knowledge, this work is the first one showing the benefit of Adam-type algorithms compared with non-adaptive gradient algorithms such as gradient descent in the unbounded smoothness setting. We also compare these algorithms with popular optimizers on a set of deep learning tasks, observing that we can match the performance of Adam while beating others.

The layout of this chapter is as follows: we first discuss why we study a relaxed smoothness condition and report empirical evidence showing Transformers observe such condition in Section~\ref{sec:relaxed_smooth_transformer}. Then, in Section~\ref{sec:coordinate_l0l1} we show that refining the relaxed smoothness condition to the coordinate-wise granularity is necessary. Next, we introduce a generalized SignSGD algorithm to handle this scenario in Section~\ref{sec:generalized_sgd}, with a detailed discussion on the convergence rates it guarantees and the role of momentum. The proofs are contained in Section~\ref{sec:generalized_sgd_proof}. We then report the experimental results in Section~\ref{sec:generalized_sgd_experiments}, comparing our algorithm with some popular competitors in deep learning tasks. Finally, we draw some conclusions and discuss the limitations of our work in Section~\ref{sec:generalized_sgd_conclusion}.

\section{A Relaxed Smoothness Assumption and Transformers}
\label{sec:relaxed_smooth_transformer}
In previous chapters, we consider objective functions that are smooth, namely with Lipschitz gradients. The smoothness assumption is ubiquitous in analyses of optimization algorithms in the non-convex setting and has enabled many important theoretical results. For example, as we already discussed in Section~\ref{sec:noise_harm},~\citet{GhadimiL13} showed that, for a smooth function, Stochastic Gradient Descent can converge, in expectation, to a stationary point with $\|\nabla F(\bx)\|\le\epsilon$ in the order of $\epsilon^{-4}$, which was later proven to be optimal~\citep{ArjevaniCDFSW19}.

Nevertheless, this assumption might be too restrictive. For example, simple polynomials such as $F(x) = x^4$ are not smooth unless the domain is bounded. Moreover,~\citet{ZhangHSJ20} found that the smoothness assumption is not a good characterization of the landscapes for objective functions in training deep neural networks including LSTMs~\citep{hochreiter1997long}. In those scenarios, the gradient does not vary uniformly over the loss landscape and the local gradient Lipschitz constant can cross multiple orders of magnitudes for different points. Instead, the landscapes are observed to be better captured by a relaxed version of smoothness:
\begin{assumption} A second order differentiable function $F$ is $(L_0, L_1)$-smooth if for all $\bx\in\R^d$ we have
\begin{equation}
\label{eq:global_l0l1}
\|\nabla^2 F(\bx)\| \le L_0 + L_1\|\nabla F(\bx)\|~.
\end{equation}
\end{assumption}

Note that when $L_1 = 0$, it reduces to the original smoothness assumption. Also, all univariate
polynomials (which can possibly be non-convex) like $F(x) = x^4$ are $(L_0, L_1)$-smooth.

Under this assumption,~\citet{ZhangHSJ20} proved that the well-known gradient clipping technique can ensure SGD's convergence. Later, their results were improved by~\cite{ZhangJFW20} to show that SGD with clipping can be made unaffected by the $L_1$ in~\eqref{eq:global_l0l1} and is able to recover the optimal convergence rate of SGD under the original smoothness setting.

Recently,~\citet{liu2022communication} also analyzed SGD with a local gradient clipping algorithm in the distributed learning setting under this condition and showed its linear parallel speedup property both theoretically and empirically, which means that the iteration complexity of the algorithm is reduced by a factor of $N$, the number of machines. This corroborated the motivation for studying the above assumption.

\begin{figure}[t]
    \centering
    \begin{minipage}{0.48\linewidth}
        \includegraphics[width=\linewidth]{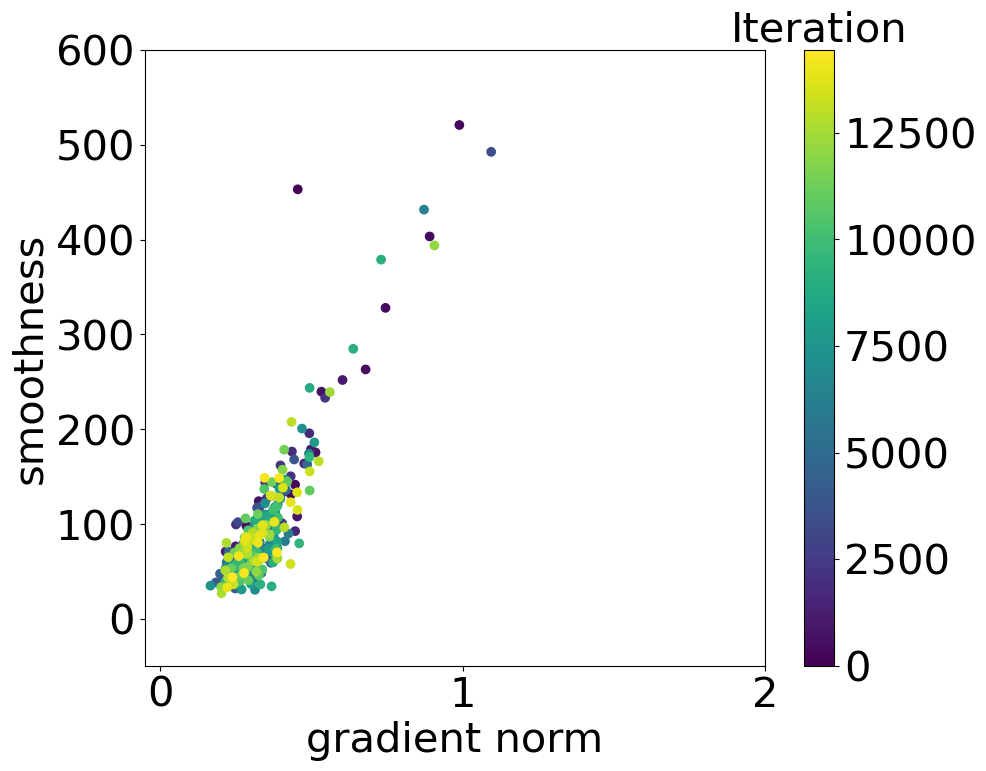}
        {\centerline{(a) Wikitext-2}}
    \end{minipage}
    \hfill
    \begin{minipage}{0.48\linewidth}
        \includegraphics[width=\linewidth]{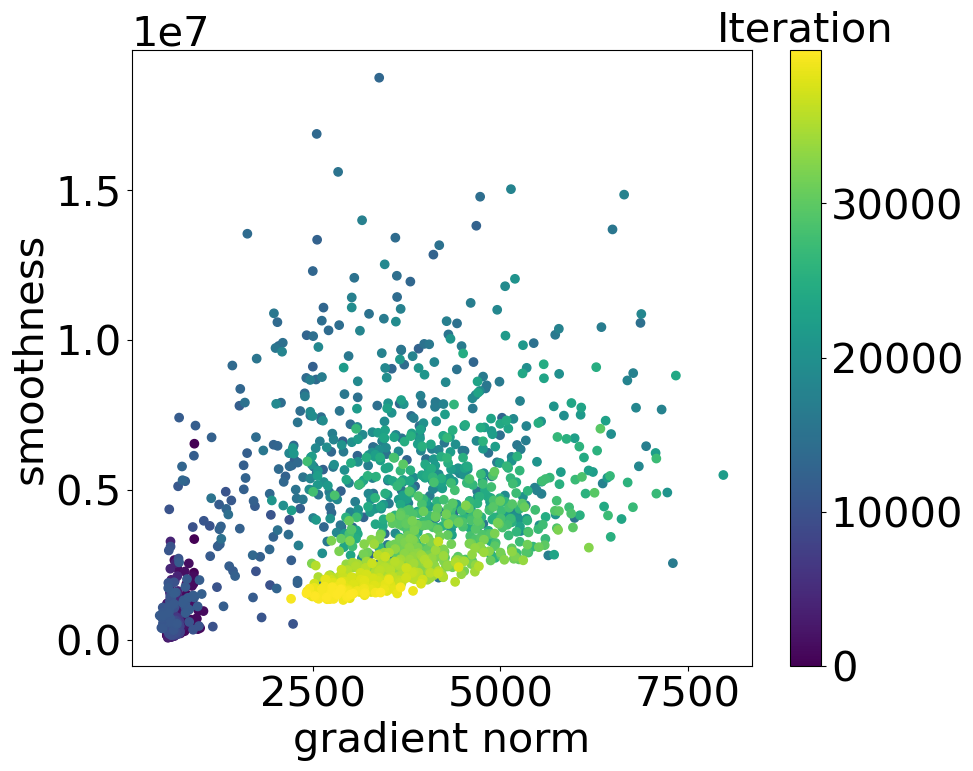}
        {\centerline{(b) WMT'16 de-en}}
    \end{minipage}
    \caption[Transformers observe the relaxed smoothness condition globally]{Local gradient Lipschitz constant vs.~gradient norm on training (a) a $2$-layer transformer encoder model on Wikitext-2 (b) a $6$-layer Transformer on WMT'16 Multimodal Machine Translation de-en dataset. The colorbar indicates the number of iterations during training.}
    \label{fig:global_l0l1}
\end{figure}

\begin{figure}[p]
     \centering
     \begin{subfigure}[b]{0.48\textwidth}
         \centering
         \includegraphics[width=\textwidth]{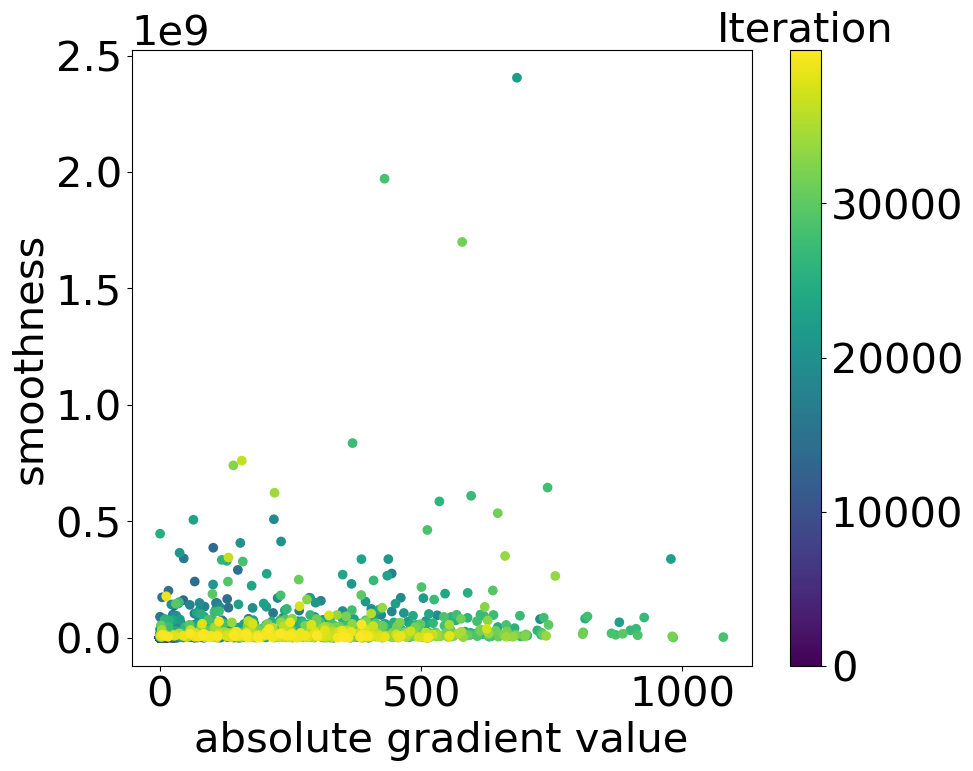}
         \caption{Encoder First Layer}
         \label{fig:enc_first}
     \end{subfigure}
     \hfill
     \begin{subfigure}[b]{0.48\textwidth}
         \centering
         \includegraphics[width=\textwidth]{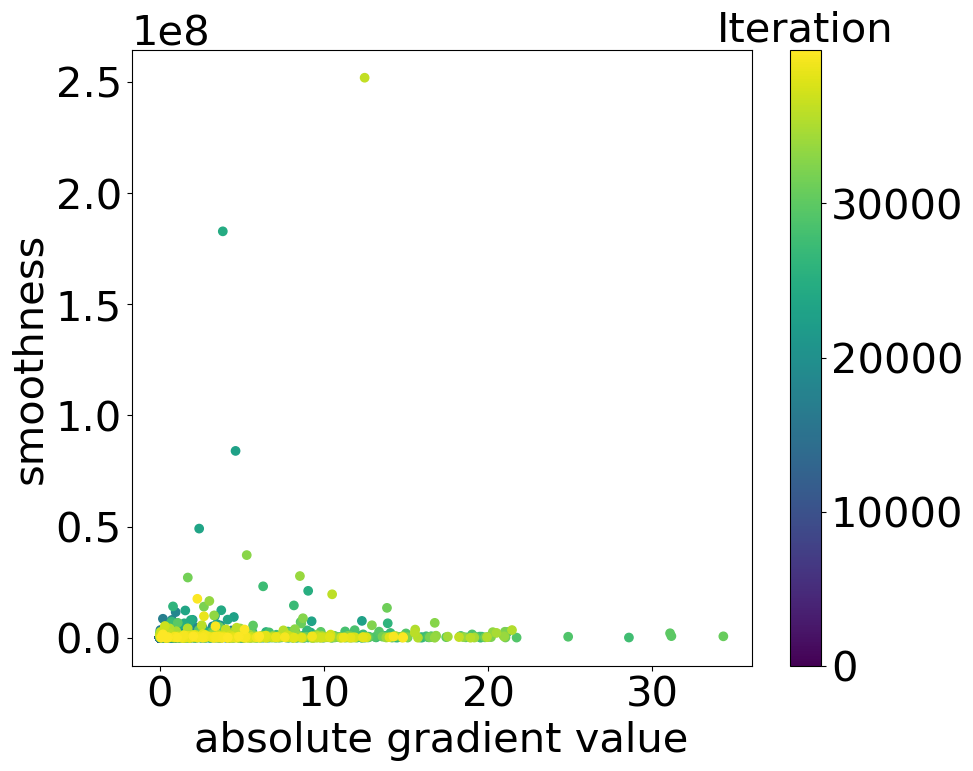}
         \caption{Encoder Last Layer}
         \label{fig:enc_last}
     \end{subfigure}

     \begin{subfigure}[b]{0.48\textwidth}
         \centering
         \includegraphics[width=\textwidth]{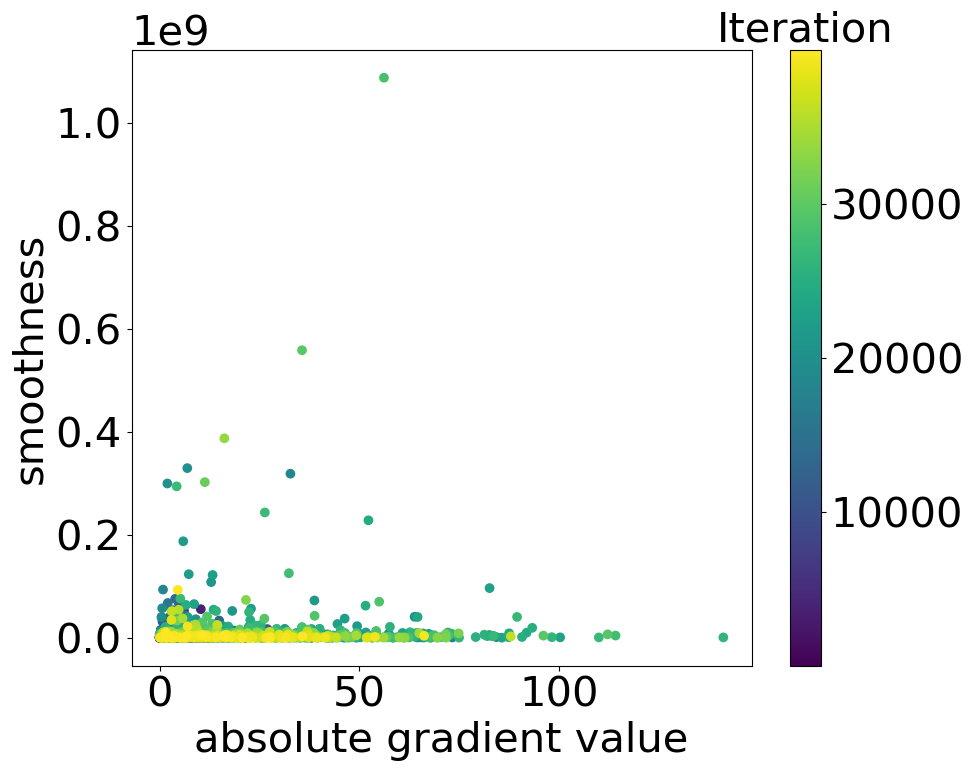}
         \caption{Decoder Second Layer}
         \label{fig:dec_second}
     \end{subfigure}
     \hfill
     \begin{subfigure}[b]{0.48\textwidth}
         \centering
         \includegraphics[width=\textwidth]{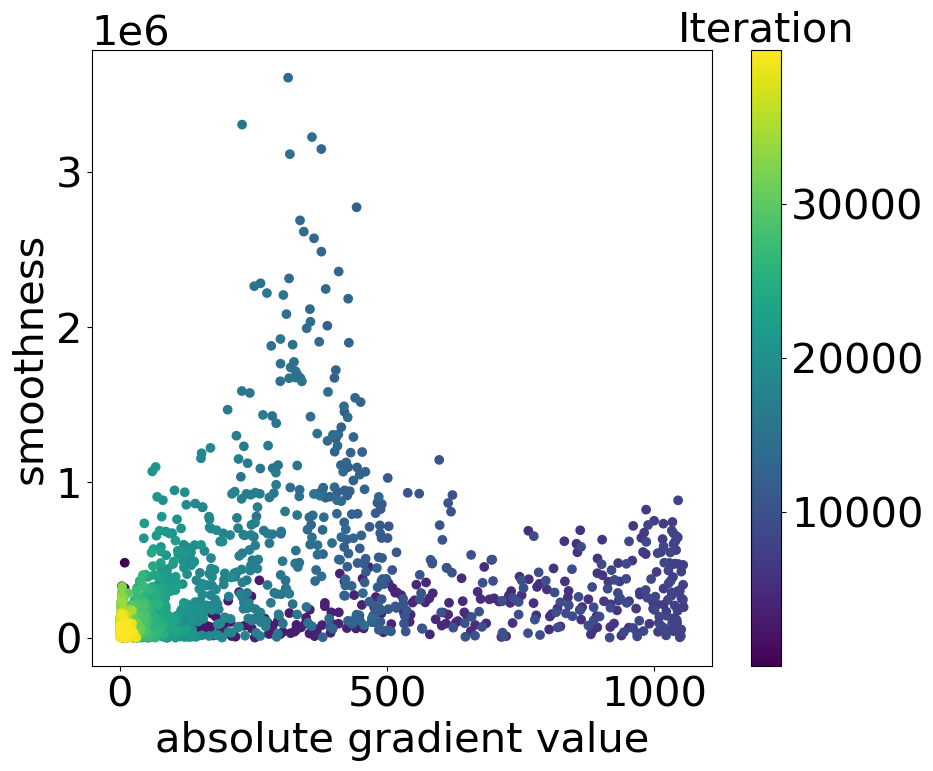}
         \caption{Decoder Last Layer}
         \label{fig:dec_last}
     \end{subfigure}
    \caption[Transformers observe the relaxed smoothness condition coordinate-wisely]{Local gradient Lipschitz constant vs.~absolute gradient value on training a $6$-layer Transformer on WMT'16 Multimodal Translation de-en dataset. Each figure represents a randomly picked coordinate in corresponding layers. The colorbar indicates the number of iterations during training.}
    \label{fig:transformer_l0l1_coordinate_wise}
\end{figure}

As observed by~\citet{ZhangHSJ20}, LSTMs empirically observe the $(L_0, L_1)$ smoothness condition~\eqref{eq:global_l0l1}. LSTMs have long been a powerhouse behind many machine learning tasks, especially the ones of Natural Language Processing (NLP)~~\citep{sundermeyer2012lstm, tai2015improved, zhou2016deep}. Yet, recently, Transformers~\citep{VaswaniSPUJGKP17} are becoming more and more popular and have been consistently achieving SOTA results in the NLP field and beyond~\citep{DevlinCLT19, DosovitskiyB0WZ21}. Motivated by this, we empirically verified that losses employing Transformers also seem to satisfy the $(L_0, L_1)$-smooth assumption~\eqref{eq:global_l0l1}, see Figure~\ref{fig:global_l0l1}.

For plotting Figure~\ref{fig:global_l0l1}, we followed the method in Section H.3 of~\citet{ZhangHSJ20}. Specifically, given $\bx_t$ and $\bx_{t+1}$, denote $\bd := \bx_{t+1} - \bx_t$. We estimate the smoothness at $\bx_t$ by
\begin{equation}
    \hat{L}_t = \max_{\gamma\in\{\delta_1, \delta_2,\ldots,\delta_N\}}\frac{\|\nabla F(\bx_{t}+\gamma\bd) - \nabla F(\bx_t)\|_2}{\|\gamma\bd\|_2},\label{eq:global_smooth_estimation}
\end{equation}
where $\{\delta_1, \delta_2,\ldots,\delta_N\}$ denotes the sample locations and we use $\{\frac16, \frac26, \frac36, \frac46, \frac56\}$.

Figure~\ref{fig:global_l0l1}(a) is on training a $2$-layer Transformer Encoder to do language modeling on the Wikitext-2 dataset. The implementation, settings, and parameter choices follow this.\footnote{\url{https://pytorch.org/tutorials/beginner/transformer_tutorial.html}} We only plot the first $5$ training epochs.
Figure~\ref{fig:global_l0l1}(b) and~\ref{fig:transformer_l0l1_coordinate_wise} are on training a $6$-layer Transformer to do machine translation on the WMT'16 Multimodal Machine Translation Task German-English dataset.
The implementation of the transformer is forked from here\footnote{\url{https://github.com/jadore801120/attention-is-all-you-need-pytorch}} and we also follow their default settings. The mini-batch size is $256$ and we trained for $400$ epochs using Adam and report the whole training trajectory.

Yet, this is not the end of the story, as we will show in the next section: Transformers actually satisfy a more fine-grained relaxed smooth condition.

\section{A Coordinate-wise Relaxed Smoothness Condition}
\label{sec:coordinate_l0l1}
The verification that Transformers observe the $(L_0, L_1)$-smooth condition in the previous section is exciting. However, after further investigation, we noticed that different coordinates, especially when they are in different layers of the model, exhibit very distinct degrees of variation of gradients as shown in Figure~\ref{fig:transformer_l0l1_coordinate_wise}, in which we compared $\frac{\left|\PartialDerivativeGeneral{\bx_{t+1}}{j} - \PartialDerivativeGeneral{\bx_t}{j}\right|}{|x_{t+1,j} - x_{t,j}|}$ vs.~$\min\left(\left|\PartialDerivativeGeneral{\bx_t}{j}\right|, \left|\PartialDerivativeGeneral{\bx_{t+1}}{j}\right|\right)$.

Consequently, we propose to refine the $(L_0, L_1)$ condition in~\eqref{eq:global_l0l1} to a coordinate-wise version to better capture the loss surface when training deep neural networks like Transformers.

\begin{assumption}
\label{asp:l0l1_coordinate}
We say that a differentiable function $F(\bx)$ is $(\boldsymbol{L_0}, \boldsymbol{L_1})$-smooth coordinate-wisely, if for any $\bx,\by \in \R^d$ for which $\PNorm{\bx - \by} \le \frac{1}{\VecLInftyNorm{L_1}}$, we have for any $j\in[d]$ that
\begin{equation}
\left|\frac{\partial F}{\partial x_j}(\by) - \frac{\partial F}{\partial x_j}(\bx) \right|
\leq
\left(\frac{L_{0,j}}{\sqrt{d}} + L_{1,j}\left|\frac{\partial F}{\partial x_j}(\bx)\right|\right)\PNorm{\by - \bx}~.\label{eq:coordinate_wise_l0l1}
\end{equation}
We will denote $\boldsymbol{L_0}:=[L_{0,1}, L_{0,2},\ldots,L_{0,d}]^T$ and $\boldsymbol{L_1}:=[L_{1,1}, L_{1,2},\ldots,L_{1,d}]^T$.
\end{assumption}

The following Lemma shows that our coordinate-wise \BoldLZeroLOne smooth assumption~\ref{asp:l0l1_coordinate} is equivalent to the original $(L_0, L_1)$ smooth assumption~\eqref{eq:global_l0l1} at least in $1$-d case.
\begin{lemma}
\label{lem:equivalence_l0l1_asps}
Let $F: \R\rightarrow\R$ be a twice continuously differentiable function. Then if (1) there exists some $K_0, K_1 \ge 0$ such that it holds for any $x, y\in\R$ with $|y - x| \le \frac{1}{K_1}$ that $|F^{\prime}(y) - F^{\prime}(x)| \le (K_0 + K_1|F^{\prime}(x)|)|y - x|$, then (2) there exists some $L_0, L_1 \ge 0$ such that it holds for any $x\in\R^d$ that $|F^{\prime\prime}(x)| \le L_0+ L_1|F^{\prime}(x)|$, and vice versa.
\end{lemma}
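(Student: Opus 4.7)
The plan is to handle the two implications separately, as they differ substantially in difficulty.

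For the direction (1) $\Rightarrow$ (2), the argument is essentially immediate from the definition of the derivative. Given any $x \in \R$, I would pick an arbitrary sequence $y_n \to x$ with $y_n \neq x$ and $|y_n - x| \le 1/K_1$ eventually, then apply the hypothesis and divide by $|y_n - x|$ to obtain
\[
\frac{|F'(y_n) - F'(x)|}{|y_n - x|} \leq K_0 + K_1 |F'(x)|~.
\]
Since $F$ is twice continuously differentiable, the left-hand side converges to $|F''(x)|$, so we may take $L_0 = K_0$ and $L_1 = K_1$.

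For the direction (2) $\Rightarrow$ (1), the plan is to use a Gronwall-type argument to upper bound $|F'(z)|$ at points $z$ between $x$ and $y$, and then integrate. Without loss of generality, assume $y \ge x$, set $h(t) = F'(x+t)$ for $t \in [0, y-x]$, and note that $|h'(t)| = |F''(x+t)| \le L_0 + L_1 |h(t)|$. Defining $u(t) = L_0 + L_1 |h(t)|$, one checks that $u$ is absolutely continuous with $u'(t) \le L_1 u(t)$ almost everywhere (using that $\frac{d}{dt}|h(t)| \le |h'(t)| \le u(t)$ wherever the derivative exists). The standard Gronwall inequality then gives $u(t) \le u(0) e^{L_1 t}$, so for $t \le 1/L_1$ we obtain $u(t) \le e \cdot u(0)$, i.e.~$L_0 + L_1|F'(x+t)| \le e(L_0 + L_1 |F'(x)|)$. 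Integrating the bound $|F''(x+t)| \le L_0 + L_1|F'(x+t)|$ from $0$ to $y-x$ then yields
\[
|F'(y) - F'(x)| \leq \int_0^{y-x} |F''(x+t)|\,dt \leq e(L_0 + L_1|F'(x)|) \cdot (y-x)~.
\]
Setting $K_0 = e L_0$ and $K_1 = e L_1$ gives the desired inequality; note that the restriction $|y-x| \le 1/K_1 = 1/(e L_1) \le 1/L_1$ is exactly what is needed to apply the Gronwall bound.

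The main obstacle I expect is the second direction, specifically making the Gronwall step fully rigorous: $|h(t)|$ is not everywhere differentiable (it can fail to be so at zeros of $h$), so one must argue via absolute continuity or a smoothing/limit argument that the differential inequality $u'(t) \le L_1 u(t)$ holds in the almost-everywhere sense needed to conclude $u(t) \le u(0) e^{L_1 t}$. The rest is routine calculus.
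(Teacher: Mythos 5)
Your proof is correct. The direction (1) $\Rightarrow$ (2) is exactly the paper's argument: take the difference quotient, apply the hypothesis, and pass to the limit using twice continuous differentiability. For (2) $\Rightarrow$ (1) the paper does not actually give an argument at all --- it simply cites Corollary A.4 of \citet{ZhangJFW20} as a special case --- whereas you supply a self-contained Gronwall-type proof. Your argument is sound: $|h|$ is locally Lipschitz (hence absolutely continuous) with $\left|\frac{d}{dt}|h(t)|\right| \le |h'(t)|$ a.e., so $u(t) = L_0 + L_1|h(t)|$ satisfies $u' \le L_1 u$ a.e.\ and Gronwall gives $u(t) \le e\,u(0)$ for $t \le 1/L_1$; integrating the resulting pointwise bound on $|F''|$ and taking $K_0 = eL_0$, $K_1 = eL_1$ makes the restriction $|y-x| \le 1/K_1 \le 1/L_1$ self-consistent. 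This is essentially the same mechanism hiding inside the cited corollary, so the mathematics agrees; what your version buys is a complete, inspectable argument (including the care about non-differentiability of $|h|$ at zeros of $h$, which you correctly flag and resolve via absolute continuity), at the cost of slightly worse constants than one might extract from a sharper bookkeeping. Either is acceptable; yours is the more rigorous presentation within the document itself.
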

\begin{proof}[Proof of Lemma~\ref{lem:equivalence_l0l1_asps}]
\textbf{(1) $\Rightarrow$ (2)} By definition, for any $x\in\R$, we know that
\begin{align}
    F^{\prime\prime}(x)
    &=
    \underset{h\rightarrow 0}{\text{lim}}\frac{F^{\prime}(x + h) - F^{\prime}(x)}{h}
    \le
    \underset{h\rightarrow 0}{\text{lim}}\frac{|F^{\prime}(x + h) - F^{\prime}(x)|}{|h|}\\
    &\le
    \underset{h\rightarrow 0}{\text{lim}}\frac{(K_0 + K_1|F^{\prime}(\bx)|)|h|}{|h|}
    =
    K_0 + K_1|F^{\prime}(x)|~.
\end{align}

\textbf{(2) $\Rightarrow$ (1)} This is a special case for $1$-d and $c=1$ of Corollary A.4 in~\citet{ZhangJFW20}.
\end{proof}

Another motivation for this assumption comes from~\cite[Remark 2.3,][]{ZhangJFW20} where they noted that~\eqref{eq:global_l0l1} can be relaxed to an assumption on gradient differences: there exists $K_0, K_1 > 0$ s.t.~$\forall \bx, \by \in \R^d$ with $\|\bx - \by\|_2 \le \frac{1}{K_1}$, we have
\begin{equation}
\label{eq:l0l1_global_gradient}
    \|\nabla F(\bx) - \nabla F(\by)\|_2
    \le
    (K_0 + K_1\|\nabla F(\bx)\|_2)\|\bx - \by\|_2.
\end{equation}

Indeed, our Assumption~\ref{asp:l0l1_coordinate} implies~\eqref{eq:l0l1_global_gradient} when $L_{0,j} = L_0$ and $L_{1,j} = L_1$ for all $j\in[d]$, up to constants (See Lemma~\ref{lem:l0l1_recover} below). Note that the $\frac{1}{\sqrt{d}}$ factor in ours is exactly for easy comparison with~\eqref{eq:l0l1_global_gradient}.
\begin{lemma}
\label{lem:l0l1_recover}
When $L_{0,j} = L_0$ and $L_{1,j} = L_1$ for all $j\in[d]$, Assumption~\ref{asp:l0l1_coordinate} implies~\eqref{eq:l0l1_global_gradient} (up to constants).
\end{lemma}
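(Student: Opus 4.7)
The plan is a direct computation exploiting the fact that the $\frac{1}{\sqrt d}$ factor in Assumption~\ref{asp:l0l1_coordinate} was explicitly introduced so that summing the squared per-coordinate bounds reconstructs the global bound up to absolute constants. First I would fix any pair $\bx,\by\in\R^d$ satisfying the domain constraint of~\eqref{eq:l0l1_global_gradient} (which, as I discuss below, will be contained in the domain where Assumption~\ref{asp:l0l1_coordinate} applies once we specify $K_1$), and then apply the coordinate-wise bound with the common constants $L_{0,j}=L_0$, $L_{1,j}=L_1$.

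The core manipulation is to square the coordinate-wise inequality and sum over $j\in[d]$. This yields
\begin{equation*}
\|\nabla F(\by)-\nabla F(\bx)\|_2^2
\le \|\by-\bx\|_2^2 \sum_{j=1}^d \left(\tfrac{L_0}{\sqrt d}+L_1\left|\tfrac{\partial F}{\partial x_j}(\bx)\right|\right)^2~.
\end{equation*}
Then I would apply the elementary inequality $(a+b)^2\le 2a^2+2b^2$ term-by-term, so that the sum separates into $\sum_j 2L_0^2/d = 2L_0^2$ (this is exactly where the $1/\sqrt d$ scaling pays off) plus $2L_1^2\sum_j (\partial_j F(\bx))^2=2L_1^2\|\nabla F(\bx)\|_2^2$. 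Taking square roots and using $\sqrt{a^2+b^2}\le a+b$ for non-negative $a,b$ yields
\begin{equation*}
\|\nabla F(\by)-\nabla F(\bx)\|_2 \le \bigl(\sqrt 2\,L_0+\sqrt 2\,L_1\|\nabla F(\bx)\|_2\bigr)\|\by-\bx\|_2,
\end{equation*}
so one may take $K_0=\sqrt 2\,L_0$ and $K_1=\sqrt 2\,L_1$.

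Finally I would verify the domain compatibility: Assumption~\ref{asp:l0l1_coordinate} requires $\|\bx-\by\|_2\le 1/\|\boldsymbol{L_1}\|_\infty = 1/L_1$, whereas~\eqref{eq:l0l1_global_gradient} requires $\|\bx-\by\|_2\le 1/K_1 = 1/(\sqrt 2\,L_1)$; since the latter is contained in the former, the derivation above is valid on the entire domain demanded by~\eqref{eq:l0l1_global_gradient}. There is no real obstacle here beyond bookkeeping of constants; the only subtlety worth flagging is that the factors of $\sqrt 2$ are unavoidable with this elementary approach, which is why the lemma is stated ``up to constants'' rather than with matching constants.
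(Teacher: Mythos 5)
Your proposal is correct and follows essentially the same route as the paper's proof: bound each coordinate difference via Assumption~\ref{asp:l0l1_coordinate}, apply $(a+b)^2\le 2a^2+2b^2$ so the $1/\sqrt{d}$ factor cancels in the sum, and use subadditivity of the square root to arrive at $K_0=\sqrt{2}L_0$, $K_1=\sqrt{2}L_1$. Your additional check that the domain $\|\bx-\by\|_2\le 1/(\sqrt{2}L_1)$ is contained in the domain $\|\bx-\by\|_2\le 1/L_1$ where the assumption applies is a small but welcome piece of bookkeeping the paper leaves implicit.
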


\begin{proof}
Suppose all $L_{0,j}, L_{1,j}$ are the same across $j$, then we have
\begin{align}
    \|\nabla F(\by) - \nabla F(\bx)\|_2
    &=
    \sqrt{\sum^d_{j=1}
    \left|\frac{\partial F}{\partial x_j}(\by) - \frac{\partial F}{\partial x_j}(\bx) \right|^2}\\
    &\leq
    \sqrt{\sum^d_{j=1}\left(\frac{L_{0,j}}{\sqrt{d}} + L_{1,j}\left|\frac{\partial F}{\partial x_j}(\bx)\right|\right)^2\times\|\by - \bx\|_2^2}\\
    &\leq
    \sqrt{\sum^d_{j=1}\left(\frac{2L_{0,j}^2 }{d}+ 2L_{1,j}^2\left|\frac{\partial F}{\partial x_j}(\bx)\right|^2\right)\times\|\by - \bx\|_2^2}\\
    &\leq
    \sqrt{2}L_0\|\by - \bx\|_2 + \sqrt{2}L_1\|\by - \bx\|_2\sqrt{\sum^d_{j=1}\left|\frac{\partial F}{\partial x_j}(\bx)\right|^2}\\
    &=
    \left(\sqrt{2}L_0 + \sqrt{2}L_1\|\nabla F(\bx)\|_2\right)\times \|\by - \bx\|_2~.\qedhere
\end{align}
\end{proof}

The original $(L_0, L_1)$ smoothness assumption~\eqref{eq:global_l0l1} in~\cite{ZhangHSJ20} was proposed as a generalization of the more common smoothness assumption, which says that the gradient should be Lipschitz. Indeed, when $L_1$ is zero, we recover the smoothness assumption. In contrast, when $L_{1,j}$ are non-zero, the smoothness of the function is potentially \emph{unbounded}.
However, \cite{ZhangHSJ20} works with norms and applies to the global scale, while ours is more fine-grained and it applies to each coordinate separately.  We also note in passing that the smoothness assumption has been generalized in orthogonal directions in other work~\citep{RichtrikT14, bernstein2018signsgd, khaled2020better}.

One merit of Assumption~\ref{asp:l0l1_coordinate} is that it gives us the following descent lemma.
\begin{lemma}
\label{lem:desent_ineq_l0l1_coordinatewise}
Let $F$ be \BoldLZeroLOne-smooth coordinate-wisely. For any $\bx, \by\in\R^d$ for which $\PNorm{\bx - \by} \le \frac{1}{\VecLInftyNorm{L_1}}$, we have
\begin{equation}
    F(\by)
    \leq
    F(\bx) + \left\langle\nabla F(\bx), \by-\bx\right\rangle + \sum^d_{j=1}\frac{ \left(\frac{L_{0,j}}{\PNormDimension}+ L_{1,j}\left|\frac{\partial F}{\partial x_j}(\bx)\right|\right)\PNorm{\by - \bx}}{2}|y_j-x_j|~.
\end{equation}
\end{lemma}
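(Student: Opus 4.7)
The plan is to follow the classical template for deriving descent inequalities from smoothness: integrate the gradient along the segment from $\bx$ to $\by$ and then bound the deviation of $\nabla F$ from $\nabla F(\bx)$ along this segment using Assumption~\ref{asp:l0l1_coordinate}. The hypothesis $\PNorm{\bx-\by}\le 1/\VecLInftyNorm{L_1}$ is precisely what we need in order to be allowed to invoke Assumption~\ref{asp:l0l1_coordinate} at every intermediate point, since the segment never leaves the ball where the coordinate-wise smoothness bound holds.

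Concretely, I will start from the identity
\begin{equation}
F(\by) - F(\bx) - \langle \nabla F(\bx), \by-\bx\rangle = \int_0^1 \langle \nabla F(\bx + t(\by-\bx)) - \nabla F(\bx), \by-\bx\rangle\, dt,
\end{equation}
which follows from the fundamental theorem of calculus applied to $t\mapsto F(\bx+t(\by-\bx))$. I then decompose the inner product coordinate-wise and apply the triangle inequality inside the integral to get
\begin{equation}
F(\by) - F(\bx) - \langle \nabla F(\bx), \by-\bx\rangle \le \sum_{j=1}^d |y_j-x_j|\int_0^1 \left|\frac{\partial F}{\partial x_j}(\bx+t(\by-\bx)) - \frac{\partial F}{\partial x_j}(\bx)\right| dt.
\end{equation}

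Next, for any $t\in[0,1]$, the point $\bx+t(\by-\bx)$ lies at distance $t\PNorm{\by-\bx}\le\PNorm{\by-\bx}\le 1/\VecLInftyNorm{L_1}$ from $\bx$, so Assumption~\ref{asp:l0l1_coordinate} applies and yields
\begin{equation}
\left|\frac{\partial F}{\partial x_j}(\bx+t(\by-\bx)) - \frac{\partial F}{\partial x_j}(\bx)\right| \le \left(\frac{L_{0,j}}{\PNormDimension} + L_{1,j}\left|\frac{\partial F}{\partial x_j}(\bx)\right|\right) t\PNorm{\by-\bx}.
\end{equation}
Substituting this into the integral and evaluating $\int_0^1 t\, dt = 1/2$ produces exactly the stated bound.

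There is no real obstacle here: all the steps are routine once the coordinate-wise form of the smoothness assumption is available, and the only subtle point is verifying that every intermediate point on the segment still satisfies the proximity condition $\PNorm{\cdot}\le 1/\VecLInftyNorm{L_1}$, which is immediate from $t\le 1$. The proof is therefore short and mirrors the standard derivation of the descent lemma for $L$-smooth functions, with the Lipschitz constant replaced by the coordinate-wise quantity $L_{0,j}/\sqrt{d}+L_{1,j}|\partial_j F(\bx)|$ coming from Assumption~\ref{asp:l0l1_coordinate}.
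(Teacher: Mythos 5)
Your proposal is correct and follows essentially the same route as the paper's proof: the fundamental theorem of calculus along the segment, a coordinate-wise triangle inequality, an application of Assumption~\ref{asp:l0l1_coordinate} at each intermediate point (valid since $t\PNorm{\by-\bx}\le 1/\VecLInftyNorm{L_1}$), and the evaluation $\int_0^1 t\,dt = 1/2$. No gaps.
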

\begin{proof}[Proof of Lemma~\ref{lem:desent_ineq_l0l1_coordinatewise}]
\begin{align}
    &F(\by)\\
    &=
    F(\bx) + \int^1_0 \langle \nabla F(\bx + u(\by - \bx)), \by - \bx\rangle du\\
    &=
    F(\bx) + \left\langle\nabla F(\bx), \by-\bx\right\rangle + \int^1_0 \langle \nabla F(\bx + u(\by - \bx)) - \nabla F(\bx), \by - \bx \rangle du\\
    &\le
    F(\bx) + \left\langle\nabla F(\bx), \by-\bx\right\rangle + \left|\int^1_0 \langle \nabla F(\bx + u(\by - \bx)) - \nabla F(\bx), \by - \bx) \rangle du\right|\\
    &\le
    F(\bx) + \left\langle\nabla F(\bx), \by-\bx\right\rangle + \int^1_0 \left|\langle\nabla F(\bx + u(\by - \bx)) - \nabla F(\bx), \by - \bx \rangle \right|du\\
    &\le
    F(\bx) + \left\langle\nabla F(\bx), \by-\bx\right\rangle + \int^1_0\sum^d_{j=1} \left|\left[\frac{\partial F}{\partial x_j}(\bx + u(\by - \bx)) - \frac{\partial F}{\partial x_j}(\bx)\right] (y_j - x_j)\right|du\\
    &\le
    F(\bx) + \left\langle\nabla F(\bx), \by-\bx\right\rangle + \int^1_0\sum^d_{j=1} \left|\frac{\partial F}{\partial x_j}(\bx + u(\by - \bx)) - \frac{\partial F}{\partial x_j}(\bx)\right| |y_j - x_j|du\\
    &=
    F(\bx) + \left\langle\nabla F(\bx), \by-\bx\right\rangle + \sum^d_{j=1}\int^1_0 \left|\frac{\partial F}{\partial x_j}(\bx + u(\by - \bx)) - \frac{\partial F}{\partial x_j}(\bx)\right| |y_j - x_j|du\\
    &\le
    F(\bx) + \left\langle\nabla F(\bx), \by-\bx\right\rangle + \sum^d_{j=1}\int^1_0 u \left(\frac{L_{0,j}}{\PNormDimension} + L_{1,j}\left|\frac{\partial F}{\partial x_j}(\bx)\right|\right)\PNorm{\by - \bx}|y_j - x_j| du\\
    & =
    F(\bx) + \left\langle\nabla F(\bx), \by-\bx\right\rangle + \sum^d_{j=1}\frac{\left(\frac{L_{0,j}}{\PNormDimension}+ L_{1,j}\left|\frac{\partial F}{\partial x_j}(\bx)\right|\right)\PNorm{\by - \bx}}{2}|y_j-x_j|~,
\end{align}
where the second inequality uses the fact that $\left|\int^b_a F(x)dx\right|\le\int^b_a|F(x)|dx$ and the final inequality is due to Assumption~\ref{asp:l0l1_coordinate}.
\end{proof}

Under this refined Assumption~\ref{asp:l0l1_coordinate}, we introduce in the next section an algorithm that can theoretically match the convergence rates of SGD with gradient clipping and empirically match the performance of Adam~\citep{KingmaB15} which is widely used in training LSTMs and Transformers.

\section{A Generalized SignSGD Algorithm}
\label{sec:generalized_sgd}
When training LSTMs and Transformers on NLP tasks, a common practice is to use the Adam optimizer~\citep{KingmaB15}. Moreover, given that these models observe the $(L_0,L_1)$ assumption, it would be natural to use some clipping procedure.

The algorithm and analysis of gradient clipping can be traced back to~\citep{alber1998projected,shor2012minimization,ermoliev1988stochastic} under the assumption that the function is convex and rapidly growing.~\citet{hazan2015beyond} considered gradient clipping in quasi-convex optimization.~\citet{mai2021stability} showed the stability and convergence of stochastic gradient clipping algorithms for convex problems without the smoothness condition. Gradient clipping is a standard technique in training deep neural networks~\citep{PascanuMB13} such as RNNs and LSTMs. The theoretical analysis of gradient clipping for nonconvex models is pioneered by~\citet{ZhangHSJ20}, in which the authors analyzed the convergence of gradient clipping under the relaxed smoothness assumption rather than the standard smoothness assumption.~\citet{ZhangJFW20} further improved the convergence rate bound under the same assumption as in~\citet{ZhangHSJ20}. Gradient clipping is also used when there is a heavy tail noise in the stochastic gradient to establish high probability convergence rates~\citep{CutkoskyM21,gorbunov2020stochastic,zhang2020adaptive}. Also,~\citet{cutkosky2020momentum} proved that normalized momentum improves normalized SGD under a second-order smoothness condition.

\begin{figure}[t]
    \centering
    \includegraphics[width=\textwidth]{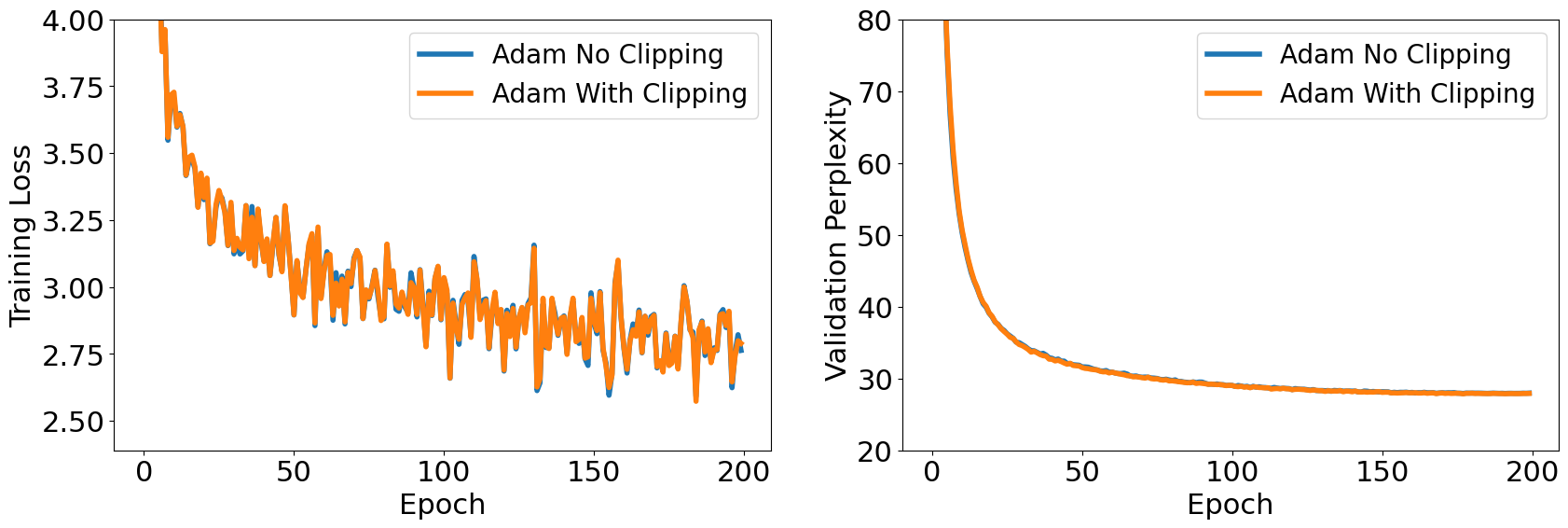}
    \caption{Training GPT-2 on Wikitext-103 using Adam with or without gradient clipping.}
    \label{fig:gpt2_adam_clipping}
\end{figure}

However, we found out that the use of clipping on Adam, while carried out in common practice~\cite[e.g.,][]{gpt2_naacl}, \emph{has no effect on the training and testing performance on optimizing a large transformer model as shown in Figure~\ref{fig:gpt2_adam_clipping}.}

Specifically, we conduct the experiment on the Wikitext-103 (103 million tokens, 180MB) \citep{wikitext103} language modeling task, with a 16-layer GPT-2 transformer model \citep{gpt2}.  This GPT-2 model has an input length of 256 tokens, 410-dimension word embedding, 16 Attention layers with 10 Attention heads and 2100 hidden dimensions. The model size is 201.58 MB. The vocabulary size is 28996. We use the hyper-parameter settings prescribed in \citep{gpt2_naacl}: batch size 256, warm up step size from 0 to $2.5 \times 10^{-4}$  in the first 64000 samples (i.e., 250 iterations) and then cosine-anneal step size to zero, on top of an Adam optimizer. It takes about 40 hours to train 200 epochs on 8 V100 GPUs. We use clipping threshold max\_norm 0.25 for the entire model as prescribed in the literature \citep{gpt2_naacl}. We also count that with this clipping scheme, clipping occurs in every single batch. As we can see from Figure~\ref{fig:gpt2_adam_clipping}, neither training loss (2.79 vs 2.76) nor perplexity score (27.92 vs 27.97) differs much in the clipping and the non-clipping cases, which suggests that Adam naturally achieves gradients clipping effect.

In retrospect, this might not be surprising: It is known that Adam has an implicit clipping behavior due to the normalization by the estimated second moment of the gradients. Indeed, Adam can be interpreted as a variant of SignSGD~\citep{balles2018dissecting}.

\begin{algorithm}[t]
    \caption{Generalized SignSGD \emph{(All operations on vectors are element-wise.)}}\label{alg:generalized_signsgd}
	\begin{algorithmic}[1]
	    \STATE Inputs: $\bx_1$, $\beta_1$, $\beta_2$, $\eta$
	    \STATE $\bm_0 = 0$, $\bv_0 = 0$
	    \FOR {$t = 1, \ldots, T$}
            \STATE Compute an unbiased estimate $\nabla f(\bx_t, \xi_t)$ of $\nabla F(\bx_{t})$, denoted as $\bg_t$
            \STATE $\bm_t = \beta_1 \bm_{t-1} + (1 - \beta_1) \bg_t$
            \STATE $\bv_t = \beta_2 \bv_{t-1} + (1 - \beta_2) \bm_t^2$
            \STATE $\bx_{t+1} = \bx_{t} - \eta \frac{\bm_t}{\sqrt{\bv_t}}$
		\ENDFOR
	\end{algorithmic}
\end{algorithm}

Inspired by this, we present in Algorithm~\ref{alg:generalized_signsgd} a generalized SignSGD algorithm. This algorithm encompasses a variety of optimization algorithms.

At first sight, it seems very similar to Adam. Indeed, if we employ $\bg_t^2$ in computing $\bv_t$ instead of $\bm_t^2$, then it is exactly Adam, except for the bias correction terms. We would like to clarify that the idea of this change has been explored before.~\citet{reddi2021adaptive} adopted this change to prove the convergence of Adam in a federated learning setting; yet, they only consider the smooth setting and require a large $\epsilon$ to obtain convergence in contrast to the original Adam. Later,~\citet{WangKQWXZF21} explored this idea in more detail, but their analyses are still restricted to the smooth setting.
The intuition is that $\bm_t$ represents a better update direction than $\bg_t$ and can thus better capture the second-moment information. Yet, in this work, the motivation for adopting this idea comes from the known effect of momentum on reducing the influence of noises~\citep{cutkosky2020momentum}. Indeed, in our analysis the difference between $\bm_t$ and $\nabla F(\bx_t)$ is much more controllable than between $\bg_t$ and $\nabla F(\bx_t)$. Thus, we consider employing $\bm_t$ in computing $\bv_t$ a better choice.

On the other end, the careful reader might observe that Algorithm~\ref{alg:generalized_signsgd} recovers the SignSGD with Momentum algorithm, also called SIGNUM in~\citet{bernstein2018signsgd}, when setting $\beta_2 = 0$. Sign-based algorithms are naturally suited to distributed learning~\citep{li2014scaling} and the idea dated back to at least Rprop~\citep{riedmiller1993direct}. The convergence to a stationary point (with $\ell_1$ norm) under a coordinate-wise smoothness condition has been established for SignSGD with/without the momentum in~\citet{bernstein2018signsgd} though they necessitate large mini-batches to control the variance of the noise. Yet, we are more interested in their property of the update size being bounded without the need for explicit gradient clipping.

Note that both SignSGD and Adam are good candidates for optimization algorithms whose updates must be bounded on functions that satisfy the \BoldLZeroLOne condition.
Indeed, SignSGD can be seen as an extreme form of gradient clipping. On the other hand, as said above, Adam does not seem to require gradient clipping at all when used to train the large Transformer model as shown in Figure~\ref{fig:gpt2_adam_clipping}. 

Consequently, we expect our algorithm, a generalization of SignSGD and a close resemblance to Adam, can enjoy the merits of both and be robust to the unbounded smoothness in the \BoldLZeroLOne scenario. This claim is formalized in Theorem~\ref{thm:generalized_signsgd} for which we require the following assumption.

\begin{assumption}
\label{asp:noise_as_bounded}
For each $j \in [d]$, there exists $\sigma_j > 0$ such
that for all $\bx \in \R^d$ and $\xi\sim\mathcal{D}$, the noise satisfies $\left|\frac{\partial f}{\partial x_j}(\bx, \xi) - \PartialDerivativeGeneral{\bx}{j}\right| \le \sigma_j$ with probability $1$. We will denote $\boldsymbol{\sigma}:=[\sigma_{1}, \sigma_{2},\ldots,\sigma_{d}]^T$.
\end{assumption}
\begin{thm}
\label{thm:generalized_signsgd}
Under Assumptions~\ref{asp:objective_func},~\ref{asp:l0l1_coordinate}, and~\ref{asp:noise_as_bounded}, assume\\ $M_j := \sup \left\{ \left|\PartialDerivativeGeneral{\bx}{j}\right| : F(\bx) \leq F(\bx_1)\right\}$ is finite for each $j\in[d]$, let $\Delta$ be any upper bound on 
$F(\bx_1) - F^*$,
$\alpha = \min\left(\frac{\sqrt{\VecLOneNorm{L_0}}\sqrt{\Delta}}{\VecLOneNorm{\sigma}\sqrt{T}}, 1\right)$, $\beta_1 = 1 - \alpha$, $\frac{\sqrt{\beta_2}}{\beta_1} < 1$,
$\rho = 1-\frac{\sqrt{\beta_2}}{\beta_1}$,\\
$\eta = \frac{\sqrt{\Delta\alpha}}{\sqrt{\VecLOneNorm{L_0}}\sqrt{T}}$,
for $T\ge\max\left(\frac{100d\Delta\VecLInftyNorm{L_1}^2}{(1-\beta_2)\rho^2\VecLOneNorm{L_0}}, \frac{10000d^2\Delta\VecLOneNorm{\sigma}^2\VecLInftyNorm{L_1}^4}{(1-\beta_2)^2\rho^4\VecLOneNorm{L_0}^3}\right)$, Algorithm~\ref{alg:generalized_signsgd} guarantees, with probability at least $1 - \delta$, that
\begin{align}
    \min_{t\in[T]} \ \|\nabla F(\bx_t)\|_1
    =
    &\mathcal{O}\left(\frac{\sqrt{\log(dT/\delta)}\VecLOneNorm{L_0}^{1/4}\Delta^{1/4}\VecLOneNorm{\sigma}^{1/2}}{\rho\sqrt{1-\beta_2}T^{1/4}} + \frac{\log(dT/\delta)\sqrt{\VecLOneNorm{L_0}\Delta}}{\rho\sqrt{T}}\right)\\
    &+ \mathcal{O}\left(\frac{\VecLOneNorm{M} + \VecLOneNorm{\sigma}}{\rho}\exp\left(-\frac{\sqrt{1-\beta_2}\VecLOneNorm{L_0}^{3/4}}{\PNormDimension\VecLInftyNorm{L_1}\VecLOneNorm{\sigma}^{1/2}\Delta^{1/4}}T^{1/4}\right)\right)\\
    &+\mathcal{O}\left(\frac{\|\nabla F(\bx_1)\|_1}{T} \right)~.
\end{align}

Furthermore, for the case when $\beta_2 = 0$, we have the following refined guarantee:
\begin{align}
\min_{t\in[T]} \ \|\nabla F(\bx_t)\|_1
=
&\mathcal{O}\left(\frac{\sqrt{\log(dT/\delta)}\VecLOneNorm{L_0}^{1/4}\Delta^{1/4}\VecLOneNorm{\sigma}^{1/2}}{T^{1/4}}
+ \frac{\log(dT/\delta)\sqrt{\VecLOneNorm{L_0}\Delta}}{\sqrt{T}}\right)\\
& + \mathcal{O}\left(\frac{\|\nabla F(\bx_1)\|_1}{\sqrt{T}}\left(\frac{1}{\sqrt{T}} + \frac{\VecLOneNorm{\sigma}}{\sqrt{\VecLOneNorm{L_0}\Delta}}\right)
+ \frac{\VecLOneNorm{\sigma}}{T}\right)~.
\end{align}
\end{thm}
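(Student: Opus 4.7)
The plan is to combine the coordinate-wise descent inequality of Lemma~\ref{lem:desent_ineq_l0l1_coordinatewise} with a high-probability control of the momentum error $\bm_t-\nabla F(\bx_t)$. The starting observation is the \emph{implicit clipping} built into Algorithm~\ref{alg:generalized_signsgd}: since $v_{t,j}\ge (1-\beta_2)m_{t,j}^{2}$, every coordinate update satisfies $|m_{t,j}|/\sqrt{v_{t,j}}\le 1/\sqrt{1-\beta_2}$, so $\PNorm{\bx_{t+1}-\bx_t}\le \eta\sqrt{d/(1-\beta_2)}$. The stated lower bound on $T$ together with the chosen $\eta$ forces this below $1/\VecLInftyNorm{L_1}$, making Lemma~\ref{lem:desent_ineq_l0l1_coordinatewise} applicable at every iterate and yielding a per-step bound of the shape $F(\bx_{t+1})-F(\bx_t)\le -\eta\sum_j\partial_j F(\bx_t)\, m_{t,j}/\sqrt{v_{t,j}}+\mathcal{O}(\eta^2)\sum_j(L_{0,j}/\PNormDimension+L_{1,j}|\partial_j F(\bx_t)|)\cdot |m_{t,j}|/(\sqrt{v_{t,j}}\,\sqrt{1-\beta_2})$.

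The central technical step is to control $\bm_t-\nabla F(\bx_t)$ with high probability. I would split it into a martingale part $(1-\beta_1)\sum_{s\le t}\beta_1^{t-s}(g_{s,j}-\partial_j F(\bx_s))$ plus a deterministic ``lag'' $(1-\beta_1)\sum_{s\le t}\beta_1^{t-s}(\partial_j F(\bx_s)-\partial_j F(\bx_t))-\beta_1^t\partial_j F(\bx_t)$. Assumption~\ref{asp:noise_as_bounded} makes the martingale increments uniformly bounded by $(1-\beta_1)\beta_1^{t-s}\sigma_j$, so per-coordinate Azuma--Hoeffding followed by a union bound over $(j,t)\in[d]\times[T]$ yields a high-probability bound of order $\sigma_j\sqrt{\alpha\log(dT/\delta)}$ (recalling $1-\beta_1=\alpha$). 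The lag is controlled through Assumption~\ref{asp:l0l1_coordinate} using the already-bounded per-step displacements, giving a contribution of order $(L_{0,j}/\PNormDimension+L_{1,j}|\partial_j F(\bx_t)|)\cdot\eta/(\alpha\sqrt{1-\beta_2})$ plus the exponentially decaying $\beta_1^t\|\nabla F(\bx_1)\|_1$ residual that ultimately produces the last term in the stated bound.

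Next, I would decompose the descent inner product as $\sum_j\partial_j F(\bx_t)m_{t,j}/\sqrt{v_{t,j}}=\sum_j m_{t,j}^{2}/\sqrt{v_{t,j}}+\sum_j(\partial_j F(\bx_t)-m_{t,j})m_{t,j}/\sqrt{v_{t,j}}$, treating the first summand as a genuine descent quantity and the second via the momentum-error bound just derived. To relate $\sum_j m_{t,j}^{2}/\sqrt{v_{t,j}}$ back to $\|\nabla F(\bx_t)\|_1$, the natural tool is the subadditivity of $\sqrt{\cdot}$, which gives $\sqrt{v_{t,j}}\le\sqrt{1-\beta_2}\sum_{s\le t}(\sqrt{\beta_2})^{t-s}|m_{s,j}|$; substituting the EMA form of $m_{s,j}$ then introduces a double geometric sum that converges exactly when $\sqrt{\beta_2}/\beta_1<1$, producing the $1/\rho$ prefactor. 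Telescoping the descent over $t\in[T]$, substituting the two error bounds, and optimizing over the prescribed $\alpha$ and $\eta$ balances the $\mathcal{O}(\eta T\VecLOneNorm{\sigma}\sqrt{\alpha}/\sqrt{1-\beta_2})$ noise contribution against the $\mathcal{O}((F(\bx_1)-F^\star)/\eta+\eta T\VecLOneNorm{L_0}/\alpha)$ descent-plus-lag contribution, giving the $T^{-1/4}$ leading order; the extra $\log(dT/\delta)/\sqrt{T}$ term is the Azuma tail. The $\beta_2=0$ refinement then follows immediately: $\sqrt{v_{t,j}}=|m_{t,j}|$ exactly, so the geometric-sum argument is unnecessary, both the $\rho$ factor and the $\sqrt{1-\beta_2}$ denominator disappear, and the exponential-in-$T^{1/4}$ initialization term is replaced by the polynomial $\|\nabla F(\bx_1)\|_1/\sqrt{T}$ piece.

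The principal obstacle is the coupling between $\bm_t$ and $\sqrt{\bv_t}$ when $\beta_2>0$: because $\bv_t$ is the EMA of $\bm_t^{2}$ and not of $\bg_t^{2}$, one cannot reduce $\sqrt{v_{t,j}}$ to a function of the raw stochastic gradients and must propagate bounds through nested EMAs. Getting the right dependence on $\beta_1$ and $\beta_2$, tracking how the somewhat unusual lower bound on $T$ is exactly what makes the geometric sums converge, and simultaneously ensuring that the implicit-clipping radius $\eta\sqrt{d/(1-\beta_2)}$ stays below $1/\VecLInftyNorm{L_1}$ under the chosen $(\alpha,\eta)$ throughout the entire trajectory, is the most delicate part of the argument.
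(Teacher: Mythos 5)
Your outline gets several ingredients right and they do appear in the paper's proof: the implicit-clipping bound $|m_{t,j}|/\sqrt{v_{t,j}}\le 1/\sqrt{1-\beta_2}$ that makes the descent lemma applicable, the split of $\bm_t-\nabla F(\bx_t)$ into a martingale part plus a smoothness-controlled lag, the origin of the $1/\rho$ factor in the geometric sum that converges iff $\sqrt{\beta_2}/\beta_1<1$, and the observation that $\beta_2=0$ collapses $\sqrt{v_{t,j}}$ to $|m_{t,j}|$ and removes both $\rho$ and the exponential term. For $\beta_2=0$ your plan is essentially the paper's argument.

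For general $\beta_2>0$, however, there is a real gap in the step where you relate $\sum_j m_{t,j}^2/\sqrt{v_{t,j}}$ back to $\|\nabla F(\bx_t)\|_1$. Expanding $\sqrt{v_{t,j}}\le\sqrt{1-\beta_2}\sum_{s\le t}\beta_2^{(t-s)/2}|m_{s,j}|$ and then substituting the EMA form of $m_{s,j}$ leaves you with a weighted sum over \emph{all} past momenta (equivalently, raw gradients), and nothing in your sketch bounds the old ones. The paper's proof closes this by three devices you do not mention: (i) it splits the sum at the window $\bar\tau=\sqrt{1-\beta_2}/(\eta\PNormDimension\VecLInftyNorm{L_1})$ inside which Assumption~\ref{asp:l0l1_coordinate} lets it compare $|m_{t-\tau,j}|$ to $|m_{t,j}|$ and $|\partial F/\partial x_j(\bx_t)|$ (Lemmas~\ref{lem:true_grad_seq_generalized_signsgd} and~\ref{lem:mom_diff_generalized_signsgd}); (ii) it bounds the pre-window tail by $(M_j+\sigma_j)\beta_2^{\bar\tau/2}$, which requires $|m_{s,j}|\le M_j+\sigma_j$ for all past $s$ and is the sole source of the $(\VecLOneNorm{M}+\VecLOneNorm{\sigma})\exp(-cT^{1/4})$ term in the statement — a term your sketch cannot produce, since you attribute the exponential residual to the $\beta_1^t\nabla F(\bx_1)$ initialization error, which in fact yields the separate $\|\nabla F(\bx_1)\|_1/T$ term; and (iii) an induction showing $F(\bx_\tau)\le F(\bx_1)$ for all $\tau\le t$, without which $M_j$ is not a valid bound on past gradients and (ii) fails. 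Finally, your additive decomposition of the inner product $\sum_j \partial_j F(\bx_t)\,m_{t,j}/\sqrt{v_{t,j}}$ does not by itself yield a per-step decrease proportional to $\|\nabla F(\bx_t)\|_1$: when $m_{t,j}$ is small the first summand vanishes while the cross term is only bounded by $|\boldepsilon_{t,j}|/\sqrt{1-\beta_2}$. The paper instead proves a coordinate-wise dichotomy (Lemma~\ref{lem:generalized_signsgd_update_lower_bound}): either $|\partial F/\partial x_j(\bx_t)|$ is already below a threshold $5B_j/D$ (and just contributes to the error), or $|m_{t,j}|/\sqrt{v_{t,j}}\ge A$ \emph{and} $\mathrm{sign}(m_{t,j})=\mathrm{sign}(\partial F/\partial x_j(\bx_t))$, which together give the $-A\eta|\partial F/\partial x_j(\bx_t)|$ descent. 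You would need to add this dichotomy, the sign-agreement step, and the induction for the argument to close.
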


Here, $M_j$ denotes the maximum absolute value of the partial derivative of $F$ for coordinate $j$ among the sub-level set of $F(\bx_1)$, namely any point $\bx$ with $F(\bx) \le F(\bx_1)$. In other words, we assume gradients to be bounded in the sub-level set of $F(\bx_1)$; yet, we do not make any restriction on gradients outside of this set. We believe this is not a strong assumption, for example, when the sub-level set of $F(\bx_1)$ is bounded, then by the assumed continuity of gradients it trivially holds. Also, we just require an upper bound and it can even be exponentially large as we have an exponentially decaying coefficient to counteract it: notice how the term $\VecLOneNorm{M}$ is multiplied by a term that decays exponentially with $T$. Better still, when $\beta_2 = 0$, we no longer even need this assumption and the algorithm is entirely free of the influence of $\VecLOneNorm{M}$. To see why this is good, we show a refined lower bound of Gradient Descent under the relaxed smoothness scenario below which is originally in~\citet{ZhangHSJ20}.

\begin{thm}
\label{thm:lower_bound_gd_fixed}
Fix $\epsilon > 0, L_0>0, L_1>0, M\geq \max(\frac{L_0}{L_1},\epsilon)$, and $x_0 \in \mathbb{R}$. Pick any constant step size $\eta$ for GD, with the knowledge of the above constants. Then, there exists a 1-d $(L_0,
L_1)$-smooth function $F$, bounded from below by $F^*$ (finite), and such that $ \sup \{ |
F'(x)| : F(x) \leq F(x_0)\} \leq M$ on which the number of iterations $T$ of GD with step size $\eta$ to guarantee $| F'(x_T)|< \epsilon$ is at least
\begin{equation}
    \frac{ M L_1 (F(x_0) - F^*-\frac{15 \epsilon^2}{16L_0}) }{2\epsilon^2\left(\ln \frac{M L_1}{L_0}+1\right) }~.
\end{equation}
\end{thm}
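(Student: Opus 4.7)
The plan is to construct, for any fixed step size $\eta$, a hard instance: a 1-d function $F$ satisfying the $(L_0, L_1)$-smoothness condition with $\sup\{|F'(x)| : F(x)\leq F(x_0)\}\leq M$, bounded below, and on which GD with the given step size needs at least the claimed number of iterations to reach an $\epsilon$-stationary point. Because the statement says "there exists a function", the construction is allowed to depend on $\eta$, which is what makes the lower bound possible. The general shape will be a piecewise function with two regimes: a steep outer region where $|F'|$ can be as large as $M$ and the local Lipschitz constant of $F'$ saturates the bound $L_0+L_1|F'|$, and a nearly-flat inner region near the minimizer where $|F'(x)|$ is comparable to $\epsilon$.

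First, I would perform a case analysis on $\eta$. If $\eta \gtrsim 1/(L_0+L_1 M)$, I place $x_0$ in the steep region where $|F'(x_0)|\approx M$ so that a single GD update jumps by roughly $\eta M$, overshooting the minimizer by so much that the iterate leaves the sub-level set $\{F\le F(x_0)\}$ and, by the steep growth chosen on the other side, keeps oscillating with $|F'|\ge\epsilon$ forever; so the "number of iterations" is infinite and the bound holds trivially. Otherwise, $\eta \lesssim 1/(L_0+L_1M)$, which will be the binding case.

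In the binding case, I would focus on the near-optimum flat region, designed (e.g.\ by taking a quadratic piece with Hessian $\approx L_0$) so that $|F'(x)|$ stays of order $\epsilon$ over an interval whose width accounts for the required total descent. Using the descent lemma (Lemma~\ref{lem:desent_ineq_l0l1_coordinatewise} with $L_1=0$ in this region) and $|F'|\approx\epsilon$, each GD step reduces $F$ by at most $\tfrac{\eta}{2}|F'(x_t)|^2\lesssim \tfrac{\eta}{2}\epsilon^2$; once the gradient first falls below $\epsilon$, the function can be no lower than $F^*+\tfrac{15\epsilon^2}{16L_0}$ by the quadratic lower envelope in that region (this is where the $\tfrac{15\epsilon^2}{16L_0}$ correction in the theorem comes from). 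Summing gives
\begin{equation*}
T\ \ge\ \frac{F(x_0)-F^*-\tfrac{15\epsilon^2}{16L_0}}{\tfrac{\eta}{2}\epsilon^2}\ \gtrsim\ \frac{(L_0+L_1M)(F(x_0)-F^*-\tfrac{15\epsilon^2}{16L_0})}{\epsilon^2/2},
\end{equation*}
which, after using $M\ge L_0/L_1$ to absorb the $L_0$ contribution into $L_1 M$, matches the main $\tfrac{ML_1}{2\epsilon^2}$ factor of the claimed bound.

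The hardest part will be producing the logarithmic factor $\ln(ML_1/L_0)+1$, since a naive piecewise construction will give a strictly better (i.e.\ larger) lower bound on $\eta$ that kills this log. My plan for that is to design the steep region so that $|F''|$ saturates $L_0+L_1|F'|$ \emph{continuously}, e.g.\ by taking $F'(x)=\tfrac{L_0}{L_1}\bigl(e^{L_1(x-x^\star)}-1\bigr)$ on an interval of length exactly $\tfrac{1}{L_1}\ln(ML_1/L_0)$, so that $|F'|$ ramps up from $0$ to $M$ over that length; this is the tightest $(L_0,L_1)$-smooth profile reaching gradient $M$, and it is what forces the effective $\eta$-ceiling to be only $1/(L_0+L_1M)$ and not some smaller quantity when the construction is required to be $(L_0,L_1)$-smooth everywhere. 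Putting this refined ceiling into the flat-region iteration count, together with smoothly stitching the two regimes (using a short transition piece verified to obey~\eqref{eq:global_l0l1}) and checking that the sub-level-set gradient bound $\sup\{|F'|:F\le F(x_0)\}\le M$ is preserved, will yield the exact $\tfrac{ML_1(F(x_0)-F^*-\tfrac{15\epsilon^2}{16L_0})}{2\epsilon^2(\ln(ML_1/L_0)+1)}$ lower bound.
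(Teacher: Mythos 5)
Your overall architecture matches the paper's: a two-case split on $\eta$, with an exponentially steep region forcing divergence for large $\eta$ and a long, nearly-flat ramp of slope $\approx\epsilon$ forcing slow progress for small $\eta$. The paper's convergent-case instance is exactly your linear ramp (gradient identically $\epsilon$ outside a small quartic cap near the minimum, which is where the $\tfrac{15\epsilon^2}{16L_0}$ correction comes from), and its iteration count is the distance-over-step-length computation equivalent to your descent accounting.

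However, there is a genuine gap in where you place the case boundary, and it propagates into a backwards account of the logarithmic factor. You split at $\eta\approx 1/(L_0+L_1 M)$ and claim divergence above that threshold. But the overshoot/oscillation argument requires $\eta\,|F'(x_0)| > 2|x_0-x^\star|$, and for any $(L_0,L_1)$-smooth function the gradient can grow at most like $\tfrac{L_0}{L_1}(e^{L_1|x-x^\star|}-1)$, so reaching $|F'(x_0)|=M$ forces $|x_0-x^\star|\gtrsim \tfrac{1}{L_1}\ln\tfrac{ML_1}{L_0}$. Hence divergence is only obtainable for $\eta > \tfrac{2}{ML_1}\left(\ln\tfrac{ML_1}{L_0}+1\right)$ — a threshold \emph{larger} than yours by exactly the log factor — and for $\eta$ in the intermediate range $\left(\tfrac{1}{L_0+L_1M},\ \tfrac{2}{ML_1}(\ln\tfrac{ML_1}{L_0}+1)\right]$ neither of your two arguments applies as written. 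The fix is to move the boundary to this larger threshold (as the paper does): the flat-ramp argument $T\ge \Delta/(\eta\epsilon)$ works for every $\eta$ below it and, plugging in that larger ceiling, produces precisely the $\ln\tfrac{ML_1}{L_0}+1$ in the denominator. In other words, the log does not come from "refining the ceiling down to $1/(L_0+L_1M)$" — substituting that smaller ceiling gives a \emph{stronger} bound with no log — it comes from the fact that the divergence threshold is larger by the log factor, so the convergent case must tolerate correspondingly larger steps. Your instinct that the exponential ramp is the source of the log is right, but the mechanism is the size of $|x_0-x^\star|$ needed to reach gradient $M$, not a tightening of the admissible step size.
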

\begin{proof}[Proof of Theorem~\ref{thm:lower_bound_gd_fixed}]
By Lemma~\ref{lem:equivalence_l0l1_asps}, we know that, in 1-d case, our coordinate-wise \BoldLZeroLOne assumption~\ref{asp:l0l1_coordinate} is equivalent as the original one~\eqref{eq:global_l0l1}. Thus, without loss of generality, we use the original condition~\eqref{eq:global_l0l1} in the proof. We will construct two different $(L_0,L_1)$-smooth functions based on the value of $\eta$.

\textbf{Case $\eta > \frac{2}{M L_1}\left(\ln \frac{M L_1}{L_0}+1\right)$.}
In this case, we can construct a function on which GD does not converge, hence the lower bound is trivially true.
Consider the function
\begin{equation*}
    F(x) = \begin{cases}
        L_0\frac{e^{-L_1 x-1}}{L_1^2} & x < -\frac{1}{L_1} \\
        \\
        L_0 \frac{x^2}{2} + \frac{L_0}{2 L_1^2} & x \in [-\frac{1}{L_1}, \frac{1}{L_1}] \\
        \\
        L_0\frac{e^{L_1 x-1}}{L_1^2} & x > \frac{1}{L_1}
    \end{cases}
\end{equation*}
Note that $F$ is $(L_0,L_1)$-smooth.
Without loss of generality, we can assume $x_0=\frac{1}{L_1}\left(\ln \frac{M L_1}{L_0}+1\right)$, in fact if this is not the case we can translate the function $F$ accordingly. This setting of $x_0$ guarantees that the bound on the gradient is correct.
Moreover, with this choice, we claim the function will diverge. 
To see this, we use mathematical induction to show that $|x_{t+1}| > |x_{t}|$ and $\text{sign}(x_{t+1})\ne \text{sign}(x_{t})$ for any $t\ge0$. First, for the case when $t=0$, we have
\begin{equation}
     x_{1}
     = x_{0} - \eta F^{\prime}(x_0)
     = x_0 - \frac{\eta L_0}{L_1}e^{L_1x_0 - 1}
     = x_0 - \eta M
     < x_0 - 2x_0
     = -x_0~.
\end{equation}
Then suppose the condition holds up until $t$ and we prove for $t+1$. From the formula of $F$, we have that $\text{sign}(F^{\prime}(x)) = \text{sign}(x)$ and that $F$ is monotonically increasing with $|x|$. Thus, from the update of gradient descent which moves along the negative direction of the gradient, if we can show that $|x_{t+1}| > |x_{t}|$, then $\text{sign}(x_{t+1})\ne \text{sign}(x_{t})$. This leads to
\[
|x_{t+1}| = |x_{t} - \eta F^{\prime}(x_t)| > |x_t|
\Leftarrow
\eta |F^{\prime}(x_t)| > 2 |x_t|
\Leftarrow
\eta L_0 > \frac{2|x_t| L_1}{\exp(L_1 |x_t|-1)}~.
\]
Now, note that $\psi(x)=\frac{2|x| L_1}{\exp(L_1 |x|-1)}$ is decreasing for $x>\frac{1}{L_1}$ and increasing for $x<-\frac{1}{L_1}$. Hence, we have that
\[
\eta L_0 
> \frac{2|x_0| L_1}{\exp(L_1 |x_0|-1)}
> \frac{2|x_t| L_1}{\exp(L_1 |x_t|-1)},
\]
where the first inequality is true by the choice of $x_0>\frac{1}{L_1}$ and the condition on $\eta$ and the second one is true by the induction hypothesis.

%

\textbf{Case $\eta\leq \frac{2}{M L_1}\left(\ln \frac{M L_1}{L_0}+1\right)$.}

Now, consider
\begin{equation*}
    F(x) = \begin{cases}
        -\epsilon x , & x < -\frac{3\epsilon}{2L_0} \\
        \frac{L_0}{2} x^2-\frac{L_0^3 x^4}{27 \epsilon^2} +\frac{9\epsilon^2}{16L_0}, & x \in [-\frac{3\epsilon}{2L_0}, \frac{3\epsilon}{2L_0}] \\
        \epsilon x, & x > \frac{3\epsilon}{2L_0}
    \end{cases}
\end{equation*}
We have that $F$ is $(L_0, 0)$-smooth, hence also $(L_0, L_1)$-smooth. Note that the presence of the fourth power makes this function twice differentiable. Moreover, the maximum gradient in this case is $\epsilon\leq M$.

As before, without loss of generality, let the initial point $x_0 = \frac{3\epsilon}{2L_0}+\Delta$, where $\Delta>0$.
We have that $F(x_0)-F^* = \epsilon\left(\Delta + \frac{3\epsilon}{2L_0}\right)-\frac{9\epsilon^2}{16 L_0}$, hence $\Delta=\frac{1}{\epsilon}\left(F(x_0)-F^*\right)-\frac{15\epsilon}{16 L_0}$.
Now, while we stay on the last branch of the function, we have
\[
x_{t+1} 
= x_t - \eta \epsilon
\geq x_t - \epsilon \frac{2}{M L_1}\left(\ln \frac{M L_1}{L_0}+1\right)~.
\]
Hence, we have that, for 
\[
t
\leq \frac{M L_1 \Delta}{2 \epsilon \left(\ln \frac{M L_1}{L_0}+1\right)}
=\frac{M L_1 \left(F(x_0)-F^*-\frac{15 \epsilon^2}{16 L_0}\right)}{2\epsilon^2 \left(\ln \frac{M L_1}{L_0}+1\right)},
\]
we guarantee $|F^{\prime}(x_t)| = \epsilon$.
\end{proof}

On a side note, Theorem~\ref{thm:lower_bound_gd_fixed} is a fixed version of the lower bound in \cite{ZhangHSJ20}. First of all, they have a logarithm of a quantity with units, $M$, which is an undefined mathematical operation. A closer look at the proof reveals that, differently from the statement of their theorem, they construct a function with $L_0=L_1$, which explains why these terms are missing in the logarithm. Moreover, it is also unclear if the second constructed function satisfies the assumptions of the theorem.
We correct all these issues by properly scaling the constructed functions so that they always satisfy the $(L_0, L_1)$ condition and all the units are coherent. This result in the correct term inside the logarithm and the right conditions on $L_0$, $L_1$, $M$, and $\epsilon$.

Theorem~\ref{thm:lower_bound_gd_fixed} shows that in the relaxed smoothness setting, GD with any constant step size will suffer from a linear term depending on $L_1 M$. Compared with GD, our algorithm only has an exponentially decaying dependence on $L_1 M$. We consider this to be a substantial merit of our algorithm. Furthermore, when $\beta_2 = 0$ in which case we recover the SignSGD with Momentum algorithm, we can even show that it completely removes the effects of the unbounded gradient norms. Also notice that in such case we actually no longer need the assumption of $M_j := \sup \left\{ \left|\PartialDerivativeGeneral{\bx}{j}\right| : F(\bx) \leq F(\bx_1)\right\}$ being finite for each $j\in[d]$ anymore, and the $\VecLInftyNorm{L_1}$ term does not appear in the final bound anymore.

We also would like to point out that this bound closely resembles the one achieved by SGD with gradient clipping algorithm~\citep{ZhangJFW20} except that we consider the coordinate-wise setting: take the setting of $\beta_2 = 0$ for example, we need at most $\mathcal{O}\left(\Delta\max\left\{\frac{
\VecLOneNorm{\sigma}^2\VecLOneNorm{L_0}}{\epsilon^4},
\frac{d^2\VecLOneNorm{\sigma}^2\VecLInftyNorm{L_1}^4}{\VecLOneNorm{L_0}^3},
\frac{d\VecLInftyNorm{L_1}^2}{\VecLOneNorm{L_0}}\right\}\right)$ to get a point $\bx$ with $\|\nabla F(\bx)\|_1 \le \epsilon$ with high probability.

Careful readers might be concerned on the relations between $\alpha$, $\beta_1$, $\beta_2$, $\rho$, and $T$ when $\alpha\neq 1$. We would like to note that, when $\beta_2$ is fixed, $\alpha$ is inversely proportional to $\sqrt{T}$. In turn, the definition of $\rho$ means that as $T$ grows, $\rho$ grows and approaches $1 - \sqrt{\beta_2}$. Thus, the two conditions for $T$ decreases when $T$ grows. This means that there must exists a threshold of $T$ above which the two conditions on $T$ always hold. In summary, Theorem~\ref{thm:generalized_signsgd} conveys the same message as~\cite{ZhangJFW20} that as long as the expected $\epsilon$ is sufficiently small, the complexity no longer has a dependency on $\boldsymbol{L_1}$. Also note that we do not need the knowledge of $\boldsymbol{L_1}$ to set the algorithm, and in this case, as long as $\epsilon$ is small enough, the above bound reduces to $\mathcal{O}\left(\Delta\VecLOneNorm{\sigma}^2\VecLOneNorm{L_0}\epsilon^{-4}\right)$ with no dependency on $\boldsymbol{L_1}$ at all. We can thus consider the algorithm to be \emph{adaptive} to $\boldsymbol{L_1}$.

Finally, as a side note, the almost surely bounded assumption~\ref{asp:noise_as_bounded} of the noise can be relaxed to sub-gaussian noise, using standard extensions of the Freedman inequality~\cite[e.g.,][]{harvey2019tight}.

\section{Convergence Analyses of our Generalized SignSGD Algorithm}
\label{sec:generalized_sgd_proof}
The proof of the theorem is highly technical and it uses recent advancements in the analysis of momentum methods~\citep{cutkosky2020momentum}, key techniques to deal with the $(L_0,L_1)$ assumption~\citep{ZhangJFW20}, as well as a novel and essential inductive argument to control the norm of past gradients. 

We first write down some notations that we will use heavily for easier reference:
\begin{align}
    & \bar{\tau} = \frac{\sqrt{1-\beta_2}}{\eta\PNormDimension\VecLInftyNorm{L_1}},\quad
    \alpha = 1 - \beta_1,\quad
    \rho = 1 - \beta_2^{1/2}\beta_1^{-1},\\
    & \boldepsilon_t = \bm_t - \nabla F(\bx_t),\quad
    \tilde{\boldepsilon}_t = \bg_t - \nabla F(\bx_t),\\
    & E_j = 6\sigma_j\max(1, \log(1/\delta))
    + \frac{6}{\sqrt{1-\beta_1^2}}\sqrt{\sigma_j^2\max(1, \log(1/\delta))},\\
    & B_j = \frac{\eta L_{0,j} }{\sqrt{1-\beta_2}(1-\beta_1)}
    +
    \beta_1^{\bar{\tau}}(M_j + \sigma_j)
    + (1-\beta_1)E_j,\\
    & C_j = 1 + \frac{\eta\PNormDimension L_{1,j}}{(1-\beta_1)\sqrt{1-\beta_2}},\quad
    D = 1 - \frac{2\eta\PNormDimension\VecLInftyNorm{L_1}}{\sqrt{1-\beta_2}(1-\beta_1)},\\
    & A = \frac{\rho}{10\sqrt{1-\beta_2}}~.
\end{align}

Also, we would need the following formula many times:
\begin{equation}
    \beta_1^{\bar{\tau}}
    =
    (1 - \alpha)^{\frac{1}{\alpha}\frac{\alpha\sqrt{1-\beta_2}}{\eta\PNormDimension\VecLInftyNorm{L_1}}}
    \le
    e^{-\frac{\alpha\sqrt{1-\beta_2}}{\eta\PNormDimension\VecLInftyNorm{L_1}}},\label{eq:beta1_bar_tau}
\end{equation}
where in the first inequality we used the fact that $(1 - x)^{1/x} \le \frac{1}{e}$ for $0<x<1$.

\begin{lemma}
\label{lem:moment_decomp_generalized_signsgd}
With the notations in Algorithm~\ref{alg:generalized_signsgd}, for each $j\in[d]$ we have
\begin{align}
    m_{t,j} = (1 - \beta_1)\sum^{t}_{\tau=1}\beta_1^{t-\tau} \StocGradient{\tau}{j},\quad
    v_{t,j} =
    (1 - \beta_2)\sum^{t}_{\tau=1}\beta_2^{t-\tau} m_{\tau, j}^2,\quad
    \frac{|m_{t,j}|}{\sqrt{v_{t,j}}} \le \frac{1}{\sqrt{1-\beta_2}}~.
\end{align}
\end{lemma}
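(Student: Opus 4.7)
The lemma has three claims, all of which follow fairly directly from unrolling the recursions in Algorithm~\ref{alg:generalized_signsgd}. My plan is to handle each in turn, doing induction for the first two and then extracting the third as a one-line consequence.

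For the first claim, I would unroll the update $m_{t,j} = \beta_1 m_{t-1,j} + (1-\beta_1) g_{t,j}$ starting from $m_{0,j} = 0$. A routine induction on $t$ shows $m_{t,j} = (1-\beta_1)\sum_{\tau=1}^t \beta_1^{t-\tau} g_{\tau,j}$: the base case $t=1$ reads $m_{1,j} = (1-\beta_1) g_{1,j}$, and for the inductive step I would substitute the inductive hypothesis for $m_{t-1,j}$ and re-index. The second claim is obtained identically, unrolling $v_{t,j} = \beta_2 v_{t-1,j} + (1-\beta_2) m_{t,j}^2$ from $v_{0,j}=0$.

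For the third claim, the key observation is that the $\tau=t$ term in the expansion of $v_{t,j}$ is already $(1-\beta_2) m_{t,j}^2$, and every other term is nonnegative. Hence
\begin{equation*}
v_{t,j} = (1-\beta_2)\sum_{\tau=1}^t \beta_2^{t-\tau} m_{\tau,j}^2 \ge (1-\beta_2) m_{t,j}^2,
\end{equation*}
so taking square roots and dividing gives $|m_{t,j}|/\sqrt{v_{t,j}} \le 1/\sqrt{1-\beta_2}$, with the convention that the ratio is $0$ whenever $m_{t,j}=0$ (in which case $v_{t,j}$ may also be zero, but the update direction is then zero and the bound is vacuous).

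The only mild subtlety is ensuring the denominator is nonzero whenever we divide; this is handled by noting the inequality $v_{t,j} \ge (1-\beta_2) m_{t,j}^2$ trivially implies the stated ratio bound whenever $m_{t,j}\ne 0$. Otherwise, the lemma is essentially bookkeeping on geometric series and requires no deeper machinery.
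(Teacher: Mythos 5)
Your proposal is correct and matches the paper's proof: both unroll the two recursions from the zero initializations and then bound the ratio by observing that the $\tau=t$ term alone gives $v_{t,j}\ge(1-\beta_2)m_{t,j}^2$. Your extra remark about the degenerate case $m_{t,j}=0$ is a minor refinement the paper leaves implicit.
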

\begin{proof}[Proof of Lemma~\ref{lem:moment_decomp_generalized_signsgd}]
For all $t\geq 1$, we have
\begin{align}
    m_{t,j}
    &=
    \beta_1 m_{t-1, j} + (1 - \beta_1) \StocGradient{t}{j}\\
    &=
    \beta_1[\beta_1 m_{t-2, j} + (1 - \beta_1) \StocGradient{t-1}{j}] + (1 - \beta_1) \StocGradient{t}{j}\\
    &=
    \ldots
    =
    (1 - \beta_1)\sum^{t}_{\tau=1}\beta_1^{t-\tau} \StocGradient{\tau}{j}~.
\end{align}
Similarly for $v_{t,j}$. Next,
\begin{align}
    \frac{|m_{t,j}|}{\sqrt{v_{t,j}}}
    &=
    \frac{|m_{t,j}|}{\sqrt{(1 - \beta_2)\sum^{t}_{\tau=1}\beta_2^{t-\tau} m_{\tau,j}^2}}
    \le
    \frac{1}{\sqrt{1-\beta_2}}~. \qedhere
\end{align}
\end{proof}

The following lemma shows when we can apply Assumption~\ref{asp:l0l1_coordinate} and Lemma~\ref{lem:desent_ineq_l0l1_coordinatewise}.
\begin{lemma}
\label{lem:generalized_signsgd_bounded_updates_t}
With notations in Algorithm~\ref{alg:generalized_signsgd}, for $\tau \le \bar{\tau} =  \frac{\sqrt{1-\beta_2}}{\eta\PNormDimension\VecLInftyNorm{L_1}}$, we have $\PNorm{\bx_{t-\tau} - \bx_{t}} \le \frac{1}{\VecLInftyNorm{L_1}}$.
\end{lemma}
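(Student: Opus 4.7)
The plan is to bound the one-step update in $\ell_2$ norm and then telescope via the triangle inequality. The key ingredient is already available: Lemma~\ref{lem:moment_decomp_generalized_signsgd} gives the coordinate-wise bound $|m_{s,j}|/\sqrt{v_{s,j}} \le 1/\sqrt{1-\beta_2}$ for every index $s$ and coordinate $j$.

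First I would translate this into a single-step displacement bound. Since the update (Line 7 of Algorithm~\ref{alg:generalized_signsgd}) is $x_{s+1,j} - x_{s,j} = -\eta\, m_{s,j}/\sqrt{v_{s,j}}$, each coordinate satisfies $|x_{s+1,j} - x_{s,j}| \le \eta/\sqrt{1-\beta_2}$. Summing the squares over the $d$ coordinates gives
\begin{equation*}
\|\bx_{s+1} - \bx_s\|_2 \;\le\; \frac{\eta \sqrt{d}}{\sqrt{1-\beta_2}}~.
\end{equation*}

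Next, I would chain this over the window of interest. For any $\tau \ge 0$, the triangle inequality gives
\begin{equation*}
\|\bx_{t-\tau} - \bx_t\|_2 \;\le\; \sum_{s=t-\tau}^{t-1} \|\bx_{s+1} - \bx_s\|_2 \;\le\; \tau \cdot \frac{\eta \sqrt{d}}{\sqrt{1-\beta_2}}~.
\end{equation*}
Finally, I would substitute $\tau \le \bar\tau = \sqrt{1-\beta_2}/(\eta\sqrt{d}\,\|\boldsymbol{L_1}\|_\infty)$, which cancels all of the step-size factors and yields exactly $\|\bx_{t-\tau} - \bx_t\|_2 \le 1/\|\boldsymbol{L_1}\|_\infty$, as required.

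There is no real obstacle here: the lemma is essentially a bookkeeping consequence of the normalization property of the generalized SignSGD update, and the choice of $\bar\tau$ is precisely engineered so that $\tau$ steps of maximal displacement stay within the radius in which Assumption~\ref{asp:l0l1_coordinate} (and hence Lemma~\ref{lem:desent_ineq_l0l1_coordinatewise}) can be invoked. The only thing to be careful about is the dimension factor $\sqrt{d}$ when passing from the coordinate-wise bound to the $\ell_2$-norm bound, which is exactly what motivates the appearance of $\sqrt{d}$ in the definition of $\bar\tau$.
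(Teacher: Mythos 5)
Your proof is correct and follows essentially the same argument as the paper's: both use the bound $|m_{s,j}|/\sqrt{v_{s,j}} \le 1/\sqrt{1-\beta_2}$ from Lemma~\ref{lem:moment_decomp_generalized_signsgd}, telescope over the $\tau$ steps, and account for the $\sqrt{d}$ factor when passing to the $\ell_2$ norm. The only cosmetic difference is that you convert to the $\ell_2$ norm per step and then telescope, whereas the paper telescopes coordinate-wise and converts at the end; the resulting bound is identical.
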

\begin{proof}[Proof of Lemma~\ref{lem:generalized_signsgd_bounded_updates_t}]
Using Lemma~\ref{lem:moment_decomp_generalized_signsgd} we have
\begin{align}
    |x_{t-\tau,j} - x_{t,j}|
    &\le 
    \sum^{\tau}_{i=1}|x_{t-i,j} - x_{t-i+1,j}|
    \le
    \frac{\eta\tau}{\sqrt{1-\beta_2}}
    \le
    \frac{1}{\PNormDimension\VecLInftyNorm{L_1}}\\
    &\Rightarrow
    \PNorm{\bx - \by} \le \frac{1}{\VecLInftyNorm{L_1}}
    ~.\qedhere
\end{align}
\end{proof}

The following two lemmas are the major tools we use to analyze the effects of noises in which Lemma~\ref{lem:martingale_diff} is from~\citet[Lemma 12]{CutkoskyM21}.
\begin{lemma}
\label{lem:martingale_diff}
Suppose $X_1,\ldots,X_T$ is a martingale difference sequence in a Hilbert space and $\|X_t\|\le R$ almost surely for some constant $R$. Further, assume $\E_{t}[\|X_t\|^2]\le\sigma_t^2$ with probability $1$ for some constants $\sigma_t$, where $\E_t[\cdot] \triangleq \E[\cdot|\xi_1, \xi_2,\ldots,\xi_{t-1}]$ denotes the expectation conditioned on all past randomnesses. Then, with probability at least $1-3\delta$, for all $k\leq T$ we have
\begin{equation}
\left\|\sum^k_{t=1}X_t\right\|
\le
3R\max(1, \log(1/\delta))
+ 3\sqrt{\sum^k_{t=1}\sigma^2_t\max(1, \log(1/\delta))}~.    
\end{equation}
\end{lemma}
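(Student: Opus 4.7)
The plan is to derive this as a vector-valued Freedman inequality. Let $S_k := \sum_{t=1}^k X_t$ and $V_k := \sum_{t=1}^k \E_t[\|X_t\|^2]$. From the telescoping identity $\|S_k\|^2 = \sum_{t=1}^k\bigl(2\langle S_{t-1}, X_t\rangle + \|X_t\|^2\bigr)$ together with $\E_t[\langle S_{t-1}, X_t\rangle]=0$, the process $\|S_k\|^2 - V_k$ is a scalar martingale whose increments have magnitude at most $2R\|S_{t-1}\|+R^2$. I would apply a standard Freedman-style exponential-supermartingale bound to this scalar process to obtain, for any fixed $\lambda, v > 0$,
\[
\Pr\!\left(\exists k\le T:\ \|S_k\|\ge\lambda \text{ and } V_k\le v\right) \le 2\exp\!\left(-\frac{\lambda^2}{2(v+R\lambda/3)}\right).
\]

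Next, because Assumption gives $V_k\le v := \sum_{t=1}^T \sigma_t^2$ almost surely, the event $V_k\le v$ is automatic. I would then plug in $\lambda := 3R\max(1,\log(1/\delta))+3\sqrt{v\,\max(1,\log(1/\delta))}$ and split into the two regimes typical of Bernstein-type arguments: (i) when $v\ge R^2\max(1,\log(1/\delta))$ the sub-Gaussian piece dominates and one checks $\lambda^2/(4v)\ge \tfrac{9}{4}\log(1/\delta)$; (ii) when $v<R^2\max(1,\log(1/\delta))$ the sub-exponential piece $R\lambda/3$ dominates and one checks $3\lambda/(4R)\ge \tfrac{9}{4}\log(1/\delta)$. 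In both regimes the exponent is at most $-\log(1/\delta)$, and union-bounding gives failure probability at most $3\delta$, matching the statement.

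The main obstacle is the uniform-in-$k$ requirement: the inequality must hold simultaneously for every $k\le T$, not just for a fixed terminal index. This is handled inside the Freedman machinery by considering the exponential supermartingale $\exp\bigl(\theta\|S_k\|^2 - \psi(\theta)V_k\bigr)$, invoking Doob's maximal inequality, and applying an optional-stopping argument at the first $k$ violating the event. The fact that $\|\cdot\|$ is non-smooth on a Hilbert space forces one to work with $\|S_k\|^2$ rather than $\|S_k\|$ directly, and the almost-sure bound $\|X_t\|\le R$ is what lets one control the moment generating function of the increments $2\langle S_{t-1}, X_t\rangle + \|X_t\|^2 - \E_t[\|X_t\|^2]$. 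Beyond this step the proof is essentially scalar. Since this lemma is cited as \citep[Lemma~12]{CutkoskyM21}, a fully rigorous version can also be imported directly from that reference rather than reproduced.
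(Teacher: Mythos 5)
You should first note that the paper does not prove this lemma at all: it is imported verbatim as Lemma 12 of \citet{CutkoskyM21}, and the surrounding text says so explicitly. Your closing remark that a rigorous version ``can be imported directly from that reference'' is therefore exactly what the paper does, and there is no in-paper proof to compare your sketch against.

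Taken as a standalone argument, your sketch has two soft spots. First, the scalarized process $\|S_k\|^2 - V_k$ does not have uniformly bounded increments: the term $2\langle S_{t-1}, X_t\rangle$ is controlled only by $2R\|S_{t-1}\|$, and $\|S_{t-1}\|$ is precisely the quantity you are trying to bound, so a ``standard Freedman-style exponential-supermartingale bound'' does not apply off the shelf. You gesture at the fix (optional stopping at the first index where the event fails, so that $\|S_{t-1}\|\le\lambda$ along the stopped process), but that truncation is the crux of the argument rather than a routine detail; the clean alternative is a genuinely vector-valued Bernstein/Freedman inequality for martingales in $2$-smooth spaces (Pinelis-type), which is essentially the route the cited reference takes. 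Second, by freezing the variance at $v=\sum_{t=1}^T\sigma_t^2$ you prove a weaker statement than the one claimed: the lemma puts the partial sum $\sum_{t=1}^k\sigma_t^2$ on the right-hand side for each $k$, and recovering that $k$-dependence requires keeping the accumulated conditional variance $V_k$ inside the time-uniform bound rather than replacing it by its terminal value. (In the one place the paper invokes the lemma, Lemma~\ref{lem:noise_seq_generalized_signsgd}, the partial sum is immediately dominated by a geometric series, so the weaker form would suffice for the application, but not for the lemma as stated.) Your two-regime check that the chosen $\lambda$ drives the exponent below $-\log(1/\delta)$ is the right shape and is consistent with the constants $3R$ and $3\sqrt{\cdot}$ in the statement.
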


\begin{lemma}
\label{lem:noise_seq_generalized_signsgd}
Assume Assumption~\ref{asp:noise_as_bounded}.
With the notation of Algorithm~\ref{alg:generalized_signsgd}, let $j \in [d]$ and $\beta_1 < 1$. Then, with probability at least $1-3\delta$, for any $t_0\in[t]$, we have
\begin{align}
    \left|\sum^{t_0}_{\tau=1}\beta_1^{t-\tau}\left(\StocGradient{\tau}{j} - \PartialDerivative{\tau}{j}\right)\right|
    \le
    &3\sigma_j\max(1, \log(1/\delta))\\
    &+ \frac{3}{\sqrt{1-\beta_1^2}}\sqrt{\sigma_j^2\max(1, \log(1/\delta))}~.
\end{align}
\end{lemma}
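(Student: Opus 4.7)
The plan is to recognize this as a direct application of the Freedman-type martingale concentration inequality given as Lemma~\ref{lem:martingale_diff}. I will fix the outer index $t$ and $j\in[d]$, and define the sequence $X_\tau \triangleq \beta_1^{t-\tau}\bigl(g_{\tau,j} - \frac{\partial F}{\partial x_j}(\bx_\tau)\bigr) = \beta_1^{t-\tau}\tilde{\epsilon}_{\tau,j}$ for $\tau=1,\ldots,t$. First I would verify $\{X_\tau\}$ is a martingale difference sequence with respect to the natural filtration generated by $\xi_1,\xi_2,\ldots$: the coefficient $\beta_1^{t-\tau}$ is deterministic, and $\E_\tau[\tilde{\epsilon}_{\tau,j}]=0$ follows from the unbiasedness of the stochastic gradient $\bg_\tau$ (Assumption~\ref{asp:noise_unbiased}), so $\E_\tau[X_\tau]=0$.

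Next I would establish the two ingredients needed by Lemma~\ref{lem:martingale_diff}. For the almost-sure bound, Assumption~\ref{asp:noise_as_bounded} yields $|\tilde{\epsilon}_{\tau,j}|\le\sigma_j$ a.s., and since $\beta_1<1$ gives $\beta_1^{t-\tau}\le 1$, we obtain the uniform bound $|X_\tau|\le\sigma_j$, i.e.\ one may take $R=\sigma_j$. For the conditional second moment, the same assumption yields $\E_\tau[X_\tau^2]\le\beta_1^{2(t-\tau)}\sigma_j^2$, so one may take the per-step variance proxy $\sigma_\tau^2=\beta_1^{2(t-\tau)}\sigma_j^2$.

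Applying Lemma~\ref{lem:martingale_diff} with horizon $t$, then, delivers with probability at least $1-3\delta$ the inequality
\begin{equation*}
\left|\sum_{\tau=1}^{t_0}\beta_1^{t-\tau}\tilde{\epsilon}_{\tau,j}\right|
\le 3\sigma_j\max(1,\log(1/\delta))
+ 3\sqrt{\sum_{\tau=1}^{t_0}\beta_1^{2(t-\tau)}\sigma_j^2\,\max(1,\log(1/\delta))}
\end{equation*}
\emph{uniformly} for all $t_0\in[t]$, which is exactly the quantification of ``for any $t_0\in[t]$'' required by the statement. The last step is to dominate the geometric sum by its infinite-horizon limit, $\sum_{\tau=1}^{t_0}\beta_1^{2(t-\tau)}\le\sum_{k=0}^{\infty}\beta_1^{2k}=\frac{1}{1-\beta_1^2}$, turning the second term into $\frac{3}{\sqrt{1-\beta_1^2}}\sqrt{\sigma_j^2\max(1,\log(1/\delta))}$ and matching the stated bound.

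There is essentially no serious obstacle in this proof; the only subtlety is to keep the dependence on $t_0$ inside the absolute value (not on the bound) so that a \emph{single} application of the uniform-in-$k$ form of Lemma~\ref{lem:martingale_diff} gives the ``for any $t_0$'' guarantee at once, rather than union-bounding over $t$ choices and losing a logarithmic factor. Using the geometric sum bound $1/(1-\beta_1^2)$ rather than the tighter $1-\beta_1^{2t_0}$ form is also important, since subsequent uses of this lemma in the main theorem need a bound independent of $t_0$ and $t$.
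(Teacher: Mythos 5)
Your proposal is correct and follows exactly the same route as the paper's proof: identify $X_\tau=\beta_1^{t-\tau}\tilde{\epsilon}_{\tau,j}$ as a martingale difference sequence, supply $R=\sigma_j$ and $\sigma_\tau^2=\beta_1^{2(t-\tau)}\sigma_j^2$ to the uniform-in-$k$ form of Lemma~\ref{lem:martingale_diff}, and bound the geometric sum by $1/(1-\beta_1^2)$. No gaps.
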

\begin{proof}[Proof of Lemma~\ref{lem:noise_seq_generalized_signsgd}]
Recall Assumption~\ref{asp:noise_as_bounded} and notice that $\beta_1^{t-\tau} \le 1$ for all $\tau\in[1,t]$, we know that $\left|\beta_1^{t-\tau}\left(\StocGradient{\tau}{j} - \PartialDerivative{\tau}{j}\right)\right| \le \sigma_j$ almost surely. It also means\\ $\E_{\tau}\left[\left(\beta_1^{t-\tau}\left(\StocGradient{\tau}{j} - \PartialDerivative{\tau}{j}\right)\right)^2\right] \le \beta_1^{2(t-\tau)}\sigma_j^2$.
Now, in Algorithm~\ref{alg:generalized_signsgd} we noted that $\bg_{\tau}$ is an unbiased estimate of $\nabla F(\bx_{\tau})$ which means $\E_{\tau}\left[\beta_1^{t-\tau}\left(\StocGradient{\tau}{j} - \PartialDerivative{\tau}{j}\right)\right]=0$. Thus, $\left\{\beta_1^{t-\tau}\left(\StocGradient{\tau}{j} - \PartialDerivative{\tau}{j}\right)\right\}_{1,\ldots,t}$ is a martingale difference sequence. Then, using Lemma~\ref{lem:martingale_diff}, with probability at least $1 - 3\delta$, we have for all $t_0\in[t]$ that
\begin{align}
    &\left|\sum^{t_0}_{\tau=1}\beta_1^{t-\tau}\left(\StocGradient{\tau}{j} - \PartialDerivative{\tau}{j}\right)\right|\\
    &\qquad\qquad\le
    3\sigma_j\max(1, \log(1/\delta))
    + 3\sqrt{\sum^{t_0}_{\tau=1}\beta_1^{2(t-\tau)}\sigma_j^2\max(1, \log(1/\delta))}\\
    &\qquad\qquad\le
    3\sigma_j\max(1, \log(1/\delta))
    + \frac{3}{\sqrt{1-\beta_1^2}}\sqrt{\sigma_j^2\max(1, \log(1/\delta))}~.\qedhere
\end{align}
\end{proof}

The following lemma upper bounds the differences between recent true gradients and the current one.
\begin{lemma}
\label{lem:true_grad_seq_generalized_signsgd}
With the notation of Algorithm~\ref{alg:generalized_signsgd} and 
under the assumptions in Theorem~\ref{thm:generalized_signsgd}, for any $j\in [d]$ and any $t_0$ with $t - t_0 \le \bar{\tau} = \frac{\sqrt{1-\beta_2}}{\eta\PNormDimension\VecLInftyNorm{L_1}}$, we have
\begin{equation}
    \sum^{t}_{\tau=t_0}\beta_1^{t-\tau} \left|\PartialDerivative{t}{j}
    - \PartialDerivative{\tau}{j}\right|
    \le
    \left(L_{0,j} + L_{1,j}\PNormDimension \left|\PartialDerivative{t}{j}\right|\right)\frac{\eta}{(1-\beta_1)^2\sqrt{1-\beta_2}}~.
\end{equation}
\end{lemma}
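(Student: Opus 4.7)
The plan is to combine the step-size bound provided by Lemma~\ref{lem:moment_decomp_generalized_signsgd} with the coordinate-wise \BoldLZeroLOne{} smoothness Assumption~\ref{asp:l0l1_coordinate}, and then resum a weighted arithmetico-geometric series. The key inputs are already in place: Lemma~\ref{lem:generalized_signsgd_bounded_updates_t} tells us that whenever $t - \tau \le \bar{\tau}$ we have $\PNorm{\bx_\tau - \bx_t} \le 1/\VecLInftyNorm{L_1}$, which is exactly the precondition required to invoke Assumption~\ref{asp:l0l1_coordinate} between $\bx_\tau$ and $\bx_t$.

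First, I would quantify one step of the iterate sequence. Since $x_{\tau+1,j} - x_{\tau,j} = -\eta\, m_{\tau,j}/\sqrt{v_{\tau,j}}$, Lemma~\ref{lem:moment_decomp_generalized_signsgd} gives $|x_{\tau+1,j} - x_{\tau,j}| \le \eta/\sqrt{1-\beta_2}$ coordinate-wise, and hence $\PNorm{\bx_{\tau+1} - \bx_\tau} \le \eta\PNormDimension/\sqrt{1-\beta_2}$. Telescoping, $\PNorm{\bx_\tau - \bx_t} \le (t-\tau)\,\eta\PNormDimension/\sqrt{1-\beta_2}$ for all $\tau\in[t_0,t]$.

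Second, I would apply Assumption~\ref{asp:l0l1_coordinate} with $\bx=\bx_t$ and $\by=\bx_\tau$ (this choice is important: it yields a bound involving $\left|\PartialDerivative{t}{j}\right|$ rather than $\left|\PartialDerivative{\tau}{j}\right|$, which is what we want on the right-hand side). This gives
\begin{equation*}
\left|\PartialDerivative{\tau}{j} - \PartialDerivative{t}{j}\right|
\leq
\left(\frac{L_{0,j}}{\PNormDimension} + L_{1,j}\left|\PartialDerivative{t}{j}\right|\right) \cdot (t-\tau)\,\frac{\eta\PNormDimension}{\sqrt{1-\beta_2}}~.
\end{equation*}

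Third, I would multiply by $\beta_1^{t-\tau}$ and sum over $\tau=t_0,\dots,t$, factoring out the $\tau$-independent quantities. The remaining sum $\sum_{\tau=t_0}^{t} \beta_1^{t-\tau}(t-\tau)$ is upper bounded by the infinite series $\sum_{k=0}^{\infty} k\beta_1^k = \beta_1/(1-\beta_1)^2 \le 1/(1-\beta_1)^2$. Substituting and simplifying collapses the $1/\PNormDimension$ inside the parenthesis with the $\PNormDimension$ outside, producing exactly the stated bound $\bigl(L_{0,j} + L_{1,j}\PNormDimension|\PartialDerivative{t}{j}|\bigr)\eta/((1-\beta_1)^2\sqrt{1-\beta_2})$.

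There is no real obstacle here; the only subtlety is bookkeeping, namely verifying that the condition $\PNorm{\bx_\tau - \bx_t}\le 1/\VecLInftyNorm{L_1}$ holds throughout the summation window (which is exactly what the restriction $t - t_0 \le \bar{\tau}$ and Lemma~\ref{lem:generalized_signsgd_bounded_updates_t} guarantee) and choosing the orientation of the smoothness inequality so that the gradient magnitude at $\bx_t$, not at $\bx_\tau$, appears on the right.
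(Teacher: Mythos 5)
Your proposal is correct and follows essentially the same route as the paper's proof: apply Assumption~\ref{asp:l0l1_coordinate} oriented so that $\left|\PartialDerivative{t}{j}\right|$ appears, bound $\PNorm{\bx_t - \bx_\tau}$ by $(t-\tau)\eta\PNormDimension/\sqrt{1-\beta_2}$ via Lemma~\ref{lem:moment_decomp_generalized_signsgd}, and control $\sum_{\tau}(t-\tau)\beta_1^{t-\tau}$ by $1/(1-\beta_1)^2$. The bookkeeping points you flag (the precondition from Lemma~\ref{lem:generalized_signsgd_bounded_updates_t} and the orientation of the smoothness inequality) are exactly the ones the paper relies on.
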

\begin{proof}[Proof of Lemma~\ref{lem:true_grad_seq_generalized_signsgd}]
\begin{align}
    &\sum^{t}_{\tau=t_0}\beta_1^{t-\tau} \left|\PartialDerivative{t}{j}
    - \PartialDerivative{\tau}{j}\right|\\
    \le&
    \sum^{t}_{\tau=t_0}\beta_1^{t-\tau}\left(\frac{L_{0,j}}{\PNormDimension}+ L_{1,j} \left|\PartialDerivative{t}{j}\right|\right)\PNorm{x_{t} - x_{\tau}}\\
    \le&
    \sum^{t}_{\tau=t_0}\beta_1^{t-\tau}\left(L_{0,j} + L_{1,j}\PNormDimension \left|\PartialDerivative{t}{j}\right|\right)\frac{\eta(t-\tau)}{\sqrt{1-\beta_2}}\\
    =&
    \left(L_{0,j} + L_{1,j}\PNormDimension \left|\PartialDerivative{t}{j}\right|\right)\frac{\eta}{\sqrt{1-\beta_2}}\sum^{t}_{\tau=t_0}(t-\tau)\beta_1^{t-\tau}\\
    \le&
    \left(L_{0,j} + L_{1,j}\PNormDimension \left|\PartialDerivative{t}{j}\right|\right)\frac{\eta}{(1-\beta_1)^2\sqrt{1-\beta_2}}~,
\end{align}
where the first inequality is due to Assumption~\ref{asp:l0l1_coordinate} and Lemma~\ref{lem:generalized_signsgd_bounded_updates_t}, the second inequality uses Lemma~\ref{lem:moment_decomp_generalized_signsgd}, and the final inequality uses the fact that $\sum^{N}_{k=1}k a^k\le\frac{1}{(1-a)^2}$ for any $0<a<1$.
\end{proof}

The following lemma upper bounds a past momentum with the current one.
\begin{lemma}
\label{lem:mom_diff_generalized_signsgd}
With the notation of Algorithm~\ref{alg:generalized_signsgd} and under the assumptions of Theorem~\ref{thm:generalized_signsgd}, for any $\tau \le \bar{\tau} = \frac{\sqrt{1-\beta_2}}{\eta\PNormDimension\VecLInftyNorm{L_1}}$, with probability at least $1-3\delta$, it holds that
\begin{align}
    |m_{t-\tau, j}|
    \le
    &\beta_1^{-\tau}\left(|m_{t,j}|
    +
    \left|\PartialDerivative{t}{j}\right|
    +
    (1-\beta_1)E_j\right)\\
    &+\beta_1^{-\tau}\left(
    \left(L_{0,j} + L_{1,j}\PNormDimension \left|\PartialDerivative{t}{j}\right|\right)\frac{\eta}{(1-\beta_1)\sqrt{1-\beta_2}}\right)
    ~.
\end{align}
\end{lemma}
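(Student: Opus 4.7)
The plan is to relate $m_{t-\tau,j}$ back to $m_{t,j}$ via an explicit telescoping identity obtained from the recursive formula in Lemma~\ref{lem:moment_decomp_generalized_signsgd}. First I would write
\[
m_{t,j} = (1-\beta_1)\sum_{s=1}^{t}\beta_1^{t-s} g_{s,j}, \qquad \beta_1^{\tau} m_{t-\tau,j} = (1-\beta_1)\sum_{s=1}^{t-\tau}\beta_1^{t-s} g_{s,j},
\]
and subtract to get $m_{t,j} - \beta_1^{\tau} m_{t-\tau,j} = (1-\beta_1)\sum_{s=t-\tau+1}^{t}\beta_1^{t-s} g_{s,j}$. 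Rearranging and taking absolute values yields
\[
|m_{t-\tau,j}| \;\le\; \beta_1^{-\tau}|m_{t,j}| + \beta_1^{-\tau}(1-\beta_1)\left|\sum_{s=t-\tau+1}^{t}\beta_1^{t-s} g_{s,j}\right|.
\]
The rest of the proof is to bound the last tail-sum so that, after multiplying by $(1-\beta_1)$, it contributes exactly the remaining terms in the claim.

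Next I would split $g_{s,j} = \PartialDerivativeGeneral{\bx_s}{j} + \tilde{\epsilon}_{s,j}$ and anchor the deterministic part at time $t$:
\[
\sum_{s=t-\tau+1}^{t}\beta_1^{t-s}\PartialDerivativeGeneral{\bx_s}{j} \;=\; \PartialDerivativeGeneral{\bx_t}{j}\sum_{s=t-\tau+1}^{t}\beta_1^{t-s} + \sum_{s=t-\tau+1}^{t}\beta_1^{t-s}\bigl(\PartialDerivativeGeneral{\bx_s}{j}-\PartialDerivativeGeneral{\bx_t}{j}\bigr).
\]
The first sum is bounded in absolute value by $|\PartialDerivativeGeneral{\bx_t}{j}|/(1-\beta_1)$ since the geometric series is at most $1/(1-\beta_1)$. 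For the second sum, since $\tau-1\le\bar\tau$, Lemma~\ref{lem:generalized_signsgd_bounded_updates_t} puts all the intermediate iterates within the $1/\VecLInftyNorm{L_1}$ ball around $\bx_t$, so Lemma~\ref{lem:true_grad_seq_generalized_signsgd} applies and bounds it by $(L_{0,j}+L_{1,j}\PNormDimension|\PartialDerivativeGeneral{\bx_t}{j}|)\,\eta/((1-\beta_1)^2\sqrt{1-\beta_2})$. Multiplying by $(1-\beta_1)$ reproduces the $|\PartialDerivativeGeneral{\bx_t}{j}|$ term and the $L_{0,j}/L_{1,j}$ term of the claim.

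Finally, for the stochastic component, I would use the fact that Lemma~\ref{lem:noise_seq_generalized_signsgd} holds simultaneously for all truncation levels $t_0 \in [t]$ on the same $1-3\delta$ event, and write
\[
\sum_{s=t-\tau+1}^{t}\beta_1^{t-s}\tilde{\epsilon}_{s,j} \;=\; \sum_{s=1}^{t}\beta_1^{t-s}\tilde{\epsilon}_{s,j} - \sum_{s=1}^{t-\tau}\beta_1^{t-s}\tilde{\epsilon}_{s,j}.
\]
By the triangle inequality, each of the two right-hand sums is at most $3\sigma_j\max(1,\log(1/\delta)) + \tfrac{3}{\sqrt{1-\beta_1^2}}\sqrt{\sigma_j^2\max(1,\log(1/\delta))}$, so their difference is at most $E_j$, and multiplying by $(1-\beta_1)$ produces the $(1-\beta_1)E_j$ term. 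Summing the three pieces and pulling out the common $\beta_1^{-\tau}$ gives the bound in the statement. The only subtle step is this noise part: since Lemma~\ref{lem:noise_seq_generalized_signsgd} only controls head-sums starting at $s=1$, we are forced to bound the tail-sum by the difference of two head-sums, which doubles the constant---this is precisely what the factor $6$ inside the definition of $E_j$ absorbs, so no extra work is needed beyond invoking the lemma twice on the same high-probability event.
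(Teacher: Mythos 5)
Your proposal is correct and follows essentially the same route as the paper: the same telescoping identity $m_{t,j}-\beta_1^{\tau}m_{t-\tau,j}=(1-\beta_1)\sum_{s=t-\tau+1}^{t}\beta_1^{t-s}g_{s,j}$, the same split into a true-gradient part anchored at $\PartialDerivative{t}{j}$ (bounded via Lemma~\ref{lem:true_grad_seq_generalized_signsgd}) and a noise part written as a difference of two head-sums (each bounded via Lemma~\ref{lem:noise_seq_generalized_signsgd}). Your closing remark that the factor $6$ in $E_j$ is precisely what absorbs the doubling from invoking the noise lemma twice is exactly the accounting the paper uses.
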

\begin{proof}[Proof of Lemma~\ref{lem:mom_diff_generalized_signsgd}]
Denoting by $\tilde{\boldepsilon}_t = \bg_t - \nabla F(\bx_t)$ and using Lemma~\ref{lem:moment_decomp_generalized_signsgd}, we have
\begin{align}
    &|m_{t-\tau, j} - \beta_1^{-\tau}m_{t,j}|\\
    &=
    \left|(1 - \beta_1)\sum^{t}_{i=1}\beta_1^{t-\tau-i} \StocGradient{i}{j}
    -
    (1 - \beta_1)\sum^{t-\tau}_{i=1}\beta_1^{t-\tau-i} \StocGradient{i}{j}\right|\\
    &=
    (1 - \beta_1)\left|\sum^{t}_{i=t-\tau+1}\beta_1^{t-\tau-i} \StocGradient{i}{j}\right|\\
    &\le
    (1 - \beta_1)\left|\sum^{t}_{i=t-\tau+1}\beta_1^{t-\tau-i} \PartialDerivative{i}{j}\right|
    +
    (1 - \beta_1)\left|\sum^{t}_{i=t-\tau+1}\beta_1^{t-\tau-i} \tilde{\epsilon}_{i, j}\right|\label{eq:momentum_diff_generalized_signsgd}~.
\end{align}
We now upper bound the first term of~\eqref{eq:momentum_diff_generalized_signsgd} using Lemma~\ref{lem:true_grad_seq_generalized_signsgd} by using the fact that $\tau \le \frac{\sqrt{1-\beta_2}}{\eta\VecLOneNorm{L_1}}$:
\begin{align}
    &\left|\sum^{t}_{i=t-\tau+1}\beta_1^{t-\tau-i} \PartialDerivative{i}{j}\right|\\
    &\le
    \left|\sum^{t}_{i=t-\tau+1}\beta_1^{t-\tau-i} \PartialDerivative{t}{j}\right|
    +
    \sum^{t}_{i=t-\tau+1}\beta_1^{t-\tau-i} \left|\PartialDerivative{t}{j}
    - \PartialDerivative{i}{j}\right|\\
    &\le
    \left|\PartialDerivative{t}{j}\right|\frac{\beta_1^{-\tau}}{1-\beta_1}
    +
    \left(L_{0,j} + L_{1,j}\PNormDimension \left|\PartialDerivative{t}{j}\right|\right)\frac{\eta\beta_1^{-\tau}}{(1-\beta_1)^2\sqrt{1-\beta_2}}~.
\end{align}
Finally, the second term of~\eqref{eq:momentum_diff_generalized_signsgd} can be bounded using Lemma~\ref{lem:noise_seq_generalized_signsgd} by noticing that
\begin{align}
    \left|\sum^{t}_{i=t-\tau+1}\beta_1^{t-\tau-i} \tilde{\epsilon}_{i, j}\right|
    &\le
    \beta_1^{-\tau}\left(\left|\sum^{t}_{i=1}\beta_1^{t-i} \tilde{\epsilon}_{i, j}\right|
    +
    \left|\sum^{t-\tau}_{i=1}\beta_1^{t-i} \tilde{\epsilon}_{i, j}\right|\right)~.\qedhere
\end{align}
\end{proof}

The following Lemma is adapted from~\citep{ZouCLG21}. Yet, they only considered Adam under the $L$-smooth setting and when there is no noise. The existence of noise and the relaxed smoothness assumption makes the proofs substantially more challenging. With this lemma, we know that either the true gradient is small or that the update of our Algorithm~\ref{alg:generalized_signsgd} can be lower bounded.
\begin{lemma}
\label{lem:generalized_signsgd_update_lower_bound}
With the notation of Algorithm~\ref{alg:generalized_signsgd} and 
under the assumptions of Theorem~\ref{thm:generalized_signsgd}, if $\left|\PartialDerivativeGeneral{\bx_{\tau}}{j}\right|\le M_j$ holds for all $\tau\le t$ and $j\in[d]$, and $D > 0$, then, with probability at least $1-3t\delta$ we have that,
\[
\text{either $\left|\PartialDerivative{t}{j}\right| < \frac{5B_j}{D}$ or $\frac{|m_{t,j}|}{\sqrt{v_{t,j}}} \ge \frac{\rho D}{5\sqrt{1-\beta_2}}$}~.
\]
\end{lemma}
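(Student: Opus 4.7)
The plan is to argue by contrapositive: assume $|\PartialDerivative{t}{j}| \ge 5 B_j / D$ and prove the lower bound $|m_{t,j}|/\sqrt{v_{t,j}} \ge \rho D/(5\sqrt{1-\beta_2})$. The strategy requires two ingredients working in tandem: an upper bound on $\sqrt{v_{t,j}}$ and a lower bound on $|m_{t,j}|$, both expressed in terms of $|\PartialDerivative{t}{j}|$ and $B_j$.

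For the upper bound on $\sqrt{v_{t,j}}$, I would start from the representation $v_{t,j} = (1-\beta_2)\sum_{s=0}^{t-1}\beta_2^s m_{t-s,j}^2$ from Lemma~\ref{lem:moment_decomp_generalized_signsgd} and split the sum at $s=\bar{\tau}$. For recent terms ($s \le \bar{\tau}$), Lemma~\ref{lem:mom_diff_generalized_signsgd} yields $|m_{t-s,j}| \le \beta_1^{-s}\bigl(|m_{t,j}| + C_j|\PartialDerivative{t}{j}| + (B_j - \beta_1^{\bar{\tau}}(M_j+\sigma_j))\bigr)$, and the geometric series $\sum_s (\beta_2\beta_1^{-2})^s = \sum_s (1-\rho)^{2s}$ converges (summing to $1/(\rho(2-\rho))$) thanks to $\sqrt{\beta_2}/\beta_1 = 1-\rho$. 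For older terms ($s > \bar{\tau}$), the hypothesis $|\PartialDerivativeGeneral{\bx_\tau}{j}| \le M_j$ together with Assumption~\ref{asp:noise_as_bounded} gives the crude bound $|m_{t-s,j}| \le M_j+\sigma_j$, and the tail $\beta_2^{\bar{\tau}}/(1-\beta_2)$ is controlled via $\sqrt{\beta_2} \le \beta_1$, generating precisely the $\beta_1^{\bar{\tau}}(M_j+\sigma_j)$ piece of $B_j$. Combining and extracting a square root yields an inequality of the form $\sqrt{v_{t,j}} \lesssim \tfrac{\sqrt{1-\beta_2}}{\rho}\bigl(|m_{t,j}| + C_j|\PartialDerivative{t}{j}|\bigr) + \sqrt{1-\beta_2}\cdot B_j$ (up to absolute constants).

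For the lower bound on $|m_{t,j}|$, I would use the decomposition
\[
m_{t,j} - \PartialDerivative{t}{j} = (1-\beta_1)\sum_{\tau=1}^t \beta_1^{t-\tau}\bigl(\tfrac{\partial F}{\partial x_j}(\bx_\tau) - \PartialDerivative{t}{j}\bigr) + (1-\beta_1)\sum_{\tau=1}^t \beta_1^{t-\tau}\tilde{\epsilon}_{\tau,j} - \beta_1^t \PartialDerivative{t}{j}.
\]
Splitting the first sum at $\bar{\tau}$ and applying Lemma~\ref{lem:true_grad_seq_generalized_signsgd} to the recent part (whose contribution is of order $\tfrac{\eta L_{0,j}}{\sqrt{1-\beta_2}(1-\beta_1)} + \tfrac{\eta\sqrt{d}L_{1,j}}{(1-\beta_1)\sqrt{1-\beta_2}}|\PartialDerivative{t}{j}|$) and the bound $M_j$ on the old part (producing the $\beta_1^{\bar{\tau}}M_j$ term), together with Lemma~\ref{lem:noise_seq_generalized_signsgd} for the noise, will give $|m_{t,j}-\PartialDerivative{t}{j}| \le \tfrac{1-D}{2}|\PartialDerivative{t}{j}| + B_j$. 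The reverse triangle inequality then delivers $|m_{t,j}| \ge \tfrac{1+D}{2}|\PartialDerivative{t}{j}| - B_j$. Plugging both bounds into the ratio and using $|\PartialDerivative{t}{j}| \ge 5B_j/D$ to absorb every $B_j$ term into a small fraction of $|\PartialDerivative{t}{j}|$ leaves a ratio at least $\rho D/(5\sqrt{1-\beta_2})$.

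The main obstacle is the bookkeeping of constants: the factor $5$ in both branches of the conclusion, the precise coefficient $\rho$ coming from $1/(\rho(2-\rho))$, and the way $D$ and $C_j \le (3-D)/2$ interact must conspire to yield exactly the stated ratio. A secondary subtlety concerns the failure probability: Lemmas~\ref{lem:noise_seq_generalized_signsgd} and~\ref{lem:mom_diff_generalized_signsgd} each carry a $3\delta$ probability of failure and are invoked in relation to noise sums anchored at past times $\tau \le t$; a union bound over these $t$ invocations produces the stated $1-3t\delta$ bound.
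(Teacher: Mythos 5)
Your proposal is correct and follows essentially the same route as the paper's proof: the same split of $v_{t,j}$ at $\bar{\tau}$ using Lemma~\ref{lem:mom_diff_generalized_signsgd} for the recent terms and the crude $M_j+\sigma_j$ bound for the tail, the same lower bound on $|m_{t,j}|$ via Lemmas~\ref{lem:true_grad_seq_generalized_signsgd} and~\ref{lem:noise_seq_generalized_signsgd}, and the same union bound giving $1-3t\delta$. The only differences are organizational — you argue by contrapositive where the paper runs an explicit case analysis on whether $|m_{t,j}|$ exceeds $C_j\left|\PartialDerivative{t}{j}\right|+B_j$, and you keep the geometric sum of squares where the paper first passes to $\sum_\tau\beta_2^{\tau/2}|m_{t-\tau,j}|$ — and both yield the stated constants.
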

\begin{proof}[Proof of Lemma~\ref{lem:generalized_signsgd_update_lower_bound}]
Given that $\left|\PartialDerivativeGeneral{\bx_{\tau}}{j}\right|\le M_j$ for any $\tau\le t$ and $j\in[d]$, using Lemma~\ref{lem:moment_decomp_generalized_signsgd} and Assumption~\ref{asp:noise_as_bounded}, it is immediate to show that $|m_{t,j}|\le M_j + \sigma_j$. Then, denote $\hat{\tau} = \lfloor\bar{\tau}\rfloor = \left\lfloor\frac{\sqrt{1-\beta_2}}{\eta\PNormDimension\VecLInftyNorm{L_1}}\right\rfloor$ namely the largest integer that is no greater than $\bar{\tau}$, from Lemma~\ref{lem:moment_decomp_generalized_signsgd}, we have
\begin{align}
    \frac{|m_{t,j}|}{\sqrt{v_{t,j}}}
    &=
    \frac{|m_{t,j}|}{\sqrt{(1 - \beta_2)\sum^{t-1}_{\tau=0}\beta_2^{\tau} m_{t-\tau,j}^2}}\\
    &=
    \frac{|m_{t,j}|}{\sqrt{1 - \beta_2}\sqrt{\sum^{t-1}_{\tau=\hat{\tau}+1}\beta_2^{\tau}m_{t-\tau, j}^2 + \sum^{\hat{\tau}}_{\tau=0}\beta_2^{\tau}m_{t-\tau, j}^2}}\\
    &\ge    \frac{|m_{t,j}|}{\sqrt{1 - \beta_2}\sqrt{(M_j+\sigma_j)^2\frac{\beta_2^{\hat{\tau}+1}}{1-\beta_2} + \sum^{\hat{\tau}}_{\tau=0}\beta_2^{\tau}m_{t-\tau, j}^2}}\\
    &\ge
    \frac{|m_{t,j}|}{(M_j+\sigma_j)\beta_2^{\bar{\tau}/2} + \sqrt{1 - \beta_2}\sum^{\hat{\tau}}_{\tau=0}\beta_2^{\tau/2}| m_{t-\tau,j}|}\\
    &>
    \frac{|m_{t,j}|}{(M_j+\sigma_j)\beta_1^{\bar{\tau}} + \sqrt{1 - \beta_2}\sum^{\hat{\tau}}_{\tau=0}\beta_2^{\tau/2}| m_{t-\tau,j}|},
\end{align}
where the final inequality uses the assumption that $\sqrt{\beta_2} < \beta_1$.
Using Lemma~\ref{lem:mom_diff_generalized_signsgd} and the definition of $\rho = 1-\beta_2^{1/2}\beta_1^{-1} \in (0,1]$, with probability at least $1-3t\delta$, as we need to invoke Lemma~\ref{lem:noise_seq_generalized_signsgd} for at most $t$ times, we have
\begin{align}
    &\frac{\sqrt{v_{t,j}}}{\sqrt{1-\beta_2}}\\
    &\le
    \left(|m_{t,j}|
    +
    (1-\beta_1)E_j\right)\sum^{\hat{\tau}}_{\tau=0}\beta_2^{\tau/2}\beta_1^{-\tau}
    +
    \frac{\beta_1^{\bar{\tau}}(M_j+\sigma_j)}{\sqrt{1-\beta_2}}\\
    &\quad+\left(\left|\PartialDerivative{t}{j}\right|
    +
    \left(L_{0,j} + L_{1,j}\PNormDimension \left|\PartialDerivative{t}{j}\right|\right)\frac{\eta}{(1-\beta_1)\sqrt{1-\beta_2}}\right)\sum^{\hat{\tau}}_{\tau=0}\beta_2^{\tau/2}\beta_1^{-\tau}\\
    &\le
    \left(|m_{t,j}| + C_j\left|\PartialDerivative{t}{j}\right| + B_j\right)\frac{1}{\rho},
\end{align}
where in the last inequality we used the fact that $\frac{1}{\rho}\geq \frac{1}{\sqrt{1-\beta_2}}$.

Thus, we consider following two cases depending on the relative size of $|m_{t,j}|$ vs.~$C_j\left|\PartialDerivative{t}{j}\right|+ B_j$.

\textbf{Case 1}: $|m_{t,j}| > C_j\left|\PartialDerivative{t}{j}\right|+ B_j$, then
\begin{equation}
    \frac{|m_{t,j}|}{\sqrt{v_{t,j}}} > \frac{\rho}{2\sqrt{1-\beta_2}}~.\label{eq:generalized_signsgd_lower_bound_m_large}
\end{equation}

\textbf{Case 2}: $|m_{t,j}| \le C_j\left|\PartialDerivative{t}{j}\right|
+ B_j$, then we have
\begin{equation}
    \sqrt{v_{t,j}}
    \le
    \frac{2\sqrt{1-\beta_2}}{\rho}\left(C_j\left|\PartialDerivative{t}{j}\right| + B_j\right)~.
\end{equation}

Also, for $|m_{t,j}|$ we have from Lemma~\ref{lem:moment_decomp_generalized_signsgd} that
\begin{align}
    |m_{t,j}|
    =&
    (1 - \beta_1)\left|\sum^{t}_{\tau=1}\beta_1^{t-\tau} \StocGradient{\tau}{j}\right|
    \ge
    (1 - \beta_1)\left|\sum^{t}_{\tau=t-\hat{\tau}}\beta_1^{t-\tau} \StocGradient{\tau}{j}\right|
    -
    (1 - \beta_1)\left|\sum^{t-\hat{\tau}-1}_{\tau=1}\beta_1^{t-\tau} \StocGradient{\tau}{j}\right|\\
    \ge&
    \underbrace{(1-\beta_1)\left|\sum^{t}_{\tau=t-\hat{\tau}}\beta_1^{t-\tau} \PartialDerivative{\tau}{j}\right|}_{R_1}
    - \underbrace{(1-\beta_1)\left|\sum^{t}_{\tau=t-\hat{\tau}}\beta_1^{t-\tau}\left(\PartialDerivative{\tau}{j} - \StocGradient{\tau}{j}\right)\right|}_{R_2}\\
    &-
    \underbrace{(1 - \beta_1)\left|\sum^{t-\hat{\tau}-1}_{\tau=1}\beta_1^{t-\tau} \StocGradient{\tau}{j}\right|}_{R_3}~.
\end{align}
The first term can be bounded below by using Lemma~\ref{lem:true_grad_seq_generalized_signsgd} and that $\hat{\tau} + 1 \ge \bar{\tau}$:
\begin{align}
    R_1
    &\ge
    (1-\beta_1)\left|\sum^{t}_{\tau=t-\hat{\tau}}\beta_1^{t-\tau} \PartialDerivative{t}{j}\right|
    -
    (1-\beta_1)\left|\sum^{t}_{\tau=t-\hat{\tau}}\beta_1^{t-\tau}\left(\PartialDerivative{\tau}{j}-  \PartialDerivative{t}{j}\right)\right|\\
    &\ge
    \left(1-\beta_1^{\bar{\tau}}-\frac{\PNormDimension L_{1,j}\eta}{(1-\beta_1)\sqrt{1-\beta_2}}\right)\left|\PartialDerivative{t}{j}\right|
    -
    \frac{L_{0,j}\eta}{(1-\beta_1)\sqrt{1-\beta_2}}~.
\end{align}
The second term can be bounded using Lemma~\ref{lem:noise_seq_generalized_signsgd}.
Thus,
\begin{align}
    |m_{t,j}|
    \ge&
    \left(1-\beta_1^{\bar{\tau}}-\frac{\PNormDimension L_{1,j}\eta}{(1-\beta_1)\sqrt{1-\beta_2}}\right)\left|\PartialDerivative{t}{j}\right|
    -
    \frac{L_{0,j}\eta}{(1-\beta_1)\sqrt{1-\beta_2}}\\
    &- \beta_1^{\bar{\tau}}(M_j+\sigma_j)
    - (1-\beta_1)E_j\\
    \ge&
    D\left|\PartialDerivative{t}{j}\right| - B_j~,
\end{align}
where we used~\eqref{eq:beta1_bar_tau} and that $e^{-x} \le \frac{1}{x}$ for $x > 0$.

Therefore, with probability at least $1 - 3t\delta$ we have
\begin{equation}
\label{eq:lower_bound_m_over_sqrt_v}
    \frac{|m_{t,j}|}{\sqrt{v_{t,j}}}
    \ge
    \frac{\rho\left(D\left|\PartialDerivative{t}{j}\right| - B_j\right)}{2\sqrt{1-\beta_2}\left(C_j\left|\PartialDerivative{t}{j}\right| + B_j\right)}~.
\end{equation}

Given that $D>0$, depending on the relative size of $\left|\PartialDerivative{t}{j}\right|$ vs.~$B_j$, we have following two cases.

\textbf{Case 2.1}:
$\left|\PartialDerivative{t}{j}\right| < \frac{5B_j}{D}$.

\textbf{Case 2.2}: $\left|\PartialDerivative{t}{j}\right| \ge \frac{5B_j}{D}$, using the fact that the r.h.s of \eqref{eq:lower_bound_m_over_sqrt_v} is decreasing in $B_j$, we have
\begin{align}
    \frac{|m_{t,j}|}{\sqrt{v_{t,j}}}
    &\ge \frac{\frac{4D}{5}\rho\left|\PartialDerivative{t}{j}\right|}{2\sqrt{1-\beta_2}\left(C_j+\frac{D}{5}\right)\left|\PartialDerivative{t}{j}\right|}
    \ge \frac{2\rho D}{5\sqrt{1-\beta_2}(C_j + D )}
    \ge \frac{\rho D}{5\sqrt{1-\beta_2}},
\end{align}
where in the last inequality we used the fact that $C_j+D\leq 2$. Note that $D < 1$ so the above lower bound is smaller than~\eqref{eq:generalized_signsgd_lower_bound_m_large}.
\end{proof}

The following two lemmas are for the special case of $\beta_2 = 0$.
\begin{lemma}
\label{lem:signsgdm_bounded_updates}
With choices of parameters in Theorem~\ref{thm:generalized_signsgd}, when $\beta_2 = 0$, we have $\PNorm{\bx_{t+1} - \bx_{t}} = \eta\PNormDimension \le \frac{1}{\VecLInftyNorm{L_{1}}}$.
\end{lemma}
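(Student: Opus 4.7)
The proof plan is short and mostly algebraic. Setting $\beta_2=0$ collapses the second-moment recursion in Line 6 of Algorithm~\ref{alg:generalized_signsgd} to $\bv_t = \bm_t^2$ coordinate-wise (since both the initial condition $\bv_0=0$ and the recursion lose the carry-over term). Therefore $\sqrt{v_{t,j}} = |m_{t,j}|$ for each $j$, and hence $m_{t,j}/\sqrt{v_{t,j}} = \sgn(m_{t,j}) \in \{-1,0,+1\}$. The update in Line 7 becomes coordinate-wise $\pm \eta$ (which is exactly why the $\beta_2=0$ specialization reduces to SignSGD with momentum), so
\begin{equation*}
\|\bx_{t+1}-\bx_t\|_2 = \eta\, \|\sgn(\bm_t)\|_2 \le \eta\sqrt{d}~.
\end{equation*}

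The second half of the statement, $\eta\sqrt{d}\le 1/\|L_1\|_\infty$, follows by plugging in the choices from Theorem~\ref{thm:generalized_signsgd}. With $\beta_2=0$ we have $\rho = 1$ and $1-\beta_2=1$, so the assumed lower bound on $T$ reduces to $T \ge 100\,d\,\Delta\,\|L_1\|_\infty^2/\|L_0\|_1$. Combined with $\eta = \sqrt{\Delta\alpha}/\sqrt{\|L_0\|_1 T}$ and $\alpha\le 1$, this yields
\begin{equation*}
\eta\sqrt{d} \;=\; \sqrt{\tfrac{\Delta\alpha d}{\|L_0\|_1 T}} \;\le\; \sqrt{\tfrac{\Delta d}{\|L_0\|_1}}\cdot \sqrt{\tfrac{\|L_0\|_1}{100\,d\,\Delta\,\|L_1\|_\infty^2}} \;=\; \tfrac{1}{10\|L_1\|_\infty} \;\le\; \tfrac{1}{\|L_1\|_\infty}~.
\end{equation*}

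I do not expect any real obstacle here: the lemma is essentially a bookkeeping fact about the $\beta_2=0$ case plus substitution of the chosen step size and the assumed lower bound on $T$. The only thing to be careful about is verifying that $\rho$ and $1-\beta_2$ simplify as claimed when $\beta_2=0$ (so that the exponent of $T$ in the lower bound is exactly right), and that the equality $\|\bx_{t+1}-\bx_t\|_2 = \eta\sqrt{d}$ truly holds rather than an inequality, which requires $m_{t,j}\ne 0$ for every $j$; otherwise one must state it as $\le \eta\sqrt{d}$, which is all that is needed for the subsequent $(\boldsymbol{L_0},\boldsymbol{L_1})$-smoothness descent argument.
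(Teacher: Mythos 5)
Your proposal is correct and matches the paper's proof, which likewise just bounds $\eta \le \sqrt{\Delta}/\sqrt{\VecLOneNorm{L_0}T}$ and invokes the lower bound on $T$ (with $\rho=1$ and $1-\beta_2=1$) to get $\eta \le \frac{1}{10\sqrt{d}\VecLInftyNorm{L_1}}$. Your explicit justification that the $\beta_2=0$ update is a sign update of magnitude $\eta$ per coordinate, and your caveat that the displayed equality should really be a $\le$ when some $m_{t,j}=0$, are both sound and, if anything, more careful than the paper's one-line argument.
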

\begin{proof}[Proof of Lemma~\ref{lem:signsgdm_bounded_updates}]
Using the fact that $\alpha \le 1$ and the condition on $T$, we have
\begin{align}
\eta &= \frac{\sqrt{\Delta\alpha}}{\sqrt{\VecLOneNorm{L_0}}\sqrt{T}}
\leq
\frac{\sqrt{\Delta}}{\sqrt{\VecLOneNorm{L_0}}\sqrt{T}}
\leq
\frac{\sqrt{\Delta}}{\sqrt{\VecLOneNorm{L_0}}}\frac{\sqrt{\VecLOneNorm{L_0}}}{10\PNormDimension\sqrt{\Delta}\VecLInftyNorm{L_1}}
\le\frac{1}{\PNormDimension\VecLInftyNorm{L_{1}}}~.
\qedhere
\end{align}
\end{proof}

The following lemma is adapted from the proof of Theorem 2 in~\citep{CutkoskyM21}.
\begin{lemma}
\label{lem:err_unravel_signsgdmom}
Under Assumptions~\ref{asp:objective_func}, \ref{asp:l0l1_coordinate}, and \ref{asp:noise_as_bounded}, using the settings of the hyperparameters in Theorem~\ref{thm:generalized_signsgd}, denoting $\alpha = 1 - \beta_1$ and $\boldepsilon_t = \bm_t - \nabla F(\bx_t)$, for all $t\geq 1$ and $j\in[d]$ we have, with probability at least $1-3\delta$,
\begin{align}
    |\epsilon_{t+1, j}|
    \le
    &(1 - \alpha)^t\left(\alpha\sigma_j + (1-\alpha)\left|\PartialDerivative{1}{j}\right|\right) +
    \frac{\eta L_{0,j}}{\alpha}
    + \alpha E_j\\
    &
    + (1 - \alpha)\eta \PNormDimension L_{1,j}\sum^{t-1}_{\tau=0}(1-\alpha)^{\tau}\left|\PartialDerivative{t-\tau}{j}\right|~.
\end{align}
\end{lemma}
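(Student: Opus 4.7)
The plan is to unravel the recursion for $\epsilon_{t+1,j}$ into an initial-condition piece, a noise piece, and a ``drift'' piece produced by the gradient changing along the trajectory, then bound each piece separately. First I would write down the one-step recursion by substituting $m_{t+1,j}=(1-\alpha)m_{t,j}+\alpha g_{t+1,j}$ into the definition $\epsilon_{t+1,j}=m_{t+1,j}-\partial_j F(\bx_{t+1})$ and adding and subtracting $\partial_j F(\bx_t)$, obtaining
\[
\epsilon_{t+1,j}
= (1-\alpha)\epsilon_{t,j} + \alpha\,\tilde\epsilon_{t+1,j} + (1-\alpha)\bigl(\partial_j F(\bx_t)-\partial_j F(\bx_{t+1})\bigr).
\]
With the base case $\epsilon_{1,j}=\alpha\tilde\epsilon_{1,j}-(1-\alpha)\partial_j F(\bx_1)$, iterating this identity down from index $t+1$ produces the exact decomposition
\[
\epsilon_{t+1,j}
= (1-\alpha)^t\epsilon_{1,j}
+ \alpha\sum_{\tau=0}^{t-1}(1-\alpha)^\tau\tilde\epsilon_{t+1-\tau,j}
+ (1-\alpha)\sum_{\tau=0}^{t-1}(1-\alpha)^\tau\bigl(\partial_j F(\bx_{t-\tau})-\partial_j F(\bx_{t+1-\tau})\bigr).
\]

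The first summand I would bound trivially by Assumption~\ref{asp:noise_as_bounded}, yielding $(1-\alpha)^t\bigl(\alpha\sigma_j+(1-\alpha)|\partial_j F(\bx_1)|\bigr)$, which is exactly the opening term in the lemma. For the noise sum I would treat $\{\alpha(1-\alpha)^\tau\tilde\epsilon_{t+1-\tau,j}\}$ as a martingale difference sequence with respect to the filtration generated by $\xi_1,\ldots,\xi_{t+1}$: each term is bounded almost surely by $\alpha\sigma_j$, and the sum of conditional variances is at most $\alpha^2\sigma_j^2/(1-(1-\alpha)^2)\le \alpha\sigma_j^2$ using $\sqrt{1-\beta_1^2}=\sqrt{\alpha(2-\alpha)}$. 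Applying Lemma~\ref{lem:martingale_diff} then yields, with probability at least $1-3\delta$, a bound of the form $3\alpha\sigma_j\max(1,\log(1/\delta))+3\sqrt{\alpha\sigma_j^2\max(1,\log(1/\delta))}$, which a short calculation shows is at most $\alpha E_j$.

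For the gradient-difference sum I would first invoke Lemma~\ref{lem:signsgdm_bounded_updates}, which under the stated hyperparameter choices guarantees $\PNorm{\bx_{t+1-\tau}-\bx_{t-\tau}}=\eta\sqrt d\le 1/\VecLInftyNorm{L_1}$, licensing the application of the coordinate-wise relaxed-smoothness Assumption~\ref{asp:l0l1_coordinate}:
\[
|\partial_j F(\bx_{t-\tau})-\partial_j F(\bx_{t+1-\tau})|
\le \Bigl(\tfrac{L_{0,j}}{\sqrt d}+L_{1,j}|\partial_j F(\bx_{t-\tau})|\Bigr)\eta\sqrt d.
\]
Splitting this bound into its two parts and using $\sum_{\tau\ge 0}(1-\alpha)^\tau\le 1/\alpha$ on the constant part cleanly produces the $\eta L_{0,j}/\alpha$ term and the weighted-gradient tail $(1-\alpha)\eta\sqrt d\,L_{1,j}\sum_{\tau=0}^{t-1}(1-\alpha)^\tau|\partial_j F(\bx_{t-\tau})|$. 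Combining the three pieces gives the claimed inequality.

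I expect the main obstacle to be the noise-term bookkeeping. One has to verify both the correct almost-sure bound and the geometric sum of conditional variances, and then match the resulting Freedman-type constants to the definition of $E_j$; the matching requires the (easy but non-trivial) inequality $\sqrt{\alpha}\le\sqrt{1-\beta_1^2}$ and carefully justifies that re-indexing the sum into chronological order preserves the martingale structure needed for Lemma~\ref{lem:martingale_diff}. Everything else reduces to routine algebra.
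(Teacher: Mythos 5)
Your proposal is correct and follows essentially the same route as the paper: the identical one-step recursion $\epsilon_{t+1,j}=(1-\alpha)\epsilon_{t,j}+(1-\alpha)\bigl(\partial_j F(\bx_t)-\partial_j F(\bx_{t+1})\bigr)+\alpha\tilde\epsilon_{t+1,j}$ with base case $\epsilon_{1,j}=\alpha\tilde\epsilon_{1,j}-(1-\alpha)\partial_j F(\bx_1)$, the same unraveling into initial/noise/drift pieces, Lemma~\ref{lem:signsgdm_bounded_updates} plus Assumption~\ref{asp:l0l1_coordinate} for the drift, and a Freedman-type bound for the noise. The only cosmetic difference is that the paper routes the noise sum through Lemma~\ref{lem:noise_seq_generalized_signsgd} (splitting off the $i=1$ term, each piece contributing $E_j/2$) rather than applying Lemma~\ref{lem:martingale_diff} directly to the $\alpha(1-\alpha)^\tau$-weighted sequence as you do; both yield the same $\alpha E_j$ bound.
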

\begin{proof}[Proof of Lemma~\ref{lem:err_unravel_signsgdmom}]
Denote $\tilde{\boldepsilon}_t = \bg_t - \nabla F(\bx_t)$ and $S_j(\ba, \bb) = \frac{\partial F}{\partial x_j}(\ba) - \frac{\partial F}{\partial x_j}(\bb)$. Then, from Assumption~\ref{asp:l0l1_coordinate} and Lemma~\ref{lem:signsgdm_bounded_updates}, for all $t \ge 1$ and all $j\in[d]$ we have
\begin{align}
&\boldepsilon_1 = \alpha\tilde{\boldepsilon}_1-(1-\alpha)\nabla F(\bx_1),\label{eq:init_mom_grad_diff}\\
&\PNorm{\bx_{t+1} - \bx_{t}}\le\frac{1}{\VecLInftyNorm{L_{1}}}\\
&\qquad\Rightarrow |S_j(\bx_{t+1}, \bx_t)| \le \left(\frac{L_{0,j}}{\sqrt{d}} + L_{1,j}\left|\frac{\partial F}{\partial x_j}(\bx_t)\right|\right)\PNorm{\bx_{t+1} - \bx_{t}}~.\label{eq:adjacent_step_diff}
\end{align}

We can derive the following recursive formulation for any $t \ge 1$:
\begin{align*}
    m_{t+1, j}
    & =
    (1 - \alpha)m_{t, j} + \alpha \StocGradient{t+1}{j}\\
    & =
    (1 - \alpha)\PartialDerivative{t}{j} + (1 - \alpha)\epsilon_{t, j} + \alpha\PartialDerivative{t+1}{j} + \alpha\tilde{\epsilon}_{t+1,j}\\
    & =
    \PartialDerivative{t+1}{j} + (1 - \alpha)S_j(\bx_t, \bx_{t+1}) + (1 - \alpha)\epsilon_{t, j} + \alpha\tilde{\epsilon}_{t+1,j},
\end{align*}
which implies
\begin{align}
\label{eq:recur_err}
    \epsilon_{t+1, j}
    &=
    (1 - \alpha)\epsilon_{t, j} + (1 - \alpha)S_j(\bx_t, \bx_{t+1}) + \alpha\tilde{\epsilon}_{t+1,j}~.
\end{align}
Unravel~\eqref{eq:recur_err} from $1$ to $t$ gives us
\begin{equation}
    \epsilon_{t+1, j}
    =
    (1 - \alpha)^t\epsilon_{1, j} + 
    (1 - \alpha)\sum^{t-1}_{\tau=0}(1-\alpha)^{\tau} S_j(\bx_{t-\tau}, \bx_{t+1-\tau}) + \alpha\sum^{t-1}_{\tau=0}(1-\alpha)^{\tau}\tilde{\epsilon}_{t+1-\tau,j}~.
\end{equation}
Take the absolute value of both sides, to obtain
\begin{align}
    \label{eq:err_unravel}
    |\epsilon_{t+1, j}|
    \le &
    (1 - \alpha)^t|\epsilon_{1, j}| + 
    (1 - \alpha)\sum^{t-1}_{\tau=0}(1-\alpha)^{\tau} |S_j(\bx_{t-\tau}, \bx_{t+1-\tau})|\\ 
    &+ \alpha\left|\sum^{t-1}_{\tau=0}(1-\alpha)^{\tau}\tilde{\epsilon}_{t+1-\tau,j}\right|\\
    \le &
    (1 - \alpha)^t|\epsilon_{1, j}|
    + \alpha E_j\\
    &+ 
    (1 - \alpha)\sum^{t-1}_{\tau=0}(1-\alpha)^{\tau} \left(\frac{L_{0,j}}{\sqrt{d}} + L_{1,j}\left|\PartialDerivative{t-\tau}{j}\right|\right)\PNorm{\bx_{t+1-\tau} - \bx_{t-\tau}}\\
    \le &
    (1 - \alpha)^t|\epsilon_{1, j}| +
    (1 - \alpha)\eta L_{0,j}\sum^{t-1}_{\tau=0}(1-\alpha)^{\tau}\\
    &+ (1 - \alpha)\eta\PNormDimension L_{1,j}\sum^{t-1}_{\tau=0}(1-\alpha)^{\tau}\left|\PartialDerivative{t-\tau}{j}\right| + \alpha E_j\\
    \le &
    (1 - \alpha)^t\left(\alpha\sigma_j + (1-\alpha)\left|\PartialDerivative{1}{j}\right|\right) +
    \frac{\eta L_{0,j}}{\alpha}\\
    &+ (1 - \alpha)\eta\PNormDimension L_{1,j}\sum^{t-1}_{\tau=0}(1-\alpha)^{\tau}\left|\PartialDerivative{t-\tau}{j}\right|
    + \alpha E_j~,
\end{align}
where the second inequality uses~\eqref{eq:adjacent_step_diff} and Lemma~\ref{lem:noise_seq_generalized_signsgd}, the fourth and fifth inequalities use~\eqref{eq:adjacent_step_diff}, and the final one is due to~\eqref{eq:init_mom_grad_diff}.
\end{proof}

\begin{proof}[Proof of Theorem~\ref{thm:generalized_signsgd} for $\beta_2 = 0$]
From Lemma~\ref{lem:signsgdm_bounded_updates} we know that $\PNorm{\bx_{t+1} - \bx_{t}} \le \frac{1}{\VecLInftyNorm{L_{1}}}$ for all $t\in[T]$. Thus, we can apply Lemma~\ref{lem:desent_ineq_l0l1_coordinatewise} to have
\begin{align}
    &F(\bx_{t+1}) - F(\bx_t)\\
    \le&
    \langle \nabla F(\bx_t), \bx_{t+1} - \bx_t \rangle
    +
    \sum^d_{j=1}\frac{\left(\frac{L_{0,j}}{\sqrt{d}} +  L_{1,j}\left|\PartialDerivative{t}{j}\right|\right)\PNorm{\bx_{t+1}-\bx_{t}}}{2}|x_{t+1,j}-x_{t,j}|\\
    =&
    \langle \nabla F(\bx_t), -\eta\text{sign}(\bm_t) \rangle
    +
    \sum^d_{j=1}\frac{ L_{0,j}+ L_{1,j}\PNormDimension\left|\PartialDerivative{t}{j}\right|}{2}\eta^2\\
    =&
    -\eta\|\nabla F(\bx_t)\|_1
    + \eta\langle \nabla F(\bx_t), \text{sign}(\nabla F(\bx_t)) - \text{sign}(\bm_t) \rangle\\
    &+
    \sum^d_{j=1}\frac{ L_{0,j}+ L_{1,j}\PNormDimension\left|\PartialDerivative{t}{j}\right|}{2}\eta^2\\
    =&
    -\eta\|\nabla F(\bx_t)\|_1
    + 2\eta\sum^d_{j=1}\left|\PartialDerivative{t}{j}\right|\mathbb{I}\left[ \text{sign}\left(\PartialDerivative{t}{j}\right) \neq \text{sign}(m_{t,j})\right]\\
    &+
    \sum^d_{j=1}\frac{ L_{0,j}+ L_{1,j}\PNormDimension\left|\PartialDerivative{t}{j}\right|}{2}\eta^2\label{eq:descent_signsgdm}~,
\end{align}
where $\mathbb{I}(\cdot)$ is the indicator function and the first inequality uses Lemma~\ref{lem:desent_ineq_l0l1_coordinatewise},

Now, note that
\begin{align}
    \mathbb{I}\left[\text{sign}\left(\PartialDerivative{t}{j}\right) \neq \text{sign}(m_{t,j})\right]
    & \le
    \mathbb{I}\left[\left|\PartialDerivative{t}{j} - m_{t,j}\right| \ge \left|\PartialDerivative{t}{j}\right|\right]\\
    & \le
    \frac{\left|\PartialDerivative{t}{j} - m_{t,j}\right|}{\left|\PartialDerivative{t}{j}\right|}~.
\end{align}

Thus, denoting by $\boldepsilon_t = \bm_t - \nabla F(\bx_t)$ gives
\begin{align}
    &F(\bx_{t+1}) - F(\bx_t)\\
    &\le
    -\eta\|\nabla F(\bx_t)\|_1
    + 2\eta\|\boldepsilon_t\|_1
    +
    \sum^d_{j=1}\frac{L_{0,j} + L_{1,j}\PNormDimension\left|\PartialDerivative{t}{j}\right|}{2}\eta^2\\
    &=
    -\eta\|\nabla F(\bx_t)\|_1
    + 2\eta\|\boldepsilon_t\|_1
    +
    \frac{\VecLOneNorm{L_0}\eta^2}{2}
    + \frac{\eta^2\PNormDimension}{2}\sum^d_{j=1}L_{1,j}\left|\PartialDerivative{t}{j}\right|~.
\end{align}
Sum both sides over $t=1, \dots,T$, to have
\begin{align}
    F^* - F(\bx_1)
    \le
    &-\eta\sum^T_{t=1}\|\nabla F(\bx_t)\|_1
    + 2\eta\sum^T_{t=1}\|\boldepsilon_t\|_1
    +
    \frac{\VecLOneNorm{L_0}\eta^2 T}{2}\\
    &+ \frac{\eta^2\PNormDimension}{2}\sum^T_{t=1}\sum^d_{j=1}L_{1,j}\left|\PartialDerivative{t}{j}\right|~. \label{eq:decomp_ssgdm}
\end{align}
Use Lemma~\ref{lem:err_unravel_signsgdmom} to bound each coordinate of $\sum^T_{t=1}\|\boldepsilon_t\|_1$:
\begin{align}
    \sum^{T-1}_{t=0}|\epsilon_{t+1, j}|
    \le&
    \sum^{T-1}_{t=0}\left[(1 - \alpha)^t\left(\alpha\sigma_j + (1-\alpha)\left|\PartialDerivative{1}{j}\right|\right) +
    \frac{\eta L_{0,j}}{\alpha}
    + \alpha E_j\right]\\
    &+ (1 - \alpha)\eta\PNormDimension L_{1,j}\sum^{T-1}_{t=0}\sum^{t-1}_{\tau=0}(1-\alpha)^{\tau}\left|\PartialDerivative{t-\tau}{j}\right|\\
    =&
    \sigma_j +
    \frac{1}{\alpha}\left|\PartialDerivative{1}{j}\right|
    + \frac{\eta L_{0,j} T}{\alpha}
    + \alpha E_j T\\
    &+ (1 - \alpha)\eta \PNormDimension L_{1,j}\sum^{T-1}_{t=1}\sum^{t}_{\tau^{\prime}=1}(1-\alpha)^{t-\tau^{\prime}}\left|\PartialDerivative{\tau^{\prime}}{j}\right|\\
    =&
    \sigma_j +
    \frac{1}{\alpha}\left|\PartialDerivative{1}{j}\right|
    + \frac{\eta L_{0,j} T}{\alpha}
    + \alpha E_j T\\
    &+ (1 - \alpha)\eta\PNormDimension L_{1,j}\sum^{T-1}_{\tau^{\prime}=1}\left(\sum^{T-1}_{t=\tau^{\prime}}(1-\alpha)^{t}\right)(1-\alpha)^{-\tau^{\prime}}\left|\PartialDerivative{\tau^{\prime}}{j}\right|\\
    \le&
    \sigma_j +
    \frac{1}{\alpha}\left|\PartialDerivative{1}{j}\right|
    + \frac{\eta L_{0,j} T}{\alpha}
    + \alpha E_j T\\
    &+ \frac{(1 - \alpha)\eta\PNormDimension L_{1,j}}{\alpha}\sum^{T-1}_{t=1}\left|\PartialDerivative{t}{j}\right|~.
\end{align}

The above one holds with probability at least $1-3T\delta$ as we invoked Lemma~\ref{lem:err_unravel_signsgdmom} for $T$ times which in turns means invoking Lemma~\ref{lem:noise_seq_generalized_signsgd} for $T$ times. Yet, the above inequality would need to sum from $j=1$ to $d$, meaning in total we would invoke Lemma~\ref{lem:noise_seq_generalized_signsgd} for $dT$ times. Thus, following results hold with probability at least $1-3dT\delta$.

Now, put the above inequality back into \eqref{eq:decomp_ssgdm} to have
\begin{align}
    &F^* - F(\bx_1)\\
    \le
    & -\eta\sum^T_{t=1}\|\nabla F(\bx_t)\|_1
    +
    \frac{\VecLOneNorm{L_0}\eta^2T}{2}
    + \frac{\eta^2\PNormDimension}{2}\sum^T_{t=1}\sum^d_{j=1}L_{1,j}\left|\PartialDerivative{t}{j}\right|
    + 2\eta\sum^d_{j=1}\sigma_j\\
    & + 2\eta\sum^d_{j=1}\left(\frac{1}{\alpha}\left|\PartialDerivative{1}{j}\right|
    + \frac{\eta L_{0,j} T}{\alpha}
    + \alpha E_j T
    + \frac{(1 - \alpha)\eta\PNormDimension L_{1,j}}{\alpha}\sum^{T}_{t=1}\left|\PartialDerivative{t}{j}\right|\right)\\
    =
    & -\eta\sum^T_{t=1}\|\nabla F(\bx_t)\|_1
    +
    \frac{\VecLOneNorm{L_0}\eta^2T}{2}
    + \frac{\eta^2\PNormDimension}{2}\sum^T_{t=1}\sum^d_{j=1}L_{1,j}\left|\PartialDerivative{t}{j}\right|
    + 2\eta\VecLOneNorm{\sigma}\\
    &
    + \frac{2\eta}{\alpha}\|\nabla F(\bx_1)\|_1
    + \frac{2\eta^2 \VecLOneNorm{L_0} T}{\alpha}
    + 2\eta\alpha T\sum^{d}_{j=1}E_j
    + \frac{2\eta^2\PNormDimension}{\alpha}\sum^T_{t=1}\sum^d_{j=1}L_{1,j}\left|\PartialDerivative{t}{j}\right|\\
    =
    & -\eta\sum^T_{t=1}\|\nabla F(\bx_t)\|_1
    + 2\eta\VecLOneNorm{\sigma}
    + \frac{2\eta}{\alpha}\|\nabla F(\bx_1)\|_1
    + 2\eta\alpha T\sum^{d}_{j=1}E_j\\
    &
    + \left(\frac12 + \frac{2}{\alpha}\right)\VecLOneNorm{L_0}\eta^2T + \left(\frac12 + \frac{2}{\alpha}\right)\eta^2\PNormDimension\sum^T_{t=1}\sum^d_{j=1}L_{1,j}\left|\PartialDerivative{t}{j}\right|~.
\end{align}

Now, using the definitions of $\eta$ and $\alpha$, and the conditions on $T$, we have
\begin{align}
    &\left(\frac12 + \frac{2}{\alpha}\right)\eta^2\PNormDimension L_{1,j}\\
    &\leq
    \frac{\eta}{2}\left(1 + \frac{4}{\alpha}\right)\frac{\PNormDimension\sqrt{\Delta\alpha}}{\sqrt{T}}\frac{\VecLInftyNorm{L_1}}{\sqrt{\VecLOneNorm{L_0}}}
    \le
    \frac{\eta}{2}\frac{5\PNormDimension\sqrt{\Delta}}{\sqrt{\alpha T}}\frac{\VecLInftyNorm{L_1}}{\sqrt{\VecLOneNorm{L_0}}}\\
    &=
    \frac{\eta}{2}\frac{5\PNormDimension\sqrt{\Delta}}{\sqrt{ T}}\frac{\VecLInftyNorm{L_1}}{\sqrt{\VecLOneNorm{L_0}}}\cdot\max\left(\frac{\sqrt{\VecLOneNorm{\sigma}}T^{1/4}}{\VecLOneNorm{L_0}^{1/4}\Delta^{1/4}}, 1\right)\\
    &=
    \frac{\eta}{2}\max\left(\frac{5\PNormDimension\sqrt{\VecLOneNorm{\sigma}}\VecLInftyNorm{L_1}\Delta^{1/4}}{\VecLOneNorm{L_0}^{3/4}T^{1/4}},\frac{5\PNormDimension\sqrt{\Delta}}{\sqrt{ T}}\frac{\VecLInftyNorm{L_1}}{\sqrt{\VecLOneNorm{L_0}}}\right)
    \le
    \frac{\eta}{2}~.
\end{align}

Thus, we have
\begin{align}
    F^* - F(\bx_1)
    \le
    &-\frac{\eta}{2}\sum^T_{t=1}\|\nabla F(\bx_t)\|_1
    +
    2\eta\VecLOneNorm{\sigma}
    + \frac{2\eta}{\alpha}\|\nabla F(\bx_1)\|_1\\
    &+ \left(\frac12 + \frac{2}{\alpha}\right)\VecLOneNorm{L_0}\eta^2T
    + 2\eta\alpha T\sum^{d}_{j=1}E_j~.
\end{align}
Divide both sides by $T$ and rearrange terms to give
\begin{align}
    &\frac{1}{T}\sum^T_{t=1}\|\nabla F(\bx_t)\|_1\\
    \le
    &\frac{2}{\eta T}[F(\bx_1) - F^*]
    + \frac{4}{T}\VecLOneNorm{\sigma}
    + \frac{4}{\alpha T}\|\nabla F(\bx_1)\|_1
    +
    \frac{5}{\alpha}\VecLOneNorm{L_0}\eta\\
    &+
    24\VecLOneNorm{\sigma}(\alpha\max(1, \log(1/\delta))
    + \sqrt{\alpha}\sqrt{\max(1, \log(1/\delta))})~.
\end{align}

Now, we need to consider the following two cases:
\begin{enumerate}
    \item $\VecLOneNorm{\sigma} < \frac{\sqrt{\VecLOneNorm{L_0}}\sqrt{\Delta}}{\sqrt{T}}$: then $\alpha = 1$ and $\eta = \frac{\sqrt{\Delta}}{\sqrt{\VecLOneNorm{L_0}}\sqrt{T}}$
    \begin{align}
        &\frac{1}{T}\sum^T_{t=1}\|\nabla F(\bx_t)\|_1\\
        &\le
        \frac{2\sqrt{\VecLOneNorm{L_0}}}{\sqrt{\Delta}\sqrt{T}}[F(\bx_1) - F^*]
        + \frac{5\VecLOneNorm{L_0}\sqrt{\Delta}}{\sqrt{\VecLOneNorm{L_0}}\sqrt{T}}\\
        &+ \frac{4\sqrt{\VecLOneNorm{L_0}}\sqrt{\Delta}}{T^{3/2}}
        + \frac{4}{T} \|\nabla F(\bx_1)\|_1\\
        &\quad
        + \frac{24\sqrt{\VecLOneNorm{L_0}}\sqrt{\Delta}(\max(1, \log(1/\delta)) + \sqrt{\max(1, \log(1/\delta))})}{\sqrt{T}}\\
        &\le
        \frac{59\max(1, \log(1/\delta))\sqrt{\VecLOneNorm{L_0}\Delta}}{\sqrt{T}} + \frac{4}{T} \|\nabla F(\bx_1)\|_1\label{eq:small_noise_bound}~.
    \end{align}
    \item $\VecLOneNorm{\sigma}\ge\frac{\sqrt{\VecLOneNorm{L_0}\Delta}}{\sqrt{T}}$: then $\alpha = \frac{\sqrt{\VecLOneNorm{L_0}\Delta}}{\VecLOneNorm{\sigma}\sqrt{T}} \le 1$ and $\eta = \frac{\Delta^{3/4}}{\VecLOneNorm{L_0}^{1/4}\sqrt{\VecLOneNorm{\sigma}}T^{3/4}}$
    \begin{align}
        \frac{1}{T}\sum^T_{t=1}\|\nabla F(\bx_t)\|_1
        \le& \frac{2\VecLOneNorm{L_0}^{1/4}\sqrt{\VecLOneNorm{\sigma}}}{\Delta^{3/4}T^{1/4}}[F(\bx_1) - F^*]
        + \frac{4\VecLOneNorm{\sigma}}{T}\\
        &+
        \frac{4\VecLOneNorm{\sigma}}{\sqrt{\VecLOneNorm{L_0}\Delta T}}\|\nabla F(\bx_1)\|_1\\
        &+
        \frac{5\VecLOneNorm{L_0}^{1/4}\sqrt{\VecLOneNorm{\sigma}}\Delta^{1/4}}{T^{1/4}}\\
        &+ \frac{24\max(1, \log(1/\delta))\sqrt{\VecLOneNorm{L_0}\Delta}}{\sqrt{T}}\\
        &+\frac{24\sqrt{\max(1, \log(1/\delta))}\VecLOneNorm{L_0}^{1/4}\sqrt{\VecLOneNorm{\sigma}}\Delta^{1/4}}{T^{1/4}}\\
        \le&
        \frac{31\sqrt{\max(1, \log(1/\delta))}\VecLOneNorm{L_0}^{1/4}\Delta^{1/4}\sqrt{\VecLOneNorm{\sigma}}}{T^{1/4}}\\
        &+ \frac{24\max(1, \log(1/\delta))\sqrt{\VecLOneNorm{L_0}\Delta}}{\sqrt{T}}\\
        &+ \frac{4\VecLOneNorm{\sigma}\|\nabla F(\bx_1)\|_1}{\sqrt{\VecLOneNorm{L_0}\Delta T}}
        + \frac{4\VecLOneNorm{\sigma}}{T}~.
    \end{align}
\end{enumerate}

Put $\delta^{\prime} = \frac{\delta}{3dT}$ concludes the proof.
\end{proof}

\begin{proof}[Proof of Theorem~\ref{thm:generalized_signsgd} for general $\beta_2$]
Note that when $\VecLOneNorm{\sigma}\le\frac{\sqrt{\VecLOneNorm{L_0}\Delta}}{\sqrt{T}}$, $\alpha = 1$, $\beta_2 < \beta_1^2 = 0$. Then our Generalized SignSGD algorithm~\ref{alg:generalized_signsgd} reduces to the SignSGD algorithm, and thus has the same guarantee of~\eqref{eq:small_noise_bound}. Therefore, we only consider the other case when $\VecLOneNorm{\sigma}\ge\frac{\sqrt{\VecLOneNorm{L_0}\Delta}}{\sqrt{T}}$.

Note that the only randomness comes from evaluating stochastic gradients. In the following proof, we will need to invoke Lemma~\ref{lem:noise_seq_generalized_signsgd} for $T$ times for each coordinate $j\in[d]$. Therefore, the following results hold with probability at least $1 - 3dT\delta$. For simplicity, we use the term "with high probability" later in the proof to denote this.

We derive the following quantity which will be used multiple times later:
\begin{equation}
    \frac{\eta}{\alpha}
    =
    \frac{1}{\sqrt{\VecLOneNorm{L_0}}}\cdot\frac{\sqrt{\Delta}}{\sqrt{T}} \cdot \frac{1}{\sqrt{\alpha}}
    =
    \frac{1}{\sqrt{\VecLOneNorm{L_0}}} \cdot\frac{\sqrt{\Delta}}{\sqrt{T}} \cdot \frac{\VecLOneNorm{\sigma}^{1/2}T^{1/4}}{\VecLOneNorm{L_0}^{1/4}\Delta^{1/4}}
    =
    \frac{\VecLOneNorm{\sigma}^{1/2}\Delta^{1/4}}{\VecLOneNorm{L_0}^{3/4}T^{1/4}}~.\label{eq:eta_over_alpha}
\end{equation}

First, from Lemma~\ref{lem:generalized_signsgd_update_lower_bound} we have $D = 1-\frac{2\PNormDimension\VecLInftyNorm{L_{1}}\eta}{(1-\beta_1)\sqrt{1-\beta_2}}$. Then, from the choice of the hyperparameters, we have
\begin{equation}
    \frac{\PNormDimension\VecLInftyNorm{L_{1}}\eta}{(1-\beta_1)\sqrt{1-\beta_2}}
    =
    \frac{\VecLInftyNorm{L_{1}}}{\sqrt{1-\beta_2}} \cdot \frac{\PNormDimension\VecLOneNorm{\sigma}^{1/2}\Delta^{1/4}}{\VecLOneNorm{L_0}^{3/4}T^{1/4}}
    \le
    \frac{\rho}{10}
    \le
    \frac{1}{10}~.\label{eq:L1_eta_over_alpha}
\end{equation}
Thus, we have $D \ge 1 - \frac{1}{5} \ge \frac12$ and, as $\sqrt{\beta_2}/\beta_1 < 1$,
\begin{equation}
    \frac{\rho D}{5\sqrt{1-\beta_2}}
    \ge
    \frac{\rho}{10\sqrt{1-\beta_2}}
    =
    A~.\label{eq:update_lower_bound_generalized_signsgd}
\end{equation}

Also, for those coordinates with small gradients $\left|\PartialDerivative{t}{j}\right| < \frac{5B_j}{D} \le 10B_j$, we have
\begin{align}
    &\PartialDerivative{t}{j}\cdot(x_{t+1, j} - x_{t,j})\\
    =
    &-\eta\PartialDerivative{t}{j}\cdot\frac{m_{t,j}}{\sqrt{v_{t,j}}}\\
    =
    &-A\eta\left|\PartialDerivative{t}{j}\right| + \eta\left|\PartialDerivative{t}{j}\right|\cdot\left(A - \text{sign}\left(\PartialDerivative{t}{j}\right)\frac{m_{t,j}}{\sqrt{v_{t,j}}}\right)\\
    \le
    &-A\eta\left|\PartialDerivative{t}{j}\right| + 10B_j\eta\left(\frac{1}{\sqrt{1-\beta_2}} + A\right)~. \label{eq:inner_prod_upper_bound_grad_small}
\end{align}

We are now ready to prove the theorem. We will need to use Lemma~\ref{lem:generalized_signsgd_update_lower_bound}, hence we first need to show that all past true gradients are bounded by $M_j$, namely that, for any $t$, $\left|\PartialDerivativeGeneral{\bx_{\tau}}{j}\right| \le M_j$ holds for all $\tau \le t$ and all $j\in[d]$. From the definition of $M_j$ stated in the theorem, in order to guarantee this, we only need to prove that $F(\bx_{\tau}) \le F(\bx_1)$ for all $\tau\le t$. We will prove this by induction.

For $t = 1$ the condition is trivially true.

We then assume that the condition holds for $t$ and prove based on this that it still holds for $t+1$.

For those coordinates with $\left|\PartialDerivative{t}{j}\right| \ge \frac{5B_j}{D}$,
denote $\hat{\tau} = \lfloor\bar{\tau}\rfloor = \left\lfloor\frac{\sqrt{1-\beta_2}}{\eta\PNormDimension\VecLInftyNorm{L_1}}\right\rfloor$, with high probability, we have
\begin{align}
    \left|m_{t,j} - \PartialDerivative{t}{j}\right|
    &=
    \left|(1 - \beta_1)\sum^{t}_{\tau=1}\beta_1^{t-\tau} \StocGradient{\tau}{j} - \PartialDerivative{t}{j} \right|\\
    &\le
    \left|(1 - \beta_1)\sum^{t-\hat{\tau}-1}_{\tau=1}\beta_1^{t-\tau} \StocGradient{\tau}{j}\right| + \left|(1 - \beta_1)\sum^{t}_{\tau=t-\hat{\tau}}\beta_1^{t-\tau} \StocGradient{\tau}{j} - \PartialDerivative{t}{j} \right|\\
    &\le
    \beta_1^{\bar{\tau}}(M_j + \sigma_j)
    +
    \beta_1^{\bar{\tau}}\left|\PartialDerivative{t}{j}\right|\\
    &\quad+(1-\beta_1)\left|\sum^{t}_{\tau=t-\hat{\tau}}\beta_1^{t-\tau} \left(\PartialDerivative{\tau}{j}- \PartialDerivative{t}{j}\right)\right|\\
    &\quad+
    (1-\beta_1)\left|\sum^{t}_{\tau=t-\hat{\tau}}\beta_1^{t-\tau}\left(\StocGradient{\tau}{j} - \PartialDerivative{\tau}{j}\right)\right|\\
    &\le
    \left(1 - D\right)\left|\PartialDerivative{t}{j}\right| + B_j\\
    &\le
    \left(1 - \frac{4D}{5}\right)\left|\PartialDerivative{t}{j}\right|
    \le
    \left|\PartialDerivative{t}{j}\right|, \label{eq:equal_sign_moment_grad}
\end{align}
where the first equality comes from Lemma~\ref{lem:moment_decomp_generalized_signsgd}; for the second inequality, the third term can be bounded using Lemma~\ref{lem:true_grad_seq_generalized_signsgd}, and the final term can be bounded using Lemma~\ref{lem:noise_seq_generalized_signsgd}; for the third inequality, we used~\eqref{eq:beta1_bar_tau} and that $e^{-x}\le\frac{1}{x}$ for $x > 0$.
This inequality implies that $\text{sign}(m_{t,j}) = \text{sign}\left(\PartialDerivative{t}{j}\right)$ with high probability.

Denote $\mathcal{U}_t = \left\{j\in[d]: \left|\PartialDerivative{t}{j}\right| \ge \frac{5B_j}{D}\right\}$. From the choices of hyperparameters we can show that $1\le\bar{\tau}$ which means $\PNorm{\bx_{t+1} - x_{t}}\le\frac{1}{\VecLInftyNorm{L_{1}}}$ (Lemma~\ref{lem:generalized_signsgd_bounded_updates_t}). Thus, using Lemma~\ref{lem:desent_ineq_l0l1_coordinatewise}, with high probability we have
\begin{align}
    & F(\bx_{t+1}) - F(\bx_t)\\
    \le&
    \langle \nabla F(\bx_t), \bx_{t+1} - \bx_t \rangle
    +
    \sum^d_{j=1}\frac{\left(\frac{L_{0,j}}{\sqrt{d}} +  L_{1,j}\left|\PartialDerivative{t}{j}\right|\right)\PNorm{\bx_{t+1}-\bx_{t}}}{2}|x_{t+1,j}-x_{t,j}|\\
    =&
    \sum^d_{j=1}\left(-\PartialDerivative{t}{j} \cdot \eta\text{sign}(m_{t,j})\frac{|m_{t,j}|}{\sqrt{v_{t,j}}}\right)\\
    &+    \sum^d_{j=1}\left(\frac{\left( \frac{L_{0,j}}{\sqrt{d}}+ L_{1,j}\left|\PartialDerivative{t}{j}\right|\right)\PNorm{\bx_{t+1}-\bx_{t}}}{2}|x_{t+1,j}-x_{t,j}|\right)\\
    \le&
    \sum_{j\in\mathcal{U}_t}-\eta\left|\PartialDerivative{t}{j}\right|\cdot \frac{|m_{t,j}|}{\sqrt{v_{t,j}}}
    +
    \sum_{j\not\in\mathcal{U}_t}\left(-A\eta\left|\PartialDerivative{t}{j}\right|+
    10\eta B_j\left(\frac{1}{\sqrt{1-\beta_2}} + A\right)\right)\\
    &+
    \sum^d_{j=1}\frac{ L_{0,j}+ L_{1,j}\PNormDimension\left|\PartialDerivative{t}{j}\right|}{2(1-\beta_2)}\eta^2\\
    &\le
    -A\eta\|\nabla F(x_t)\|_1
    +
    \sum^d_{j=1}\frac{ L_{0,j}+ L_{1,j}\PNormDimension\left|\PartialDerivative{t}{j}\right|}{2(1-\beta_2)}\eta^2\\
    &+ 10\eta\left(\frac{1}{\sqrt{1-\beta_2}} + A\right)\sum^d_{j=1}B_j, \label{eq:descent_generalized_signsgd}
\end{align}
where the second inequality uses~\eqref{eq:inner_prod_upper_bound_grad_small},~\eqref{eq:equal_sign_moment_grad}, and Lemma~\ref{lem:moment_decomp_generalized_signsgd}, and the third inequality uses Lemma~\ref{lem:generalized_signsgd_update_lower_bound} and~\eqref{eq:update_lower_bound_generalized_signsgd}.

Now, noticing the conditions on $\eta$, $\alpha$, $\beta_2 < \beta_1^2 < \beta_1$, and $T$, use~\eqref{eq:eta_over_alpha} to have
\begin{align}
    \frac{\eta^2\PNormDimension L_{1,j}}{2(1-\beta_2)}
    \le
    \frac{\eta\PNormDimension\VecLInftyNorm{L_{1}}}{2}\cdot\frac{\eta}{\alpha}
    =
    \frac{\eta}{2}\frac{\PNormDimension\VecLInftyNorm{L_{1}}\VecLOneNorm{\sigma}^{1/2}\Delta^{1/4}}{\VecLOneNorm{L_0}^{3/4}T^{1/4}}
    \le
    \frac{\eta}{2}\frac{\rho\sqrt{1-\beta_2}}{10}
    \le
    \frac{\eta}{2}A~.
\end{align}
Thus,~\eqref{eq:descent_generalized_signsgd} becomes
\begin{align}
    F(\bx_{t+1}) - F(\bx_t)
    \le
    &-\frac{A\eta}{2}\|\nabla F(x_t)\|_1
    +
    \frac{\eta^2\VecLOneNorm{L_{0}}}{2(1-\beta_2)}\\
    &+ 10\eta\left(\frac{1}{\sqrt{1-\beta_2}} + A\right)\sum^d_{j=1}B_j~. \label{eq:main_descent_generalized_signsgd}
\end{align}
Therefore, either $F(\bx_{t+1}) - F(\bx_t) \le 0$ or 
\begin{equation}
\|\nabla F(\bx_t)\|_1\le \frac{\eta\VecLOneNorm{L_0}}{A(1-\beta_2)} + 20\left(\frac{1}{A\sqrt{1-\beta_2}} + 1\right)\sum^d_{j=1}B_j~.\label{eq:generalized_signsgd_stop_condition}
\end{equation}

This concludes the mathematical induction up until~\eqref{eq:generalized_signsgd_stop_condition} is met for the first time which we denote as $T_0$. 
In the following, we will explain that if \eqref{eq:generalized_signsgd_stop_condition} holds then the algorithm has found an approximate stationary point.

Now, suppose $T \le T_0$, then~\eqref{eq:main_descent_generalized_signsgd} holds all the time and we sum both sides of it from $1$ to $T$ to have, with high probability,
\begin{align}
    F^* - F(\bx_1)
    \le
    &-\frac{A\eta}{2}\sum^T_{t=1}\|\nabla F(\bx_t)\|_1 +
    \frac{\VecLOneNorm{L_0}\eta^2T}{2(1-\beta_2)}
    + 10\eta T\left(\frac{1}{\sqrt{1-\beta_2}} + A\right)\sum^d_{j=1}B_j~.
\end{align}
Rearrange terms to obtain
\begin{align}
    \min_{t\in[T]}\|\nabla F(\bx_t)\|_1
    \le
    &\frac{1}{T}\sum^T_{t=1}\|\nabla F(\bx_t)\|_1
    \le
    \frac{2}{A\eta T}[F(\bx_1) - F^*] +
    \frac{\eta\VecLOneNorm{L_0}}{A(1-\beta_2)}\\
    &+ 20\left(\frac{1}{A\sqrt{1-\beta_2}} + 1\right)\sum^d_{j=1}B_j\label{eq:generalized_signsgd_min_grad_norm_bound}~.
\end{align}

Note that RHS of~\eqref{eq:generalized_signsgd_stop_condition} is less than RHS of~\eqref{eq:generalized_signsgd_min_grad_norm_bound}. Thus, for the other case of $T > T_0$,~\eqref{eq:generalized_signsgd_min_grad_norm_bound} still holds.

Recalling that $\rho = 1-\frac{\sqrt{\beta_2}}{\beta_1}$, $A = \frac{\rho}{10\sqrt{1-\beta_2}}$, $\beta_2 < \beta_1^2 < \beta_1$, and $B_j \triangleq
\frac{\eta L_{0,j}}{(1-\beta_1)\sqrt{1-\beta_2}} + \beta_1^{\bar{\tau}}(M_j + \sigma_j)
+ 6(1-\beta_1) \sigma_j\max(1, \log(1/\delta))
+ \frac{6(1-\beta_1)}{\sqrt{1-\beta_1^2}}\sqrt{\sigma_j^2\max(1, \log(1/\delta))}$, we have
\begin{align}
    \min_{t\in[T]}\|\nabla F(\bx_t)\|_1
    \le
    &\frac{20}{\rho\eta T}[F(\bx_1) - F^*] +
    \frac{10\eta\VecLOneNorm{L_0}}{\rho\sqrt{1-\beta_1}}\\
    &+ 20\left(\frac{10}{\rho} + 1\right)\left(\frac{\VecLOneNorm{L_{0}}\eta}{(1-\beta_1)\sqrt{1-\beta_2}} + \beta_1^{\bar{\tau}}(\VecLOneNorm{M} + \VecLOneNorm{\sigma})\right)\\
    &
    +120\VecLOneNorm{\sigma}(1-\beta_1)\left(\frac{10}{\rho} + 1\right)\max(1, \log(1/\delta))\\
    &
    +120\VecLOneNorm{\sigma}(1-\beta_1)\left(\frac{10}{\rho} + 1\right)\frac{1}{\sqrt{1-\beta_1^2}}\sqrt{\max(1, \log(1/\delta))}~.
\end{align}

When $\VecLOneNorm{\sigma}\ge\frac{\sqrt{\VecLOneNorm{L_0}\Delta}}{\sqrt{T}}$, then $\alpha = \frac{\sqrt{\VecLOneNorm{L_0}\Delta}}{\VecLOneNorm{\sigma}\sqrt{T}}$ and $\eta = \frac{\Delta^{3/4}}{\VecLOneNorm{L_0}^{1/4}\sqrt{\VecLOneNorm{\sigma}}T^{3/4}}$. Hence,
\begin{align}
    &\min_{t\in[T]}\|\nabla F(\bx_t)\|_1\\
    \le&
    \frac{20\Delta^{1/4}\VecLOneNorm{L_0}^{1/4}\VecLOneNorm{\sigma}^{1/2}T^{3/4}}{\rho T}
    + \frac{10\sqrt{\VecLOneNorm{L_0}\Delta}}{\rho\sqrt{T}}\\
    &+ 20\left(\frac{10}{\rho} + 1\right)\left(\frac{\Delta^{1/4}\VecLOneNorm{\sigma}^{1/2}\VecLOneNorm{L_{0}} }{\sqrt{1-\beta_2}\VecLOneNorm{L_0}^{3/4}T^{1/4}} + \beta_1^{\bar{\tau}}(\VecLOneNorm{M} + \VecLOneNorm{\sigma})\right)\\
    &+120\left(\frac{10}{\rho} + 1\right)\frac{\sqrt{\VecLOneNorm{L_0}\Delta}}{\sqrt{T}}\max(1, \log(1/\delta))\\
    &+ 120\left(\frac{10}{\rho} + 1\right)\frac{\Delta^{1/4}\VecLOneNorm{L_0}^{1/4}\VecLOneNorm{\sigma}^{1/2}}{T^{1/4}}\sqrt{\max(1, \log(1/\delta))}\\
    \le&
    \left(\frac{20}{\rho T^{1/4}} + \left(\frac{10}{\rho} + 1\right)\frac{20+120\sqrt{\max(1, \log(1/\delta))}}{\sqrt{1-\beta_2}T^{1/4}}\right)\Delta^{1/4}\VecLOneNorm{L_0}^{1/4}\VecLOneNorm{\sigma}^{1/2}\\
    &+\left(\frac{10}{\rho}
    + 120\left(\frac{10}{\rho} + 1\right)\max(1, \log(1/\delta))\right)\frac{\sqrt{\VecLOneNorm{L_0}\Delta}}{\sqrt{T}}\\
    &+ 20\left(\frac{10}{\rho} + 1\right)\beta_1^{\bar{\tau}}(\VecLOneNorm{M} + \VecLOneNorm{\sigma})\\
    \le&
    \frac{1560\Delta^{1/4}\VecLOneNorm{L_0}^{1/4}\VecLOneNorm{\sigma}^{1/2}\sqrt{\max(1, \log(1/\delta))}}{\rho\sqrt{1-\beta_2}T^{1/4}}\\ &+ \frac{1330\max(1, \log(1/\delta))\sqrt{\VecLOneNorm{L_0}\Delta}}{\rho\sqrt{T}}\\
    &+\frac{220}{\rho}(\VecLOneNorm{M} + \VecLOneNorm{\sigma})\exp\left(-\frac{\sqrt{1-\beta_2}\VecLOneNorm{L_0}^{3/4}}{\PNormDimension\VecLInftyNorm{L_1}\VecLOneNorm{\sigma}^{1/2}\Delta^{1/4}}T^{1/4}\right)~.
\end{align}

Finally, taking $\delta^{\prime} = \frac{\delta}{3dT}$, we obtain the stated result.
\end{proof}

\section{Experiments Comparing our Generalized SignSGD with Others}
\label{sec:generalized_sgd_experiments}
To validate the efficacy of our Algorithm~\ref{alg:generalized_signsgd}, we compare it with Adam~\citep{KingmaB15}, SGD~\citep{robbins1951stochastic}, SGDClipGrad, and SGDClipMomentum. The latter two are from Algorithm 1 in~\citep{ZhangJFW20} where SGDClipGrad corresponds to the case where $\nu = 0$ and SGDClipMomentum corresponds to the case when $\nu = 1$.

\textbf{Training} Unless otherwise specified, we use grid-search to fine-tune the initial step size for all optimizers, as well as the clipping threshold for SGDClipGrad and SGDClipMomentum, and $\beta_2$ for Adam and our algorithm, to select the one giving the best validation performance on a separated validation set. We then employ the best performing hyperparameters to train the model over all training data and report the testing performance. The testing is repeated with random seeds 5 times to eliminate the influence of stochasticity.

\begin{table}[t]
    \centering
    \caption[Hyperparameter grid search range and final choices for training a 20-layer ResNet on CIFAR-10]{Hyperparameter grid search ranges and choices yielding the highest validation accuracy for each optimizer for training a 20-layer Resnet to do image classification on CIFAR-10. ({"lr" denotes the initial step size, "clip" denotes the clipping parameter $\gamma$ in Algorithm 1 of~\citet{ZhangJFW20}, and "$\beta_2$" is defined in~\citet{KingmaB15} for Adam and in Algorithm~\ref{alg:generalized_signsgd} for ours.})}
    \label{tab:cifar10_hp}
    {\scriptsize
    \begin{tabular}{|c|c|c|}
        \hline
        Optimizer & Grid Search Range & Best Choice \\
        \hline
        SGD Momentum & lr \{1e-5, 0.0001, 0.001, 0.01, 0.05, 0.07, 0.1, 0.2, 0.3, 1, 10\} & lr=0.07\\
        \hline
        SGDClipGrad & \makecell{lr \{0.001, 0.01, 0.05, 0.1, 0.5, 1, 10\}\\ clip \{0.1, 1, 10\}} & \makecell{lr=0.5\\ clip=1} \\
        \hline
        SGDClipMomentum & \makecell{lr \{0.001, 0.01, 0.1, 1, 5, 10, 20, 50\}\\ clip \{0.01, 0.1, 1, 10\}} & \makecell{lr=10\\ clip=0.1} \\
        \hline
        Adam & \makecell{lr \{1e-5, 0.0001, 0.0007, 0.0009, 0.001, 0.002, 0.003, 0.01, 0.1\} \\ $\beta_2$ \{0.4, 0.8, 0.999\}} & \makecell{lr=0.0009\\ $\beta_2$=0.999} \\
        \hline
        Our Algorithm~\ref{alg:generalized_signsgd} & \makecell{lr \{5e-5, 8e-5, 0.0001, 0.0002, 0.0005, 0.001, 0.01\} \\ $\beta_2$ \{0.4, 0.8, 0.999\}} & \makecell{lr=0.0002\\ $\beta_2$ = 0.999} \\
        \hline
    \end{tabular}
    }
\end{table}

\textbf{Hyperparameter Tuning} During the validation stage, we used grid-search to fine-tune respective hyperparameters and choose the ones that yield the best validation results.
We tuned the hyperparameters using the following two-stage grid searching strategy: First, search over a coarse grid, and select the one yielding the best validation result. Next, continue searching in a fine grid centering at the best-performing hyperparameters found in the coarse stage, and in turn, take the best one as the final choice. Also, whenever the best-performing hyperparameters lie in the boundary of the searching grid, we always extend the grid to make the final best-performing hyperparameters fall into the interior of the grid, if possible.

\begin{table}[t]
    \centering
    \caption[Hyperparameter grid search range and final choices for training an AWD-LSTM on Penn Treebank]{Hyperparameter grid search ranges and choices yielding the lowest validation perplexity for each optimizer on training an AWD-LSTM to do language modeling on Penn Treebank. ({"wd" denotes the weight decay value, "lr" denotes the initial step size, "clip" denotes the clipping parameter $\gamma$ in Algorithm 1 of~\citet{ZhangJFW20}, and "$\beta_2$" is defined in~\citet{KingmaB15} for Adam and in Algorithm~\ref{alg:generalized_signsgd} for ours.})}
    \label{tab:ptb_hp}
    {\small{
    \begin{tabular}{|c|c|c|}
        \hline
        Optimizer & Grid Search Range & Best Choice \\
        \hline
        SGD Momentum & \makecell{wd \{1e-7, 1.2e-6, 5e-6, 1e-5, 1e-4, 1e-3\}\\ lr \{0.001, 0.01, 0.1, 0.5, 0.8, 1, 2, 4, 5\}} & \makecell{wd=1e-5 \\ lr=1}\\
        \hline
        SGDClipGrad & \makecell{wd \{1e-7, 1.2e-6, 5e-6, 1e-5\}\\ lr \{0.1, 0.5, 1, 5, 10, 20, 30, 40, 50, 60, 70\}\\ clip \{1, 2.5, 7.5, 10, 15, 20\}} & \makecell{wd=1.2e-6 \\ lr=50 \\ clip=10}\\
        \hline
        SGDClipMomentum & \makecell{wd \{1e-7, 1.2e-6, 5e-6, 1e-5\}\\ lr \{5, 10, 20, 30, 50, 100\}\\ clip \{1, 2.5, 7.5\}} & \makecell{wd=1.2e-6 \\ lr=20 \\ clip=2.5} \\
        \hline
        Adam & \makecell{wd \{1e-7, 1.2e-6, 5e-6, 1e-5\}\\ lr \{0.0001, 0.001, 0.002, 0.003, 0.01, 0.1\}\\ $\beta_2$ \{0.4, 0.8, 0.999\}} & \makecell{wd=5e-6 \\ lr=0.002 \\ $\beta_2$=0.999} \\
        \hline
        Our Algorithm~\ref{alg:generalized_signsgd} & \makecell{wd \{1e-7, 1.2e-6, 5e-6, 1e-5\}\\ lr \{0.0001, 0.001, 0.002, 0.003, 0.01, 0.1\}\\ $\beta_2$ \{0.4, 0.8, 0.999\}} & \makecell{wd=1.2e-6 \\ lr=0.001 \\ $\beta_2$=0.999}  \\
        \hline
    \end{tabular}
    }}
\end{table}

\begin{figure}[t]
    \centering
    \includegraphics[width=\textwidth]{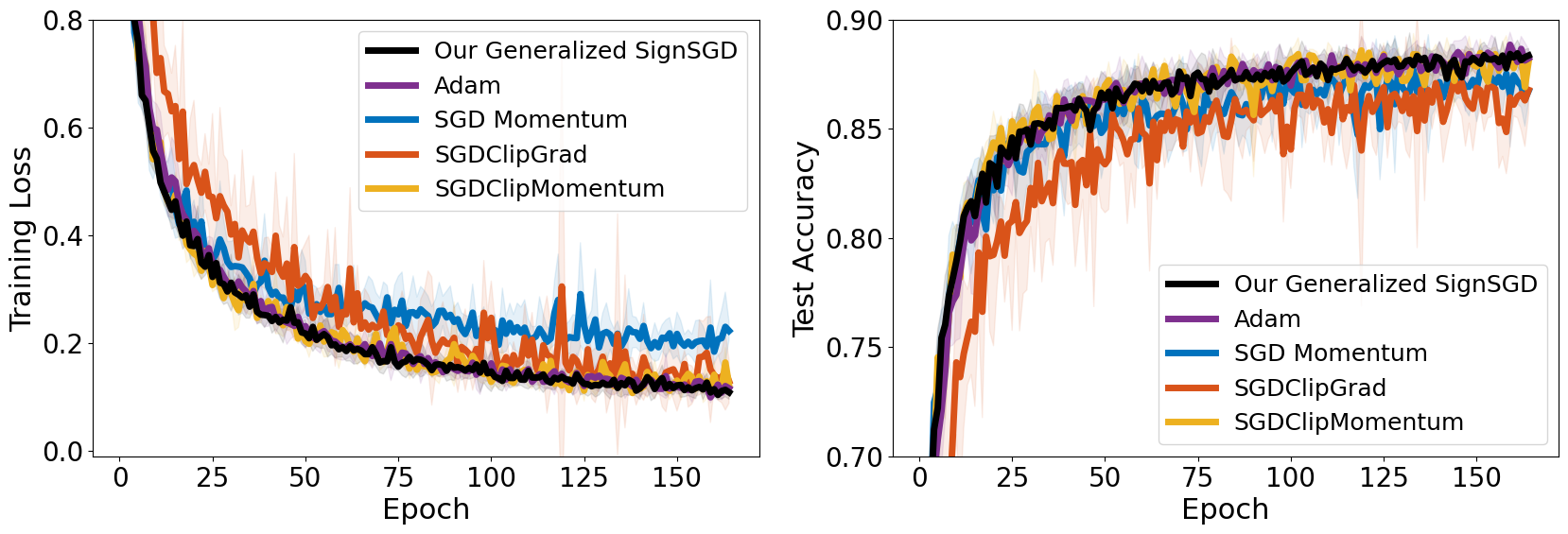}
    \caption[Comparison of our Generalized SignSGD with other optimizers on training a 20-layer Resnet on CIFAR10.]{Training a 20-layer Resnet on CIFAR10. The shading of each curve represents the 95\% confidence interval computed across $5$ independent runs from different random seeds.}
    \label{fig:cifar10}
\end{figure}

\begin{figure}[t]
    \centering
    \includegraphics[width=\textwidth]{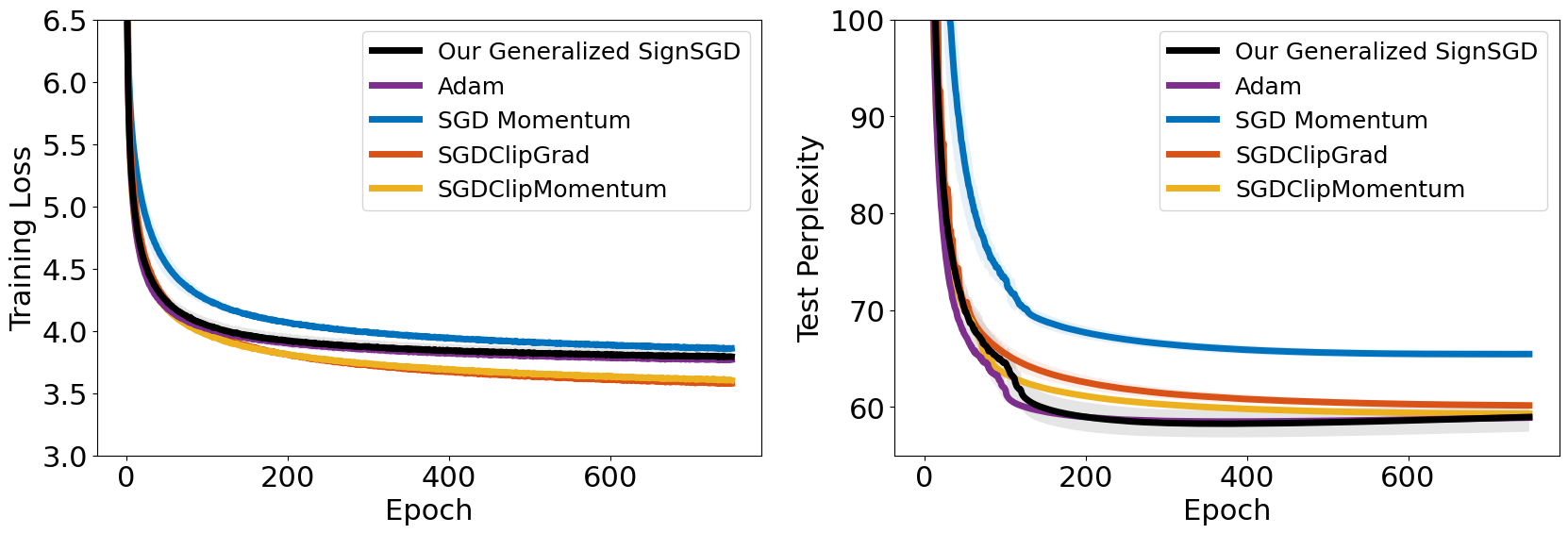}
    \caption[Comparison of our Generalized SignSGD with other optimizers on taining an AWD-LSTM on Penn Treebank.]{Training an AWD-LSTM model to do language modeling (word level) on Penn Treebank. The shading of each curve represents the 95\% confidence interval computed across five independent runs from different random seeds.}
    \label{fig:penntreebank}
\end{figure}

\textbf{Resnet for Image Classification on CIFAR-10}
We employ the 20-layer Residual Network model~\citep{HeZRS16} to do image classification on the CIFAR-10 dataset. Images are normalized per channel using the means and standard deviations
computed from all training images. We adopt the data augmentation technique following~\citep{LeeXGZT15} (for training only): 4 pixels are padded on each side of an image and a 32 × 32 crop is randomly sampled from the padded image or its horizontal flip. The mini-batch size is $128$ and we train all algorithms for $164$ epochs. We do not employ any step size decay schedule in order to focus on the comparison of the optimizers themselves. We fixed the weight decay value to be $0.0001$ and the momentum parameter ($\beta_1)$ to be $0.9$.
We randomly selected $10\%$ images from the training dataset for validation. Yet, during testing, we trained on the whole training dataset.
The detailed search ranges and the hyperparameter choices yielding the highest validation accuracy for each optimizer are listed in Table~\ref{tab:cifar10_hp}.
Figure~\ref{fig:cifar10} and Table~\ref{tab:generalized_signsgd_results} report the training and testing performance for each algorithm, showing that our algorithm closely matches Adam and is among the best of all.

\textbf{LSTM for Language Modeling on Penn Treebank}
We adopt a 3-layer AWD-LSTM \citep{merity2018regularizing} to do language modeling on the Penn Treebank (PTB) dataset \citep{marcus1993building}(word level). The mini-batch size is $40$ and we trained each algorithm for $750$ epochs.
We used the original train-validation-test split that comes with the dataset.
Apart from the hyperparameters we stated above, we fixed The momentum parameter ($\beta_1)$ to be $0.9$ except for SGDClipGrad which does not use momentum, and further fine-tuned the weight decay value for all algorithms as we noticed its significant influence on the performance. We choose the set of hyperparameters that give the smallest final validation perplexity. The detailed search ranges and the hyperparameter choices yielding the lowest validation perplexity for each optimizer are listed in Table~\ref{tab:ptb_hp}.
We report the results in Figure~\ref{fig:penntreebank} and Table~\ref{tab:generalized_signsgd_results}. It can be seen that we can match the performance of Adam while beating the others.

\begin{table*}[t]
\caption[Final results of experiments comparing our generalized SignSGD algorithm with other optimizers]{Average final training loss and test accuracy achieved by each method when optimizing respective models on each dataset. The $\pm$ shows $95\%$ confidence intervals of the mean loss/accuracy/perplexity value over 5 runs starting from different random seeds.}
\label{tab:generalized_signsgd_results}
\centering
{\scriptsize
\begin{tabular}{|c|c|c|c|c|}
\hline
\multirow{2}{*}{Methods} & \multicolumn{2}{c|}{CIFAR10} & \multicolumn{2}{c|}{Penn Treebank}\\
\cline{2-5}
& Training loss & Test accuracy & Training loss & Test perplexity \\
\hline
SGD Momentum & $0.2226 \pm 0.0169$ & $0.8674 \pm 0.0048$ & $3.8587 \pm 0.0058$ & $65.4622 \pm 0.3842$\\
\hline
SGDClipGrad & $0.1288 \pm 0.0403$ & $0.8677 \pm 0.0106$ & $\mathbf{3.5774 \pm 0.0081}$ & $60.1604 \pm 0.2797$\\
\hline
SGDClipMomentum & $0.1220 \pm 0.0162$ & $0.8809 \pm 0.0022$ & $3.6038 \pm 0.0102$ & $59.3052 \pm 0.2798$\\
\hline
Adam & $0.1161 \pm 0.0111$ & $0.8823 \pm 0.0041$ & $3.7692 \pm 0.0062$ & $\mathbf{58.9005 \pm 0.3058}$\\
\hline
Our Algorithm~\ref{alg:generalized_signsgd} & $\mathbf{0.1086 \pm 0.0129}$ & $\mathbf{0.8835 \pm 0.0032}$ & $3.7928 \pm 0.0425$ & $58.9661 \pm 1.5218$\\
\hline
\end{tabular}}
\end{table*}

\section{Conclusion}
\label{sec:generalized_sgd_conclusion}
Smoothness has been a widely adopted condition for proving convergence rates of algorithms in the non-convex optimization scenario. Yet, it has been found that this assumption does not capture losses when employing some deep learning models including RNNs and LSTMs. In light of this, a relaxed smoothness assumption was proposed that aligns well with the practice. We observed that the loss surface of training using Transformers also exhibits this relaxed smoothness. Under this assumption, SGD with clipped gradient has been proven to work well. However, we found that clipping is not necessary for achieving convergence in such a setting. Indeed, we showed that a generalized SignSGD algorithm does not require explicit clipping but can almost guarantee the same bound as SGD with clipping. In the analyses, we identified the key effect of using momentum in analyzing Adam-type algorithms, that it reduces both the noise and the unbounded gradient norms. Finally, we conducted a variety of deep learning tasks showing that our algorithm can match Adam's performance while exceeding others.

\textbf{Limitations} The current work is in no way a perfect one and there are many directions worth exploring beyond it. First of all, though our algorithm could be seen as a close resemblance to the original Adam algorithm, they are still not equal. Considering the huge popularity of Adam and its established effectivity in practice, it is worth studying whether Adam in its original form can converge in the relaxed smooth setting. Second, while our Theorem~\ref{thm:generalized_signsgd} are upper bounds and cannot be directly compared between the two cases of $\beta_2$, it does suggests that $\beta_2 = 0$ minimizes the worst-case convergence rate. However, it still does not fully explain the phenomenon that a choice of $\beta_2$ close to $1$ yields better performance in using our Algorithm~\ref{alg:generalized_signsgd} as well as Adam in practice. Third, despite there are lower bounds showing that, for example, GD with a constant step size can be arbitrarily worse than GD with clipping, it would be more meaningful to study whether the relaxed smooth condition is inherently more difficult, possibly by establishing a lower bound for all first-order optimization algorithms. Fourth, we did show that Transformers observe the relaxed smoothness condition, but we consider it more beneficial to research in-depth what properties or structures make a model satisfy such conditions. Finally, when conducting our experiments, we observed that the weight decay value plays a prominent role in each optimizer's performance, and that the best weight decay value varies for different optimizers. Thus, one potential direction would be to explore different ways of incorporating the regularization in a way to preserve the scale-freeness~\citep{orabona2015scale, OrabonaP18} of Algorithm~\ref{alg:generalized_signsgd}, just as AdamW does~\citep{ZhuangLCO21}.
\cleardoublepage

\chapter{Conclusions}
\label{chapter:Conclusions}
\thispagestyle{myheadings}

Non-convex optimization problems have been attracting much attention in recent years, especially with the successes of deep neural networks. A dominant optimization algorithm for such a scenario is gradient descent/stochastic gradient descent which necessitates specifying a parameter called the step size. This parameter plays a critical role in the performance of GD/SGD and often requires very careful tuning for each problem individually. The tuning is notoriously tedious and time and resources consuming. To ease this burden, adaptive algorithms are proposed which can automatically guarantee near-optimal convergence even without knowledge of certain properties of the objective problem.

In this dissertation, we discussed our work on studying adaptive strategies in non-convex optimization in three scenarios. We first designed and analyzed algorithms that can adapt to the level of noise in evaluating stochastic gradients in the general smooth non-convex setting and the setting with an additional PL condition. We then addressed the scenario when gradient scales can vary drastically across layers/coordinates, identified scenes when Adam performs worse than AdamW, and unearthed the correlation between AdamW's advantage and its scale-freeness property which makes it adaptive to the gradient scales. Finally, we attacked the relaxed smoothness setting by reporting empirical evidence showing that Transformers observe such a condition and introducing a generalized SignSGD algorithm that incorporates the empirical excellence of Adam and the theoretical merits of SGD with gradient clipping including the adaptivity to the smoothness parameter.

Nevertheless, there are still abundant directions for studying adaptivity in non-convex optimization that is yet to be explored. We believe the progress in this topic of adaptivity would be really beneficial to the field of non-convex optimization as adaptivity means less tuning needed which leads to faster exploration and iteration on making novel findings and on applying existing methods to new tasks.
\cleardoublepage

\begin{appendices}
\chapter{Supporting Materials}
\label{chapter:support_materials}
\thispagestyle{myheadings}

\section{Omitted Proofs of Lemmas and Theorems}
\label{sec:omit_proofs}

\begin{proof}[Proof of Lemma~\ref{lem:no_local_minimum_convex}]
We will prove by contradiction. Suppose there exists a global minimum point $\bx^*$ and a local minimum point $\tilde{\bx}$ with $F(\tilde{\bx}) > F(\bx^*)$.

From the definition of a local minimum point, there exists a neighborhood $\mathcal{N}\subset\mathcal{X}$ around $\tilde{\bx}$ such that $F(\bx) \ge F(\tilde{\bx})$ for any $\bx\in\mathcal{N}$.

Consider a point $\by = \theta\tilde{\bx} + (1-\theta)\bx^*$ for some $\theta$ with $0\le\theta\le1$. From the definition of a convex function, $\mathcal{X}$ is a convex set thus $\by\in\mathcal{X}$. Also, we have
\begin{equation}
    F(\by) \le \theta F(\tilde{\bx}) + (1-\theta)F(\bx^*) < \theta F(\tilde{\bx}) + (1-\theta)F(\tilde{\bx}) = F(\tilde{\bx})~.
\end{equation}

Pick $\theta$ to be sufficiently close to $1$ such that $\by\in\mathcal{N}$, we 
get a contradiction.
\end{proof}

\begin{proof}[Proof of Theorem~\ref{thm:pgd_convex}]
\begin{align}
    F(\bar{\bx}) - F(\bx^*)
    &=
    F\left(\frac{1}{T}\sum^T_{t=1}\bx_t\right) - F(\bx^*)\\
    &\le
    \frac{1}{T}\sum^T_{t=1}F\left(\bx_t\right) - F(\bx^*)\\
    &\le
    \frac{1}{T}\sum^T_{t=1}\langle\nabla F(\bx_t), \bx_t - \bx^*\rangle\\
    &=
    \sum^T_{t=1}\frac{\|\bx_t - \bx^*\|_2^2 - \|\bx_t - \eta\nabla F(\bx_t) - \bx^*\|_2^2 + \eta^2\|\nabla F(\bx_t)\|_2^2}{2\eta T}\\
    &\le
    \sum^T_{t=1}\frac{\|\bx_t - \bx^*\|_2^2 - \|\bx_{t+1} - \bx^*\|_2^2 + \eta^2\|\nabla F(\bx_t)\|_2^2}{2\eta T}\\
    &=
    \frac{\|\bx_1 - \bx^*\|_2^2 - \|\bx_{T+1} - \bx^*\|_2^2}{2\eta T} + \frac{\eta}{2T}\sum^T_{t=1}\|\nabla F(\bx_t)\|_2^2\\
    &\le
    \frac{D^2}{2\eta T} + \frac{\eta}{2T}\sum^T_{t=1}\|\nabla F(\bx_t)\|_2^2,
\end{align}
where the first inequality uses the Jensen's inequality for convex functions~\citep[Section 3.1.8]{BoydV04}, the second inequality uses the convexity of $F$, the third one uses the projection lemma~\citep[Lemma 14.9]{shalev2014understanding}, and the last one uses the bounded domain assumption.
\end{proof}

\section{The Histograms of Update Scales of each Coordinate during the Entire Training Phase of Adam vs.~AdamW}
\label{sec:hist_entire_train}
In this section, we report the histograms of the absolute value of updates of Adam-$\ell_2$ vs.~AdamW of all coordinates divided by the initial step size $\alpha$ during the whole training process. From the figures shown below, we can clearly see that AdamW's updates remain in a much more concentrated scale range than Adam-$\ell_2$ during the entire training. Moreover, as the depth of the network grows, Adam-$\ell_2$'s updates become more and more dispersed, while AdamW's updates are still concentrated. \emph{(Note that the leftmost bin contains all values equal to or less than $2^{-27}\approx10^{-8.1}$ and the rightmost bin contains all values equal to or larger than $1$.)}

\begin{figure}[t]
\centering
\begin{subfigure}{0.4\textwidth}
\includegraphics[width=\linewidth]{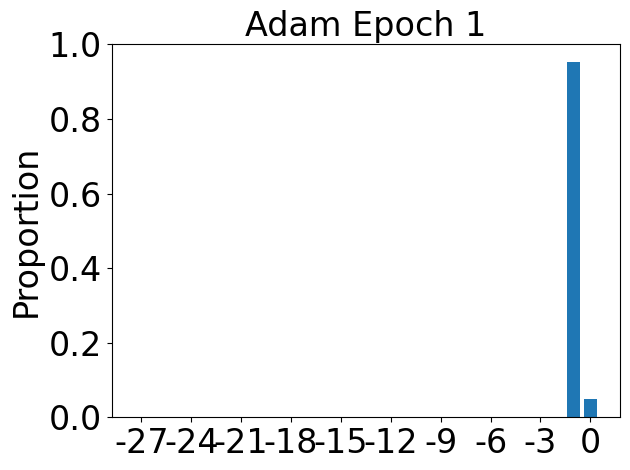}
\includegraphics[width=\linewidth]{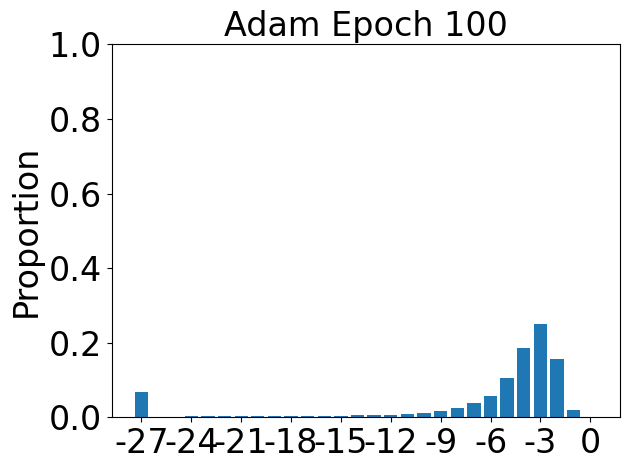}
\includegraphics[width=\linewidth]{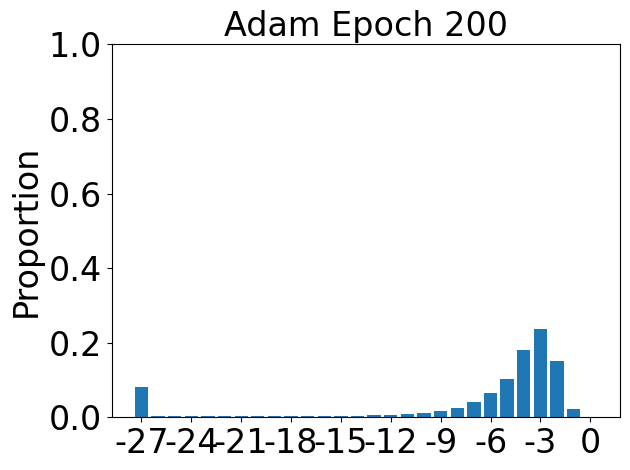}
\includegraphics[width=\linewidth]{figs/adapt_scale/histograms_updates/CIFAR10_resnet20_nobn/magnitude_histogram_CIFAR10_resnet20_nobn_Adam_update_no_alpha_Epoch_299.png}
\caption{Adam-$\ell_2$}
\label{fig:cifar10_resnet20_nobn_adam}
\end{subfigure}
\hspace{0.02\textwidth}
\begin{subfigure}{0.4\textwidth}
\includegraphics[width=\linewidth]{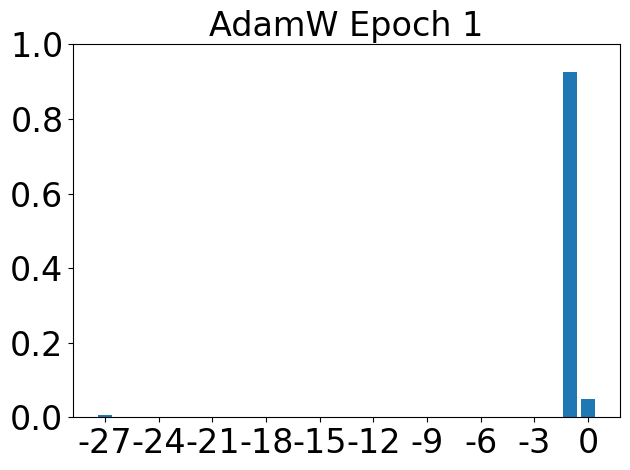}
\includegraphics[width=\linewidth]{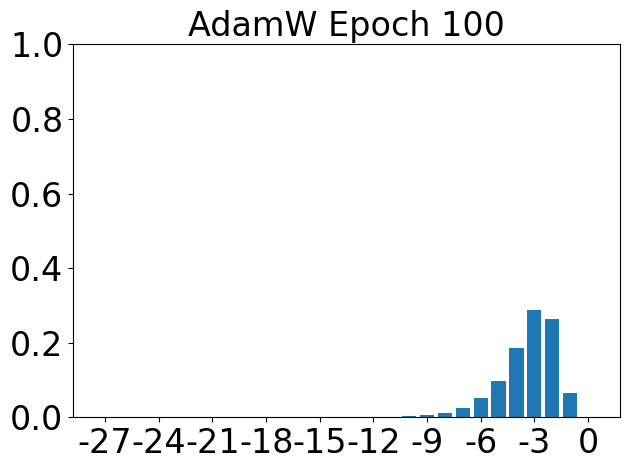}
\includegraphics[width=\linewidth]{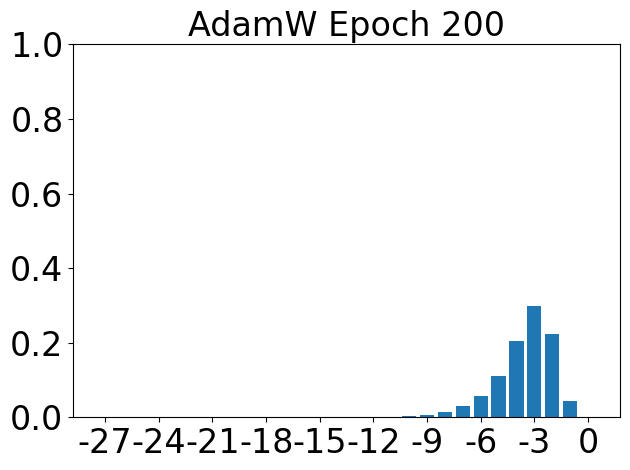}
\includegraphics[width=\linewidth]{figs/adapt_scale/histograms_updates/CIFAR10_resnet20_nobn/magnitude_histogram_CIFAR10_resnet20_nobn_AdamW_update_no_alpha_Epoch_299.png}
\caption{AdamW}
\label{fig:cifar10_resnet20_nobn_adamw}
\end{subfigure}
\caption{The histograms of the magnitudes of all updates of a 20-layer Resnet with BN disabled trained by AdamW or Adam-$\ell_2$  on CIFAR10.}
\label{fig:cifar10_resnet20_nobn_histo}
\end{figure}

\begin{figure}[t]
\centering
\begin{subfigure}{0.4\textwidth}
\includegraphics[width=\linewidth]{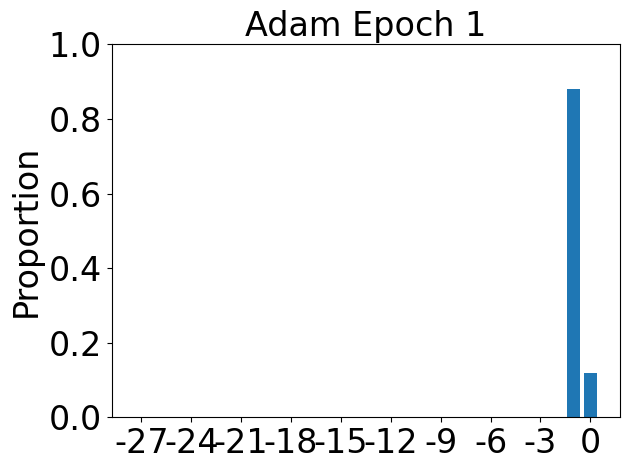}
\includegraphics[width=\linewidth]{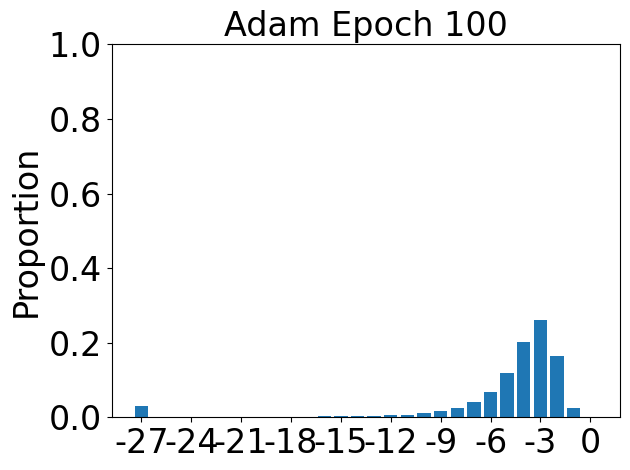}
\includegraphics[width=\linewidth]{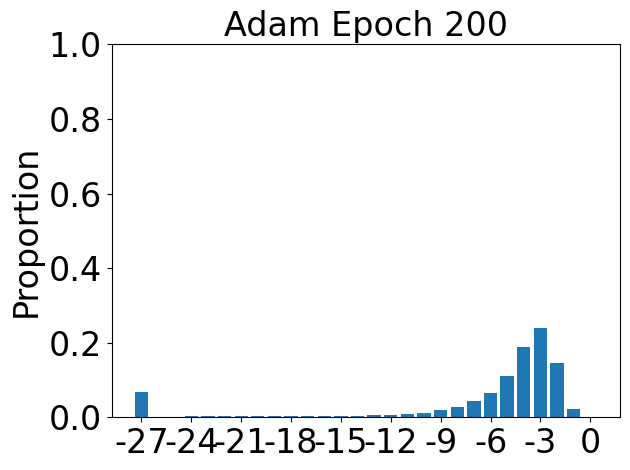}
\includegraphics[width=\linewidth]{figs/adapt_scale/histograms_updates/CIFAR10_resnet44_nobn/magnitude_histogram_CIFAR10_resnet44_nobn_Adam_update_no_alpha_Epoch_299.png}
\caption{Adam-$\ell_2$}
\label{fig:cifar10_resnet44_nobn_adam}
\end{subfigure}
\hspace{0.02\textwidth}
\begin{subfigure}{0.4\textwidth}
\includegraphics[width=\linewidth]{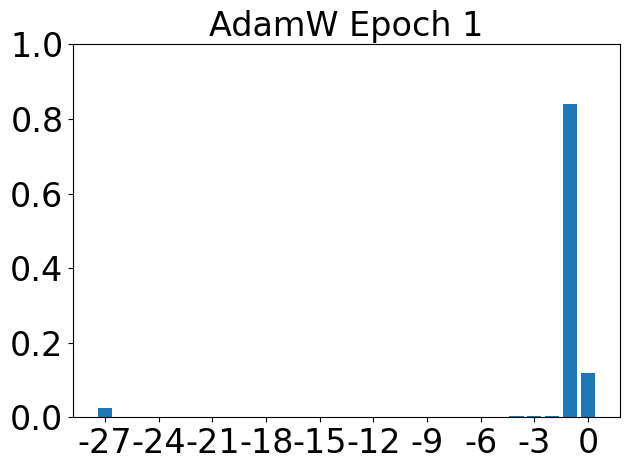}
\includegraphics[width=\linewidth]{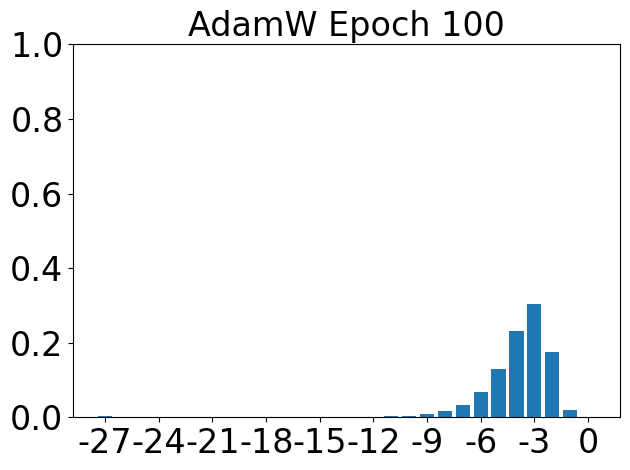}
\includegraphics[width=\linewidth]{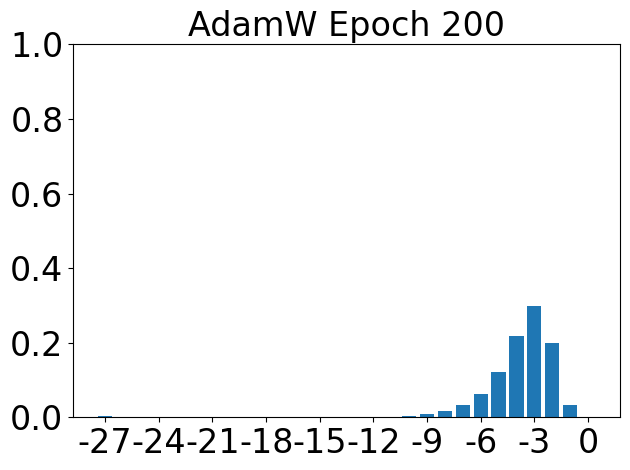}
\includegraphics[width=\linewidth]{figs/adapt_scale/histograms_updates/CIFAR10_resnet44_nobn/magnitude_histogram_CIFAR10_resnet44_nobn_AdamW_update_no_alpha_Epoch_299.png}
\caption{AdamW}
\label{fig:cifar10_resnet44_nobn_adamw}
\end{subfigure}
\caption{The histograms of the magnitudes of all updates of a 44-layer Resnet with BN disabled trained by AdamW or Adam-$\ell_2$  on CIFAR10.}
\label{fig:cifar10_resnet44_nobn_histo}
\end{figure}

\begin{figure}[t]
\centering
\begin{subfigure}{0.4\textwidth}
\includegraphics[width=\linewidth]{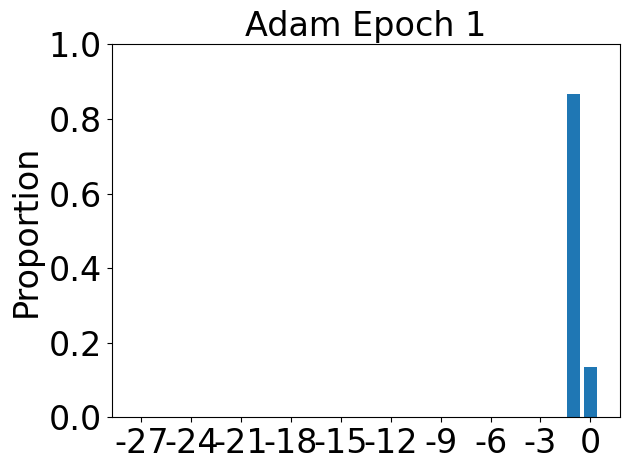}
\includegraphics[width=\linewidth]{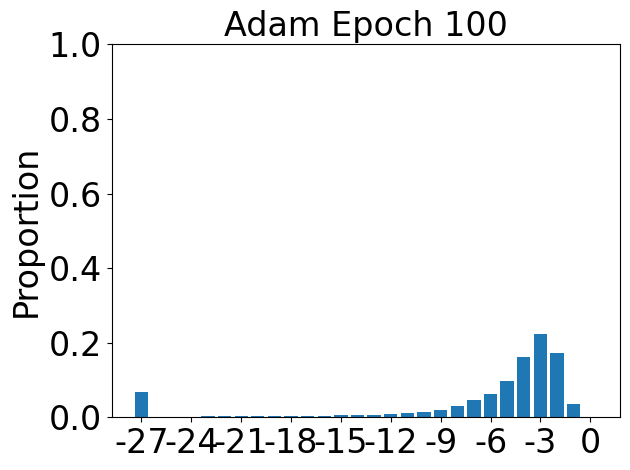}
\includegraphics[width=\linewidth]{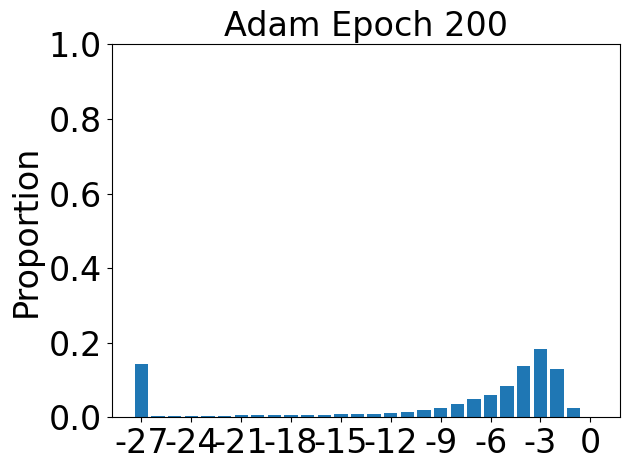}
\includegraphics[width=\linewidth]{figs/adapt_scale/histograms_updates/CIFAR10_resnet56_nobn/magnitude_histogram_CIFAR10_resnet56_nobn_Adam_update_no_alpha_Epoch_299.png}
\caption{Adam-$\ell_2$}
\label{fig:cifar10_resnet56_nobn_adam}
\end{subfigure}
\hspace{0.02\textwidth}
\begin{subfigure}{0.4\textwidth}
\includegraphics[width=\linewidth]{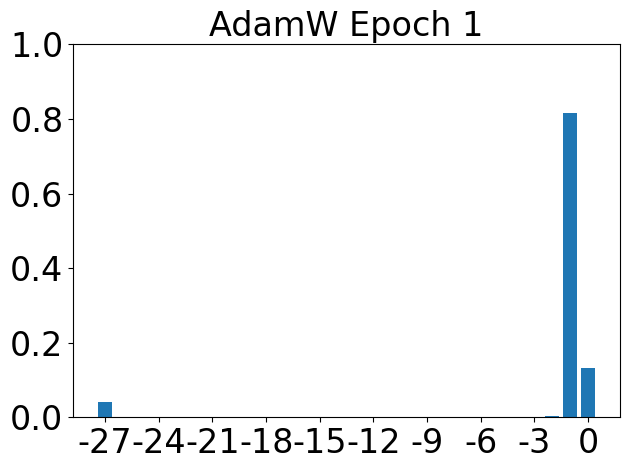}
\includegraphics[width=\linewidth]{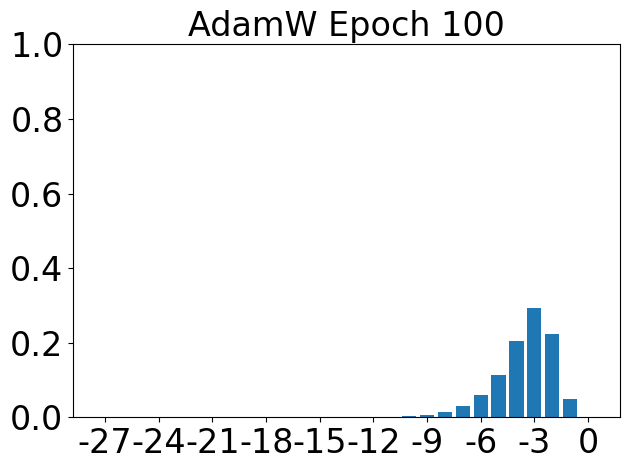}
\includegraphics[width=\linewidth]{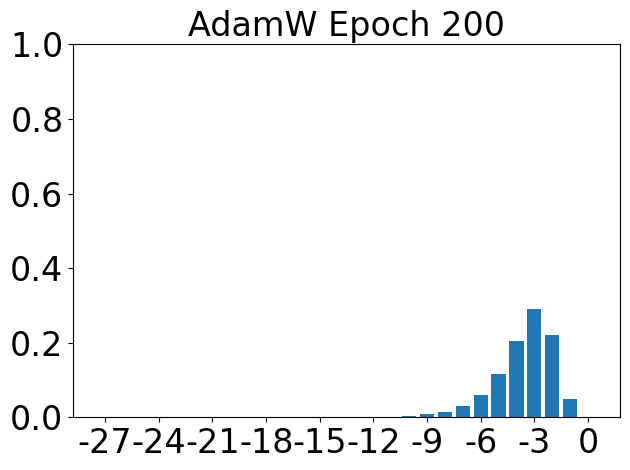}
\includegraphics[width=\linewidth]{figs/adapt_scale/histograms_updates/CIFAR10_resnet56_nobn/magnitude_histogram_CIFAR10_resnet56_nobn_AdamW_update_no_alpha_Epoch_299.png}
\caption{AdamW}
\label{fig:cifar10_resnet56_nobn_adamw}
\end{subfigure}
\caption{The histograms of the magnitudes of all updates of a 56-layer Resnet with BN disabled trained by AdamW or Adam-$\ell_2$  on CIFAR10.}
\label{fig:cifar10_resnet56_nobn_histo}
\end{figure}

\begin{figure}[t]
\centering
\begin{subfigure}{0.4\textwidth}
\includegraphics[width=\linewidth]{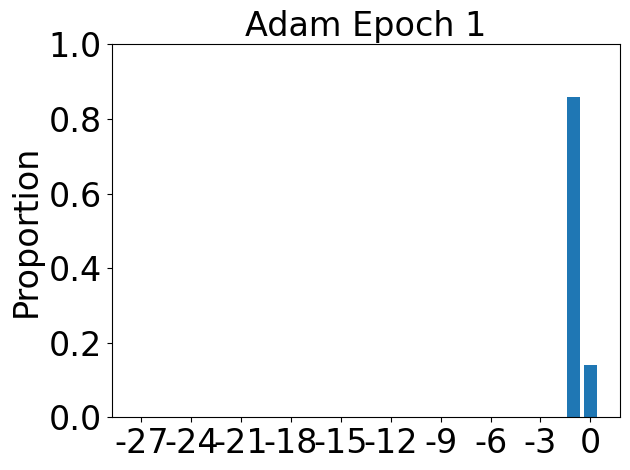}
\includegraphics[width=\linewidth]{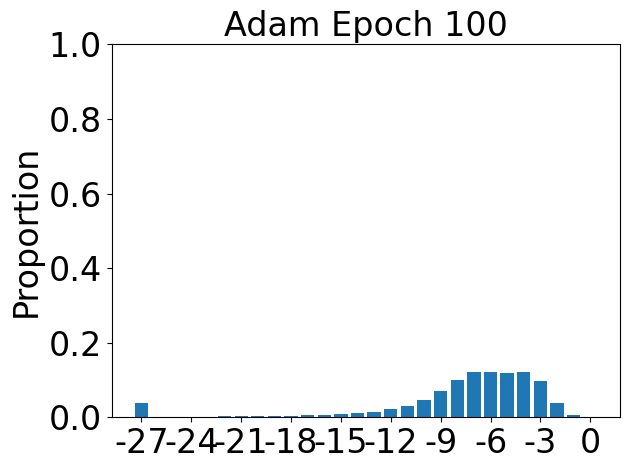}
\includegraphics[width=\linewidth]{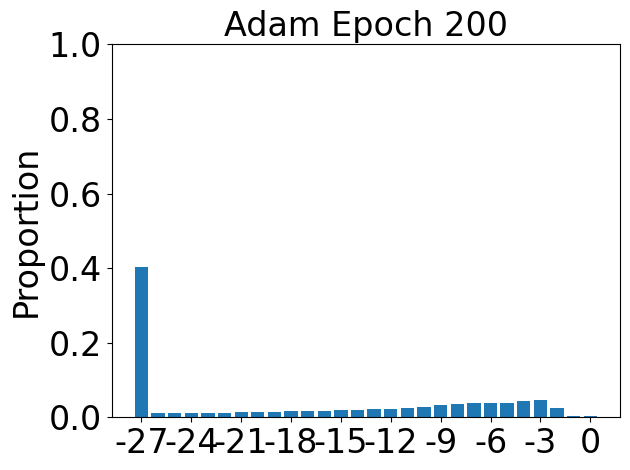}
\includegraphics[width=\linewidth]{figs/adapt_scale/histograms_updates/CIFAR10_resnet110_nobn/magnitude_histogram_CIFAR10_resnet110_nobn_Adam_update_no_alpha_Epoch_299.png}
\caption{Adam-$\ell_2$}
\label{fig:cifar10_resnet110_nobn_adam}
\end{subfigure}
\hspace{0.02\textwidth}
\begin{subfigure}{0.4\textwidth}
\includegraphics[width=\linewidth]{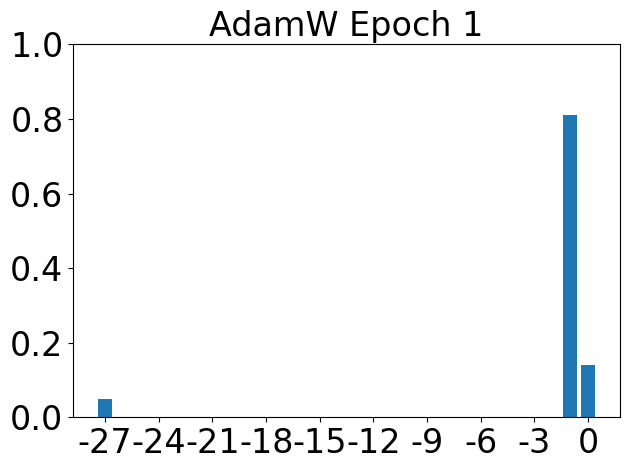}
\includegraphics[width=\linewidth]{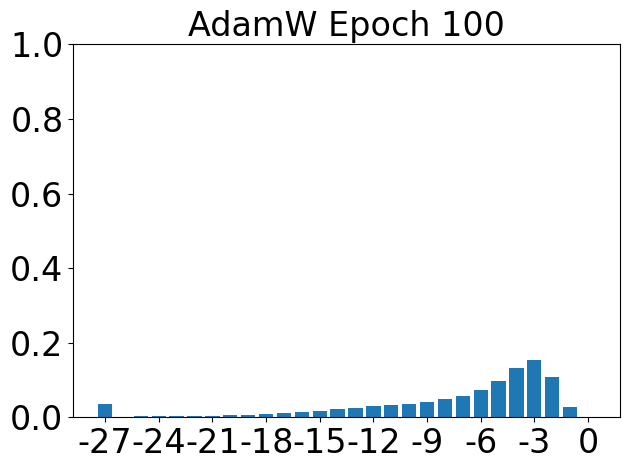}
\includegraphics[width=\linewidth]{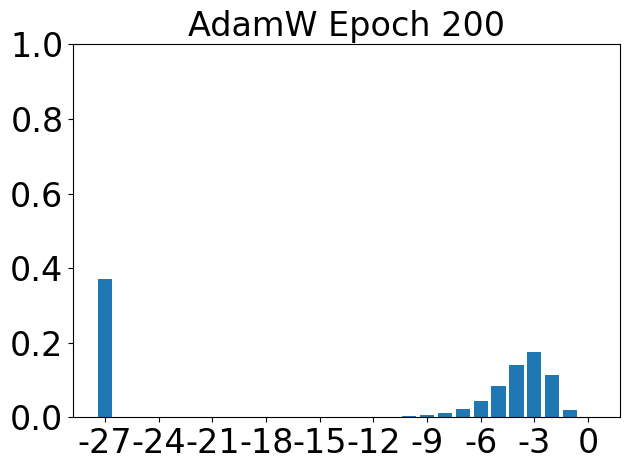}
\includegraphics[width=\linewidth]{figs/adapt_scale/histograms_updates/CIFAR10_resnet110_nobn/magnitude_histogram_CIFAR10_resnet110_nobn_AdamW_update_no_alpha_Epoch_299.png}
\caption{AdamW}
\label{fig:cifar10_resnet110_nobn_adamw}
\end{subfigure}
\caption{The histograms of the magnitudes of all updates of a 110-layer Resnet with BN disabled trained by AdamW or Adam-$\ell_2$  on CIFAR10.}
\label{fig:cifar10_resnet110_nobn_histo}
\end{figure}

\begin{figure}[t]
\centering
\begin{subfigure}{0.4\textwidth}
\includegraphics[width=\linewidth]{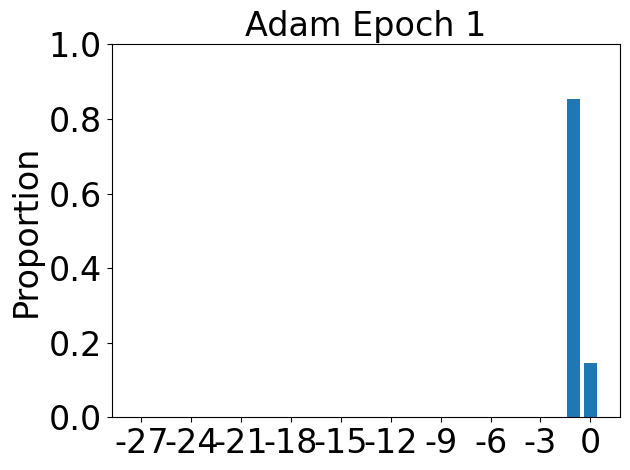}
\includegraphics[width=\linewidth]{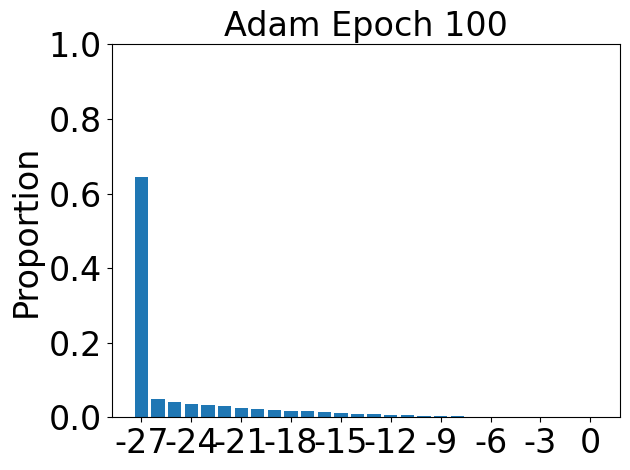}
\includegraphics[width=\linewidth]{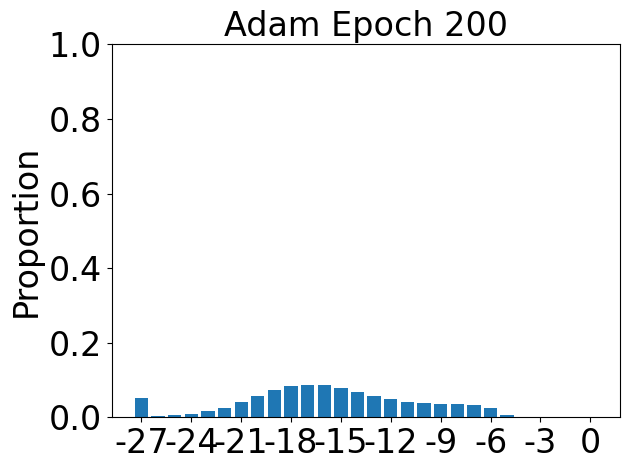}
\includegraphics[width=\linewidth]{figs/adapt_scale/histograms_updates/CIFAR10_resnet218_nobn/magnitude_histogram_CIFAR10_resnet218_nobn_Adam_update_no_alpha_Epoch_299.png}
\caption{Adam-$\ell_2$}
\label{fig:cifar10_resnet218_nobn_adam}
\end{subfigure}
\hspace{0.02\textwidth}
\begin{subfigure}{0.4\textwidth}
\includegraphics[width=\linewidth]{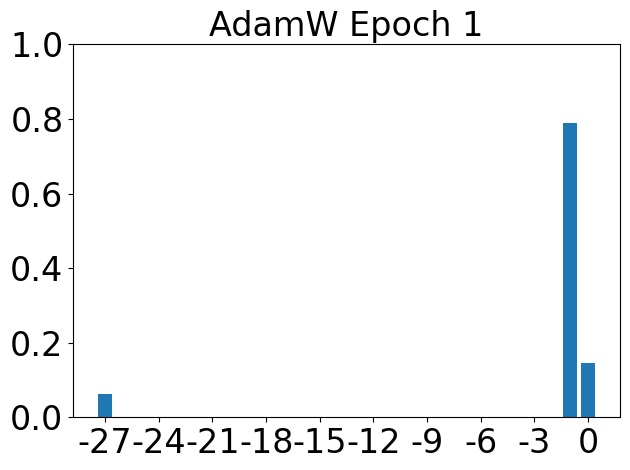}
\includegraphics[width=\linewidth]{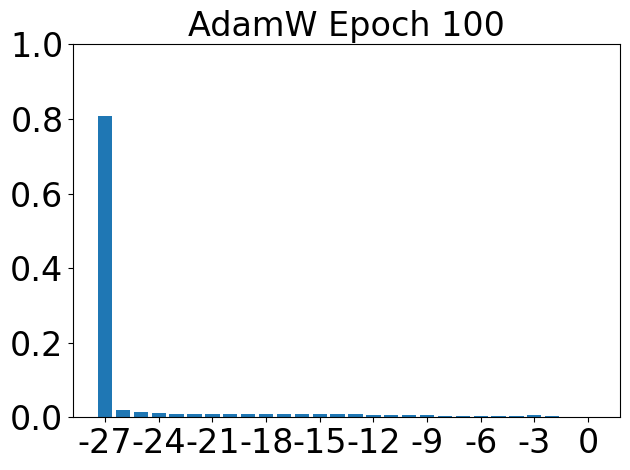}
\includegraphics[width=\linewidth]{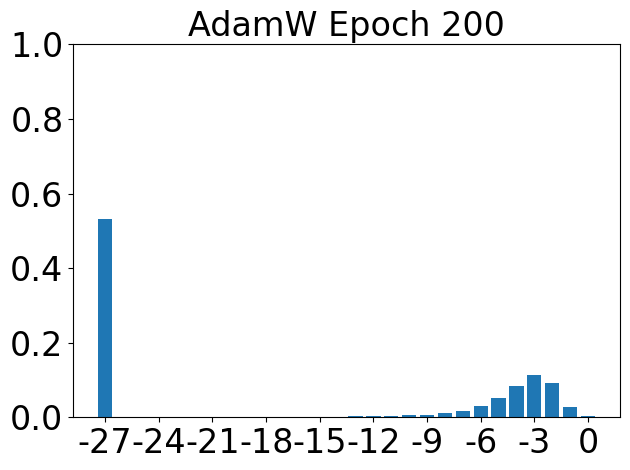}
\includegraphics[width=\linewidth]{figs/adapt_scale/histograms_updates/CIFAR10_resnet218_nobn/magnitude_histogram_CIFAR10_resnet218_nobn_AdamW_update_no_alpha_Epoch_299.png}
\caption{AdamW}
\label{fig:cifar10_resnet218_nobn_adamw}
\end{subfigure}
\caption{The histograms of the magnitudes of all updates of a 218-layer Resnet with BN disabled trained by AdamW or Adam-$\ell_2$  on CIFAR10.}
\label{fig:cifar10_resnet218_nobn_histo}
\end{figure}

\begin{figure}[t]
\centering
\begin{subfigure}{0.4\textwidth}
\includegraphics[width=\linewidth]{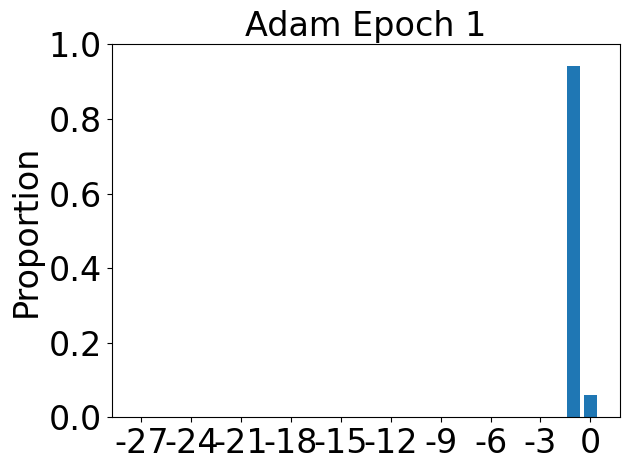}
\includegraphics[width=\linewidth]{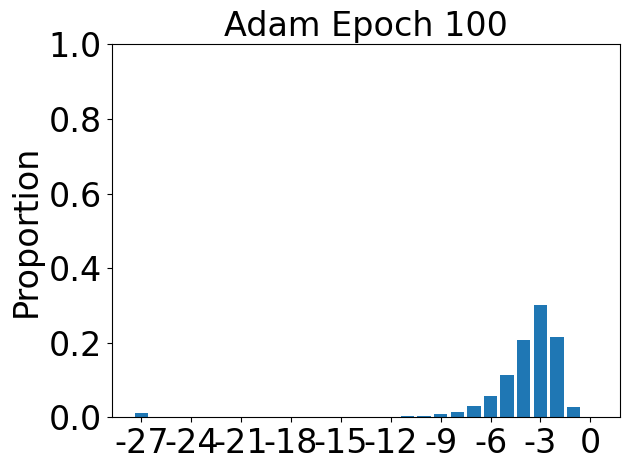}
\includegraphics[width=\linewidth]{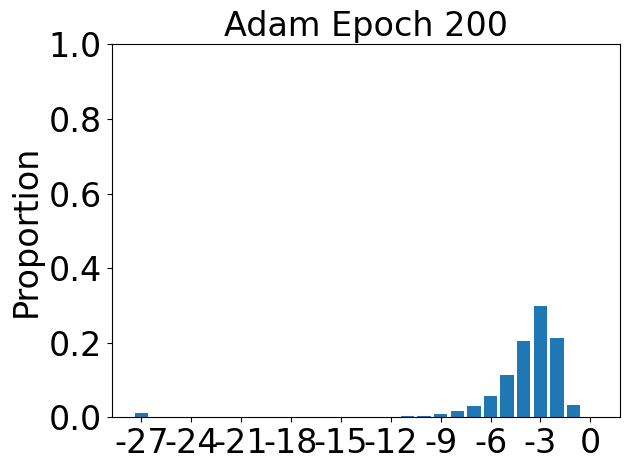}
\includegraphics[width=\linewidth]{figs/adapt_scale/histograms_updates/CIFAR100_resnet20_nobn/magnitude_histogram_CIFAR100_resnet20_nobn_Adam_update_no_alpha_Epoch_299.png}
\caption{Adam-$\ell_2$}
\label{fig:cifar100_resnet20_nobn_adam}
\end{subfigure}
\hspace{0.02\textwidth}
\begin{subfigure}{0.4\textwidth}
\includegraphics[width=\linewidth]{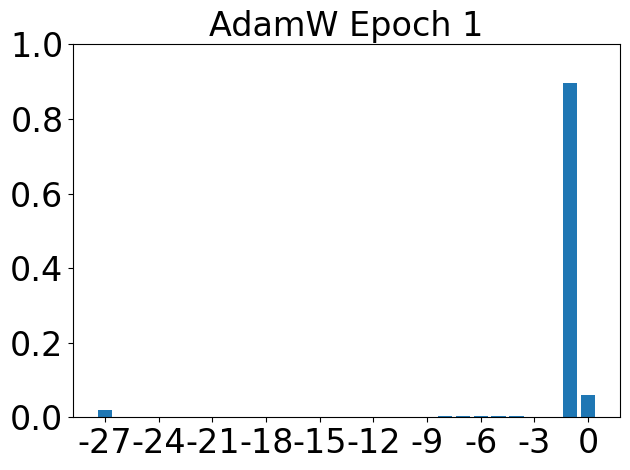}
\includegraphics[width=\linewidth]{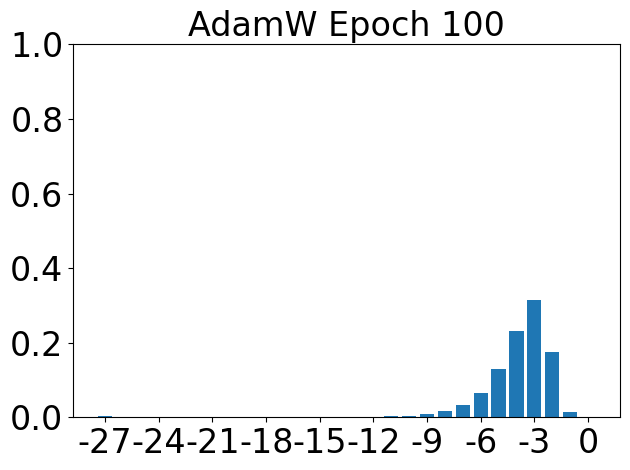}
\includegraphics[width=\linewidth]{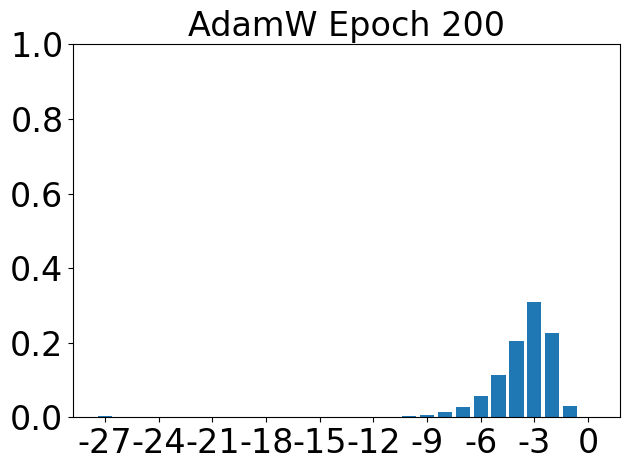}
\includegraphics[width=\linewidth]{figs/adapt_scale/histograms_updates/CIFAR100_resnet20_nobn/magnitude_histogram_CIFAR100_resnet20_nobn_AdamW_update_no_alpha_Epoch_299.png}
\caption{AdamW}
\label{fig:cifar100_resnet20_nobn_adamw}
\end{subfigure}
\caption{The histograms of the magnitudes of all updates of a 20-layer Resnet with BN disabled trained by AdamW or Adam-$\ell_2$  on CIFAR100.}
\label{fig:cifar100_resnet20_nobn_histo}
\end{figure}

\begin{figure}[t]
\centering
\begin{subfigure}{0.4\textwidth}
\includegraphics[width=\linewidth]{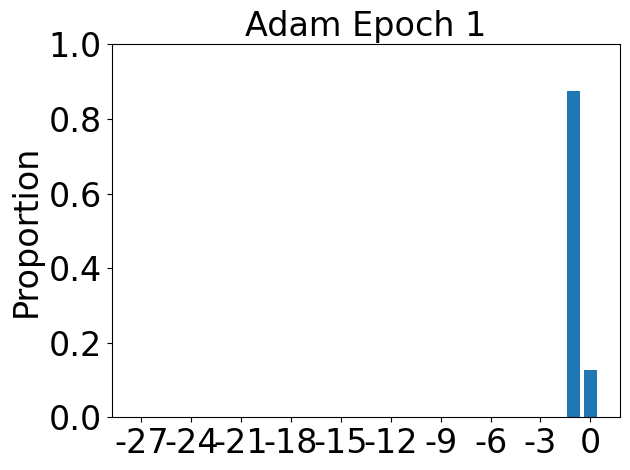}
\includegraphics[width=\linewidth]{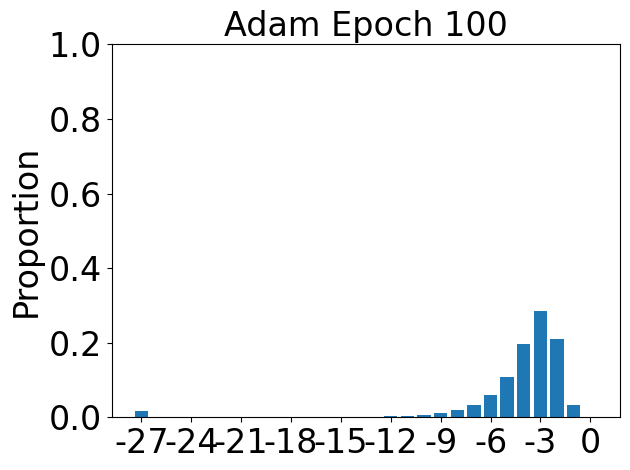}
\includegraphics[width=\linewidth]{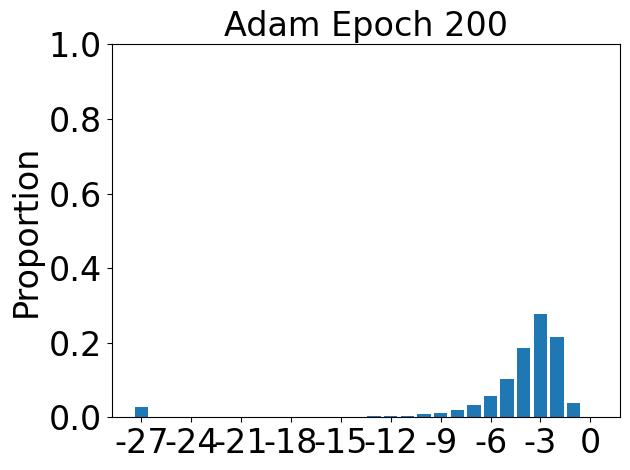}
\includegraphics[width=\linewidth]{figs/adapt_scale/histograms_updates/CIFAR100_resnet44_nobn/magnitude_histogram_CIFAR100_resnet44_nobn_Adam_update_no_alpha_Epoch_299.png}
\caption{Adam-$\ell_2$}
\label{fig:cifar100_resnet44_nobn_adam}
\end{subfigure}
\hspace{0.02\textwidth}
\begin{subfigure}{0.4\textwidth}
\includegraphics[width=\linewidth]{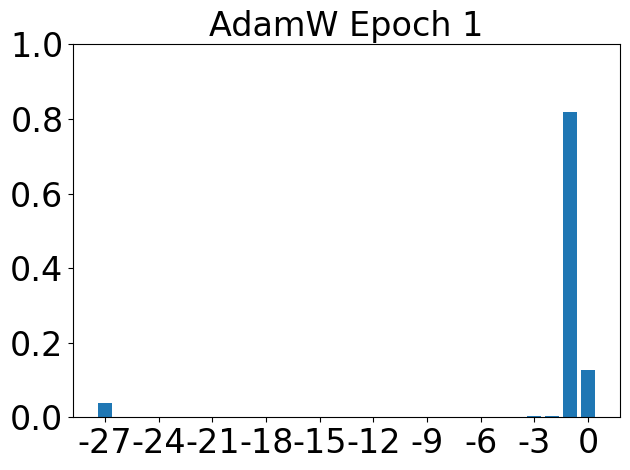}
\includegraphics[width=\linewidth]{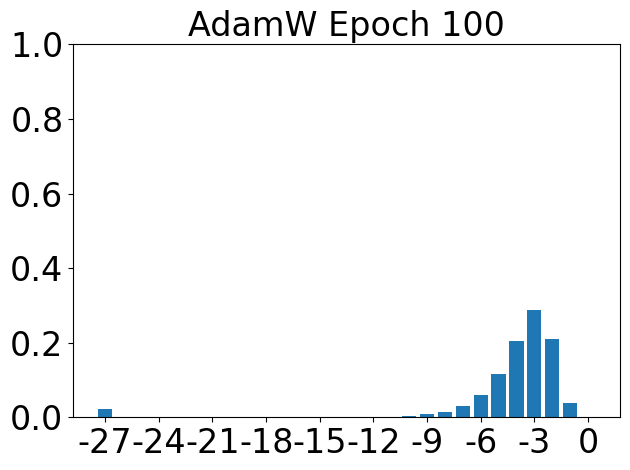}
\includegraphics[width=\linewidth]{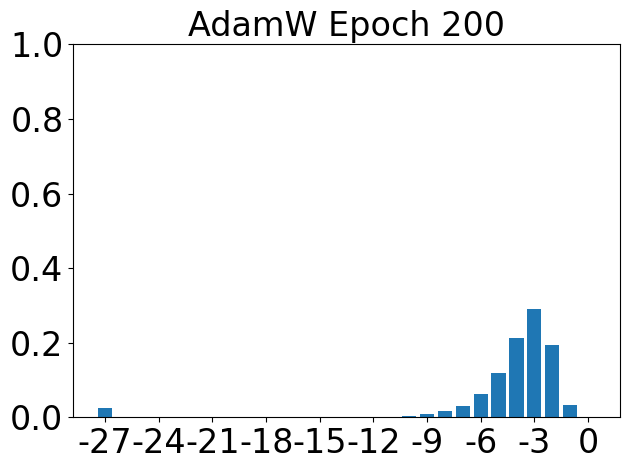}
\includegraphics[width=\linewidth]{figs/adapt_scale/histograms_updates/CIFAR100_resnet44_nobn/magnitude_histogram_CIFAR100_resnet44_nobn_AdamW_update_no_alpha_Epoch_299.png}
\caption{AdamW}
\label{fig:cifar100_resnet44_nobn_adamw}
\end{subfigure}
\caption{The histograms of the magnitudes of all updates of a 44-layer Resnet with BN disabled trained by AdamW or Adam-$\ell_2$  on CIFAR100.}
\label{fig:cifar100_resnet44_nobn_histo}
\end{figure}

\begin{figure}[t]
\centering
\begin{subfigure}{0.4\textwidth}
\includegraphics[width=\linewidth]{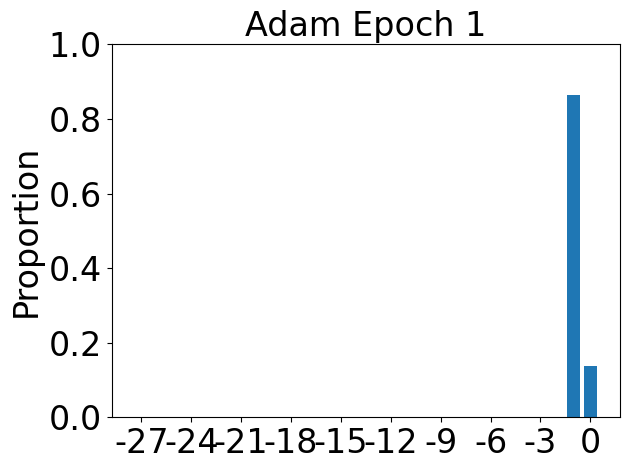}
\includegraphics[width=\linewidth]{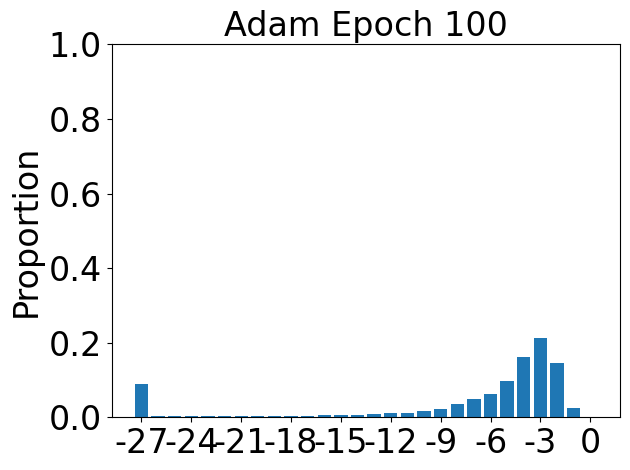}
\includegraphics[width=\linewidth]{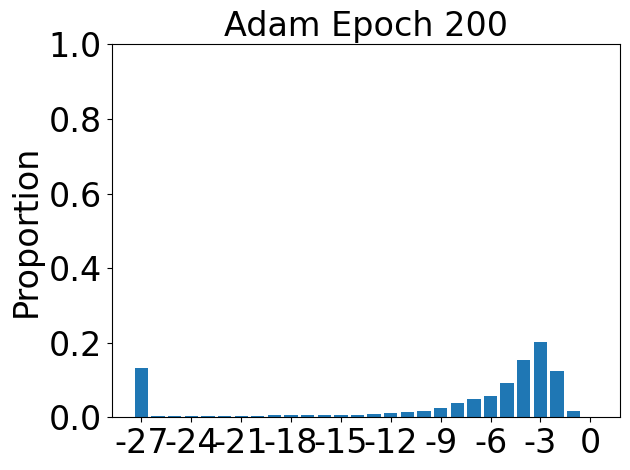}
\includegraphics[width=\linewidth]{figs/adapt_scale/histograms_updates/CIFAR100_resnet56_nobn/magnitude_histogram_CIFAR100_resnet56_nobn_Adam_update_no_alpha_Epoch_299.png}
\caption{Adam-$\ell_2$}
\label{fig:cifar100_resnet56_nobn_adam}
\end{subfigure}
\hspace{0.02\textwidth}
\begin{subfigure}{0.4\textwidth}
\includegraphics[width=\linewidth]{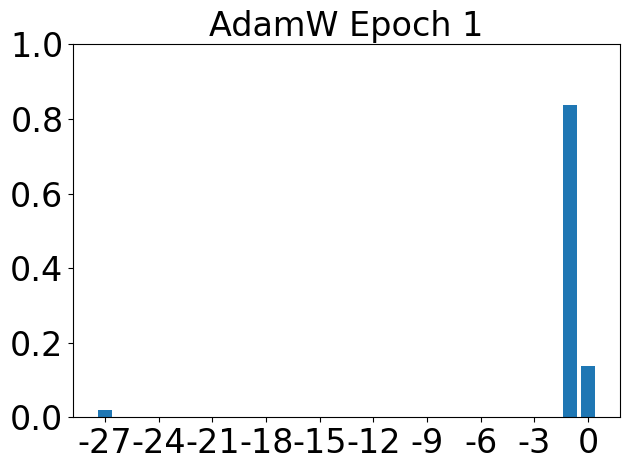}
\includegraphics[width=\linewidth]{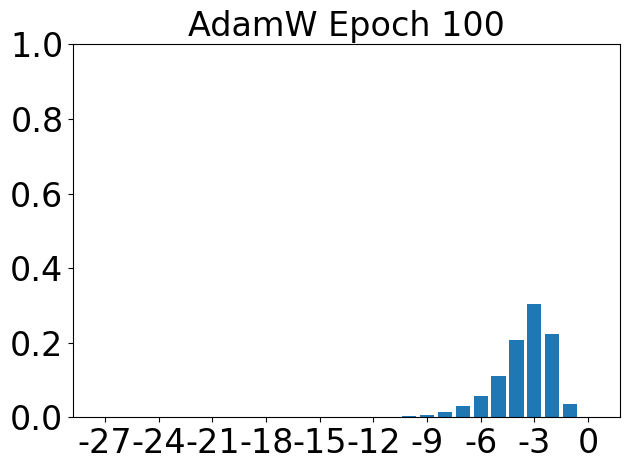}
\includegraphics[width=\linewidth]{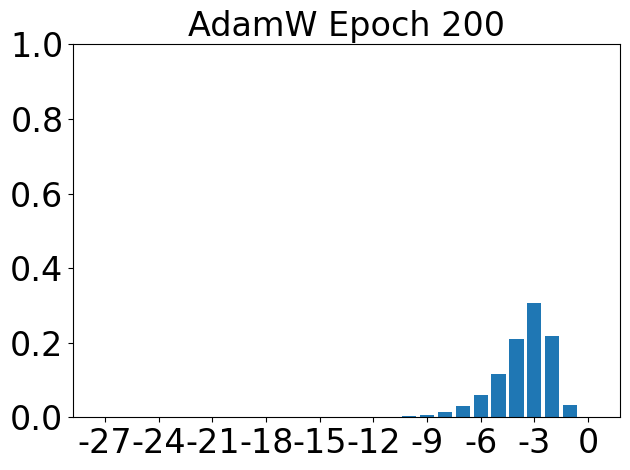}
\includegraphics[width=\linewidth]{figs/adapt_scale/histograms_updates/CIFAR100_resnet56_nobn/magnitude_histogram_CIFAR100_resnet56_nobn_AdamW_update_no_alpha_Epoch_299.png}
\caption{AdamW}
\label{fig:cifar100_resnet56_nobn_adamw}
\end{subfigure}
\caption{The histograms of the magnitudes of all updates of a 56-layer Resnet with BN disabled trained by AdamW or Adam-$\ell_2$  on CIFAR100.}
\label{fig:cifar100_resnet56_nobn_histo}
\end{figure}

\begin{figure}[t]
\centering
\begin{subfigure}{0.4\textwidth}
\includegraphics[width=\linewidth]{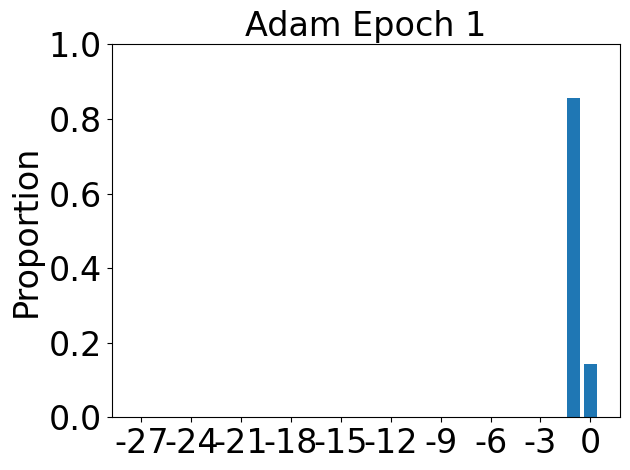}
\includegraphics[width=\linewidth]{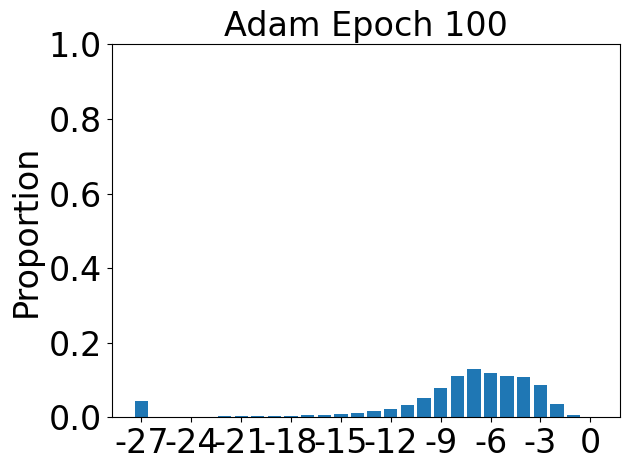}
\includegraphics[width=\linewidth]{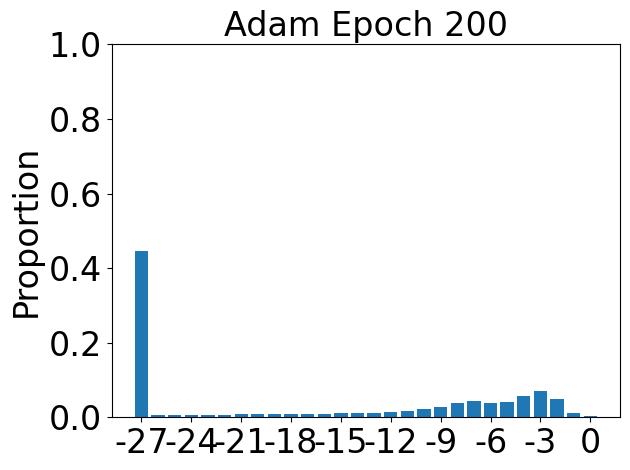}
\includegraphics[width=\linewidth]{figs/adapt_scale/histograms_updates/CIFAR100_resnet110_nobn/magnitude_histogram_CIFAR100_resnet110_nobn_Adam_update_no_alpha_Epoch_299.png}
\caption{Adam-$\ell_2$}
\label{fig:cifar100_resnet110_nobn_adam}
\end{subfigure}
\hspace{0.02\textwidth}
\begin{subfigure}{0.4\textwidth}
\includegraphics[width=\linewidth]{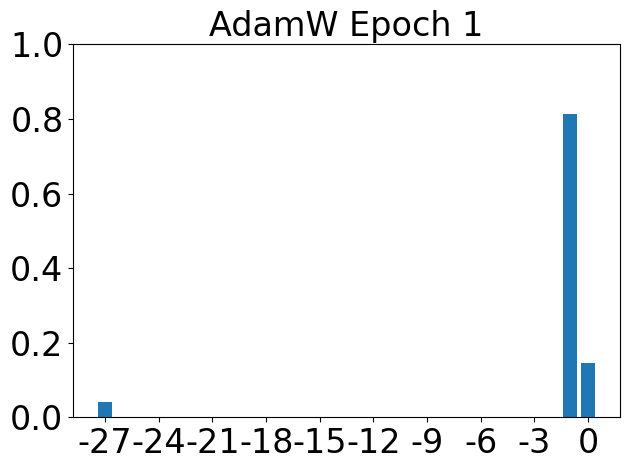}
\includegraphics[width=\linewidth]{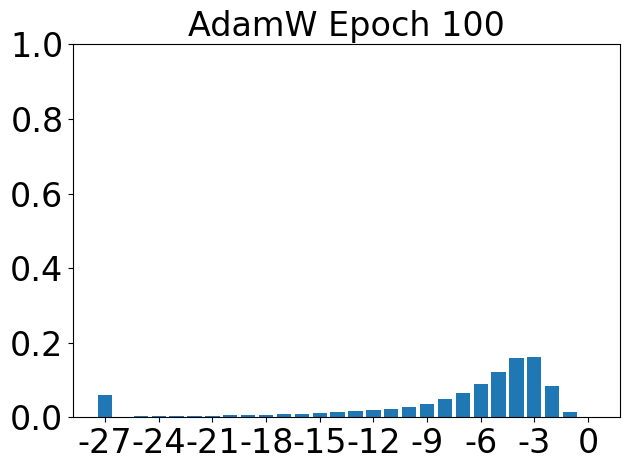}
\includegraphics[width=\linewidth]{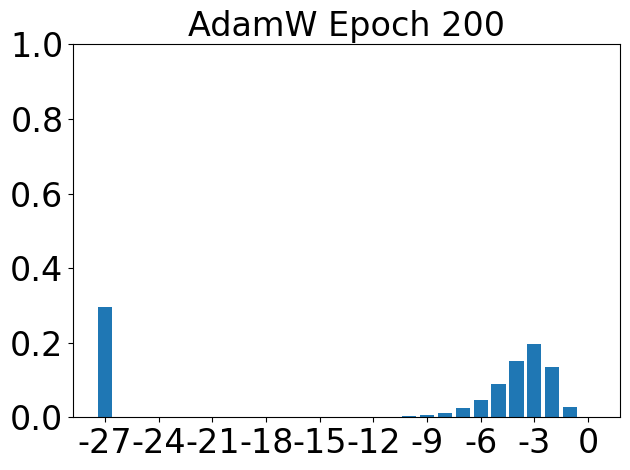}
\includegraphics[width=\linewidth]{figs/adapt_scale/histograms_updates/CIFAR100_resnet110_nobn/magnitude_histogram_CIFAR100_resnet110_nobn_AdamW_update_no_alpha_Epoch_299.png}
\caption{AdamW}
\label{fig:cifar100_resnet110_nobn_adamw}
\end{subfigure}
\caption{The histograms of the magnitudes of all updates of a 110-layer Resnet with BN disabled trained by AdamW or Adam-$\ell_2$  on CIFAR100.}
\label{fig:cifar100_resnet110_nobn_histo}
\end{figure}

\begin{figure}[t]
\centering
\begin{subfigure}{0.4\textwidth}
\includegraphics[width=\linewidth]{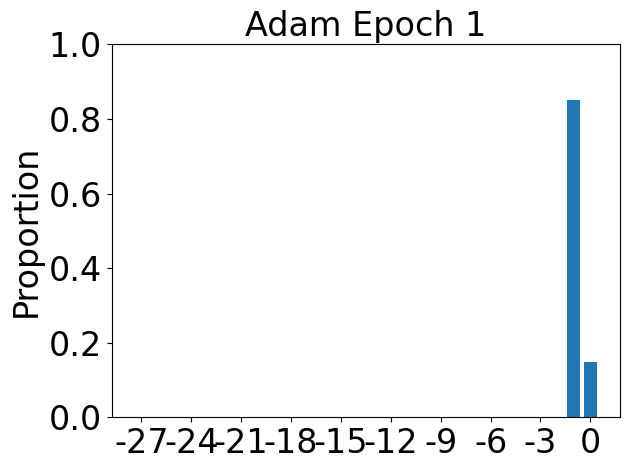}
\includegraphics[width=\linewidth]{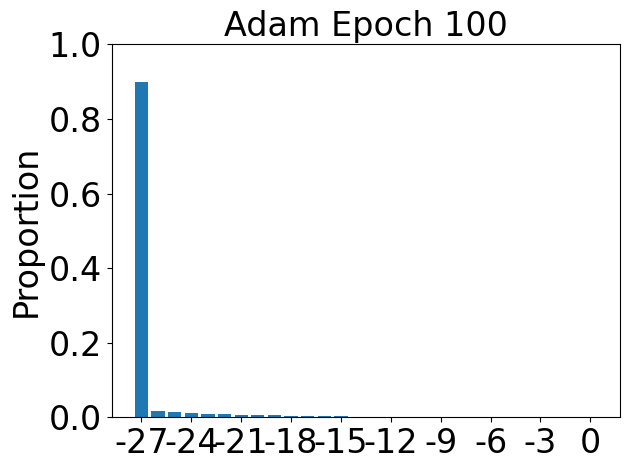}
\includegraphics[width=\linewidth]{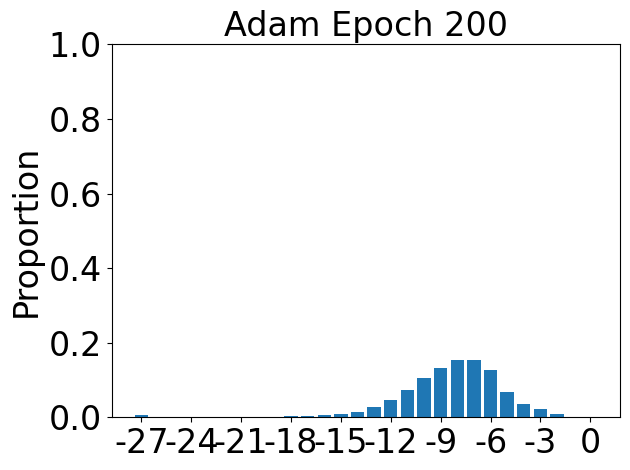}
\includegraphics[width=\linewidth]{figs/adapt_scale/histograms_updates/CIFAR100_resnet218_nobn/magnitude_histogram_CIFAR100_resnet218_nobn_Adam_update_no_alpha_Epoch_299.png}
\caption{Adam-$\ell_2$}
\label{fig:cifar100_resnet218_nobn_adam}
\end{subfigure}
\hspace{0.02\textwidth}
\begin{subfigure}{0.4\textwidth}
\includegraphics[width=\linewidth]{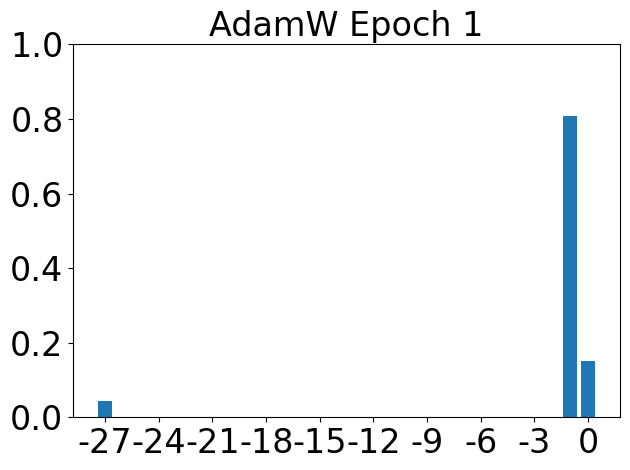}
\includegraphics[width=\linewidth]{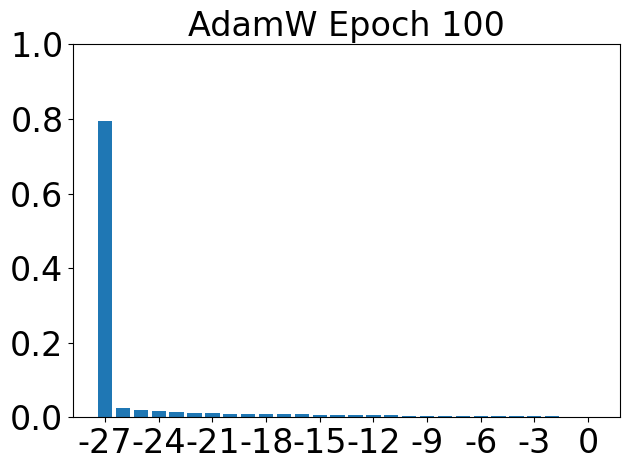}
\includegraphics[width=\linewidth]{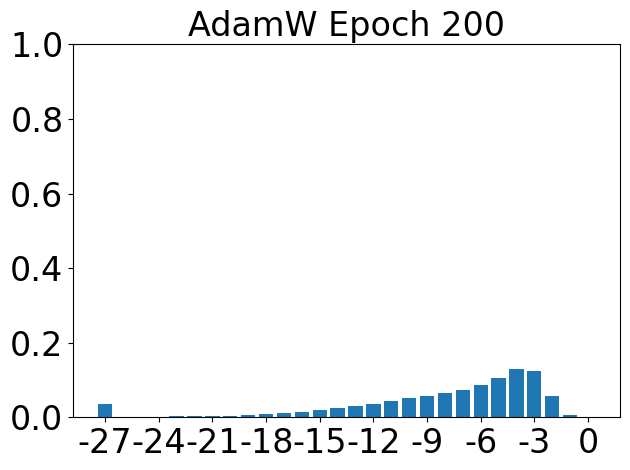}
\includegraphics[width=\linewidth]{figs/adapt_scale/histograms_updates/CIFAR100_resnet218_nobn/magnitude_histogram_CIFAR100_resnet218_nobn_AdamW_update_no_alpha_Epoch_299.png}
\caption{AdamW}
\label{fig:cifar100_resnet218_nobn_adamw}
\end{subfigure}
\caption{The histograms of the magnitudes of all updates of a 218-layer Resnet with BN disabled trained by AdamW or Adam-$\ell_2$  on CIFAR100.}
\label{fig:cifar100_resnet218_nobn_histo}
\end{figure}

\begin{figure}[t]
\centering

\begin{subfigure}{0.4\textwidth}
\includegraphics[width=\linewidth]{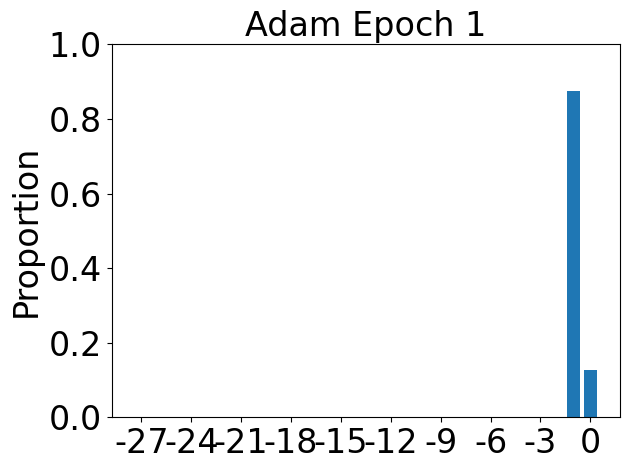}
\includegraphics[width=\linewidth]{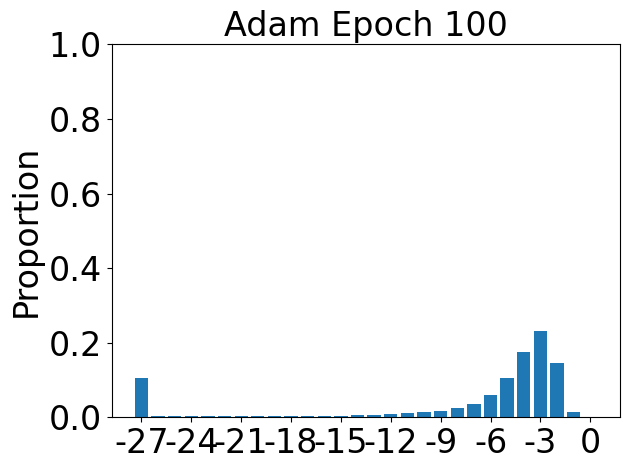}
\includegraphics[width=\linewidth]{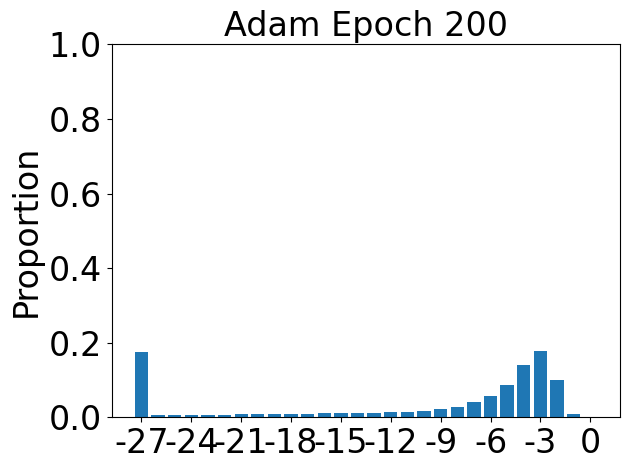}
\includegraphics[width=\linewidth]{figs/adapt_scale/histograms_updates/CIFAR100_DenseNet_NoBN/magnitude_histogram_CIFAR100_DenseNet_NoBN_Adam_update_no_alpha_Epoch_299.png}
\caption{Adam-$\ell_2$}
\label{fig:densenet_nobn_adam}
\end{subfigure}
\hspace{0.02\textwidth}
\begin{subfigure}{0.4\textwidth}
\includegraphics[width=\linewidth]{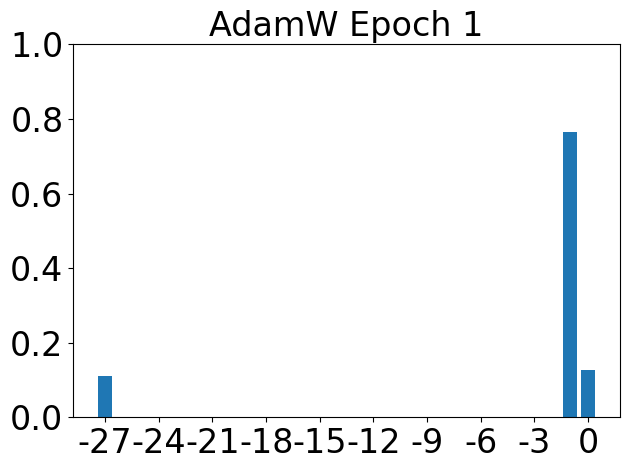}
\includegraphics[width=\linewidth]{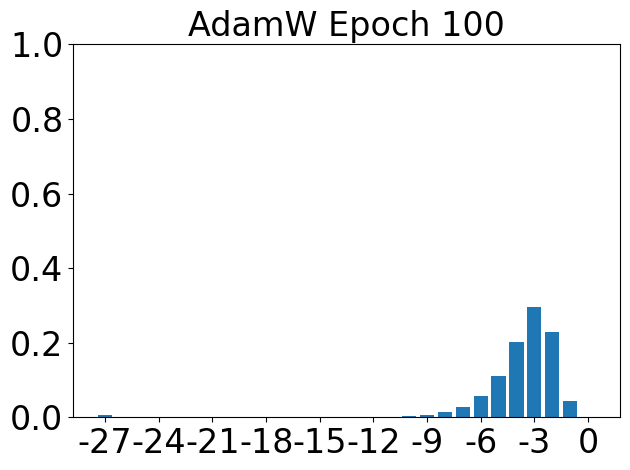}
\includegraphics[width=\linewidth]{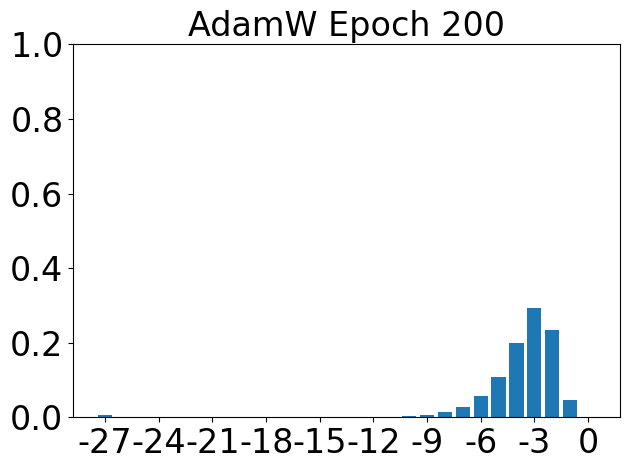}
\includegraphics[width=\linewidth]{figs/adapt_scale/histograms_updates/CIFAR100_DenseNet_NoBN/magnitude_histogram_CIFAR100_DenseNet_NoBN_AdamW_update_no_alpha_Epoch_299.png}
\caption{AdamW}
\label{fig:densenet_nobn_adamw}
\end{subfigure}

\caption{The histograms of the magnitudes of all updates of a 100-layer DenseNet-BC with BN disabled trained by AdamW or Adam-$\ell_2$ on CIFAR100.}
\label{fig:densenet_nobn_histo}
\end{figure}

\end{appendices}
\newpage
\singlespace
\bibliographystyle{apalike}

\bibliography{thesis}
\cleardoublepage

\addcontentsline{toc}{chapter}{Curriculum Vitae}

\begin{center}
{\LARGE {\bf CURRICULUM VITAE}}\\
\vspace{0.5in}
{\large {\bf Zhenxun Zhuang}}
\end{center}

\centerline{Email: zxzhuang@bu.edu}

\centerline{Department of Computer Science, 111 Cummington Mall,
Boston, MA, 02215}

\vspace{0.2in}
\centerline{\Large \bf{Education}}

\vspace{1em}
\noindent{\bf{Boston University}} \hfill Aug. 2018 - Present\\
Ph.D. in Computer Science \hfill Boston, MA\\
Adviser: Francesco Orabona

\vspace{1em}
\noindent{\bf{Stony Brook University}} \hfill Aug. 2016 - Aug. 2018\\
Ph.D. in Computer Science \hfill Stony Brook, NY\\
Adviser: Francesco Orabona

\vspace{1em}
\noindent{\bf{University of Science and Technology of China}} \hfill Sep. 2012 - Jun. 2016\\
B.Eng. in Electronic Information Engineering \hfill Hefei, Anhui, China\\
Adviser: Feng Wu

\vspace{0.2in}
\centerline{\Large \bf{Internships}}

\vspace{1em}
\noindent{\bf{Facebook}} \hfill Jun. 2021 - Aug. 2021\\
Machine Learning Engineer Intern \hfill Seattle, WA\\

\vspace{1em}
\noindent{\bf{Meemo}} \hfill Jun. 2020 - Aug. 2020\\
Data Science Intern \hfill San Francisco, CA\\

\vspace{1em}
\noindent{\bf{IQVIA}} \hfill May 2019 - Aug. 2019\\
Machine Learning Intern \hfill Plymouth Meeting, PA\\

\vspace{0.2in}
\centerline{\Large \bf{Publications}}

\begin{enumerate}
\item Michael Crawshaw\textsuperscript{\dag}, Mingrui Liu\textsuperscript{\dag}, Francesco Orabona\textsuperscript{\dag}, Wei Zhang\textsuperscript{\dag}, \underline{Zhenxun} \underline{Zhuang}\textsuperscript{\dag}. \textit{Robustness to unbounded smoothness of generalized SignSGD.}
Conference on Neural Information Processing Systems, 2022.
\item Mingrui Liu, \underline{Zhenxun Zhuang}, Yunwei Lei, Chunyang Liao. \textit{A communication-efficient distributed gradient clipping algorithm for training deep neural networks.} Conference on Neural Information Processing Systems, 2022.
\item \underline{Zhenxun Zhuang}, Mingrui Liu, Ashok Cutkosky, Francesco Orabona. \textit{Understanding AdamW through proximal methods and scale-freeness.}
Transactions on Machine Learning Research, 2022.
\item Xiaoyu Li*, \underline{Zhenxun Zhuang}*, and Francesco Orabona. \textit{A second look at exponential and cosine step sizes: Simplicity, adaptivity, and performance.} In Proceedings of the 38th International Conference on Machine Learning, PMLR 139:6553-6564, 2021.
\item \underline{Zhenxun Zhuang}, Yunlong Wang, Kezi Yu, Songtao Lu. \textit{No-regret non-convex online meta-learning.} In Proceedings of IEEE International Conference on Acoustics, Speech and Signal Processing (ICASSP), pages 3942-3946, 2020.
\item \underline{Zhenxun Zhuang}, Kezi Yu, Songtao Lu, Lucas Glass, Yunlong Wang. \textit{Online meta-learning on non-convex setting.}
NeurIPS Workshop on Meta-Learning, 2019
\item \underline{Zhenxun Zhuang}, Ashok Cutkosky, Francesco Orabona. \textit{Surrogate losses for online learning of stepsizes in stochastic non-convex optimization.} In Proceedings of the 36th International Conference on Machine Learning, PMLR 97:7664-7672, 2019.
\end{enumerate}
(\dag\ denotes alphabetical order, * denotes equal contribution.)

\vspace{0.2in}
\centerline{\Large \bf{Teaching Experience}}

\vspace{1em}
\noindent\textbf{CSE 303 Introduction to Theory of Computation} \hfill Stony Brook University\\
Teaching Assistant\hfill Fall 2017

\vspace{1em}
\noindent\textbf{CSE 101 Introduction To Computers} \hfill Stony Brook University\\
Teaching Assistant\hfill Fall 2016

\vspace{0.2in}
\centerline{\Large \bf{Academic Services}}

\vspace{1em}
\noindent Reviewer for AISTATS 2020, ICML 2020-2022, NeurIPS 2019-2022 

\end{document}